\crefname{equation}{}{}
\newtheorem{theorem}{Theorem}
\newtheorem{lemma}[theorem]{Lemma}
\newtheorem{definition}[theorem]{Definition}
\newtheorem{proposition}[theorem]{Proposition}
\newtheorem{fact}[theorem]{Fact}
\newtheorem*{theorem*}{Theorem}
\newtheorem*{lemma*}{Lemma}
\newtheorem*{proposition*}{Proposition}
\newtheorem*{fact*}{Fact}
\newtheorem*{definition*}{Definition}
\theoremstyle{definition}
\newtheorem{remark}[theorem]{Remark}
\newcommand{\R}{\mathbb{R}}
\newcommand{\N}{\mathbb{N}}
\DeclareMathOperator{\sign}{sign}
\NewDocumentCommand{\Expec}{ m o }{
  \IfNoValueTF{#2}{\ensuremath{\mathbb{E}\left[#1\right]}}{\ensuremath{\mathbb{E}_{#2}\left[#1\right]}}
}
\NewDocumentCommand{\Prob}{ m o }{
  \IfNoValueTF{#2}{\ensuremath{\mathbb{P}\left[#1\right]}}{\ensuremath{\mathbb{P}_{#2}\left[#1\right]}}
}
\NewDocumentCommand{\Var}{ m o }{
  \IfNoValueTF{#2}{\ensuremath{\operatorname{Var}\left[#1\right]}}{\ensuremath{\operatorname{Var}_{#2}\left[#1\right]}}
}
\NewDocumentCommand{\Cov}{ m o }{
  \IfNoValueTF{#2}{\ensuremath{\operatorname{Cov}\left[#1\right]}}{\ensuremath{\operatorname{Cov}_{#2}\left[#1\right]}}
}
\newcommand*\dif{\mathop{}\!\mathrm{d}}
\newcommand\MYcurrentlabel{xxx}
\newcommand{\MYstore}[2]{%
  \global\expandafter \def \csname MYMEMORY #1 \endcsname{#2}%
}
\newcommand{\MYload}[1]{%
  \csname MYMEMORY #1 \endcsname%
}
\newcommand{\MYnewlabel}[1]{%
  \renewcommand\MYcurrentlabel{#1}%
  \MYoldlabel{#1}%
}
\newcommand{\MYdummylabel}[1]{}
\newcommand{\torestate}[1]{%
  \let\MYoldlabel\label%
  \let\label\MYnewlabel%
  #1%
  \MYstore{\MYcurrentlabel}{#1}%
  \let\label\MYoldlabel%
}
\newcommand{\restatetheorem}[1]{%
  \let\MYoldlabel\label
  \let\label\MYdummylabel
  \begin{theorem*}[Restatement of \Cref{#1}]
    \MYload{#1}
  \end{theorem*}
  \let\label\MYoldlabel
}
\newcommand{\restatelemma}[1]{%
  \let\MYoldlabel\label
  \let\label\MYdummylabel
  \begin{lemma*}[Restatement of \Cref{#1}]
    \MYload{#1}
  \end{lemma*}
  \let\label\MYoldlabel
}
\newcommand{\restateprop}[1]{%
  \let\MYoldlabel\label
  \let\label\MYdummylabel
  \begin{proposition*}[Restatement of \Cref{#1}]
    \MYload{#1}
  \end{proposition*}
  \let\label\MYoldlabel
}
\newcommand{\restatefact}[1]{%
  \let\MYoldlabel\label
  \let\label\MYdummylabel
  \begin{fact*}[Restatement of \Cref{#1}]
    \MYload{#1}
  \end{fact*}
  \let\label\MYoldlabel
}
\newcommand{\restatedefinition}[1]{%
  \let\MYoldlabel\label
  \let\label\MYdummylabel
  \begin{definition*}[Restatement of \Cref{#1}]
    \MYload{#1}
  \end{definition*}
  \let\label\MYoldlabel
}
\newcommand{\restate}[1]{%
  \let\MYoldlabel\label
  \let\label\MYdummylabel
  \MYload{#1}
  \let\label\MYoldlabel
}
\newcommand{\myparagraph}[1]{\par\medskip \noindent\ignorespaces \textit{#1}}
\newcommand{\e}{\varepsilon}
\newcommand{\opt}{\mathrm{opt}}
\newcommand{\Dlabel}{\mathcal{D}_{\rm joint}}
\newcommand{\Fhs}{\mathcal{F}_{\rm HS}}
\newcommand{\Fptf}[1]{\mathcal{F}_{\mathrm{PTF}, {#1}}}
\newcommand{\DX}{\mathcal{D}_{\mathcal{X}}}
\newcommand{\Psymb}{\mathbb{P}}
\newcommand{\poly}{\mathrm{poly}}
\newcommand{\cD}{\mathcal{D}}
\newcommand{\cF}{\mathcal{F}}
\newcommand{\cN}{\mathcal{N}}
\newcommand{\cX}{\mathcal{X}}
\newcommand{\set}[1]{\{#1\}}
\title[Testably Learning Polynomial Threshold Functions]{Testably Learning Polynomial Threshold Functions}
\author{Lucas Slot$^1$}
\author{Stefan Tiegel$^1$}
\author{Manuel Wiedmer$^1$}
\address{$^1$Department of Computer Science, ETH Zurich}
\email{\{lucas.slot, stefan.tiegel, manuel.wiedmer\}@inf.ethz.ch}
\thanks{This work is supported by funding from the European Research Council
(ERC) under the European Union’s Horizon 2020 research and innovation programme (grant
agreement No 815464)}
\date{November 6, 2024}
\begin{document}

\begin{abstract}
    Rubinfeld \& Vasilyan recently introduced the framework of \emph{testable learning} as an extension of the classical agnostic model.
    It relaxes distributional assumptions which are difficult to verify by conditions that can be checked efficiently by a \emph{tester}. 
    The tester has to accept whenever the data truly satisfies the original assumptions, and the learner has to succeed whenever the tester accepts. We focus on the setting where the tester has to accept standard Gaussian data. There, it is known that basic concept classes such as halfspaces can be learned testably with the same time complexity as in the (distribution-specific) agnostic model.
    In this work, we ask whether there is a price to pay for testably learning more complex concept classes.
    In particular, we consider polynomial threshold functions (PTFs), which naturally generalize halfspaces.
    We show that PTFs of arbitrary constant degree can be testably learned up to excess error $\e > 0$ in time $n^{\poly(1/\e)}$. This qualitatively matches the best known guarantees in the agnostic model.
    Our results build on a connection between testable learning and \emph{fooling}.
    In particular, we show that distributions that approximately match at least $\poly(1/\e)$ moments of the standard Gaussian fool constant-degree PTFs (up to error $\e$). As a secondary result, we prove that a direct approach to show testable learning (without fooling), which was successfully used for halfspaces, cannot work for PTFs.
\end{abstract}

\maketitle

\section{Introduction}
The PAC learning model of Valiant~\cite{valiant1984} has long served as a test-bed to study which learning tasks can be performed efficiently and which might be computationally difficult.
One drawback of this model is that it is inherently noiseless.
In order to capture noisy learning tasks, the following extension, called the \emph{agnostic model}, has been introduced \cite{haussler1992decision,kearns1992toward}:
Let $\cF$ be a class of boolean functions and let $\Dlabel$ be an (unknown) distribution over example-label-pairs in $\cX \times \set{\pm 1}$. Typically, $\cX = \set{0,1}^n$ or $\R^n$.
As input, we receive iid samples from $\Dlabel$. 
For a small $\e > 0$, our task is to output a classifier $\hat f$ (not necessarily in $\cF$) whose \emph{loss} ${L(\hat f, \Dlabel) \coloneqq \Psymb_{(x,z) \sim {\Dlabel}}(\hat f(x) \neq z)}$ is at most $\opt + \varepsilon$, where 
$\opt \coloneqq \inf_{f \in \cF} L(f, \Dlabel)$.
The parameter $\opt$ thus indicates how "noisy" the instance is.
We say that an algorithm \emph{agnostically learns $\cF$} up to error $\e$ if it outputs such an $\hat f$.
This model is appealing since it makes assumptions neither on the distribution of the input, nor on the type and amount of noise.
After running an agnostic learning algorithm, we can therefore be certain that the output $\hat{f}$ achieves error close to that of the best function in $\cF$ even without knowing what distribution the data came from.

\myparagraph{Efficient learning and distributional assumptions.} 
We are interested in understanding when agnostic learning can be performed efficiently.
Unfortunately, efficient learning is likely impossible without making assumptions on the distribution $\Dlabel$, even for very simple function classes~$\cF$. 
For instance, consider the class $\Fhs$ of \emph{halfspaces}, i.e., boolean functions of the form~${f(x) = \sign(\langle v, x\rangle - \theta)}$. Then, even if there exists a halfspace achieving arbitrarily small error, it is widely believed that outputting an $\hat{f}$ that performs better than a random guess in the agnostic model takes at least super-polynomial time if no assumptions are made on $\Dlabel$~\cite{daniely2016complexity,tiegel2023hardness}.
To find efficient algorithms, one therefore has to make such assumptions. Typically, these take the form of assuming that the marginal~$\DX$ of $\Dlabel$ over the examples~$\mathcal{X}$ belongs to a specific family of distributions.
\begin{definition}[Agnostic learning with distributional assumptions]
    Let $\varepsilon > 0$. A learner $\mathcal{A}$ agnostically learns $\mathcal{F}$ with respect to $\mathcal{D}$ up to error $\varepsilon$ if, for any distribution $\Dlabel$ on~$\cX \times \{\pm 1\}$ whose marginal $\cD_\cX$ on $\cX$ is equal to $\mathcal{D}$, given sufficient samples from $\Dlabel$, it outputs with high probability a function~${f : \cX \to \{\pm 1\}}$ satisfying
    $
        L(f, \Dlabel) \leq \opt(\mathcal{F}, \mathcal{\Dlabel}) + \e.
    $
\end{definition}
For example, under the assumption that $\DX$ is standard Gaussian, we can find $\hat{f}$ such that ${L(\hat{f},\Dlabel) \leq \opt(\Fhs,\Dlabel) + \e}$ in time~$n^{O(1/\e^2)}$~\cite{KKM:agnostichalfspaces, DiakonikolasKaneNelson:foolingdeg2PTFs}.
This runtime is likely best-possible~\cite{diakonikolas2021optimality,tiegel2023hardness,diakonikolas2023near}.\footnote{In particular, achieving a runtime in $\poly(n, 1/\e)$ is likely not possible. We note that such runtimes can be achieved in the weaker model where one accepts a loss of $O(\opt) + \e$ (vs. $\opt + \e$ in the model we consider)~\cite{ABL:Oopt}.}
Efficient learning is still possible under weaker assumptions on~$\mathcal{D}_\mathcal{X}$, e.g., log-concavity~\cite{KKM:agnostichalfspaces}.
Regardless, we cannot know whether a learning algorithm achieves its claimed error without a guarantee that the input actually satisfies our distributional assumptions.
Such guarantees are inherently difficult to obtain from a finite (small) sample. 
Furthermore, approaches like cross-validation (i.e., computing the empirical error of $\hat{f}$ on a hold-out data set) fail in the noisy agnostic model, since we do not know the noise level $\opt$. 
This represents a severe limitation of the agnostic learning model with distributional assumptions.

\subsection{Testable learning.}
To address this limitation,
Rubinfeld \& Vasilyan~\cite{RubinfeldVasilyan:testlearning} recently introduced the following model, which they call \emph{testable learning}:
First, they run a \emph{tester} on the input data, which attempts to verify a computationally tractable relaxation of the distributional assumptions. If the tester accepts, they then run a (standard) agnostic learning algorithm.
The tester is required to accept whenever the data truly satisfies the distributional assumptions, and whenever the tester accepts, the output of the algorithm must achieve error close to $\opt$. 
More formally, they define:
\begin{definition}[Testable learning~\cite{RubinfeldVasilyan:testlearning}]
    Let $\varepsilon > 0$. A tester-learner pair $(\mathcal{T}, \mathcal{A})$ testably learns $\mathcal{F}$ with respect to a distribution $\mathcal{D}$ on $\mathcal{X}$ up to error $\varepsilon$ if, for any distribution $\Dlabel$ on $\mathcal{X} \times \{\pm 1\}$, the following hold
    \begin{enumerate}
        \item (Soundness). If samples drawn from $\Dlabel$ are accepted by the tester $\mathcal{T}$ with high probability, then the learner $\mathcal{A}$ must agnostically learn $\mathcal{F}$ w.r.t. $\Dlabel$ up to error $\e$.
        \item (Completeness). If the marginal of $\Dlabel$ on $\mathcal{X}$ is equal to $\mathcal{D}$, then the tester must accept samples drawn from $\Dlabel$ with high probability.
    \end{enumerate}
\end{definition}
\emph{Soundness} tells us that whenever a testable learning algorithm outputs a function $\hat f$, this function achieves low error (regardless of whether $\Dlabel$ satisfies any distributional assumption). On the other hand, \emph{completeness} tells us testable learners are no weaker than (distribution-specific) agnostic ones, in the sense that they achieve the same error whenever $\Dlabel$ actually satisfies our assumptions (i.e., whenever this error can in fact be guaranteed for the agnostic learner). The testable model is thus substantially stronger than the agnostic model with distributional assumptions.

\myparagraph{Which function classes can be learned testably?}
A natural question is whether testable learning comes at an additional computational cost compared to (distribu\-tion-specific) agnostic learning. 
We focus on the setting where $\cD$ is the standard Gaussian on $\mathcal{X} = \R^n$.
Following~\cite{RubinfeldVasilyan:testlearning, GollakotaKlivansKothari:testlearningusingfooling}, we consider the following simple tester: Accept if and only if the empirical moments up to degree $k$ of the input distribution (approximately) match those of $\mathcal{D}$. This tester satisfies completeness 
as the empirical moments of a Gaussian concentrate well.
Using this tester,
Rubinfeld \& Vasilyan~\cite{RubinfeldVasilyan:testlearning}
show that halfspaces can be testably learned in time $n^{\tilde{O}(1/\e^4)}$. Their runtime guarantee was improved to $n^{\tilde{O}(1/\e^2)}$ in~\cite{GollakotaKlivansKothari:testlearningusingfooling}, (nearly) matching the best known non-testable algorithm. This shows that there is no separation between the two models for halfspaces.
On the other hand, a separation does exist for more complex function classes. Namely, for fixed accuracy $\e > 0$, testably learning the class of indicator functions of convex sets requires at least $2^{\Omega(n)}$ samples (and hence also time)~\cite{RubinfeldVasilyan:testlearning}, whereas agnostically learning them only takes subexponential time $2^{O(\sqrt{n})}$, see~\cite{Klivans:Gaussiansurface}. The relation between agnostic and testable learning is thus non-trivial, depending strongly on the concept class considered.

\subsection{Our contributions}
\label{SEC:INTRO:ourcontributions}
In this work, we continue to explore testable learning and its relation to the agnostic model.
We consider the concept class of polynomial threshold functions (short PTFs).
A degree-$d$ PTF is a function of the form \mbox{$f(x) = \sign(p(x))$}, where $p$ is a polynomial of degree at most $d$.
PTFs naturally generalize halfspaces, which correspond to the case $d=1$.
They form an expressive function class with applications throughout (theoretical) computer science, and have been studied in the context of circuit complexity~\cite{PTFcomplexityA, PTFcomplexityB, PTFcomplexityC,beigel1993polynomial}, and learning~\cite{PKM:PTFsensitivity, DDS:PTFhardness}.
Despite their expressiveness, PTFs can be agnostically learned in time~$n^{O(d^2/\e^4)}$~\cite{Kane:agnosticPTFs}, which is polynomial in $n$ for any fixed degree $d \in \N$ and error $\e >0$. They are thus significantly easier to learn in the agnostic model than convex sets.
Our main result is that PTFs can be learned efficiently in the testable model as well.
\begin{theorem}[Informal version of \Cref{THM:PTFPROOF:testablelearningofPTFs}]\label{THM:INTRO:testablelearningofPTFs} Fix $d \in \N$. Then, for any $\e > 0$, the concept class of degree-$d$ polynomial threshold functions can be testably learned up to error $\e$ w.r.t. the standard Gaussian in time and sample complexity $n^{\poly(1/\e)}$.
\end{theorem}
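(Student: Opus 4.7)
The plan is to follow the fooling-based route to testable learning developed in~\cite{GollakotaKlivansKothari:testlearningusingfooling}: pick as tester the empirical moment-matching tester of degree $k=\poly(1/\e)$, and show that any distribution $\cD'$ on $\R^n$ whose first $k$ moments approximately match those of the standard Gaussian $\cN$ \emph{fools} the class of degree-$d$ PTFs, i.e.\ $|\Pr_{x\sim\cD'}[f(x){=}1] - \Pr_{x\sim\cN}[f(x){=}1]|\le\e$ for every degree-$d$ PTF $f$. Completeness of the tester is standard (empirical moments of $\cN$ concentrate), and once fooling is in place, testable agnostic learning of degree-$d$ PTFs reduces to ordinary $L_1$-polynomial regression on the Gaussian, whose complexity is $n^{\poly(1/\e)}$ by~\cite{Kane:agnosticPTFs}. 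So the entire theorem reduces to proving the fooling statement.

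For the fooling step, I would fix $f(x)=\sign(p(x))$ with $\deg p\le d$, and (WLOG) rescale so that $\Expec{p(x)^2}[x\sim\cN]=1$. The strategy is the familiar three-step smoothing-then-polynomialization scheme. First, replace the indicator $\mathbf{1}[p\ge 0]$ by $\psi_\tau(p)$, where $\psi_\tau$ is a $C^\infty$ mollification of the step function with softness parameter~$\tau$; the error incurred under $\cN$ is controlled by the Carbery--Wright anti-concentration inequality, which gives $\Pr_{x\sim\cN}[|p(x)|\le\tau]\le O(d\tau^{1/d})$. Second, approximate $\psi_\tau$ uniformly on the effective support of $p$ by a univariate polynomial $q$ of degree $k_1=\poly(1/\tau,d)$ via a Chebyshev/Jackson-type bound; then $q(p(x))$ is a polynomial in $x$ of degree $k_1 d$, so if $\cD'$ matches moments of $\cN$ up to degree $k_1 d$ we get $\Expec{q(p(x))}[x\sim\cD'] \approx \Expec{q(p(x))}[x\sim\cN]$. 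Third, undo the two approximations on the $\cD'$ side.

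The real obstacle, and the step where all the work must be concentrated, is the last undoing: to pass from $\Expec{q(p(x))}[x\sim\cD']$ back to $\Pr_{x\sim\cD'}[p(x)\ge 0]$ we need an anti-concentration statement for $p$ under $\cD'$, but Carbery--Wright is a genuinely Gaussian fact and is not automatically inherited from moment-matching. I would overcome this by proving a \emph{pseudo}-anti-concentration lemma: $\Pr_{x\sim\cD'}[|p(x)|\le\tau]\le O(d\tau^{1/d})+\e/2$. This should itself be proved by the same moment-matching route, approximating the slab indicator $\mathbf{1}[|p|\le\tau]$ from above by a nonnegative low-degree polynomial $r(p)$ obtained from a smooth bump approximation of $\mathbf{1}_{[-\tau,\tau]}$ combined with Markov's inequality to control the tail where the polynomial approximation blows up. Because these error terms decay only like powers of $\tau$ with $d$-dependent exponents, balancing $\tau$ against the polynomial degree will force $k_1=\poly(1/\e)$ (with the exponent depending on $d$, which is fine since $d$ is constant), yielding the final moment-matching degree $k=k_1 d=\poly(1/\e)$ needed in the tester and hence the claimed runtime $n^{\poly(1/\e)}$.
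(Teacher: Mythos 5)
Your plan is, at bottom, the ``direct approach'': approximate $\sign(p(x))$ by a composite $q(p(x))$ with $q$ a univariate polynomial approximating a smoothed step $\psi_\tau$, and transport expectations via moment matching. This is precisely what the paper's secondary result \Cref{THM:NOGO:degree6} rules out for PTFs of degree at least $3$ (at least $6$ if one additionally insists that $p$ be square-free and $\{p\ge 0\}$ compact). The obstacle is already on the Gaussian side, not, as you suggest, in establishing anti-concentration under $\cD'$. Bounding your step-2 error $\Expec{|\psi_\tau(p(Y)) - q(p(Y))|}[Y\sim\cN]$ together with the Carbery--Wright bound of step 1 would produce $\Expec{|q(p(Y)) - \sign(p(Y))|}[Y\sim\cN] \le O(\e)$, a univariate $L_1$-approximation of $\sign$ under the push-forward $p_\#\cN(0,I_n)$. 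For $\deg p\ge 3$ that push-forward has tails decaying only like $\exp(-\Theta(|t|^{2/d}))$ and is therefore (one-sided) log-superlinear; the Bun--Steinke inapproximability result (\Cref{THM:NOGO:bunsteinke}, and its one-sided extension \Cref{THM:NOGO:truncatedBunSteinke}) shows that no such approximation exists. The failure is also visible in your own parameter balance: hypercontractivity gives $\Expec{|p(Y)|^{k_1}}\le k_1^{dk_1/2}$, so controlling the tail of $q(p(Y))$ via Markov forces a truncation radius $R\gtrsim k_1^{d/2}$, while resolving a step of softness $\tau$ on $[-R,R]$ forces $k_1\gtrsim R/\tau$; together these yield $\tau\gtrsim k_1^{d/2-1}$, which for $d\ge 3$ sends $\tau\to\infty$ as $k_1$ grows rather than $\tau\to 0$ as Carbery--Wright requires.

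The paper's proof takes a genuinely different route, extending Kane's fooling argument for $k$-independent Gaussians (\Cref{THM:TO:KaneFoolPTF}) to approximately moment-matching distributions. Instead of approximating univariately in $p$, it first treats multilinear $p$ and invokes Kane's structure theorem (\Cref{LEM:PTFPROOF:structure}) to decompose $f=F\circ P$ with components $P_{i,j}$ whose Gaussian moments satisfy $\Expec{|P_{i,j}(Y)|^\ell}\le O_d(\sqrt{\ell})^\ell$ --- exponent $\ell$ rather than $d\ell$, and this improvement is exactly what makes the tail control workable. The outer function $F$ is then mollified and Taylor-expanded \emph{multivariately} in the variables $P(x)$ (\Cref{PROP:PTFPROOF:foolingmultilinear}), and general PTFs are reduced to the multilinear case by a variable-splitting construction (\Cref{LEM:PTFPROOF:existenceofpdelta}). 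The paper's remaining technical content, over and above Kane, is controlling the extra error terms introduced by only \emph{approximate} moment matching; that is what forces $k = \Theta_d(\e^{-4d\cdot 7^d})$ --- still $\poly(1/\e)$ for fixed $d$, but with a far worse $d$-dependence than the $\poly(1/\e,d)$ your sketch projects, which should itself have been a warning sign.
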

\Cref{THM:INTRO:testablelearningofPTFs} is the first result achieving efficient testable learning for PTFs of any fixed degree~$d$ (up to constant error $\e > 0$).
Previously, such a result was not even available for learning degree-2 PTFs with respect to the Gaussian distribution.
It also sheds new light on the relation between agnostic and testable learning: there is no \emph{qualitative} computational gap between the two models for the concept class of~PTFs, whose complexity lies between that of halfspaces and convex sets in the agnostic model.

In addition to~\Cref{THM:INTRO:testablelearningofPTFs}, we also show an impossibility result ruling out a certain natural approach to prove testable learning guarantees for PTFs.
In particular, we show in~\Cref{SEC:TO:NOGO} that an approach which has been successful for testably learning halfspaces in~\cite{RubinfeldVasilyan:testlearning} provably cannot work for PTFs.

\myparagraph{Limitations.}
The dependence of the running time on the degree parameter $d$ and the error $\e$ is (much) worse than in the agnostic model (see \Cref{THM:PTFPROOF:testablelearningofPTFs}).
Moreover, we do not have access to \emph{lower bounds} on the complexity of testably learning PTFs which might indicate whether these dependencies are inherent to the problem, or an artifact of our analysis. The only lower bounds available apply already in the agnostic model, and show that the time complexity of agnostically (and thus also testably) learning degree-$d$ PTFs is at least $n^{\Omega(d^2/\e^2)}$ in the SQ-model~\cite{diakonikolas2021optimality}, and at least~$n^{\widetilde{\Omega}(d^{2-\beta}/\e^{2-\beta})}$ for any $\beta > 0$ under a cryptographic hardness assumption~\cite{tiegel2023hardness}.

\subsection{Previous work} \label{SEC:INTRO:previouswork}
The two works most closely related to this paper are~\cite{RubinfeldVasilyan:testlearning,GollakotaKlivansKothari:testlearningusingfooling}. Both rely on the following high-level strategy.
A standard result~\cite{KKM:agnostichalfspaces} shows that one can \emph{agnostically} learn a concept class~$\cF$ w.r.t. a distribution~$\cD$ in time $n^{O(k)}$ if all elements of $\cF$ are well-approximated w.r.t. $\cD$ by degree-$k$ polynomials.
That is, if for all $f \in \cF$, there exists a degree-$k$ polynomial $h$ such that $\Expec{\abs{h(X) - f(X)}}[X \sim \cD] \leq \e$. 
This result can be extended to the testable setting, but now one needs a good low-degree \mbox{$L_1$-approximation} w.r.t. \emph{any} distribution $\cD'$ accepted by the tester. 
Using the moment-matching tester outlined above, one thus needs to exhibit low-degree approximations to all functions in $\cF$ w.r.t. any distribution which approximately matches the first few moments of $\cD$.

\myparagraph{A direct approach.}
In~\cite{RubinfeldVasilyan:testlearning}, the authors use a direct approach to show that if $\cF = \Fhs$ is the class of halfspaces and $\cD = \mathcal{N}(0, I_n)$, these approximators exist for $ k = O(1/\e^4)$, leading to an overall running time of $n^{O(1/\e^4)}$ for their testable learner. Their approach consists of two steps. First, they construct a low-degree approximation $q \approx \sign$ of the sign function in one dimension using standard techniques. Then, for any halfspace $f(x) = \sign(\langle v, x \rangle - \theta$), they set ${h(x) = q(\langle v, x \rangle - \theta)}$. 
By exploiting \emph{concentration} and \emph{anti-concentration} properties of the \emph{push-forward} under linear functions of distributions that match the moments of a Gaussian, they show that $h$ is a good approximation of~$f$. 
Unfortunately, this kind of approach cannot work for PTFs:
We formally rule it out in~\Cref{THM:NOGO:degree6}.
This is the aforementioned secondary contribution of our paper, which extends earlier impossibility results for (agnostic) learning of Bun \& Steinke~\cite{BunSteinke:impossibilityofapproximation}. See~\Cref{SEC:TO:NOGO} for details.

\myparagraph{An indirect approach using fooling.}
In order to prove our main theorem we thus need a different approach.
Gollakota, Klivans \& Kothari~\cite{GollakotaKlivansKothari:testlearningusingfooling} establish a connection between testable learning and the notion of \emph{fooling}, which has played an important role in the study of pseudorandomness~\mbox{\cite{braverman2008polylogarithmic,bazzi2009polylogarithmic,diakonikolas2010bounded}}.
Its connection to learning theory had previously been observed in~\cite{kane2013learning}.
We say a distribution~$\cD'$ fools a concept class~$\cF$ up to error $\e > 0$ with respect to $\cD$ if, for all $f \in \cF$, it holds that $\abs{\Expec{f(X)}[X \sim \cD] - \Expec{f(X)}[X \sim \cD']} \leq \e$.
Roughly speaking, the work~\cite{GollakotaKlivansKothari:testlearningusingfooling} shows that, if any distribution $\cD'$ which approximately matches the moments of $\cD$ up to degree $k$ fools $\cF$ with respect to $\cD$, then $\cF$ can be testably learned in time $n^{O(k)}$ (see \Cref{THM:TO:testablelearningusingfooling} below). 
We remark that (approximately) moment-matching distributions have not been considered much in the existing literature on fooling. Rather, it has focused on distributions~$\cD'$ whose marginals on any subset of $k$ variables are equal to those of $\cD$, which is a stronger condition a priori.
While it coincides with moment-matching in special cases (e.g., when~$\cD$ is the uniform distribution over the hypercube), it does not when $\cD =  \cN(0,I_n)$. 
Nevertheless, the authors of~\cite{GollakotaKlivansKothari:testlearningusingfooling} show that (approximate) moment matching up to degree $k = \tilde{O}(1/\e^2)$ fools halfspaces with respect to $\cN(0,I_n)$, allowing them to obtain the aforementioned result for testably learning $\Fhs$.
In fact, they show that this continues to hold when $\cF$ consists of arbitrary boolean functions applied to a constant number of halfspaces.
They also use existing results in the fooling literature to show that degree-2 PTFs can be testably learned under the uniform distribution over~$\set{0,1}^n$ (but these do not extend to learning over~$\R^n$ w.r.t. a standard Gaussian).

\myparagraph{Other previous work on testable learning.}
In weaker error models than the agnostic model or under less stringent requirements on the error of the learner it is known how to construct  tester-learner pairs with runtime $\poly(n, 1/\e)$~\cite{gollakota2023efficient,diakonikolas2024efficient}.
These results have been extended to allow for the following stronger completeness condition:
The tester has to accept, whenever $\Dlabel$ is an isotropic strongly log-concave distribution~\cite{gollakota2024tester}.

\section{Technical overview}
\label{SEC:TO}
\subsection{Preliminaries}
From here, we restrict to the setting $\mathcal{X} = \R^n$. 
We let $\mathcal{D}$ be a well-behaved distribution on~$\R^n$; usually $\mathcal{D} = \mathcal{N}(0, I_n)$ is the standard Gaussian. For $x \in \R^n$ and a multi-index~$\alpha \in \N^n$, we write $x^\alpha \coloneqq \prod_{i=1}^n x_{i}^{\alpha_i}$. For $k \in \N$, we write $\N^n_k \coloneqq \{ \alpha \in \N^n, \,\sum_{i=1}^n \alpha_i \leq k\}$. We say a statement holds `with high probability' if it holds with probability~$\geq 0.99$. The notation $O_d$ (resp. $\Omega_d$, $\Theta_d$) hides factors that only depend on $d$.
We now define the moment-matching tester introduced above.
\begin{definition}[Moment matching] \label{DEF:TO:MoMatch}
Let $k \in \N$ and $\eta \geq 0$. We say a distribution $\mathcal{D}'$ on $\R^n$ approximately moment-matches $\mathcal{D}$ up to degree $k$ and with slack $\eta$ if
$$
    \left| \Expec{X^\alpha}[X \sim \mathcal{D}] - \Expec{X^\alpha}[X \sim \mathcal{D}'] \right| \leq \eta \quad \forall \, \alpha \in \N^n_k.
$$
\end{definition}
\begin{definition} \label{DEF:TO:MoMatchTester}
    Let $k \in \N$ and $\eta \geq 0$. The \emph{approximate moment-matching} tester $\mathcal{T}_{\rm AMM} = \mathcal{T}_{\rm AMM}(k, \eta)$ for a distribution $\mathcal{D}$ accepts the samples $(x^{(1)}, z^{(1)}), \ldots,$ \mbox{$(x^{(m)}, z^{(m)}) \in \R^n \times \{ \pm 1\}$} if, and only if,
    \[
        \left| \Expec{X^\alpha}[X \sim \mathcal{D}] - \frac{1}{m}\sum_{i=1}^m \big( x^{(i)}\big)^\alpha  \right| \leq \eta \quad \forall \, \alpha \in \N^n_k.
    \]
    That is, $\mathcal{T}_{\rm AMM}(k, \eta)$ accepts if and only if the moments of the empirical distribution belonging to the samples $\{x^{(i)}\}$ match the moments of $\mathcal{D}$ up to degree $k$ and slack $\eta$. Note that $\mathcal{T}_{\rm AMM}(k, \eta)$ requires time at most $O(m \cdot n^k)$ to decide whether to accept a set of $m$ samples.
\end{definition}
The tester $\mathcal{T}_{\rm AMM}$ does not take the labels of the samples into account. 
In general, for testers $\mathcal{T}$ which depend only on the marginal $\mathcal{D}$ of $\Dlabel$ on $\R^n$, we say that $\mathcal{T}$ accepts a distribution $\mathcal{D}'$ on~$\R^n$ if it accepts samples drawn from $\mathcal{D}'$ with high probability (regardless of the labels).

\subsection{Review of existing techniques for testable learning}
In this section, we review in more detail the existing techniques to establish guarantees for agnostic and testable learning discussed in \Cref{SEC:INTRO:previouswork}. Our goals are twofold. First, we wish to highlight the technical difficulties that arise from proving error guarantees in the testable model versus the agnostic model. Second, we want to introduce the necessary prerequisites for our proof of \Cref{THM:INTRO:testablelearningofPTFs} in \Cref{SEC:TO:mainresult}, namely testable learning via \emph{fooling} (see \Cref{THM:TO:testablelearningusingfooling}).

\myparagraph{Learning and polynomial approximation.}
A standard result~\cite{KKM:agnostichalfspaces} shows that one can agnostically learn any concept class that is well-approximated by low-degree polynomials in the following sense.
\begin{theorem}[\cite{KKM:agnostichalfspaces}] \label{THM:TO:agnosticregression}
    Let $k \in \N$ and $\varepsilon > 0$. Suppose that, for any $f \in \mathcal{F}$, there exists a polynomial $h$ of degree $k$ such that
    \[
        \Expec{|h(X) - f(X)|}[X \sim \mathcal{D}] \leq \varepsilon.
    \]
    Then, $\mathcal{F}$ can be agnostically learned up to error $\varepsilon$ in time and sample complexity $n^{O(k)}/\mathrm{\mathrm{poly}(\varepsilon)}$.
\end{theorem}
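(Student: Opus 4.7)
My plan is to prove \Cref{THM:TO:agnosticregression} via the standard $L_1$-polynomial regression algorithm. The learner draws $m = \mathrm{poly}(n^k, 1/\varepsilon)$ samples $(x^{(i)}, z^{(i)})$ from $\Dlabel$ and solves the linear program
\[
    \hat{p} \in \arg\min_{\deg p \leq k,\ \|p\|_2 \leq B}\, \frac{1}{m}\sum_{i=1}^m \left| p(x^{(i)}) - z^{(i)} \right|,
\]
where $\|\cdot\|_2$ denotes the $L^2(\mathcal{D})$-norm (equivalently, the $\ell^2$-norm on coefficients in an orthonormal polynomial basis of $\mathcal{D}$) and $B$ is a mild absolute constant. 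After linearizing the objective with slack variables, the LP has $O(n^k)$ variables and $O(m)$ constraints and is solved in time $\mathrm{poly}(m, n^k)$. The learner outputs $\hat{f}(x) := \sign(\hat{p}(x) - \tau^*)$, where $\tau^* \in [-1,1]$ is the threshold minimizing the empirical $0/1$-loss of the classifier $\sign(\hat{p}(\cdot) - \tau)$ over the same samples (an $O(m\log m)$ line search).

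For the analysis I would combine three ingredients. First, the pointwise inequality $\mathbb{1}[\sign(p(x)-\tau)\neq z] \leq |p(x)-z|$ (valid for $z \in \{\pm 1\}$, $p \in \R$, and any $\tau \in [-1,1]$, after a short case check) together with the averaging argument "randomize $\tau$ uniformly in $[-1,1]$" implies that for \emph{some} deterministic threshold $\tau^* \in [-1,1]$,
\[
    \Prob{\sign(\hat{p}(X) - \tau^*) \neq Z} \leq \tfrac{1}{2}\,\Expec{|\hat{p}(X) - Z|}.
\]
Second, by hypothesis the optimal $f^* \in \mathcal{F}$ (achieving $L(f^*, \Dlabel) = \opt$) admits a degree-$k$ approximator $h^*$ with $\Expec{|h^*(X) - f^*(X)|}[X \sim \mathcal{D}] \leq \varepsilon$, and the triangle inequality together with $\Expec{|f^*(X)-Z|} = 2\opt$ yields $\Expec{|h^*(X) - Z|} \leq \varepsilon + 2\opt$. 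Third, by uniform convergence over the $L^2(\mathcal{D})$-ball of polynomials, the empirical $L_1$-loss agrees with the population $L_1$-loss up to additive $O(\varepsilon)$ on this class, so the LP-optimal $\hat{p}$ satisfies $\Expec{|\hat{p}(X) - Z|} \leq \varepsilon + 2\opt + O(\varepsilon)$. Chaining the three bounds gives $L(\hat{f}, \Dlabel) \leq \opt + O(\varepsilon)$, and rescaling $\varepsilon$ by a constant (absorbed into the $n^{O(k)}$ runtime) yields $\opt + \varepsilon$.

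The main obstacle is the uniform convergence step, because when $\mathcal{D}$ is unbounded (e.g., $\mathcal{D} = \cN(0, I_n)$) a degree-$k$ polynomial with bounded $L^2(\mathcal{D})$-norm is itself unbounded on typical samples, ruling out a naive application of, say, bounded-range Rademacher bounds. The standard fix is to use hypercontractivity of polynomials under $\mathcal{D}$: a degree-$k$ polynomial with $L^2(\mathcal{D})$-norm $B$ has all higher moments controlled by $B$ and $k$, which lets one either truncate the samples with negligible bias or run a direct covering-number argument on the $\mathrm{poly}(n^k)$-dimensional coefficient ball. Once this uniform-convergence bound of the form $\mathrm{poly}(n^k, 1/\varepsilon)$ samples is in place, the remainder of the proof is the concatenation of the two elementary inequalities above with the LP solve.
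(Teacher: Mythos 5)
The paper does not prove this statement; it is cited verbatim from~\cite{KKM:agnostichalfspaces}, with only the one-line remark that the underlying algorithm is ``polynomial regression w.r.t.\ the absolute loss function.'' Your outline is indeed the standard KKMS route behind that citation: $L_1$-polynomial regression followed by thresholding with a randomized/averaged threshold, and a uniform-convergence step handled via hypercontractivity or truncation. So the approach matches what the paper is pointing to.

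There is, however, a genuine error in your first ingredient. The inequality $\mathbb{1}[\sign(p(x)-\tau)\neq z] \le |p(x)-z|$ is \emph{false} for arbitrary fixed $\tau\in[-1,1]$: take $z=1$, $p(x)=0.5$, $\tau=0.9$; the indicator is $1$ but $|p(x)-z|=0.5$. It holds only for $\tau=0$, and with $\tau=0$ it yields only $\Pr[\sign(\hat p(X))\neq Z]\le \Expec{|\hat p(X)-Z|}\le 2\opt+O(\e)$, which is not the claimed $\opt+O(\e)$. The factor $\tfrac12$ that rescues the bound does \emph{not} come from a pointwise inequality combined with averaging; it \emph{is} the averaged statement:
\[
\Expec{\mathbb{1}[\sign(p(x)-\tau)\neq z]}[\tau\sim\mathrm{Unif}[-1,1]] \;\le\; \tfrac12\,|p(x)-z|,
\]
which one checks directly by computing the length of the bad interval for $\tau$. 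Your downstream conclusion (existence of a deterministic $\tau^*$ achieving $\Pr[\sign(\hat p(X)-\tau^*)\neq Z]\le\tfrac12\Expec{|\hat p(X)-Z|}$) is correct, but it should be derived from this averaged inequality, not from a nonexistent pointwise one. A second, smaller gap: you pick $\tau^*$ to minimize the \emph{empirical} $0/1$-loss, whereas the averaging argument guarantees a good \emph{population} threshold; you need a (trivial, VC-dimension-1) uniform-convergence argument over thresholds to close that loop. The rest of your plan --- triangle inequality giving $\Expec{|h^*(X)-Z|}\le 2\opt+\e$, and generalization for the $L_1$ objective over the degree-$k$ polynomial class via hypercontractivity/truncation --- is standard and sound.
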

The underlying algorithm in the theorem above is polynomial regression w.r.t. the absolute loss function.
In the testable setting, a similar result holds. The key difference is that one now needs good approximation  w.r.t. \emph{all} distributions accepted by the proposed tester.

\begin{theorem}\label{THM:TO:testableregression}
Let $k \in \N$ and $\varepsilon > 0$. Let $\mathcal{T}$ be a tester which accepts $\mathcal{D}$ and which requires time and sample complexity $\tau$. Suppose that, for any $f \in \mathcal{F}$, and for any $\mathcal{D}'$ accepted by $\mathcal{T}$, there exists a polynomial $h$ of degree $k$ such that
    \[
        \Expec{|h(X) - f(X)|}[X \sim \mathcal{D'}] \leq \varepsilon.
    \]
    Then, $\mathcal{F}$ can be testably learned up to error $\varepsilon$ in time and sample complexity \mbox{$\tau + n^{O(k)}/\mathrm{\mathrm{poly}(\varepsilon)}$}.
\end{theorem}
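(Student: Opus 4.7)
The plan is a direct adaptation of the proof of \Cref{THM:TO:agnosticregression}: combine the tester $\mathcal{T}$ with degree-$k$ polynomial $L_1$-regression. The tester-learner $(\mathcal{T}, \mathcal{A})$ draws $m = \poly(n^k, 1/\varepsilon)$ samples, rejects if $\mathcal{T}$ rejects, and otherwise solves the linear program
$$
\hat p \;\in\; \arg\min_{\deg p \leq k} \; \frac{1}{m} \sum_{i=1}^m |p(x^{(i)}) - z^{(i)}|
$$
(subject to a norm bound on the coefficients and with values clipped to $[-1,1]$, as in the standard proof), and returns the randomized classifier $\hat f(x) = \sign(\hat p(x) - u)$ with $u$ uniform on $[-1,1]$, derandomized in the usual way. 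Completeness is immediate: by hypothesis $\mathcal{T}$ accepts $\mathcal{D}$, hence accepts the marginal of any $\Dlabel$ with $\cD_\cX = \cD$.

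For soundness, suppose $\mathcal{T}$ accepts samples from $\Dlabel$ with high probability, and let $\mathcal{D}'$ denote the marginal of $\Dlabel$ on $\R^n$; then $\mathcal{D}'$ is accepted by $\mathcal{T}$ in the sense of \Cref{SEC:TO}. Let $f^* \in \cF$ attain $L(f^*, \Dlabel) = \opt$, and let $h^*$ be the degree-$k$ polynomial provided by the hypothesis, so $\Expec{|h^*(X) - f^*(X)|}[X \sim \mathcal{D}'] \leq \varepsilon$. Empirical minimality of $\hat p$, a uniform convergence argument over the regression class, and the triangle inequality $|h^*(X) - Z| \leq |h^*(X) - f^*(X)| + |f^*(X) - Z|$ combine to give
$$
\Expec{|\hat p(X) - Z|}[\Dlabel] \;\leq\; \Expec{|h^*(X) - f^*(X)|}[X \sim \mathcal{D}'] + \Expec{|f^*(X) - Z|}[\Dlabel] + O(\varepsilon) \;\leq\; 2\opt + O(\varepsilon).
$$
Since the randomized classifier satisfies the elementary identity $L(\hat f, \Dlabel) = \tfrac{1}{2}\Expec{|\hat p(X) - Z|}[\Dlabel]$ (valid once $\hat p, Z \in [-1,1]$, via $\Prob{\sign(\hat p(X) - u) \neq Z}[u] = (1 - Z \hat p(X))/2$), we conclude $L(\hat f, \Dlabel) \leq \opt + O(\varepsilon)$, which is the desired soundness guarantee after rescaling $\varepsilon$ by a constant factor.

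The reduction itself is direct; the only genuinely delicate point is the uniform convergence step. Degree-$k$ polynomials are unbounded and we have no a priori moment control on $\Dlabel$, so off-the-shelf generalization bounds do not apply. This is handled exactly as in the agnostic case: restrict the regression to polynomials whose coefficient vector has $\ell_2$-norm at most some $B = \poly(1/\varepsilon)$, observe that a suitably clipped $h^*$ satisfies this bound, and apply standard covering-number estimates for this class expressed over the $n^{O(k)}$ monomial features. The overall running time is $\tau$ plus the cost of a linear program in $n^{O(k)}$ variables, giving the claimed complexity $\tau + n^{O(k)}/\poly(\varepsilon)$.
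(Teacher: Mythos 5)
The paper never actually proves \Cref{THM:TO:testableregression}; it is stated as background (the testable analogue of \Cref{THM:TO:agnosticregression}) and the main result is proved via the fooling route of \Cref{THM:TO:testablelearningusingfooling}, so there is no paper proof to compare against. Your plan---run $\mathcal{T}$, then degree-$k$ $L_1$ polynomial regression, then randomized rounding with the identity $L(\hat f,\Dlabel)=\tfrac12\,\Expec{|\hat p(X)-Z|}$---is the correct and standard skeleton, and the triangle-inequality accounting giving $\opt+O(\varepsilon)$ (rather than a lossy $2\,\opt$) is right.

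There is, however, a genuine gap in the generalization step, and the claim that it is ``handled exactly as in the agnostic case'' does not hold up. In \Cref{THM:TO:agnosticregression} the marginal is the fixed, well-behaved $\mathcal{D}$, so low-degree polynomials have controlled moments and their empirical $L_1$ loss concentrates. Here the marginal $\mathcal{D}'$ is adversarial, and an \emph{arbitrary} tester $\mathcal{T}$ gives no moment control on it. Two of your steps need such control: (i) relating the empirical loss of the comparator $h^*$ to its population loss up to $+O(\varepsilon)$ requires a variance bound on $|h^*(X)-Z|$, but the hypothesis only gives a first-moment bound, and Markov alone incurs a multiplicative factor (degrading the final bound to roughly $2\,\opt+O(\varepsilon)$); (ii) the covering-number argument you invoke for uniform convergence requires a bounded envelope, but bounding the $\ell_2$-norm of the coefficient vector does \emph{not} bound $|p(x)|$ when the monomials $x^\alpha$ are heavy-tailed under $\mathcal{D}'$. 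Clipping the learned $\hat p$ handles (ii) on the output side, but $\operatorname{clip}(h^*)$ is no longer a degree-$k$ polynomial, so it cannot replace $h^*$ as the feasible comparator in the LP, and (i) is left unresolved; also, nothing in the hypothesis implies that $h^*$ has small coefficients in the first place. The standard fix---and what \cite{RubinfeldVasilyan:testlearning,GollakotaKlivansKothari:testlearningusingfooling} actually do, and what the theorem statement implicitly licenses since the tester-learner pair may always strengthen $\mathcal{T}$---is to run $\mathcal{T}$ \emph{in conjunction with} an empirical moment check up to degree $\approx 2k$ against $\mathcal{D}$ (as in $\mathcal{T}_{\mathrm{AMM}}$); this preserves completeness and supplies exactly the second-moment control needed in (i)--(ii). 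Your write-up silently assumes this structure; making the augmented tester explicit is where the real work of this theorem lies.
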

The takeaway is that, in order to devise efficient algorithms for agnostic or testable learning, it suffices to study low-degree polynomial approximations of elements of $\mathcal{F}$. Under the assumption that $\mathcal{D}$ is a (standard) Gaussian, one has access to powerful techniques from Fourier analysis to show existence of good polynomial approximators w.r.t. $\mathcal{D}$ for various concept classes. 
Using~\Cref{THM:TO:agnosticregression}, this leads to efficient agnostic learning algorithms for a variety of concept classes w.r.t. $\mathcal{N}(0, I_n)$~\cite{KKM:agnostichalfspaces, Klivans:Gaussiansurface, Kane:agnosticPTFs}. 

\myparagraph{Testable learning via direct approximation.}
In the testable setting, it is not sufficient to approximate with respect to $\mathcal{D}$ alone, and so one cannot rely directly on any of its special structure. In~\cite{RubinfeldVasilyan:testlearning}, the authors overcome this obstacle to get testable learning guarantees for halfspaces w.r.t. $\mathcal{D} = \mathcal{N}(0, I_n)$ by appealing to more basic properties of the distributions $\mathcal{D}'$ accepted by their tester.
Their approach is roughly as follows. First, they use standard results from polynomial approximation theory to find a (univariate) polynomial $q$ which approximates the sign-function well on the interval $[-1, 1]$.
For a halfspace $f(x) = \sign(\langle v, x \rangle - \theta)$, they consider the approximator $h(x) = q(\langle v, x \rangle - \theta)$, which satisfies
    \[
        \Expec{|h(X) - f(X)|}[X \sim \mathcal{D}'] =         \Expec{|q(Y) - \sign(Y)|}[Y \sim \mathcal{D}'_{v, \theta}].
    \]
Here, $\mathcal{D}'_{v, \theta}$ is the (shifted) projection of $\mathcal{D'}$ onto the line $\mathrm{span}(v) \subseteq \R^n$. That is, $Y = \langle v, X \rangle - \theta$.
Then, for carefully chosen $k \in \N$ and $\eta > 0$, they show that for any $\mathcal{D}'$ accepted by $\mathcal{T}_{\rm AMM}(k, \eta)$, the distribution $\mathcal{D}'_{v, \theta}$ satisfies certain \emph{concentration} and \emph{anti-concentration} properties, meaning essentially that $\mathcal{D}'_{v, \theta}$ is distributed somewhat uniformly on $[-1, 1]$. 
As $q$ approximates the sign-function on~$[-1, 1]$, they may conclude that $\Expec{|q(Y) - \sign(Y)|}[Y \sim \mathcal{D}'_{v, \theta}]$ is small, and invoke~\Cref{THM:TO:testableregression}.

\myparagraph{Testable learning via fooling.}
It is natural to attempt a generalization of the approach above to PTFs. Indeed, for $f(x) = \sign(p(x))$, one could consider the approximator $h(x) = q (p(x))$. 
However, as we show below in~\Cref{SEC:TO:NOGO}, this approach cannot work when $\mathrm{deg}(p) \geq 6$.
Instead, we will rely on a more indirect technique, proposed in~\cite{GollakotaKlivansKothari:testlearningusingfooling}. It connects the well-studied notion of \emph{fooling} to low-degree polynomial approximation, and to testable learning.
\begin{definition}[Fooling]
Let $\varepsilon > 0$. We say a distribution $\mathcal{D}'$ on $\R^n$ fools $\mathcal{F}$ w.r.t. $\mathcal{D}$ up to error $\varepsilon$, if, for all $f \in \mathcal{F}$, we have
$
\left|\Expec{f(Y)}[Y \sim \mathcal{D}] - \Expec{f(X)}[X \sim \mathcal{D}']\right| \leq \varepsilon
$.
\end{definition}
The main result of~\cite{GollakotaKlivansKothari:testlearningusingfooling} shows that fooling implies testable learning when using approximate moment-matching to test the distributional assumptions. It forms the basis of our proof of~\Cref{THM:INTRO:testablelearningofPTFs}.

\begin{theorem}[{\cite[Theorem 4.5]{GollakotaKlivansKothari:testlearningusingfooling}}]
\label{THM:TO:testablelearningusingfooling}
Let $k, m \in \N$ and $\e, \eta > 0$. Suppose that the following hold:
\begin{enumerate}
    \item Any distribution $\mathcal{D}'$ whose moments up to degree $k$ match those of $\mathcal{D}$ with slack $\eta$ fools $\mathcal{F}$ w.r.t. $\mathcal{D}$ up to error $\e/2$.
    \item With high probability over $m$ samples from $\mathcal{D}$ the empirical distribution matches moments of degree at most $k$ with $\mathcal{D}$ up to slack~$\eta$.
\end{enumerate}
Then, using the moment-matching tester $\mathcal{T} = \mathcal{T}_{\rm AMM}(k, \eta)$, we can learn $\mathcal{F}$ testably with respect to~$\mathcal{D}$ up to error $\varepsilon$ in time and sample complexity $m + n^{O(k)}$.
\end{theorem}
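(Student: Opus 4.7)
The plan is to use the fooling hypothesis to manufacture good low-degree $L_1$-approximators with respect to \emph{every} distribution accepted by the moment-matching tester, so that \Cref{THM:TO:testableregression} applies. Completeness is immediate from hypothesis~(2): whenever the marginal of $\Dlabel$ on $\R^n$ is $\mathcal{D}$, the empirical moments of degree~$\leq k$ of $m$ samples lie within slack $\eta$ of those of $\mathcal{D}$ with high probability, which is exactly the acceptance condition of $\mathcal{T}_{\rm AMM}(k,\eta)$.

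The heart of the argument is the following reduction from fooling to $L_1$ polynomial approximation. Fix a distribution $\mathcal{D}'$ on $\R^n$ whose moments of degree~$\leq k$ match those of $\mathcal{D}$ with slack $\eta$, and fix $f \in \mathcal{F}$. I claim there is a polynomial $p$ of degree~$\leq k$ with $\Expec{|p(X) - f(X)|}[X \sim \mathcal{D}'] \leq \varepsilon/2$. The minimization over polynomial coefficients is a finite-dimensional linear program in disguise, and by LP duality its optimum equals
\[
    \sup \bigl\{\, \Expec{h(X) f(X)}[X \sim \mathcal{D}'] \;:\; \|h\|_\infty \leq 1,\; \Expec{h(X) X^\alpha}[X \sim \mathcal{D}'] = 0 \text{ for all } |\alpha| \leq k \,\bigr\}.
\]
If this supremum strictly exceeded $\varepsilon/2$, with near-optimal witness $h$, I would define probability measures $d\mathcal{D}_\pm(x) := (1 \pm h(x))\, d\mathcal{D}'(x)$. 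These are genuine probability distributions since $|h| \leq 1$, and because $h$ is orthogonal under $\mathcal{D}'$ to every monomial of degree~$\leq k$, the moments of $\mathcal{D}_\pm$ up to degree $k$ coincide \emph{exactly} with those of $\mathcal{D}'$. Consequently, both $\mathcal{D}_\pm$ continue to moment-match $\mathcal{D}$ with the same slack $\eta$, and yet
\[
    \Expec{f(X)}[\mathcal{D}_+] - \Expec{f(X)}[\mathcal{D}_-] = 2\, \Expec{h(X) f(X)}[\mathcal{D}'] > \varepsilon,
\]
contradicting hypothesis~(1), which together with the triangle inequality forces this difference to be at most $\varepsilon$.

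Given the approximator claim, the conclusion follows from \Cref{THM:TO:testableregression} applied with the tester $\mathcal{T}_{\rm AMM}(k,\eta)$: any $\mathcal{D}'$ it accepts moment-matches $\mathcal{D}$ with slack essentially $\eta$ (up to concentration), and hence admits the required degree-$\leq k$ $L_1$-approximators to every $f \in \mathcal{F}$ with error $\leq \varepsilon/2$. Polynomial $L_1$-regression then yields a testable learner in time and sample complexity $m + n^{O(k)}/\poly(\varepsilon)$.

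The main technical obstacle I anticipate is making the LP-duality step rigorous over the infinite-dimensional space of measurable functions on $\R^n$; standard remedies include restricting to large compact sets, imposing norm bounds on witnesses $h$, or exploiting that only finitely many moment constraints are active. A secondary point worth emphasizing is the tight slack bookkeeping: the distributions $\mathcal{D}_\pm$ inherit the \emph{same} slack $\eta$ as $\mathcal{D}'$ (not a degraded one), which is exactly what allows the fooling hypothesis to yield an immediate contradiction without iterating.
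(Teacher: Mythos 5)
Your argument is correct in outline and takes a genuinely different route from the one the paper points to. The paper defers to the proof in \cite{GollakotaKlivansKothari:testlearningusingfooling}, whose key lemma (as the paper sketches after the theorem statement) is an equivalence between fooling and \emph{sandwiching}: one constructs degree-$\leq k$ polynomials $h_1 \leq f \leq h_2$ pointwise with $\Expec{h_2(X)-h_1(X)}[X\sim\mathcal{D}]$ small, and the pointwise inequalities then transfer for free to any moment-matching $\mathcal{D}'$. You instead bypass sandwiching: for each accepted $\mathcal{D}'$ you apply Hahn--Banach/$L_1$--$L_\infty$ duality to the $L_1$-approximation problem under that $\mathcal{D}'$, take a near-optimal witness $h$ orthogonal to all degree-$\leq k$ monomials with $\|h\|_\infty \leq 1$, and observe that $d\mathcal{D}_\pm = (1\pm h)\,d\mathcal{D}'$ are probability measures with the \emph{same} degree-$\leq k$ moments as $\mathcal{D}'$; the fooling hypothesis applied to $\mathcal{D}_\pm$ then forces the dual value to be $\leq \e/2$, and \Cref{THM:TO:testableregression} finishes the job. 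Both approaches rest on a duality between bounded test functionals and moment-constrained measures, but yours applies it per accepted distribution and yields a direct $L_1$ bound, whereas sandwiching applies it once (under $\mathcal{D}$) and exploits the distribution-free pointwise ordering. What you pay for the more direct route is precisely what you flag: making the duality rigorous over $L_1(\mathcal{D}')$ (attainment of the sup, integrability of degree-$\leq k$ polynomials under $\mathcal{D}'$, which a priori needs absolute moments beyond degree $k$ — typically handled by truncation or by working with the empirical measure), plus the small bookkeeping step of converting ``accepted by $\mathcal{T}_{\rm AMM}(k,\eta)$'' (an empirical-moment condition) into a slack bound on the moments of $\mathcal{D}'$ itself. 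Neither of these looks fatal, and the slack preservation for $\mathcal{D}_\pm$ — which you correctly identify as the linchpin — is exactly right.
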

\begin{remark} \label{REM:TO:msize}
When $\mathcal{D} = \mathcal{N}(0, I_n)$ is the standard Gaussian, then the second condition in \Cref{THM:TO:testablelearningusingfooling} is satisfied for $m = \Theta\big((2kn)^k \cdot \eta^{-2}\big)$, see also~\Cref{FACT:PTFPROOF:empiricaldistributionmomentmatching}.
\end{remark}

The primary technical argument in the proof of~\Cref{THM:TO:testablelearningusingfooling} in~\cite{GollakotaKlivansKothari:testlearningusingfooling} is an equivalence between fooling and a type of low-degree polynomial approximation called \emph{sandwiching}.
Compared to~\Cref{THM:TO:testableregression}, the advantage of sandwiching is that one needs to approximate $f$ only w.r.t. $\mathcal{D}$ (rather than any distribution accepted by the tester). However, one needs to find not one, but two low degree approximators $h_1, h_2$ that satisfy $h_1 \leq f \leq h_2$ pointwise (i.e., `sandwich' $f$). We refer to~\cite{GollakotaKlivansKothari:testlearningusingfooling} for details.

\myparagraph{Fooling PTFs.}
In light of~\Cref{THM:TO:testablelearningusingfooling} and~\Cref{REM:TO:msize}, in order to prove our main result~\Cref{THM:INTRO:testablelearningofPTFs}, it suffices to show that distributions $\mathcal{D}'$ which approximately match the moments of $\mathcal{N}(0, I_n)$ fool the concept class of PTFs. 
This is our primary technical contribution (see~\Cref{PROP:TO:foolingarbitrary} below). It can be viewed as a generalization of the following result due to Kane~\cite{Kane:foolingforkGaussians}.
\begin{theorem}[{Informal version of~\cite[Theorem 1]{Kane:foolingforkGaussians}}] \label{THM:TO:KaneFoolPTF}
    Let $\mathcal{D}'$ be a $k$-independent standard Gaussian, meaning the restriction of $\mathcal{D}'$ to any subset of $k$ variables has distribution $\mathcal{N}(0, I_k)$. Then, $\mathcal{D}'$ fools degree-$d$ PTFs w.r.t. $\mathcal{N}(0, I_n)$ up to error $\e > 0$ as long as $k = k(d, \e)$ is large enough. 
    \end{theorem}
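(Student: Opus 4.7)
The plan is to build a single multivariate polynomial $Q$ of degree $K = K(d,\e)$ which simultaneously approximates $F(x) := \sign(p(x))$ in $L^1$ under both $\mathcal{D} := \mathcal{N}(0,I_n)$ and $\mathcal{D}'$, up to error $\e/2$. Once $Q$ is in hand, $k$-independence of $\mathcal{D}'$ with $k \geq K$ forces $\Expec{Q(X)}[X \sim \mathcal{D}] = \Expec{Q(X)}[X \sim \mathcal{D}']$, because every monomial $x^\alpha$ of total degree $\leq K$ involves at most $K$ coordinates, and on any $K$-subset of coordinates the two distributions agree. Chaining the three bounds yields the desired $\e$-fooling. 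By homogeneity we may normalize $\Var{p(X)}[X \sim \mathcal{D}] = 1$, which does not affect $\sign(p)$.

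The construction of $Q$ has three steps. \emph{(i) Smoothing.} Replace $\sign$ by a smooth $\delta$-mollification $\phi_\delta \colon \R \to [-1,1]$ with $|\phi_\delta(t) - \sign(t)| \leq 2 \cdot \mathds{1}[|t| \leq \delta]$; by the Carbery--Wright anti-concentration inequality, the $\sign \to \phi_\delta$ swap costs $O(d \delta^{1/d})$ under $\mathcal{D}$, which is $\leq \e/4$ for $\delta$ polynomially small in $\e/d$. \emph{(ii) Univariate polynomial approximation.} Approximate $\phi_\delta$ uniformly on $[-T, T]$ by a univariate polynomial $q$ of degree $D = \mathrm{poly}(T/\delta, \log(1/\e))$ via Jackson's theorem, with $\|\phi_\delta - q\|_\infty \leq \e/8$ on $[-T,T]$. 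Outside $[-T, T]$, Gaussian hypercontractivity gives $\Expec{p(X)^{2m}}[X \sim \mathcal{D}] \leq (Cmd)^{md}$, so a Markov argument makes the tail contribution negligible for $T$ polynomial in $d$ and $\log(1/\e)$. \emph{(iii) Lifting.} Set $Q(x) := q(p(x))$, a polynomial of degree at most $dD$.

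The main obstacle is that the $L^1$ approximation bounds in (i) and (ii) must also hold under $\mathcal{D}'$, about which we know nothing beyond $k$-independence. The tail bound in step (ii) transfers immediately: whenever $2md \leq k$, $\Expec{p(X)^{2m}}[X \sim \mathcal{D}'] = \Expec{p(X)^{2m}}[X \sim \mathcal{D}]$, so Gaussian moment estimates control $\Prob{|p(X)| > T}[X \sim \mathcal{D}']$. For the anti-concentration step (i), rather than trying to invoke Carbery--Wright for $\mathcal{D}'$ directly, I would sandwich $\mathds{1}[|t| \leq \delta] \leq \chi_\delta(t) \leq \mathds{1}[|t| \leq 2\delta]$ for a smooth bump $\chi_\delta$, approximate $\chi_\delta$ by a univariate polynomial $\tilde{q}$ of degree $D'$ exactly as in step (ii), and reduce the anti-concentration estimate under $\mathcal{D}'$ to moment matching of $\tilde{q}(p)$, which is again handled by $k$-independence once $k \geq d \max(D, D')$. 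Taking $k = k(d, \e)$ large enough — polynomial in $1/\e$ for fixed $d$ — finishes the proof.
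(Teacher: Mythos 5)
Your proposal sets $Q(x) := q(p(x))$ for a \emph{univariate} polynomial $q$, and tries to show that $Q$ is a good $L^1$ approximation of $\sign(p)$ under $\mathcal{N}(0,I_n)$. This is precisely the ``direct approach'' (or ``push-forward'' approach) that the paper rules out in~\Cref{SEC:TO:NOGO}. \Cref{THM:NOGO:degree6} gives an explicit square-free degree-$6$ polynomial $p$ with $\{p\geq 0\}$ compact for which \emph{no} univariate polynomial $q$ of any degree makes $\Expec{|q(p(X))-\sign(p(X))|}[X\sim\mathcal{N}(0,1)]$ smaller than some absolute constant $\e>0$; and already for degree $3$ (e.g.\ $p(x)=x(x-1)(x-2)$) the push-forward $p_\#\mathcal{N}(0,1)$ is log-superlinear and Bun--Steinke (\Cref{THM:NOGO:bunsteinke}) kills the approach. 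So the step-(iii) lifting cannot succeed for $d\geq 3$, regardless of how cleverly $q$ is chosen.

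The symptom of this is visible in your own step (ii). You cite $\Expec{p(X)^{2m}}\leq (Cmd)^{md}$ and ``a Markov argument'' to control the tail outside $[-T,T]$. But what you actually need to bound is $\Expec{|q(p(X))|\cdot\mathds{1}[|p(X)|>T]}$, and a degree-$D$ polynomial $q$ bounded on $[-T,T]$ can grow like $(|t|/T)^D$ beyond $[-T,T]$. Cauchy--Schwarz then brings in $\Expec{|p(X)|^{2D}}^{1/2}\approx (CDd)^{Dd/2}$, forcing $T\gtrsim (Dd)^{d/2}$. Meanwhile Jackson's theorem needs $D\gtrsim T/\delta$ to achieve uniform error $\e$ on $[-T,T]$. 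Chaining these requires $T\gtrsim (Td/\delta)^{d/2}$, which has a solution only for $d\leq 2$ (this is the same cascade that makes the Rubinfeld--Vasilyan argument work for halfspaces but not higher degree). The moments of $p(X)$ for a degree-$d$ polynomial simply grow too fast — like those of a $d$-th power of a Gaussian, not like those of a Gaussian.

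The missing idea, and what Kane's proof (which the paper's \Cref{SEC:PTFPROOF} extends) actually uses, is the structure theorem (\Cref{LEM:PTFPROOF:structure}): one writes $f=F(P_1,\dots,P_d)$ where $F$ is again a PTF-like function and the $P_{i,j}$ are multilinear polynomials satisfying the \emph{linear-polynomial-like} moment bound $\Expec{|P_{i,j}(Y)|^\ell}\leq O_d(\sqrt{\ell})^\ell$, rather than the $O_d(\sqrt{\ell})^{d\ell}$ bound you'd get for a generic degree-$d$ polynomial. One then mollifies $F$ in $d$ blocks of variables, takes a \emph{multivariate} Taylor polynomial $T$ of the mollification, and sets $h(x)=T(P(x))$. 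The well-controlled moments of the $P_{i,j}$ are what make the tail terms converge. There is no way to achieve this with a univariate $q$ applied directly to $p$; that is the entire content of the paper's impossibility theorem.
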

\Cref{THM:TO:KaneFoolPTF} applies to a class of distributions that is (far) more restrictive than what we need. First, note that $k$-independent Gaussians match the moments of $\mathcal{N}(0, I_n)$ up to degree $k$ exactly, whereas we must allow $\mathcal{D}'$ whose moments \emph{match only approximately}. Second, even if~$\mathcal{D}'$ would match the moments of a Gaussian exactly up to degree $k$, its $k$-dimensional marginals need not be Gaussian. In fact, we have \emph{no information on its moments of high degree} even if they depend on at most $k$ variables. These two distinctions cause substantial technical difficulties in our proof of \Cref{PROP:TO:foolingarbitrary} below.

\subsection{Overview of the proof of \texorpdfstring{\Cref{THM:INTRO:testablelearningofPTFs}}{main result}: testably learning PTFs} \label{SEC:TO:mainresult}
As we have seen, in order to prove \Cref{THM:INTRO:testablelearningofPTFs}, it suffices to show that approximately moment-matching distributions fool PTFs. We obtain the following.
\begin{proposition}\label{PROP:TO:foolingarbitrary}
Let $\e > 0$. Suppose that $\mathcal{D}'$ approximately matches the moments of $\mathcal{N}(0, I_n)$ up to degree $k$ and slack $\eta$, where $k \geq \Omega_d\big(\e^{-4d \cdot 7^d}\big)$, and 
\mbox{$\eta \leq n^{-\Omega_d(k)}k^{-\Omega_d(k)}$}. Then, $\mathcal{D}'$ fools the class of degree-$d$ PTFs w.r.t. $\mathcal{N}(0, I_n)$ up to error $\e/2$. That is, for any $f \in \Fptf{d}$, we then have
\begin{equation}
\label{EQ:TO:foolingPTFgoal}
\left|\Expec{f(Y)}[Y \sim \mathcal{N}(0,I_n)] - \Expec{f(X)}[X \sim \mathcal{D}']\right| \leq \e/2.
\end{equation}
\end{proposition}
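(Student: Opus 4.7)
The plan is to reduce the fooling claim to constructing, for every degree-$d$ PTF $f(x) = \sign(p(x))$, a pair of \emph{sandwiching polynomials} $P_1, P_2 \in \R[x_1, \ldots, x_n]$ of degree at most $k$ satisfying
\[
P_1(x) \le f(x) \le P_2(x) \text{ for all } x \in \R^n,
\qquad
\Expec{P_2(X) - P_1(X)}[X \sim \mathcal{N}(0,I_n)] \le \e/4,
\]
together with a bound on the monomial coefficient $\ell_1$-norm of the form $\sum_{\alpha \in \N^n_k} |c_{i,\alpha}| \le n^{O_d(k)} k^{O_d(k)}$, where $P_i(x) = \sum_\alpha c_{i,\alpha} x^\alpha$. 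Once such $P_1, P_2$ exist, the approximate moment-matching assumption immediately yields
\[
\left| \Expec{P_i(X)}[X \sim \mathcal{D}'] - \Expec{P_i(X)}[X \sim \mathcal{N}(0,I_n)] \right|
\le \eta \cdot \sum_\alpha |c_{i,\alpha}| \le \e/8
\]
for the prescribed $\eta$, and combining the upper and lower pointwise bounds gives \eqref{EQ:TO:foolingPTFgoal}. This is in essence the sandwiching-to-fooling direction that underpins \Cref{THM:TO:testablelearningusingfooling}.

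To construct the sandwichers I would proceed by induction on $d$. The base case $d=1$ reproduces the halfspace sandwichers of \cite{GollakotaKlivansKothari:testlearningusingfooling}, built from a univariate Chebyshev-type approximation of $\sign$ composed with the linear form and shifted by its $L_\infty$ error. For $d \ge 2$, I would apply a \emph{polynomial regularity lemma} in the spirit of Diakonikolas--Servedio--Tan--Wan (and of Kane's own approach to \Cref{THM:TO:KaneFoolPTF}) to rewrite $p$ as a shallow decision tree whose internal nodes query PTFs of lower-variance tail polynomials and whose leaves hold \emph{regular} degree-$d$ polynomials $q$, for which the pushforward of $\mathcal{N}(0,I_n)$ under $q$ is close to a one-dimensional Gaussian by an invariance principle. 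At a regular leaf, Carbery--Wright anti-concentration for Gaussian polynomials ensures that a univariate polynomial approximation of $\sign$ of degree depending only on $d$ and $\e$ lifts, via composition with $q$, to a degree-$O_d(k)$ sandwicher for the leaf PTF. At internal nodes I would plug in sandwichers from the inductive hypothesis and combine them through the tree while preserving the pointwise inequality and controlling the $L_1$ error. This $d$-fold recursion, with each level multiplying the required degree polynomially in the one below, is consistent with the stated dependence $k \ge \Omega_d(\e^{-4d \cdot 7^d})$.

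The main obstacle is exactly this sandwiching construction. Kane's result \cite{Kane:foolingforkGaussians} proves the analogous fooling bound for \emph{exactly} $k$-independent Gaussians, but does so via Hermite-level concentration arguments that are not packaged as pointwise sandwichers, so some re-engineering is required: each regularity step introduces truncations that must be replaced by polynomial upper and lower envelopes, and Chebyshev approximants must be shifted by their $L_\infty$ errors so as to become true sandwichers while keeping the shift negligible in the Gaussian-weighted $L_1$ norm. Once a clean recursive construction is in place, the coefficient bound $\sum_\alpha |c_{i,\alpha}| \le n^{O_d(k)} k^{O_d(k)}$ follows routinely from the fact that Hermite polynomials of degree at most $k$ have monomial $\ell_1$-norm $k^{O(k)}$ and that composition with a Gaussian-normalized polynomial of degree $d$ inflates this by only $\poly_d(n^k)$, closing the argument.
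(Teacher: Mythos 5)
Your high-level reduction---construct pointwise sandwichers $P_1 \le f \le P_2$ of degree $k$ with $\Expec{P_2 - P_1}[\mathcal{N}(0,I_n)] \le \e/4$ and coefficient $\ell_1$-norm $\le n^{O_d(k)} k^{O_d(k)}$, then transfer via approximate moment matching---is sound, and it is indeed the philosophy underlying \Cref{THM:TO:testablelearningusingfooling}. However, the paper does \emph{not} take this route, and the route you sketch for actually building the sandwichers has a genuine gap.

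The paper (following Kane) never produces explicit sandwiching polynomials. It instead constructs a \emph{single} approximant $h = T \circ P$, where $P$ is the tuple of polynomials from Kane's structure theorem (\Cref{LEM:PTFPROOF:structure}), $\tilde F$ is a mollification of the ``outer'' PTF $F$, and $T$ is the multivariate Taylor expansion of $\tilde F$. It then bounds the fooling error directly in three steps: $\Expec{f(Y)} \approx \Expec{\tilde f(Y)}$ (Gaussian-only, inherited from Kane), $\Expec{\tilde f(Y)} \approx \Expec{\tilde f(X)}$ (the new technical core: bounding $|\langle t, u\rangle| \le \|t\|_1 \|u\|_\infty$ for the Taylor coefficient vector $t$ and the moment-difference vector $u$), and finally $\Expec{\tilde f(X)} \approx \Expec{f(X)}$, which requires a Markov-inequality tail bound under $\cD'$ plus Carbery--Wright anti-concentration \emph{under the Gaussian only} (\Cref{LEM:PTFPROOF:conclusionfooling}). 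The last step is where a one-sided error is converted into a two-sided fooling bound without any pointwise sandwiching. The reduction from general to multilinear PTFs (\Cref{LEM:PTFPROOF:existenceofpdelta}) is a further nontrivial piece, because in the approximate-moment-matching setting one loses control of high-degree marginal moments that Kane's $k$-independent-Gaussian proof relies on.

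The concrete problems with your proposed construction are: (i) Your ``induction on $d$'' does not parse---the DSTW-style regularity lemma decomposes a degree-$d$ polynomial into a (restriction) tree whose leaves carry \emph{regular} polynomials that are still degree $d$, so the degree never drops and there is nothing to induct on. (ii) Combining sandwichers at the leaves of a restriction/decision tree into a pointwise sandwicher for the whole PTF is not a routine composition: the tree structure guarantees distributional decompositions, not pointwise identities, and the thresholds at internal nodes are themselves PTFs whose indicator functions would need to be multiplied against leaf sandwichers while preserving $P_1 \le f \le P_2$ everywhere. This multiplication generically destroys one of the two inequalities. (iii) The invariance principle at a regular leaf gives you closeness of distributions, which translates into an expectation bound, not a pointwise envelope; turning it into a pointwise sandwich requires explicit upper and lower polynomial approximants of $\sign$ composed with the regular polynomial together with shift corrections whose Gaussian $L^1$ cost must be bounded, and you have not indicated how to do this. (iv) The coefficient-norm bound is not ``routine'': the paper needs a careful accounting via the partial derivatives of the mollifier $\rho$ (\Cref{LEM:PTFPROOF:Taylorboundonpartialderivatives}) and the $\ell_1$-norms of the $P_{i,j}$ (\Cref{FACT:PTFPROOF:sumofabscoeffofPij}) to get $n^{O_d(k)} k^{O_d(k)}$. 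In short, the sandwiching reduction is fine as a frame, but the content you would need to fill it---constructing the sandwichers---is exactly where the difficulty lies, and the regularity-plus-invariance plan does not, as written, produce pointwise sandwichers.
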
 

In the rest of this section, we outline how to obtain \Cref{PROP:TO:foolingarbitrary}. Full details can be found in~\Cref{SEC:PTFPROOF}.
Structurally, our proof is similar to the proof of \Cref{THM:TO:KaneFoolPTF} in~\cite{Kane:foolingforkGaussians}: First, in \Cref{SEC:TO:foolingmultilinear}, we show fooling for the subclass of PTFs defined by \emph{multilinear} polynomials.
Then, in \Cref{SEC:TO:foolingall}, we extend this result to general PTFs by relating arbitrary polynomials to multilinear polynomials in a larger number of variables. Our primary contribution is thus to show that the construction of~\cite{Kane:foolingforkGaussians} (which considers $k$-independent Gaussians) remains valid for the larger class of distributions that approximately match the moments of a Gaussian.

\subsubsection{Fooling multilinear PTFs} \label{SEC:TO:foolingmultilinear}
Let $f(x) = \sign(p(x))$, where $p$ is a multilinear polynomial. Our goal is to establish~\Cref{EQ:TO:foolingPTFgoal} for~$f$ under the assumptions of~\Cref{PROP:TO:foolingarbitrary}. We will follow the proof of Kane~\cite{Kane:foolingforkGaussians} for
\Cref{THM:TO:KaneFoolPTF}, which proceeds as follows.
Let $\mathcal{D}'$ be a $k$-independent Gaussian. First, Kane constructs a degree-$k$ polynomial approximation $h$ of $f$, satisfying
\begin{align} \label{EQ:TO:twoapproximations1}
    \Expec{h(Y)}[Y \sim \mathcal{N}(0, I_n)] &\approx \Expec{f(Y)}[Y \sim \mathcal{N}(0, I_n)], \quad \text{and}, \\    \label{EQ:TO:twoapproximations2}
    \Expec{h(X)}[X \sim \cD'] &\approx \Expec{f(X)}[X \sim \cD'].
\end{align}
Since the moments of $\mathcal{D}'$ are exactly equal to those of $\mathcal{N}(0, I_n)$ up to degree $k$, we have $\Expec{h(Y)}[Y \sim \mathcal{N}(0, I_n)] = \Expec{h(X)}[X \sim \mathcal{D}']$. We may then conclude the fooling property for $\cD'$ (cf.~\eqref{EQ:TO:foolingPTFgoal}):
\[
\Expec{f(Y)}[Y \sim \mathcal{N}(0, I_n)] \approx \Expec{h(Y)}[Y \sim \mathcal{N}(0, I_n)] = \Expec{h(X)}[X \sim \mathcal{D}'] \approx \Expec{f(X)}[X \sim \mathcal{D}'].
\]
As we see below, Kane relies on a \emph{structure theorem} (see \Cref{LEM:PTFPROOF:structure})
for multilinear polynomials to construct his low-degree approximation $h$.
We wish to extend this proof to our setting, where~$\mathcal{D}'$ merely matches the moments of the standard Gaussian up to degree $k$ and slack $\eta$. As we will see, the construction of the polynomial $h$ remains valid (although some care is required in bounding the approximation error). A more serious concern is that, for us, $\Expec{h(Y)}[Y \sim \mathcal{N}(0, I_n)] \neq \Expec{h(X)}[X \sim \mathcal{D}']$ in general. Our main technical contribution in this section is dealing with the additional error terms that arise from this fact.

\myparagraph{Constructing a low-degree approximation.}
We now give details on the construction of the low-degree approximation $h$ that we use in our proof, which is the same as in~\cite{Kane:foolingforkGaussians}. The starting point of the construction is a structure theorem for multilinear polynomials $p$ (see~\Cref{LEM:PTFPROOF:structure} below). 
It tells us that $f = \sign(p)$ can be decomposed as $f(x) = F(P(x))$, where $F$ is again a PTF, and $P = (P_i)$ is a vector of multilinear polynomials of degree $d$, whose moments $\Expec{P_i(Y)^\ell}[Y \sim \mathcal{N}(0,I_n)]$ are all at most $O_d(\sqrt{\ell})^\ell$. 
Note that these bounds are much stronger than what we would get from standard arguments (which would only yield $\Expec{P_i(Y)^\ell}[Y \sim \mathcal{N}(0,I_n)] \leq O_d(\sqrt{\ell})^{d\ell}$).
As in~\cite{Kane:foolingforkGaussians}, we approximate $F$ by a smooth function $\tilde F$ via mollification (see \Cref{SEC:PTFPROOF:definitionoftildeF}). That is, $\tilde F$ is the convolution $F \ast \rho$ of $F$ with a carefully chosen smooth function~$\rho$. 
Then, we set $h(x) = T(P(x))$, where $T$ is the Taylor approximation of $\tilde F$ of appropriate degree (see~\Cref{SEC:PTFPROOF:tildefcloseforXandY}). Intuitively, taking the Taylor expansion yields a good approximation as the (Gaussian) moments of the $P_i$ are not too large, yielding~\Cref{EQ:TO:twoapproximations1}, \Cref{EQ:TO:twoapproximations2}.

\myparagraph{Error analysis of the approximation.}
Our goal is now to establish~\Cref{EQ:TO:twoapproximations1}, \Cref{EQ:TO:twoapproximations2} in our setting. Note that, since~\Cref{EQ:TO:twoapproximations1} is only concerned with the Gaussian distribution, there is no difference with~\cite{Kane:foolingforkGaussians}. For~\Cref{EQ:TO:twoapproximations2}, we have to generalize the proof in~\cite{Kane:foolingforkGaussians}. For this, we first bound the probability under $\cD'$ that (at least) one of the $P_i$ is large, which we do using Markov's inequality. Then, we need to show a bound on the moments of $P_i$ under~$\cD'$ (recall that the structure theorem only gives a bound on the Gaussian moments).
Using bounds on the coefficients of the $P_i$, we are able to do this under a mild condition on $\eta$ (see~Sections~\ref{SEC:PTFPROOF:tildefcloseforXandY} and \ref{SEC:PTFPROOF:tildeFgoodapproximation}).

\myparagraph{Controlling the additional error terms.}
To conclude the argument, we need to show that, for our low-degree approximation $h$, we have ${\Expec{h(Y)}[Y \sim \mathcal{N}(0, I_n)] \approx \Expec{h(X)}[X \sim \mathcal{D}']}$. Recall that in~\cite{Kane:foolingforkGaussians}, these expectations were simply equal.
The main issue lies in the fact that, in our setting, the moment matching is only approximate; equality would still hold if $\mathcal{D}'$ matched the moments of $\mathcal{N}(0, I_n)$ up to degree $k$ exactly.
Under $\eta$-\emph{approximate} moment matching, we could say that
\begin{equation} \label{EQ:TO:differenceofh}
\left|\Expec{h(Y)}[Y \sim \mathcal{N}(0, I_n)] - \Expec{h(X)}[X \sim \mathcal{D}']\right| \leq \eta \cdot \|h\|_1,
\end{equation}
where $\|h\|_1$ is the 1-norm of the coefficients of $h$. However, there is no way to control this norm directly. Instead, we rely on the fact that $h = T \circ P$ and argue as follows. On the one hand, we show a bound on the coefficients in the Taylor approximation $T$ of $\tilde F$. On the other hand, we show bounds on all terms of the form $\left|\Expec{P(Y)^\alpha}[Y \sim \mathcal{N}(0,I_n)] - \Expec{P(X)^\alpha}[X \sim \mathcal{D}']\right|$. Combining these bounds yields an estimate on the difference $\left|\Expec{h(Y)}[Y \sim \mathcal{N}(0, I_n)] - \Expec{h(X)}[X \sim \mathcal{D}']\right|$, which lets us conclude~\eqref{EQ:TO:foolingPTFgoal}.

Going into more detail, the LHS of \eqref{EQ:TO:differenceofh} is equal to the inner product $|\langle t, u \rangle|$ between the vector $t = (t_\alpha)$ of coefficients of $T$ and the vector $u = (u_\alpha)$, where $u_\alpha = {\Expec{P(Y)^\alpha}[Y \sim \mathcal{N}(0,I_n)] - \Expec{P(X)^\alpha}[X \sim \mathcal{D}']}$.
This can be viewed as a `change of basis' $x \to P(x)$.
Then,~\eqref{EQ:TO:differenceofh} can bounded by $\|u\|_\infty \cdot \|t\|_1$, where $\|u\|_\infty = \max_\alpha |u_\alpha|$.
The coefficients $t_\alpha$ of $T$ are related directly to the partial derivatives of $\tilde F$, which in turn depend on the function $\rho$ used in the mollification.
After careful inspection of this function, we can bound $\|t\|_1 \leq k^{O_d(k)}$ (see~\Cref{LEM:PTFPROOF:Taylorboundonpartialderivatives}).
Finally, for any $|\alpha| \leq k$, it holds that
\[
    \left|\Expec{P(Y)^\alpha}[Y \sim \mathcal{N}(0,I_n)] - \Expec{P(X)^\alpha}[X \sim \mathcal{D}']\right| \leq \eta \cdot \sup_{i} \big(\|P_i\|_1\big)^{|\alpha|} \leq \eta \cdot n^{\frac{|\alpha| \cdot d}{2}} \leq \eta \cdot n^{O_d(k)},
\]
see \Cref{FACT:PTFPROOF:sumofabscoeffofPij}. Putting things together, we get that 
\[
    \left|\Expec{h(Y)}[Y \sim \mathcal{N}(0, I_n)] - \Expec{h(X)}[X \sim \mathcal{D}']\right| \leq \|u\|_\infty \cdot \|t\|_1  \leq \eta \cdot k^{O_d(k)} n^{O_d(k)} \ll \e / 2 ,
\]
using the fact that $\eta \leq n^{-\Omega_d(k)}k^{-\Omega_d(k)}$ and $k \gg 1/\e$ for the last inequality.

\subsubsection{Fooling arbitrary PTFs} \label{SEC:TO:foolingall}
Now, let $f(x) = \sign(p(x))$ be an arbitrary PTF. As before, we want to establish \eqref{EQ:TO:foolingPTFgoal}.
Following~\cite{Kane:foolingforkGaussians}, the idea is to reduce this problem to the multilinear case as follows. Let $Y \sim \mathcal{N}(0, I_n)$ and let $X$ be a random variable that matches the moments of $Y$ up to degree $k$ and with slack $\eta$. For $N \in \N$ to be chosen later, we construct new random variables $\hat X$ and $\hat Y$, and a \emph{multilinear} PTF $\hat f = \sign(\hat p)$, all in~$n \cdot N$ variables,
such that $\hat Y \sim \mathcal{N}(0, I_{nN})$, $\hat X$ matches moments of $\hat Y$ up to degree $k$ with slack $\hat \eta$, and
\begin{equation} \label{EQ:TO:hatnohat}
\left|\Expec{f(Y)}[Y] - \Expec{f(X)}[X]\right| \approx    \left|\Expec{\smash{\hat f(\hat Y)}}[\hat Y] - \Expec{\smash{\hat f( \hat X)}}[\hat X]\right|.
\end{equation}
Assuming $\hat \eta$ is not much bigger than $\eta$, and the approximation above is sufficiently good, we may then apply the result of \Cref{SEC:TO:foolingmultilinear} to $\hat f$ to conclude \eqref{EQ:TO:foolingPTFgoal} for $f$.
Our construction of $\hat X, \hat Y$ and $\hat f$ will be the same as in~\cite{Kane:foolingforkGaussians}.
However, with respect to his proof, we face two difficulties. First, we need to control the slack parameter $\hat \eta$ in terms $\eta$. 
More seriously, Kane's proof of~\eqref{EQ:TO:hatnohat} breaks in our setting: He relies on the fact that $X$ is $k$-independent Gaussian in his setting to bound \emph{high degree} moments of $\hat X$ which depend on at most $k$ variables. In our setting, we have \emph{no information} on such moments at all (even if $X$ matched the moments of $\mathcal{N}(0, I_n)$ up to degree $k$ exactly).

\myparagraph{Construction of \texorpdfstring{$\hat X$}{X̂} and \texorpdfstring{$\hat Y$}{Ŷ}.}
For $i \in [n]$, let $Z^{(i)}$ be an $N$-dimensional Gaussian random variable with mean $0$, variances $1-1/N$ and covariances $-1/N$, independent from $X$ and all other $Z^{(i')}$. 
We define $\hat{X}_{ij} \coloneqq X_i/\sqrt{N} + Z^{(i)}_j$, and set $\hat X = (\hat X_{ij})$. We define $\hat Y$ analogously.
This ensures that $\hat {Y} \sim \mathcal{N}(0, I_{nN})$. Furthermore, given that $X$ matches the moments of $Y$ with slack~$\eta$, it turns out that~$\hat X$ matches the moments of $\hat Y$ with slack $\hat \eta = (2k)^{k/2} \cdot \eta$. This follows by direct computation after expanding the moments of $\hat X$ in terms of those of $X$ and of the $Z^{(i)}$, see \Cref{LEM:PTFPROOF:XmomentmatchingthentildeXmomentmatching}.

\myparagraph{Construction of the multilinear PTF.}
We want to construct a \emph{multilinear} polynomial $\hat p$ in $nN$ variables so that $p(X) \approx \hat p (\hat X)$. For $\hat x \in \R^{nN}$, write $\varphi(\hat x) \coloneqq (\sum_{j=1}^N \hat x_{ij} / \sqrt{N})_{i \in [n]} \in \R^n$. Since $\varphi(Z^{(i)}) = 0$ holds deterministically, $\varphi(\hat X) = X$. So, if we were to set $\hat p = p \circ \varphi$, it would satisfy ${\hat p(\hat X) = p(X)}$. 
However, it would clearly not be multilinear. 
To fix this, we write $p(\varphi(\hat x)) = \sum_{\alpha} \lambda_\alpha \hat x^\alpha$ and replace each non-multilinear term $\lambda_\alpha \hat x^\alpha$ by a multilinear one as follows: If the largest entry of $\alpha$ is at least three, we remove the term completely. If the largest entry of~$\alpha$ is two, we replace the term by $\lambda_\alpha \hat x^{\alpha'}$, where $\alpha'_{ij} 
= 1$ if $\alpha_{ij}=1$ and $0$ otherwise. This is identical to the construction in~\cite{Kane:foolingforkGaussians}. Now, to show that $p(X) \approx \hat p ({\hat X})$ we need to bound the effect of these modifications. It turns out that it suffices to control the following expressions in terms of $N$ ($i \in [n], \, 3 \leq \ell \leq d$):
\begin{equation*}
    a_i \coloneqq \left|\sum_{j=1}^N \frac{\hat{X}_{i,j}}{\sqrt{N}}\right|, \quad
    b_i \coloneqq \left|\sum_{j=1}^N \left(\frac{\hat{X}_{i,j}}{\sqrt{N}}\right)^2 - 1\right|, \quad
    c_{i, \ell} \coloneqq\left|\sum_{j=1}^N \left(\frac{\hat{X}_{i,j}}{\sqrt{N}}\right)^\ell\right|.
\end{equation*}
For the $b_i$ and $c_{i, \ell}$, we can do so using a slight modification of the arguments in~\cite{Kane:foolingforkGaussians}. 
For the $a_i$, however, Kane~\cite{Kane:foolingforkGaussians} exploits the fact that in his setting, $X_i$ is standard Gaussian for each fixed $i \in [n]$, meaning the $\hat X_{i, j}$ are jointly standard Gaussian over $j$. 
This gives him access to strong concentration bounds. To get such concentration bounds in our setting, we would need information on the moments of the $X_i$ up to degree roughly $\log n$. 
However, we only have access to moments up to degree $k$, which is not allowed to depend on $n$ (as our tester uses time $n^{\Omega(k)}$). Instead, we use significantly weaker concentration bounds based on moments of constant degree. 
By imposing stronger conditions on the $b_i$, $c_{i, \ell}$, we are able to show that the remainder of the argument in~\cite{Kane:foolingforkGaussians} still goes through in our setting, see \Cref{SEC:PTFPROOF:proofofexistenceofpdelta}. Finally, for $N$ sufficiently large, this allows us to conclude \eqref{EQ:TO:hatnohat} for $\hat f = \sign(\hat p)$.
\subsection{Impossibility result: learning PTFs via the push-forward} \label{SEC:TO:NOGO}
In this section, we show that the approach of~\cite{RubinfeldVasilyan:testlearning} to prove testable learning guarantees for halfspaces w.r.t. the standard Gaussian cannot be generalized to PTFs. 
Namely, we show that in general, PTFs $f(x) = \sign(p(x))$ with $\mathrm{deg}(p) \geq 3$ cannot be approximated up to arbitrary error w.r.t. $\mathcal{N}(0, I_n)$ by a polynomial of the form $h(x) = q(p(x))$, regardless of the degree of $q$.\footnote{Note that this even excludes proving an \emph{agnostic} learning guarantee w.r.t. $\mathcal{N}(0, I_n)$ using this approach.} 
Importantly, we show that this is the case even if one makes certain typical structural assumptions on $p$ which only change the PTF $f = \sign(p)$ on a set of negligible Gaussian volume; namely that $p$ is square-free and that $\{ p \geq 0\} \subseteq \R^n$ is compact.
Our main technical contribution is an extension of a well-known inapproximability result due to Bun \& Steinke (\Cref{THM:NOGO:bunsteinke} below) to  distributions `with a single heavy tail' (see \Cref{THM:NOGO:truncatedBunSteinke}).

\myparagraph{Approximating the sign-function on the real line.}
Let $p_\#\mathcal{N}(0, I_n)$ be the \emph{push-forward} of the standard Gaussian distribution by $p$, which is defined by
\begin{equation} \label{EQ:NOGO:pf}
    \Prob{Y \in A}[Y \sim p_\#\mathcal{N}(0, I_n)] \coloneqq \Prob{X \in p^{-1}(A)}[X \sim \mathcal{N}(0, I_n)] \quad (A \subseteq \R).
\end{equation}
Note that, if $h(x) = q(p(x))$, we then have
\begin{equation*} 
\Expec{|h(X) - f(X)|}[X \sim \mathcal{N}(0, I_n)] = \Expec{|q(Y) - \sign (Y)|}[Y \sim p_\#\mathcal{N}(0, I_n)].
\end{equation*}
Finding a good approximator $h \approx f$ of the form $h = q \circ p$ is thus equivalent to finding a (univariate) polynomial $q$ which approximates the sign-function on $\R$ well under the push-forward distribution $p_\#\mathcal{N}(0, I_n)$.
In light of this observation, we are interested in the following question: Let $\mathcal{D}$ be a distribution on the real line. Is it possible to find for each $\varepsilon > 0$ a polynomial~$q$ such that \mbox{$ \Expec{|q(Y) - \sign (Y)|}[Y \sim \mathcal{D}] \leq \varepsilon$}?
This question is well-understood for distributions $\mathcal{D}$ whose density is of the form $
    w_\gamma(x) \coloneqq C_\gamma \exp(-|x|^\gamma)$, $\gamma > 0$.
Namely, when $\gamma \geq 1$, these distributions are \emph{log-concave}, and the question can be answered in the affirmative. On the other hand, when $\gamma < 1$, they are \emph{log-superlinear}, and polynomial approximation of the sign function is not possible.
\begin{theorem}[see, e.g.~\cite{KKM:agnostichalfspaces}] \label{THM:NOGO:logconcave}
Let $\mathcal{D}$ be a log-concave distribution on $\R$. Then, for any $\varepsilon > 0$ there exists a polynomial $q$ such that     
$
    \Expec{|q(Y) - \sign (Y)|}[Y \sim \mathcal{D}] \leq \varepsilon.
$
\end{theorem}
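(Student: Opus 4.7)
The plan is to construct the approximating polynomial $q$ directly, by combining polynomial approximation of $\sign$ on a compact interval with two classical properties of log-concave distributions on $\R$. I would rely on: (i) anti-concentration, $\Prob{|Y| \leq \delta}[Y \sim \mathcal{D}] = O(\delta)$, coming from boundedness of the density; and (ii) sub-exponential tails, $\Prob{|Y| > R}[Y \sim \mathcal{D}] \leq C_1 e^{-c_1 R}$, with constants depending on $\mathcal{D}$.

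Given $\varepsilon > 0$, I would fix $\delta = \Theta(\varepsilon)$ and use standard approximation theory---for instance Jackson's theorem applied to a Lipschitz smoothing of $\sign$, or the sharper Eremenko--Yuditskii rate $k = O(\log(1/\varepsilon) \sqrt{R/\delta})$ for polynomial approximation on the two-interval set---to construct a polynomial $q$ of degree $k$ with $|q(x) - \sign(x)| \leq \varepsilon/4$ on $[-R, -\delta] \cup [\delta, R]$ and $|q(x)| \leq 2$ on $[-R, R]$. I would then split the target error $\Expec{|q(Y) - \sign(Y)|}[Y \sim \mathcal{D}]$ into three regions. On $|x| \leq \delta$, the integrand is bounded by $3$ and anti-concentration gives a contribution of $O(\delta) \leq \varepsilon/3$. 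On $\delta \leq |x| \leq R$, the construction bounds the contribution by $\varepsilon/4$. On $|x| > R$, the Chebyshev extremal property yields $|q(x)| \leq 2(2|x|/R)^k$, which combined with the sub-exponential tail produces a contribution of roughly $2^{O(k)} e^{-c_1 R}$; this is negligible once $R$ is chosen as a sufficiently large multiple of $k$.

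The main obstacle is balancing the parameters: the degree $k$ depends on both $\delta$ and $R$, while the tail estimate forces $R \gtrsim k/c_1$. With the $\sqrt{R/\delta}$ scaling this is consistent, giving $R \sim k \sim \log^2(1/\varepsilon)/\delta$, which is finite for every fixed $\varepsilon > 0$; the Jackson-only version also works but requires larger parameters. A cleaner but less constructive alternative is to invoke the classical moment problem: the bound $m_{2k} \leq (Ck)^{2k}$ for log-concave $\mathcal{D}$ verifies Carleman's condition, which forces polynomials to be dense in $L^2(\mathcal{D})$; since $\sign \in L^\infty(\mathcal{D}) \subseteq L^2(\mathcal{D})$ on a probability space, it can be approximated in $L^2$, and hence in $L^1$ by Cauchy--Schwarz.
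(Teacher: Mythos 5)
Your argument is correct. Note, though, that the paper does not supply a proof of this theorem: it is recalled as known background with a pointer to~\cite{KKM:agnostichalfspaces}, so there is no internal proof to compare against. Your primary route---bounded density gives anti-concentration near $0$; a degree-$k$ polynomial approximates a Lipschitz surrogate of $\sign$ uniformly on $[-R,R]$, hence approximates $\sign$ off $[-\delta,\delta]$ and stays $O(1)$ throughout $[-R,R]$; the Chebyshev extremal growth $|q(x)|\le 2(2|x|/R)^k$ for $|x|>R$ is killed by the sub-exponential log-concave tail---is the standard argument in the agnostic-learning literature and is consistent in spirit with the cited reference. The balancing $R=\Theta(k)$ and $k=O(\log^2(1/\varepsilon)/\delta)$ closes for each fixed $\mathcal{D}$ and $\varepsilon>0$, with constants depending on $\mathcal{D}$; this is fine since only existence is claimed. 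The one step worth spelling out is that the global bound $|q|\le 2$ on all of $[-R,R]$ (including the gap $[-\delta,\delta]$) must come from approximating a Lipschitz surrogate of $\sign$ rather than $\sign$ itself, since that is what licenses the Chebyshev extremal estimate outside $[-R,R]$. Your alternative via Carleman's condition is also correct: for log-concave $\mathcal{D}$ one has $m_{2k}^{1/(2k)}=O(k)$ with the constant depending on $\mathcal{D}$, so $\sum_k m_{2k}^{-1/(2k)}=\infty$, the Hamburger moment problem is determinate, polynomials are dense in $L^2(\mathcal{D})$, and $\sign\in L^\infty\subseteq L^2\subseteq L^1$ on a probability space. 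The trade-off between your two routes is quantitativeness: the truncation argument produces an explicit degree bound, which is exactly what polynomial regression needs to turn this into a running-time guarantee, whereas the moment-problem route is shorter but yields no degree control and is therefore of no algorithmic use, even though it suffices for the theorem as literally stated.
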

\begin{definition}
    Let $\mathcal{D}$ be a distribution on $\R$ whose density function $w$ satisfies
    \[
        w(x) \geq C \cdot w_\gamma (x) \quad \forall \, x \in \R
    \]
    for some $\gamma < 1$ and $C > 0$. Then we say $\mathcal{D}$ is \emph{log-superlinear (LSL)}.
\end{definition}
\begin{theorem}[Bun-Steinke~\cite{BunSteinke:impossibilityofapproximation}] \torestate{\label{THM:NOGO:bunsteinke}
    Let $\mathcal{D}$ be an LSL-distribution on $\R$. Then there exists an $\varepsilon > 0$ such that, for any polynomial $q$, we have \[\Expec{|q(Y) - \sign (Y)|}[Y \sim \mathcal{D}] > \varepsilon.\]}
\end{theorem}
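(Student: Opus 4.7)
The plan is to exploit the \emph{moment indeterminacy} of the density $w_\gamma$ for $\gamma \in (0,1)$, a classical phenomenon in the theory of the Hamburger moment problem. Concretely, the first step is to exhibit a measurable function $g : \R \to [-1,1]$ together with a constant $\delta \neq 0$ such that
\begin{equation*}
\int_\R x^n\, g(x)\, w_\gamma(x)\, \dif x = 0 \ \text{for every } n \in \N, \quad \text{yet} \quad \int_\R \sign(x)\, g(x)\, w_\gamma(x)\, \dif x = \delta.
\end{equation*}

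Given such a $g$, the theorem follows at once. For an arbitrary polynomial $q$, integrability of $|q|\, w_\gamma$ together with the vanishing of all polynomial moments of $g\, w_\gamma$ forces $\int q\, g\, w_\gamma\, \dif x = 0$, so
\begin{align*}
|\delta|
= \Bigl|\int_\R \bigl(\sign(x) - q(x)\bigr)\, g(x)\, w_\gamma(x)\, \dif x\Bigr|
\leq \int_\R |\sign(x) - q(x)|\, w_\gamma(x)\, \dif x
\leq \frac{1}{C}\, \Expec{|q(Y) - \sign(Y)|}[Y \sim \mathcal{D}],
\end{align*}
where the first inequality uses $|g| \leq 1$ and the second uses the pointwise bound $w \geq C\, w_\gamma$ built into the LSL assumption. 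The theorem then holds with $\varepsilon \coloneqq C|\delta|/2$, \emph{uniformly in $q$} (and in particular independently of its degree, which is crucial).

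The main work is the construction of $g$, for which I would follow Stieltjes' contour-rotation trick: analytically continue $z \mapsto \exp(-z^\gamma)$ (principal branch) into a sector $\{\arg z \in [0, \alpha]\}$ with $\alpha < \pi/(2\gamma)$ on which $\exp(-z^\gamma)$ still decays at infinity, and use Cauchy's theorem to equate $\int_0^\infty x^n \exp(-x^\gamma)\, \dif x$ with the rotated integral $\int_0^\infty (re^{i\alpha})^n \exp(-(re^{i\alpha})^\gamma) e^{i\alpha}\, \dif r$. Taking imaginary parts and reflecting across the origin produces an oscillating function on $\R$ of the form $g(x) \propto \sin\bigl(|x|^\gamma \sin(\alpha \gamma)\bigr) \cdot e^{|x|^\gamma(1 - \cos(\alpha \gamma))}$ (times a sign choice) which is bounded on $\R$ after renormalising and satisfies the moment-killing identity. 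Critically, the admissible sector $\alpha < \pi/(2\gamma)$ is nontrivial (and in fact contains angles $> \pi/2$) precisely when $\gamma < 1$; this is the single place where the LSL hypothesis enters in an essential way.

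The main obstacle is this construction and, more subtly, arranging $\delta \neq 0$: since $g$ oscillates, one must argue carefully that no cancellation occurs in $\int \sign(x)\, g(x)\, w_\gamma(x)\, \dif x$, e.g. by matching asymptotics near the origin where $g$ is sign-preserving. An abstract alternative, if one wishes to avoid complex analysis, is to cite directly that $w_\gamma$ is moment-indeterminate (the moments $\int x^{2n} w_\gamma(x)\, \dif x = \Theta_\gamma(\Gamma((2n+1)/\gamma))$ violate Carleman's condition when $\gamma < 1$), produce two distinct probability densities $f_1, f_2$ both pointwise bounded by a constant multiple of $w_\gamma$ and sharing all moments, and set $g \coloneqq (f_1 - f_2)/w_\gamma$; the condition $\delta \neq 0$ can then be arranged by choosing $f_1, f_2$ to differ in how they distribute mass between $\R_{> 0}$ and $\R_{< 0}$.
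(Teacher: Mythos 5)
Your high-level duality step is sound and is a genuinely different strategy from Bun--Steinke's (and this paper's) argument: they prove a weighted Markov--Nikolskii inequality $\sup_x |q'(x)|\, w_\gamma(x) \leq M_\gamma \int |q|\, w_\gamma$ (stated here as \Cref{PROP:NOGO:MarkovNikol}), deduce that $q'$ is small near the origin whenever $\Expec{|q(Y)|}$ is small, and conclude $q$ cannot bridge the jump of $\sign$ near $0$. Your plan instead seeks a bounded annihilator $g$ of the polynomial moments of $w_\gamma$ with $\int \sign\cdot g\cdot w_\gamma \neq 0$ and pairs it against $\sign - q$. If such a $g$ existed, the rest of your argument would go through cleanly.

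However, there is a real gap in the construction of $g$, and it is not a technicality. The Stieltjes contour-rotation identity gives, for $b>0$ and $\theta = \arctan b$,
\[
\int_0^\infty x^n e^{-x^\gamma}\sin(b x^\gamma)\, \dif x \;=\; (1+b^2)^{-\frac{n+1}{2\gamma}}\,\frac{\Gamma\!\left(\tfrac{n+1}{\gamma}\right)}{\gamma}\,\sin\!\left(\tfrac{(n+1)\theta}{\gamma}\right),
\]
so killing \emph{every} moment forces $\theta/\gamma \in \pi\Z_{>0}$, i.e.\ $\arctan b \geq \gamma\pi$, which (since $\arctan b < \pi/2$) requires $\gamma < 1/2$. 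In your sector picture the constraint is the same: the per-moment phase $e^{i(n+1)\alpha}$ must be real for every $n$, forcing $\alpha$ to be a nonzero multiple of $\pi$, while convergence of the rotated integral requires $\alpha\gamma < \pi/2$; together this again forces $\gamma < 1/2$. This is not an artifact of the specific ansatz: by Carleman's criterion, $e^{-x^\gamma}$ on $(0,\infty)$ is Stieltjes-\emph{determinate} for all $\gamma \geq 1/2$, so \emph{no} bounded half-line perturbation of this type exists there. The Hamburger problem on $\R$ does remain indeterminate for all $\gamma < 1$ (Krein's log-integral condition), but abstract indeterminacy only yields two distinct measures with equal moments; it gives no control on the ratio of their densities, so the fallback of producing $f_1, f_2$ both pointwise $O(w_\gamma)$ is not supplied by the theory, and the explicit constructions that \emph{are} available in the range $\gamma \in [1/2,1)$ do not come with that kind of domination. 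Since the theorem must hold for the $\gamma$ appearing in the LSL hypothesis and one cannot pass to a smaller $\gamma'$ (the lower bound $w \geq C w_\gamma$ weakens as $\gamma$ decreases), the proposal as written only proves the theorem in the sub-case $\gamma < 1/2$.

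A secondary point: the formula you wrote, $g(x) \propto \sin(|x|^\gamma\sin(\alpha\gamma))\cdot e^{|x|^\gamma(1-\cos(\alpha\gamma))}$, has a strictly positive exponent and so is \emph{unbounded} (no normalisation fixes this), which by itself would already break the inequality $\bigl|\int(\sign - q)\,g\,w_\gamma\bigr| \leq \int|\sign-q|\,w_\gamma$. The correct bounded annihilator in the $\gamma < 1/2$ regime is simply $g(x) \propto \sin\!\bigl(\tan(\gamma\pi)\,|x|^\gamma\bigr)$.
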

When $p$ is of degree $1$ (i.e., when $f = \sign (p)$ defines a halfspace), the push-forward distribution $\mathcal{D} = p_\#\mathcal{N}(0, I_n)$ defined in~\eqref{EQ:NOGO:pf} is itself a (shifted) Gaussian.
In particular, it is log-concave and by \Cref{THM:NOGO:logconcave} approximation of the sign-function w.r.t. $\mathcal{D}$ is possible. On the other hand, when $p$ is of higher degree,  $\mathcal{D}$ could be an LSL-distribution, meaning approximation of the sign-function w.r.t.~$\mathcal{D}$ is not possible by \Cref{THM:NOGO:bunsteinke}. 
For instance, consider $p(x) = x^3$. The density $w$ of $p_\#\mathcal{N}(0, 1)$ is given by
$
    w(x) = C \cdot |x|^{-2/3} \cdot  \exp( -|x|^{2/3} ),
$
and so $p_\#\mathcal{N}(0, 1)$ is log-superlinear. 

\myparagraph{Choice of description.}
The example $p(x) = x^3$ is artificial: We have $\sign(p(x)) = \sign(x)$, and so the issue is not with the concept $f = \sign(p)$, but rather with our choice of description~$p$.
In general, one can (and should) assume that $p$ is \emph{square-free}, meaning it is not of the form $p = p_1^2 \cdot p_2$. 
Indeed, note that for such a polynomial, we have $\sign(p(x)) = \sign(p_2(x))$ almost everywhere. Square-freeness plays an important role in the analysis of learning algorithms for PTFs, see, e.g.,~\cite[Appendix~A]{Kane:agnosticPTFs}. 
It turns out that even if $p$ is square-free, the distribution $p_\#\mathcal{N}(0, I_n)$ can still be log-superlinear, e.g., when $p(x) = x(x-1)(x-2)$.
Note that, for this example, $\sign (p)$ describes a non-compact subset of $\R$. This is crucial to find that $p_\#\mathcal{N}(0, 1)$ is LSL.
Indeed, if $\{ p \geq 0 \} \subseteq \R$ were compact, then \mbox{$p_{\max} = \sup_{x \in \R} p(x) < \infty$}, and so the density $w$ of $p_\#\mathcal{N}(0, 1)$ would satisfy $w(x) = 0$ for all $x > p_{\max}$.
In particular, $w$ would not be log-superlinear. 
One could therefore hope that assuming \mbox{$\{p \geq 0\}$} is compact might fix our issues.
This assumption is reasonable as $\{p \geq 0\}$ can be approximated arbitrarily well by a compact set (in terms of Gaussian volume). On the contrary, we show the following.
\begin{theorem} \label{THM:NOGO:degree6}
There exists a square-free polynomial $p$, so that ${\{ p \geq 0 \} \subseteq \R}$ is compact, but for which there exists $\varepsilon > 0$ so that, for any polynomial $q$, \[\Expec{|q(p(X)) - \sign (p(X))|}[X \sim \mathcal{N}(0, 1)] > \varepsilon.\]
\end{theorem}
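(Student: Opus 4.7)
The plan is to reduce the statement to an inapproximability result for the sign function with respect to a one-dimensional push-forward measure, construct $p$ explicitly, and then invoke the one-sided extension \Cref{THM:NOGO:truncatedBunSteinke} of Bun--Steinke as a black box. By the change-of-variables formula for push-forwards,
\[
\Expec{|q(p(X)) - \sign(p(X))|}[X \sim \mathcal{N}(0, 1)] \;=\; \Expec{|q(Y) - \sign(Y)|}[Y \sim \mathcal{D}],
\]
where $\mathcal{D} \coloneqq p_\#\mathcal{N}(0, 1)$. So it suffices to exhibit a square-free $p$ with $\{p \geq 0\}$ compact for which $\sign$ cannot be polynomially approximated under $\mathcal{D}$ in the $L_1$-sense.

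First, I would take $p$ to be an even-degree polynomial with negative leading coefficient and simple real roots, for instance the degree-$6$ choice $p(x) = -(x^2 - 1)(x^2 - 2)(x^2 - 3)$. Square-freeness is immediate from the distinct roots, and $\{p \geq 0\}$ is a finite union of closed bounded intervals, hence compact. Since $p(x) \to -\infty$ as $|x| \to \infty$, the maximum $p_{\max} \coloneqq \sup_{x \in \R} p(x)$ is finite and attained, so the support of $\mathcal{D}$ is contained in $(-\infty, p_{\max}]$; in particular, $\mathcal{D}$ has at most one tail.

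Next, I would analyze the density $w$ of $\mathcal{D}$ on $(-\infty, -R)$ for $R$ large. The change-of-variables formula gives $w(y) = \sum_{x \in p^{-1}(y)} \varphi(x)/|p'(x)|$, where $\varphi$ is the standard Gaussian density. For $y \to -\infty$, the real preimages behave like $x \approx \pm |y|^{1/6}$, with $|p'(x)| \sim 6\,|y|^{5/6}$, yielding the pointwise bound $w(y) \gtrsim |y|^{-5/6} \exp\!\bigl(-\tfrac{1}{2}|y|^{1/3}\bigr)$. Hence for any $\gamma \in (1/3, 1)$ one has $w(y) \geq C \cdot w_\gamma(y)$ on $(-\infty, -R)$, i.e.\ $\mathcal{D}$ is log-superlinear on its (only) tail. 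One must additionally verify that the density is well-defined on all of $(-\infty, p_{\max})$: the possible singularities of $w$ coming from critical points of $p$ (zeros of $p'$) occur only at the finitely many critical values, all of which lie in a bounded set and therefore do not interfere with the tail lower bound.

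Finally, with these hypotheses in place, I would apply \Cref{THM:NOGO:truncatedBunSteinke} to $\mathcal{D}$ to obtain some $\varepsilon > 0$ such that $\Expec{|q(Y) - \sign(Y)|}[Y \sim \mathcal{D}] > \varepsilon$ for every polynomial $q$; the change-of-variables identity above then transports this lower bound back to the Gaussian on $\R^n$. The main obstacle in the overall argument is of course \Cref{THM:NOGO:truncatedBunSteinke} itself, which extends Bun--Steinke to a distribution whose support is truncated on one side: one cannot play the two heavy tails against each other as in the classical proof, and must instead combine the log-superlinear tail on $(-\infty, 0)$ with the positive mass on $[0, p_{\max}]$ to force any polynomial approximation of $\sign$ to have incompatible behavior on the two regions. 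Within the proof of \Cref{THM:NOGO:degree6} proper, however, the only nontrivial step is the pointwise tail bound on $w$, which is a direct computation once $p$ is chosen.
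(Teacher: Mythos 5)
Your proposal is correct and takes essentially the same route as the paper: both reduce via the change-of-variables/push-forward identity to a one-dimensional statement, choose a degree-$6$ square-free polynomial with negative leading coefficient so that $\{p \geq 0\}$ is compact (the paper uses $p(x) = -x(x-1)(x-2)(x-3)(x-4)(x-5)$, you use the symmetric $p(x) = -(x^2-1)(x^2-2)(x^2-3)$, which works equally well), compute the tail of $p_\#\mathcal{N}(0,1)$ to get $w(y) \gtrsim |y|^{-5/6}\exp(-\tfrac12|y|^{1/3})$ as $y \to -\infty$, verify positivity on the bounded portion, and invoke \Cref{THM:NOGO:truncatedBunSteinke}. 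One small imprecision worth fixing: \Cref{DEF:NOGO:onesided} requires $\gamma < 1/2$, not $\gamma < 1$, so the relevant range in your argument is $\gamma \in (1/3, 1/2)$ rather than $(1/3,1)$; your computed exponent $1/3$ falls in this range, and this $\gamma < 1/2$ constraint is precisely why degree $6$ is needed (degree $4$ would only give $\gamma \approx 1/2$, which is borderline and insufficient).
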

To establish \Cref{THM:NOGO:degree6}, we prove a `one-sided' analog of \Cref{THM:NOGO:bunsteinke} in \Cref{SEC:LSLdetails:truncated}, which we believe to be of independent interest.
It shows impossibility of approximating the sign-function under a class of distributions related to, but distinct from, those considered by Bun \& Steinke~\cite{BunSteinke:impossibilityofapproximation}.
The key difference is that the densities in our result need only to have a single heavy tail. However, this tail must be `twice as heavy' ($\gamma < 1/2$ vs. $\gamma < 1$). We emphasize that~\cite{BunSteinke:impossibilityofapproximation} does not cover compact PTFs.
\begin{definition} \torestate{\label{DEF:NOGO:onesided}
    Let $\mathcal{D}$ be a distribution on $\R$ whose density function $w$ satisfies
    \[
    w(x) \geq C \cdot w_\gamma(x) \quad \forall \, x \in (-\infty, 1]
    \]
    for some $\gamma < {1}/{2}$ and $C > 0$. Then we say $\mathcal{D}$ is \emph{one-sided} log-superlinear.}
\end{definition}
\begin{theorem}\torestate{\label{THM:NOGO:truncatedBunSteinke}
    Let $\mathcal{D}$ be a one-sided LSL-distribution on $\R$. Then there exists an~${\varepsilon > 0}$ such that, for any polynomial $q$, we have $
    \Expec{|q(Y) - \sign (Y)|}[Y \sim \mathcal{D}] > \varepsilon$.}
\end{theorem}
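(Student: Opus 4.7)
The plan is to adapt the moment-vanishing argument behind~\Cref{THM:NOGO:bunsteinke} to the asymmetric setting, replacing \emph{Hamburger} indeterminacy of $w_\gamma$ on $\R$ (which holds for $\gamma < 1$) by \emph{Stieltjes} indeterminacy of $w_\gamma$ on a half-line (which holds for $\gamma < 1/2$). By hypothesis $w(y) \geq C \cdot w_\gamma(y)$ for $y \in (-\infty, 1]$, so the heavy tail of $\mathcal{D}$ lies on the left.

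The first step is the following reduction: it suffices to exhibit a signed Borel measure $\nu$ on $\R$, supported in $(-\infty, 1]$, such that (i) $\int y^n \, d\nu = 0$ for every $n \geq 0$; (ii) $|\nu|$ is absolutely continuous with density bounded by $C'' \cdot w$ almost everywhere; and (iii) $c := \int \sign(y) \, d\nu(y) \neq 0$. Granted such a $\nu$, for any polynomial $q$ the moment-vanishing condition (i) gives $\int q \, d\nu = 0$, and so
\[
  |c| = \left|\int (\sign - q) \, d\nu\right| \leq \int |\sign - q| \, d|\nu| \leq C'' \cdot \Expec{|q(Y) - \sign(Y)|}[Y \sim \mathcal{D}],
\]
whence the theorem follows with $\varepsilon = |c|/(2 C'')$.

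The second step is to construct $\nu$. I would write $d\nu = g \cdot w_\gamma \cdot \mathbf{1}_{(-\infty, 1]} \, dy$ for some bounded measurable $g$ annihilating all polynomials against the measure $w_\gamma \, \mathbf{1}_{(-\infty, 1]} \, dy$. Existence of such a $g$ is equivalent (by Hahn--Banach) to non-density of polynomials in $L^1((-\infty, 1], w_\gamma \, dy)$, which in turn follows from indeterminacy of the Stieltjes moment problem for $w_\gamma$ on $(-\infty, 0]$: a direct computation yields $m_n \sim \Gamma((n+1)/\gamma)$, so the Carleman-type sum $\sum m_n^{-1/(2n)} \sim \sum n^{-1/(2\gamma)}$ converges exactly when $\gamma < 1/2$, and combined with a Krein-type analyticity criterion this gives the desired indeterminacy. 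A short zeroth-moment computation then identifies condition (iii) with $\int_0^1 g(y) w_\gamma(y) \, dy \neq 0$.

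The main obstacle is arranging property (iii). Naively extending a bounded annihilator on $(-\infty, 0]$ by zero on $(0, 1]$ fails because it would force $\int_0^1 g \, w_\gamma = 0$; to circumvent this, I would invoke the rich Nevanlinna-type parametrization of solutions to the indeterminate moment problem to produce a positive measure $\mu$ having the same moments as $w_\gamma \, \mathbf{1}_{(-\infty, 0]} \, dy$ but placing a positive atom at some $y_0 \in (0, 1]$, with absolutely continuous part on $(-\infty, 0]$ of density at most a constant multiple of $w_\gamma$. Taking $\nu = w_\gamma \, \mathbf{1}_{(-\infty, 0]} \, dy - \mu$ then supplies the required signed measure. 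The threshold $\gamma < 1/2$ is sharp: at $\gamma = 1/2$ Carleman's sum diverges, the Stieltjes problem becomes determinate, no such $g$ (or $\nu$) exists, and indeed $\sign$ becomes polynomially approximable under a one-sided LSL distribution with exponent $1/2$, consistent with the theorem.
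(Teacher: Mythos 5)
Your proposal takes a genuinely different route from the paper's. The paper proves the theorem \emph{quantitatively}: it establishes a one-sided Markov--Nikolskii-type inequality ({\Cref{PROP:NOGO:derivbound}}, via {\Cref{LEM:NOGO:bernstein}}), obtained from the two-sided inequality of Bun--Steinke ({\Cref{PROP:NOGO:MarkovNikol}}) by a substitution $u^2 = x$ that turns the one-sided weight $w_\gamma$ on $[0,\infty)$ into a two-sided weight $w_{2\gamma}$ on $\R$ (this is where the threshold $\gamma < 1/2$ appears, exactly paralleling your Stieltjes-vs-Hamburger dichotomy). From the derivative bound one deduces that any low-error approximant $q$ must be nearly constant near the origin, hence far from $\sign$ on one side. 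Your proposal is instead a \emph{soft} duality argument: produce an annihilating signed measure $\nu$, absolutely continuous with bounded density relative to $\mathcal{D}$, that does not annihilate $\sign$. The first step of your reduction and the Carleman/Krein computation showing Stieltjes indeterminacy of $w_\gamma$ for $\gamma < 1/2$ are correct, and are in a precise sense dual to the paper's derivative estimate.

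However, there is a genuine gap in the construction of $\nu$. You require (ii) that $|\nu|$ be absolutely continuous with density $\leq C''\cdot w$, and that requirement is what makes the final chain of inequalities work against the (atomless) measure $\mathcal{D}$. But the measure you construct, $\nu = w_\gamma\mathbf{1}_{(-\infty,0]}\,dy - \mu$ with $\mu$ carrying an atom at some $y_0 \in (0,1]$, is \emph{not} absolutely continuous: the atom contributes a term $-m\bigl(\sign(y_0) - q(y_0)\bigr)$ to $\int(\sign - q)\,d\nu$, and bounding $|q(y_0)|$ by $\Expec{|q(Y)-\sign(Y)|}[Y\sim\mathcal{D}]$ is exactly the kind of pointwise control that a purely $L^1$ duality argument does not give you (polynomials of unbounded degree can spike at a single point). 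Moreover, even the absolutely continuous part of $\mu$ on $(-\infty,0]$ needs its density bounded by a constant multiple of $w_\gamma$, and the Nevanlinna parametrization does not automatically supply that control---the extremal solutions are purely atomic, and intermediate solutions need not have density estimates. What you would actually need is a bounded $L^\infty$ annihilator $g$ (from Hahn--Banach and non-density in $L^1$) that \emph{additionally} satisfies $\int_0^1 g\,w_\gamma \neq 0$; Hahn--Banach alone does not arrange this, and the atom trick you invoke to get (iii) destroys (ii). The paper's derivative-bound route sidesteps this entirely by never introducing a dual measure.
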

\begin{proof}[Proof of \Cref{THM:NOGO:degree6}]
    It suffices to find a square-free polynomial $p$ for which \mbox{$\{p \geq 0\}$} is compact, and the push-forward distribution $p_\#\mathcal{N}(0, 1)$ is one-sided LSL. A direct computation shows that 
    \[
        p(x) \coloneqq -x(x-1)(x-2)(x-3)(x-4)(x-5)
    \]
    meets the criteria, see~\Cref{SEC:LSLdetails:deg6details} for details.
\end{proof}

\section{Testable learning of polynomial threshold functions} \label{SEC:PTFPROOF}
In this section, we give a formal proof of our main result, \Cref{THM:INTRO:testablelearningofPTFs}, that we restate here.

\begin{theorem}[Formal version of \Cref{THM:INTRO:testablelearningofPTFs}] \label{THM:PTFPROOF:testablelearningofPTFs}
    Let $d \in \N$. For any $\e > 0$, the concept class of degree-$d$ polynomial threshold functions in $n$ variables can be testably learned up to error $\e$ w.r.t. the standard Gaussian $\mathcal{N}(0,I_n)$ in time and sample complexity
    \[n^{O_d\left(\varepsilon^{-4d\cdot 7^d}\right)} \varepsilon^{-O_d\left(\varepsilon^{-4d \cdot 7^d}\right)},\]
    In particular, if $d$ is constant, the time and sample complexity is $n^{\poly(1/\varepsilon)}$.
\end{theorem}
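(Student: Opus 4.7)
The theorem follows almost immediately by combining the fooling result, Proposition \ref{PROP:TO:foolingarbitrary}, with the fooling-to-testable-learning reduction, Theorem \ref{THM:TO:testablelearningusingfooling}. The plan is to instantiate the parameters so that Proposition \ref{PROP:TO:foolingarbitrary} applies, verify the hypotheses of Theorem \ref{THM:TO:testablelearningusingfooling}, and then compute the resulting time and sample complexity.

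Concretely, first I would set $k = c_d \cdot \varepsilon^{-4d \cdot 7^d}$ for a sufficiently large constant $c_d$ depending only on $d$, and choose $\eta = n^{-C_d k} k^{-C_d k}$ for the (hidden) constant $C_d$ appearing in the statement of Proposition \ref{PROP:TO:foolingarbitrary}. With this choice of $k$ and $\eta$, Proposition \ref{PROP:TO:foolingarbitrary} gives that any distribution $\mathcal{D}'$ on $\R^n$ that approximately matches the moments of $\mathcal{N}(0, I_n)$ up to degree $k$ and slack $\eta$ fools the class $\Fptf{d}$ of degree-$d$ PTFs w.r.t.\ $\mathcal{N}(0,I_n)$ up to error $\varepsilon/2$. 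This verifies the first hypothesis of Theorem \ref{THM:TO:testablelearningusingfooling}.

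For the second hypothesis, I would invoke Remark \ref{REM:TO:msize}, which guarantees that with $m = \Theta\bigl((2kn)^k \cdot \eta^{-2}\bigr)$ samples drawn from $\mathcal{N}(0,I_n)$, the empirical distribution matches the moments of $\mathcal{N}(0,I_n)$ up to degree $k$ and slack $\eta$ with high probability. Plugging in our choice of $\eta$, we have $\eta^{-2} = n^{O_d(k)} k^{O_d(k)}$, and therefore $m = n^{O_d(k)} k^{O_d(k)}$. Theorem \ref{THM:TO:testablelearningusingfooling} then produces a tester-learner pair $(\mathcal{T}_{\rm AMM}(k,\eta), \mathcal{A})$ that testably learns $\Fptf{d}$ with respect to $\mathcal{N}(0, I_n)$ up to error $\varepsilon$ in time and sample complexity $m + n^{O(k)}$.

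Finally, I would compute the overall runtime. We have $n^{O(k)} \leq n^{O_d(k)}$ and $m \leq n^{O_d(k)} k^{O_d(k)}$, so the total complexity is at most $n^{O_d(k)} k^{O_d(k)}$. Substituting $k = O_d(\varepsilon^{-4d \cdot 7^d})$, and using that $k^{O_d(k)} = \varepsilon^{-O_d(k)}$ (since $\log k = O_d(\log(1/\varepsilon))$, so the $\log k$ factor can be absorbed into the $O_d$-hidden constant in the exponent), the complexity becomes $n^{O_d(\varepsilon^{-4d \cdot 7^d})}\,\varepsilon^{-O_d(\varepsilon^{-4d \cdot 7^d})}$, which is the claimed bound. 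Since Proposition \ref{PROP:TO:foolingarbitrary} has already been established, there is no real obstacle in the present argument: the only thing to be careful about is the bookkeeping of the constants hidden in $O_d(\cdot)$ so that the final exponent matches the statement.
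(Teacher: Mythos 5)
Your proposal is correct and follows essentially the same route as the paper's proof: both combine Proposition~\ref{PROP:TO:foolingarbitrary} with Theorem~\ref{THM:TO:testablelearningusingfooling} (and the moment-matching sample-complexity bound from Remark~\ref{REM:TO:msize}, which the paper proves as Fact~\ref{FACT:PTFPROOF:empiricaldistributionmomentmatching}), then plug in $k = \Theta_d(\varepsilon^{-4d \cdot 7^d})$ and $\eta = n^{-\Theta_d(k)} k^{-\Theta_d(k)}$ and simplify using $k^{O_d(k)} = \varepsilon^{-O_d(k)}$. The only cosmetic difference is that the paper's proof invokes its restated version of Proposition~\ref{PROP:TO:foolingarbitrary} (which yields $O_d(\varepsilon)$ rather than $\varepsilon/2$) and explicitly rescales $\varepsilon$ to compensate, whereas you cite the original statement directly; this does not affect the argument or the final bound.
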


Our goal to show this result is to apply \Cref{THM:TO:testablelearningusingfooling}.
We first focus on proving the fooling condition in this theorem.
Recall that for this we need to show that there are $k$ and $\eta$ such that if $\cD'$ matches the moments of $\cN(0,I_n)$ up to degree $k$ and slack $\eta$, then we have 
\[
    \left|\Expec{f(X)}[X \sim \mathcal{D}'] - \Expec{f(Y)}[Y \sim \mathcal{N}(0,I_n)]\right| \leq \varepsilon/2.
\]
In order to show this, we follow the result of \cite{Kane:foolingforkGaussians} (see \Cref{THM:TO:KaneFoolPTF}).
This paper shows this condition for any distribution $\mathcal{D}'$ that is a $k$-independent Gaussian, i.e. for which the marginal of every subset of $k$ coordinates is $\mathcal{N}(0,I_k)$.

The reason why it is enough to only focus on satisfying the fooling condition is that for any $\eta$, we can find $m$ large enough that the first condition of \Cref{THM:TO:testablelearningusingfooling} is satisfied for $\mathcal{N}(0,I_n)$ (see \Cref{REM:TO:msize} and \Cref{FACT:PTFPROOF:empiricaldistributionmomentmatching}).
For $k$ we choose the same value as in \cite{Kane:foolingforkGaussians}, namely
\begin{equation}\label{EQ:PTFPROOF:definitionofk}
    k = \Theta_d\left(\varepsilon^{-4d\cdot 7^d}\right).
\end{equation}
We first show how to choose $\eta$ for the case of multilinear polynomial polynomials in \Cref{SEC:PTFPROOF:multilinear}.
In \Cref{SEC:PTFPROOF:generalisationtoarbitrary}, we generalize the fooling result to arbitrary PTFs.
Finally, in \Cref{SEC:PTFPROOF:conclusiontestablelearning}, we show how to apply \Cref{THM:TO:testablelearningusingfooling} to get testable learning for PTFs.

\subsection{Fooling for multlinear PTFs}
\label{SEC:PTFPROOF:multilinear}
Thus, let $f \in \mathcal{C}$ and let $p$ be the multilinear polynomial (of degree $d$) such that $f(x) = \sign(p(x))$.
Note that this notation is different than the one used in \cite{Kane:foolingforkGaussians}. There, the roles of $f$ and $F$ are interchanged.
We use $f$ for the PTF throughout to be consistent with the previous work on (testable) learning.
Without loss of generality, we assume that the sum of the squares of the coefficients of $p$ is~$1$.
We can make this assumption since rescaling does not change the PTF.

The main result of this section is the following proposition.
\begin{proposition}[Fooling for multilinear PTFs]\label{PROP:PTFPROOF:foolingmultilinear}
    Let $\e > 0$.
    Suppose that $\cD'$ approximately matches the moments of $\cN(0, I_n)$ up to degree $k$ and slack $\eta$, where $k \geq \Omega_d\left(\e^{-4d \cdot 7^d}\right)$, and $\eta \leq n^{-\Omega_d(k)}k^{-\Omega_d(k)}$. Then, we have that, for any multilinear $f \in \Fptf{d}$,
    \[\left|\Expec{f(X)}[X \sim \mathcal{D}'] - \Expec{f(Y)}[Y \sim \mathcal{N}(0,I_n)]\right| \leq O_d(\varepsilon).\]
\end{proposition}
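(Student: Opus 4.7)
The plan is to follow the three-step strategy sketched in \Cref{SEC:TO:foolingmultilinear}: construct a low-degree polynomial approximator $h$ of $f$ such that (i)~$\Expec{h(Y)}[Y \sim \cN(0,I_n)] \approx \Expec{f(Y)}[Y \sim \cN(0,I_n)]$, (ii)~$\Expec{h(X)}[X \sim \cD'] \approx \Expec{f(X)}[X \sim \cD']$, and (iii)~$\Expec{h(Y)}[Y \sim \cN(0,I_n)] \approx \Expec{h(X)}[X \sim \cD']$; combining the three via the triangle inequality yields the desired fooling bound.

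For the construction of $h$, I would first invoke Kane's structure theorem for multilinear polynomials to write $f = F \circ P$, where $F$ is a PTF on $\R^m$ (for some $m = m(d)$) and $P = (P_1, \dots, P_m)$ is a vector of multilinear polynomials of degree at most $d$ whose Gaussian moments satisfy $\Expec{P_i(Y)^\ell}[Y \sim \cN(0,I_n)] \leq O_d(\sqrt{\ell})^\ell$. Following Kane, mollify $F$ by convolving with a smooth bump $\rho$ of small support to obtain a smooth $\tilde F$, and let $T$ be the Taylor polynomial of $\tilde F$ at the origin of suitable degree. The approximator is $h \coloneqq T \circ P$, a polynomial of degree at most $k$ in the original variables~$x$.

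Items (i) and (ii) follow from estimates of the same general shape. Item (i) needs no modification of Kane's analysis, as it concerns only the Gaussian. For (ii) we additionally need moment bounds on $P_i$ under $\cD'$; these we would derive from the $\eta$-approximate moment matching together with a coefficient bound $\|P_i\|_1 \leq n^{d/2}$ (which uses the normalization of~$p$, multilinearity and the degree bound). Provided $\eta$ is in the regime assumed by the proposition, the moments of $P_i$ under $\cD'$ are within a constant factor of their Gaussian counterparts, so a Markov-based bound on the large-value event together with the rest of Kane's argument carries through.

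The novelty, and the main obstacle, lies in item~(iii). Since moment matching is only approximate, we expand $h = \sum_\alpha t_\alpha P^\alpha$ (in the variables of $P$) and apply the triangle inequality to obtain
\begin{equation*}
\bigl|\Expec{h(Y)}[Y \sim \cN(0,I_n)] - \Expec{h(X)}[X \sim \cD']\bigr| \leq \|t\|_1 \cdot \max_\alpha \bigl|\Expec{P(Y)^\alpha}[Y \sim \cN(0,I_n)] - \Expec{P(X)^\alpha}[X \sim \cD']\bigr|.
\end{equation*}
The Taylor coefficients $t_\alpha$ are partial derivatives of $\tilde F = F * \rho$ at the origin, so bounding $\|t\|_1$ reduces to careful control of the derivatives of $\rho$ together with combinatorial bookkeeping in $m$ dimensions; the target estimate is $\|t\|_1 \leq k^{O_d(k)}$, and getting this cleanly is the hardest technical step. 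For the second factor, each $P^\alpha$ has total degree at most $d|\alpha| \leq dk$ in $x$, so approximate moment matching combined with $\|P_i\|_1 \leq n^{d/2}$ yields $\bigl|\Expec{P(Y)^\alpha} - \Expec{P(X)^\alpha}\bigr| \leq \eta \cdot n^{O_d(k)}$. Multiplying, the total error in~(iii) is at most $\eta \cdot k^{O_d(k)} n^{O_d(k)}$, which is $\ll \e$ under the hypotheses $\eta \leq n^{-\Omega_d(k)} k^{-\Omega_d(k)}$ and $k \gg 1/\e$, completing the proof.
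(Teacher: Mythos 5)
Your steps (i) and (iii) are sound and follow the paper: (i) is Kane's Proposition 14 / \Cref{LEM:PTFPROOF:FtildegoodapproximationunderY} (Gaussian only), and (iii) is precisely the paper's $\|t\|_1 \cdot \|u\|_\infty$ estimate via \Cref{LEM:PTFPROOF:Taylorboundonpartialderivatives} and \Cref{FACT:PTFPROOF:sumofabscoeffofPij}. The gap is in step (ii). Writing $h = T \circ P$ and $\tilde f = \tilde F \circ P$, the error $|h(X) - f(X)|$ splits into a Taylor error $|T(P(X)) - \tilde F(P(X))|$ and a mollification error $|\tilde F(P(X)) - F(P(X))|$. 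Your Markov/moment argument handles the first (this is \Cref{LEM:PTFPROOF:taylorerrormomentmatching}). But the second is nonzero exactly when $p(X)$ is close to $0$, so bounding $\Expec{|\tilde F(P(X)) - F(P(X))|}$ requires an \emph{anti-concentration} bound $\Prob{|p(X)| \leq O_d(\e^d)} \leq O_d(\e)$ under $\cD'$. No such bound is available: Carbery--Wright (\Cref{THM:APPENDIX:CarberyWright}) holds for the Gaussian, but a distribution that merely matches finitely many moments of $\cN(0,I_n)$ can be highly concentrated near the zero set of $p$. Saying ``the rest of Kane's argument carries through'' does not fix this, since in Kane's setting (and in \Cref{LEM:PTFPROOF:FtildegoodapproximationunderY}) the mollification error is only controlled for the Gaussian.

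The paper avoids the issue with a shifting argument (\Cref{LEM:PTFPROOF:conclusionfooling}). Rather than bounding $|\Expec{\tilde f(X)} - \Expec{f(X)}|$ directly, it combines Kane's pointwise bound (his Lemma 12) with the concentration event $\Prob{\exists i,j: |P_{i,j}(X)| > B_i} \leq O(\e)$ to obtain $\Expec{f(X)} \geq \Expec{\tilde f(X)} - O(\e) - 2\Prob{-O_d(\e^d) < p(X) < 0}$ (and the mirrored upper bound). The troublesome anti-concentration term is then absorbed via the identity $\Expec{\sign(p(X))} + 2\Prob{-O_d(\e^d) < p(X) < 0} = \Expec{\sign(p(X) + O_d(\e^d))}$, and one applies the whole argument a second time to the shifted polynomial $p \mp O_d(\e^d)$ so that the shift ends up on the Gaussian side. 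The result is a sandwiching $\Expec{\sign(p(Y) - O_d(\e^d))} - O(\e) \leq \Expec{f(X)} \leq \Expec{\sign(p(Y) + O_d(\e^d))} + O(\e)$ with matching bounds for $\Expec{f(Y)}$, and the two outer quantities differ by $O_d(\e)$ by Carbery--Wright applied to $Y$ alone. Without this step your triangle-inequality chain does not close, so the proposal is missing an essential idea rather than a routine detail.
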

Note that we only show $O_d(\e)$ here instead of $\e/2$, which is needed to apply \Cref{THM:TO:testablelearningusingfooling}.
This simplifies the notation in our proof of this proposition.
We later apply this proposition to $\e' = \e/\Omega_d(1)$ to conclude the fooling result we need.

As mentioned earlier, the general strategy to do this is based on \cite{Kane:foolingforkGaussians}.
We want to find a function $\tilde{f}: \R^n \rightarrow \R$ that approximates $f$.
We will define this function in \Cref{SEC:PTFPROOF:definitionoftildeF}.
In \Cref{SEC:PTFPROOF:tildefcloseforXandY}, we show that the expectation of $\tilde{f}$ is close under $\mathcal{D}'$ and $\mathcal{N}(0,I_n)$.
More precisely, in \Cref{LEM:PTFPROOF:tildefcloseforXandY}, we show that under the above assumption on $\eta$, we have that
\[\left|\Expec{\smash{\tilde{f}(Y)}}[Y \sim \mathcal{N}(0,I_n)] - \Expec{\smash{\tilde{f}(X)}}[X \sim \mathcal{D}']\right| \leq O(\varepsilon).\]
In \Cref{SEC:PTFPROOF:tildeFgoodapproximation}, we then show that under both $\mathcal{D}'$ and $\mathcal{N}(0,I_n)$ the expectation of $f$ and $\tilde{f}$ are close and complete the proof of \Cref{PROP:PTFPROOF:foolingmultilinear}.
More precisely, from \cite[Proposition 14]{Kane:foolingforkGaussians} (restated in \Cref{LEM:PTFPROOF:FtildegoodapproximationunderY}), we get that
\[\left|\Expec{f(Y)}[Y \sim \mathcal{N}(0,I_n)] - \Expec{\smash{\tilde{f}(Y)}}[Y \sim \mathcal{N}(0,I_n)]\right| \leq O(\varepsilon).\]
In \Cref{SEC:PTFPROOF:tildeFgoodapproximation}, we then also show that this also holds for $X \sim \cD'$ instead of $Y \sim \mathcal{N}(0,I_n)$.
More precisely, we show in \Cref{LEM:PTFPROOF:conclusionfooling} that Lemmas \ref{LEM:PTFPROOF:tildefcloseforXandY} and \ref{LEM:PTFPROOF:FtildegoodapproximationunderY} contain already enough information about the moment-matching distribution $\mathcal{D}'$ to conclude \Cref{PROP:PTFPROOF:foolingmultilinear}.

\subsubsection{Set up and definition of the function \texorpdfstring{$\tilde{f}$}{f̃}}\label{SEC:PTFPROOF:definitionoftildeF}
In this section, we want to define the function $\tilde{f}$ that should be thought of as a smooth approximation of the PTF $f$.
In order to define this function, we first restate the following structural theorem from \cite{Kane:foolingforkGaussians}.

\begin{lemma}[{\cite[Proposition 4]{Kane:foolingforkGaussians}}]\label{LEM:PTFPROOF:structure}
Let $m_1 \leq m_2 \leq \dots \leq m_d$ be integers.
Then there exists integers $n_1, n_2, \dots, n_d$, where $n_i \leq O_d(m_1m_2\dots m_{i-1})$ and (non-constant homogeneous multilinear) polynomials $h_1, \dots, h_d$, $P_{i,j}$ ($1 \leq i \leq d$, $1 \leq j \leq n_i$) such that:
\begin{enumerate}
    \item The sum of the squares of the coefficients of $P_{i,j}$ is $1$.\label{COND:PTFPROOF:sumofsquaresofPij}
    \item If $Y \sim \cN(0,I_n)$ and $\ell \leq m_i$, then $\Expec{|P_{i,j}(Y)|^\ell} \leq O_d(\sqrt{\ell})^\ell$.\label{COND:PTFPROOF:momentsofpolynomialinstructurearesmall}
    \item $\displaystyle p(Y) = \sum_{i=1}^d h_i(P_{i,1}(Y), P_{i,2}(Y), \dots, P_{i,n_i}(Y))$.
\end{enumerate}
\end{lemma}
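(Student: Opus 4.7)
The plan is to establish this structural decomposition by strong induction on the degree $d$, following the iterative extraction scheme used by Kane.

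The base case $d=1$ is immediate: a normalized linear form $p(Y) = \sum_j c_j Y_j$ with $\sum_j c_j^2 = 1$ is distributed as $\mathcal{N}(0,1)$ under the Gaussian, so $\Expec{|p(Y)|^\ell} = O(\sqrt{\ell})^\ell$ for every $\ell$; take $n_1 = 1$, $P_{1,1} = p$, $h_1 = \mathrm{id}$, and $h_i \equiv 0$ for $i > 1$.

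For the inductive step, the key idea is the following dichotomy. Either $p$'s Gaussian moments are already controlled by the desired bound $O_d(\sqrt{\ell})^\ell$ up to order $m_d$, in which case we use $p$ itself as a single top-level building block and absorb its lower-degree homogeneous parts into the recursion, or $p$ fails this bound and its failure forces $p$ to have low-complexity structure: one can find a small collection of strictly-lower-degree multilinear polynomials $Q_1, \ldots, Q_s$ such that $p$ is expressible as a polynomial in them. After this extraction, I would recurse on each $Q_j$ to build the lower levels of the decomposition. The count $n_i \leq O_d(m_1 \cdots m_{i-1})$ then arises as the product of branching factors incurred at each round of the recursion, controlled in terms of the moment-threshold parameters $m_1, \ldots, m_d$.

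The central technical step is thus the moment-to-structure dichotomy: if $\Expec{|p(Y)|^\ell} \gg O_d(\sqrt{\ell})^\ell$ for some $\ell \leq m_d$, then there exist at most $O_d(\ell)$ strictly-lower-degree multilinear polynomials $Q_1, \ldots, Q_s$ in which $p$ can be expressed. A natural route is via partial derivatives: large moments force some iterated derivative $\partial_S p$ (with $|S| < d$) to have anomalously large $L_2$-norm, and these heavy derivatives supply the $Q_j$. The main obstacle, in my view, is making this dichotomy quantitatively sharp while preserving multilinearity and homogeneity of the extracted pieces, and keeping the $Q_j$ unit-normalized and non-constant as the statement requires. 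Naive projection onto spans of heavy derivatives produces polynomials that are neither homogeneous nor multilinear, so one has to symmetrize and re-homogenize carefully without blowing up the count $s$ or spoiling Condition~\ref{COND:PTFPROOF:sumofsquaresofPij}. Kane handles this via a delicate coefficient-level analysis together with Gaussian hypercontractivity, which cleanly separates the ``Gaussian-like'' contributions from the ``spiked'' ones; reproducing this step is where the bulk of the technical work lies.
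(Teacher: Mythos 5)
The lemma is not proved in the paper: it is imported verbatim as \cite[Proposition 4]{Kane:foolingforkGaussians} and used as a black box throughout Section~\ref{SEC:PTFPROOF}. There is therefore no ``paper's own proof'' to compare your attempt against; in the context of this paper, the expected answer is simply the citation to Kane.

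Read on its own merits as a reconstruction of Kane's argument, your sketch has a genuine gap, which you yourself identify. The entire content of the structure theorem sits in the step you defer: the claim that whenever $\Expec{|p(Y)|^\ell}$ is much larger than $O_d(\sqrt{\ell})^\ell$, one can express $p$ as a polynomial in $O_d(\ell)$ strictly-lower-degree, unit-normalized, homogeneous, multilinear polynomials. Your proposed mechanism (large moments force some heavy iterated derivative, and these derivatives supply the lower-degree pieces) is a reasonable starting intuition, but you do not establish (i) a quantitative bound on the number of heavy directions, (ii) that the extracted pieces can be symmetrized and re-homogenized while keeping their count under control and their $L_2$-normalization intact, or (iii) that after the extraction the residual part of $p$ actually attains the strong Gaussian-like bound $O_d(\sqrt{\ell})^\ell$, as opposed to merely the generic hypercontractive bound $O_d(\sqrt{\ell})^{d\ell}$, which is the whole point of the theorem. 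Without these, the recursion cannot close and the count $n_i \leq O_d(m_1\cdots m_{i-1})$ has no derivation; writing that ``Kane handles this via a delicate coefficient-level analysis together with Gaussian hypercontractivity'' names the missing proof rather than supplies it.

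A smaller point: in the base case you set $h_i \equiv 0$ for $i > 1$, but the zero polynomial is constant, in tension with the stated requirement that the $h_i$ be non-constant. The intended reading is that lower levels may simply be empty (some $n_i = 0$); this is a notational repair, but it signals that the bookkeeping in the inductive step would need corresponding care.
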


The values of $m_i$ we want to choose are as in \cite{Kane:foolingforkGaussians}, i.e. we let $m_i = \Theta_d\left(\varepsilon^{-3 \cdot 7^i d}\right)$.
Given this structure theorem, we introduce now the following notation that we use throughout the remainder of this section, analogous to \cite{Kane:foolingforkGaussians}.
As before, we use $p: \R^n \rightarrow \R$ to be denote the multilinear polynomial and $f: \R^n \rightarrow [-1,1]$ to be the PTF we are interested in, i.e. $f(x) = \sign(p(x))$.
Furthermore, the ${P_{i,j}: \R^n \rightarrow \R}$ for $i \in [n]$ and $j \in [n_i]$ are the polynomials in the structure theorem.
We denote by ${P_i: \R^n \rightarrow \R^{n_i}}$ for $i \in [n]$ the vector $(P_{i,1}, \dots, P_{i,n_i})$ and by ${P: \R^n \rightarrow \R^{n_1} \times \dots \times \R^{n_d}}$ the vector $(P_1, \dots, P_d)$.
Finally, we define the function ${F: \R^{n_1} \times \dots \times \R^{n_d} \rightarrow \R}$ as ${F(y_1,\dots, y_d) = \sum_{i=1}^d h_i(y_i)}$, i.e. we get ${f(x) = F(P(x))}$.

Similar to Condition \ref{COND:PTFPROOF:sumofsquaresofPij} in the above lemma, we can also get a bound on the sum of the absolute values of the coefficients of the $P_{i,j}$. We need this later in the proof of \Cref{LEM:PTFPROOF:tildefcloseforXandY}.

\begin{fact}\torestate{\label{FACT:PTFPROOF:sumofabscoeffofPij}
    For any $i \in [n]$ and $j \in [n_i]$, the sum of the absolute values of the coefficients of $P_{i,j}$ is at most $n^{d/2}$.}
\end{fact}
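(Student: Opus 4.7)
The plan is a one-line application of Cauchy--Schwarz, using the normalization provided by the structure theorem together with a crude count of the multilinear monomials.

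First I would fix $i \in [n]$ and $j \in [n_i]$ and write $P_{i,j}(x) = \sum_{\alpha} c_\alpha x^\alpha$, where the sum is over multi-indices $\alpha \in \{0,1\}^n$ (by multilinearity) of weight exactly $\deg(P_{i,j}) \leq d$ (by homogeneity). By Condition~(\ref{COND:PTFPROOF:sumofsquaresofPij}) of \Cref{LEM:PTFPROOF:structure}, the coefficient vector $c = (c_\alpha)$ satisfies $\|c\|_2 = 1$. Second, the support of $c$ is contained in the set of multilinear monomials of degree $\deg(P_{i,j}) \leq d$, and this set has size at most $\binom{n}{d} \leq n^d$.

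Putting these two observations together, Cauchy--Schwarz yields
\[
    \sum_{\alpha} |c_\alpha| \;\leq\; \sqrt{|\mathrm{supp}(c)|} \cdot \|c\|_2 \;\leq\; \sqrt{n^d} \cdot 1 \;=\; n^{d/2},
\]
which is the claimed bound. There is no real obstacle; the only minor point to justify is that $\deg(P_{i,j}) \leq d$, which follows from the structure theorem since the $P_{i,j}$ arise in the decomposition of a degree-$d$ polynomial $p$ (so in particular each $P_{i,j}$ has degree at most $d$).
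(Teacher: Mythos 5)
Your proof is correct and follows essentially the same route as the paper: bound the number of nonzero coefficients by $n^d$, note that the $\ell_2$-norm of the coefficient vector is $1$ by the normalization in \Cref{LEM:PTFPROOF:structure}, and apply Cauchy--Schwarz (equivalently, the $\ell_1$-to-$\ell_2$ comparison) to conclude. Your refinement that there are at most $\binom{n}{d}$ multilinear monomials is slightly tighter than the paper's crude count of $n^d$, but it does not change the final bound.
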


The idea to prove this is to bound the number of coefficients we have and use an inequality between the $1$- and $2$-norm. The detailed argument can be found in \Cref{SEC:APPENDIX:factsandcomputations}. Furthermore, we have an analogous result to Item  \ref{COND:PTFPROOF:momentsofpolynomialinstructurearesmall} for the moment-matching distributions, which is again proved in \Cref{SEC:APPENDIX:factsandcomputations}. We need this result later for concluding that $\tilde{f}$ is a good approximation under the moment matching distribution $\cD'$.

\begin{fact}\torestate{\label{FACT:PTFPROOF:expecPijundermomentmatching}
    For any $i \in [d]$, $j \in [n_i]$ and $\ell \leq k/d$, we have that
    \[\Expec{P_{i,j}(X)^{\ell}}[X \sim \mathcal{D}'] \leq \Expec{P_{i,j}(Y)^{\ell}}[Y \sim \mathcal{N}(0,I_n)] + \eta n^{d\ell/2}.\]}
\end{fact}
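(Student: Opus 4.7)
The plan is to apply approximate moment matching term-by-term to the polynomial $P_{i,j}^\ell$ and to bound its coefficient $1$-norm via \Cref{FACT:PTFPROOF:sumofabscoeffofPij}. The key observation driving the whole argument is that since $P_{i,j}$ has degree at most $d$, the power $P_{i,j}^\ell$ is a polynomial of degree at most $d\ell$, and the assumption $\ell \leq k/d$ places this degree within the range controlled by approximate moment matching.

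First, I would expand $P_{i,j}(x)^\ell = \sum_{\alpha \in \N^n_{d\ell}} c_\alpha x^\alpha$ for suitable coefficients $c_\alpha \in \R$. Since $d\ell \leq k$, \Cref{DEF:TO:MoMatch} yields $|\Expec{X^\alpha}[X \sim \cD'] - \Expec{Y^\alpha}[Y \sim \cN(0, I_n)]| \leq \eta$ for every $\alpha$ appearing in this expansion. Linearity of expectation together with the triangle inequality then gives
\[
    \Expec{P_{i,j}(X)^\ell}[X \sim \cD'] \leq \Expec{P_{i,j}(Y)^\ell}[Y \sim \cN(0, I_n)] + \eta \sum_{\alpha \in \N^n_{d\ell}} |c_\alpha|.
\]

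Next, I would bound $\sum_\alpha |c_\alpha| \leq \|P_{i,j}\|_1^\ell$, where $\|P_{i,j}\|_1$ denotes the sum of absolute values of the coefficients of $P_{i,j}$. This is immediate from expanding the product $P_{i,j}^\ell$ monomial by monomial and using the triangle inequality on each coefficient of the result. Applying \Cref{FACT:PTFPROOF:sumofabscoeffofPij} to conclude $\|P_{i,j}\|_1 \leq n^{d/2}$ then gives $\sum_\alpha |c_\alpha| \leq n^{d\ell/2}$, and combining this with the previous display yields the stated inequality.

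There is no real obstacle: the argument is essentially a bookkeeping exercise once \Cref{FACT:PTFPROOF:sumofabscoeffofPij} is in hand. The only subtlety worth emphasizing is the degree bookkeeping — one must verify that $P_{i,j}^\ell$ has degree at most $k$ so that the approximate moment-matching hypothesis applies to every monomial produced in its expansion, which is precisely why the hypothesis $\ell \leq k/d$ is imposed.
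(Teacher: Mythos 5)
Your proof is correct and follows essentially the same route as the paper's: expand $P_{i,j}^\ell$ into monomials, apply the slack-$\eta$ moment-matching bound to each monomial of degree $d\ell \leq k$, and control the total coefficient mass by $\|P_{i,j}\|_1^\ell \leq n^{d\ell/2}$ via \Cref{FACT:PTFPROOF:sumofabscoeffofPij}. The only cosmetic difference is that you collect coefficients into a single vector $(c_\alpha)$, whereas the paper keeps the $\ell$-fold product of sums unexpanded; the bound is identical.
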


As in \cite{Kane:foolingforkGaussians}, we now consider the function $\rho_C: \R^n \rightarrow \R$ defined in the following lemma.
\begin{lemma}[{\cite[Lemma 5]{Kane:foolingforkGaussians}}]
    Let
    \[
        B(\xi) = \begin{cases} 1 - \|\xi\|_2^2 & \text{if } \|\xi\|_2 \leq 1 \\ 0 & \text{else} \end{cases} \quad \text{ and } \quad \rho_2(x) = \frac{|\hat{B}(x)|^2}{\|B\|_{L^2}^2},
    \]
    where where $\hat{B}$ is the Fourier transform of $B$.
    Then, the function $\rho_C$ defined by \[\rho_C(x) = \left(\frac{C}{2}\right)^n\rho_2\left(\frac{Cx}{2}\right)\] satisfies the following conditions
    \begin{enumerate}
        \item $\rho_C \geq 0$,
        \item $\int_{\R^n} \rho(x) \dif x = 1$,
        \item for any unit vector $v$ and any non-negative integer $\ell$ we have
        \[\int_{\R^n} |D_v^\ell\rho(C)(x)| \dif x \leq C^\ell,\]
        \item for $D>0$, $\int_{\|x\|_2 \geq D} |\rho_C(x)|\dif x = O\left(\left(\frac{n}{CD}\right)^2\right)$.
    \end{enumerate}
\end{lemma}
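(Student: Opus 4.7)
The plan is to verify the four properties in turn, all of which reduce (via the scaling $\rho_C(x) = (C/2)^n \rho_2(Cx/2)$) to statements about $\rho_2 = |\hat B|^2 / \|B\|_{L^2}^2$. Properties~(1) and~(2) are immediate: non-negativity holds because $\rho_2$ is a positive multiple of a squared modulus, and the normalization $\int_{\R^n} \rho_C = \int_{\R^n} \rho_2 = 1$ follows from Plancherel's theorem applied to~$B$, which gives $\int_{\R^n} |\hat B|^2 = \|B\|_{L^2}^2$.

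For the derivative bound in~(3), the chain rule gives
\[
    \int_{\R^n} |D_v^\ell \rho_C(x)|\,\dif x = (C/2)^\ell \int_{\R^n} |D_v^\ell \rho_2(y)|\,\dif y,
\]
so it suffices to show $\int |D_v^\ell \rho_2| \leq 2^\ell$. I would expand $\rho_2 = c\, \hat B \cdot \overline{\hat B}$ with $c = 1/\|B\|_{L^2}^2$ using the Leibniz rule, apply Cauchy--Schwarz to each of the $\ell+1$ resulting summands, and bound the individual $L^2$-norms via Plancherel. Working in the Fourier convention under which Plancherel is an isometry, $D_v^j \hat B$ is (up to a sign) the Fourier transform of $\langle v, \xi\rangle^j B(\xi)$, so $\|D_v^j \hat B\|_{L^2}^2 = \int |\langle v, \xi\rangle|^{2j} B(\xi)^2 \,\dif \xi \leq \|B\|_{L^2}^2$ since $B$ is supported in the unit ball where $|\langle v, \xi\rangle| \leq 1$. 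Summing $\binom{\ell}{j}$ over $j$ produces the factor $2^\ell$, after which the prefactor $c$ cancels the remaining $\|B\|_{L^2}^2$.

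The tail bound~(4) follows from Markov applied to $\|x\|^2$: by~(1) and~(2) we have $\int_{\|x\| \geq D} \rho_C \leq D^{-2} \int_{\R^n} \|x\|^2 \rho_C(x)\,\dif x$, and this integral equals $(2/C)^2 \cdot \Expec{\|Y\|^2}[Y \sim \rho_2]$ after a change of variables. Plancherel once more gives $\Expec{\|Y\|^2}[Y \sim \rho_2] = \|B\|_{L^2}^{-2} \sum_{k=1}^n \|\partial_{\xi_k} B\|_{L^2}^2$, since $\partial_{\xi_k} B = -2\xi_k \mathbf{1}_{\|\xi\|\leq 1}$ in the weak sense. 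Both $L^2$-norms reduce to elementary polar-coordinate integrals on the unit ball, and their ratio evaluates to a quantity of order $n^2$. This yields $\int_{\|x\| \geq D} \rho_C = O(n^2/(CD)^2)$, as required.

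The main obstacle is tracking constants in~(3): one must fix the Fourier convention for which Plancherel holds with no prefactor in order to get the clean bound $C^\ell$, rather than, say, $(2\pi C)^\ell$. Once this normalization is set, all four parts reduce to one-line applications of standard tools (Leibniz, Cauchy--Schwarz, Plancherel, Markov) combined with explicit moment computations on the compactly supported function~$B$.
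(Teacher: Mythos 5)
Your proposed proof is correct and follows essentially the same route used in the paper: while the lemma itself is cited from Kane without reproof, the proof of Lemma~\ref{LEM:PTFPROOF:Taylorboundonpartialderivatives} in the appendix re-derives the substance of item~(3) via the identical Leibniz expansion of $\hat B\cdot\overline{\hat B}$, transfer of derivatives to monomial weights under the Fourier transform, and compactness of $\operatorname{supp} B$; items (1), (2), (4) are the elementary Plancherel, scaling, and Markov facts you describe, and your polar-coordinate computation giving $\Expec{\|Y\|^2}[Y\sim\rho_2]=\Theta(n^2)$ matches what is needed. One small remark in favor of your write-up: the appendix proof justifies the step $\bigl\lVert \mathcal F(x^\beta B)\,\overline{\mathcal F(x^{\alpha-\beta}B)}\bigr\rVert_{L^1}\le\lVert B\rVert_{L^2}^2$ by appealing to ``Parseval'', which literally equates inner products rather than $L^1$-norms of products; your Cauchy--Schwarz-then-Plancherel chain is the rigorous way to obtain that inequality. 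Your caution about the Fourier normalization is also warranted: one must use the unitary convention $\hat f(\xi)=(2\pi)^{-n/2}\int e^{-i\langle x,\xi\rangle}f(x)\,\dif x$ so that Plancherel is an isometry \emph{and} differentiation corresponds to multiplication by $\pm i\langle v,\cdot\rangle$ with no stray $2\pi$; the $e^{-2\pi i\langle\cdot,\cdot\rangle}$ convention also makes Plancherel unitary but would turn the bound in item~(3) into $(2\pi C)^\ell$ rather than $C^\ell$.
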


We now define the following three functions $\rho$, $\tilde{F}$ and in particular $\tilde{f}$, in the same way as in \cite{Kane:foolingforkGaussians}.
We let the function ${\rho: \R^{n_1} \times \dots \times \R^{n_d} \rightarrow \R}$ be defined as ${\rho(y_1, \dots, y_d) = \rho_{C_1}(y_1) \cdot \dots \cdot \rho_{C_d}(y_d)}$, where the $C_i$ are the same as in \cite{Kane:foolingforkGaussians}, i.e. we let ${C_i = \Theta_d\left(\varepsilon^{-7^id}\right)}$.
Using this, we define an approximation ${\tilde{F}: \R^{n_1} \times \dots \times \R^{n_d} \rightarrow \R}$ to $F$ as the convolution $\tilde{F} = F \ast \rho$ and an approximation $\tilde{f}: \R^n \rightarrow \R$ to~$f$ as $\tilde{f}(x) =\tilde{F}(P(x))$.

The general strategy to prove \Cref{PROP:PTFPROOF:foolingmultilinear} is show the following three steps, where as usual $Y \sim \cN(0,I_n)$ and $X \sim \cD'$,
\begin{equation}\label{EQ:PTFPROOF:strategy}
    \Expec{f(Y)}[Y] \overset{\text{Lem. \ref{LEM:PTFPROOF:FtildegoodapproximationunderY}}}{\approx} \Expec{\smash{\tilde{f}(Y)}}[Y] \overset{\text{Lem. \ref{LEM:PTFPROOF:tildefcloseforXandY}}}{\approx} \Expec{\smash{\tilde{f}(X)}}[X] \overset{\text{Lem. \ref{LEM:PTFPROOF:conclusionfooling}}}{\approx}\Expec{f(X)}[X].
\end{equation}

\subsubsection{Expectations of \texorpdfstring{$\tilde{f}$}{f̃} under the Gaussian and the moment-matching distribution are close}
\label{SEC:PTFPROOF:tildefcloseforXandY}
In this section, we want to prove the middle approximation of \eqref{EQ:PTFPROOF:strategy}.
More precisely, we show the following lemma.
\begin{lemma}\label{LEM:PTFPROOF:tildefcloseforXandY}
    Let $\e > 0$.
    Suppose that $\cD'$ approximately matches the moments of $\cN(0, I_n)$ up to degree $k$ and slack $\eta$, where $k \geq \Omega_d\left(\e^{-4d \cdot 7^d}\right)$, and $\eta \leq n^{-\Omega_d(k)}k^{-\Omega_d(k)}$.
    Then we have that 
    \[\left|\Expec{\smash{\tilde{f}(Y)}}[Y \sim \mathcal{N}(0,I_n)] - \Expec{\smash{\tilde{f}(X)}}[X \sim \mathcal{D}']\right| \leq O(\varepsilon).\]
\end{lemma}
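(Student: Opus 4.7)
The plan is to interpose a suitable low-degree polynomial $h$ between $\tilde{f}$ and use moment matching to compare $\Expec{h(Y)}$ and $\Expec{h(X)}$. Specifically, I would let $T$ denote the degree-$\kappa$ Taylor expansion of $\tilde{F}$ around the origin, for some $\kappa = \Theta_d(k)$ chosen so that $h(x) := T(P(x))$ has total degree at most $k$ (recall each $P_{i,j}$ has degree at most $d$). The strategy then splits cleanly into three parts: (a) show $|\Expec{\tilde{f}(Y)} - \Expec{h(Y)}|$ is $O(\varepsilon)$, (b) show $|\Expec{\tilde{f}(X)} - \Expec{h(X)}|$ is $O(\varepsilon)$, and (c) show $|\Expec{h(Y)} - \Expec{h(X)}|$ is $O(\varepsilon)$. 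Step (c) is the only place the approximate moment-matching hypothesis on $\cD'$ is directly invoked.

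For (a) and (b), the Taylor remainder $\tilde{F}(y) - T(y)$ is a sum over multi-indices $|\alpha| = \kappa+1$ of terms of the form $\frac{1}{\alpha!}D^\alpha \tilde{F}(\xi)\,y^\alpha$. The mollifier bound $\int|D_v^\ell \rho_{C_i}| \le C_i^\ell$ combined with $|F| \le 1$ gives a uniform bound on $D^\alpha \tilde F$ of the form $\prod_i C_i^{\alpha_i}$. Plugging in $y = P(x)$ and taking expectations, each $\Expec{|P(X)^\alpha|}$ and $\Expec{|P(Y)^\alpha|}$ splits into a product of expectations $\Expec{|P_{i,j}|^{\ell}}$ by Hölder. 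Under $Y \sim \mathcal{N}(0,I_n)$ these are bounded by $O_d(\sqrt{\ell})^\ell$ directly from item \ref{COND:PTFPROOF:momentsofpolynomialinstructurearesmall} of \Cref{LEM:PTFPROOF:structure}; under $X \sim \cD'$ they are controlled by \Cref{FACT:PTFPROOF:expecPijundermomentmatching}, which (because $\eta \le n^{-\Omega_d(k)}$) gives essentially the same bound up to an additive $O(1)$ correction. Choosing the $C_i = \Theta_d(\varepsilon^{-7^i d})$ and $\kappa$ as in \cite{Kane:foolingforkGaussians} then forces these remainders to be $O(\varepsilon)$, mirroring the computation already used to prove $\Expec{f(Y)} \approx \Expec{\tilde f(Y)}$.

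For step (c), the key trick is a change of basis. I would expand $h(x) = T(P(x)) = \sum_{|\alpha| \le \kappa} t_\alpha\, P(x)^\alpha$, where $t_\alpha$ is (up to factorials) the $\alpha$-th partial derivative of $\tilde F$ at $0$. Then
\begin{equation*}
\bigl|\Expec{h(Y)}[Y] - \Expec{h(X)}[X]\bigr| \;\le\; \|t\|_1 \cdot \max_{|\alpha| \le \kappa}\bigl|\Expec{P(Y)^\alpha}[Y] - \Expec{P(X)^\alpha}[X]\bigr|.
\end{equation*}
The first factor is bounded by $k^{O_d(k)}$ via \Cref{LEM:PTFPROOF:Taylorboundonpartialderivatives} (which in turn follows from $\int|D^\ell \rho_{C_i}| \le C_i^\ell$ and the choice of $C_i$). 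For the second factor, each $P(x)^\alpha$ is a polynomial in $x$ of total degree at most $d\kappa \le k$, so by approximate moment matching with slack $\eta$ its Gaussian and $\cD'$-expectations differ by at most $\eta$ times the $\ell_1$-norm of its coefficients; expanding the product and invoking \Cref{FACT:PTFPROOF:sumofabscoeffofPij} gives a bound of $\eta \cdot (\sup_i \|P_i\|_1)^{|\alpha|} \le \eta \cdot n^{O_d(k)}$. Combining,
\begin{equation*}
\bigl|\Expec{h(Y)}[Y] - \Expec{h(X)}[X]\bigr| \;\le\; \eta \cdot k^{O_d(k)} n^{O_d(k)},
\end{equation*}
which the hypothesis $\eta \le n^{-\Omega_d(k)} k^{-\Omega_d(k)}$ drives below $\varepsilon$.

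The main obstacle, as highlighted already in \Cref{SEC:TO:mainresult}, is step (c): one cannot directly bound $\|h\|_1$ in the monomial basis, so the proof must exploit the \emph{composite} structure $h = T \circ P$ and control the Taylor coefficients $t_\alpha$ and the $P$-moment discrepancies separately. A secondary subtlety is that part (b) requires moment bounds on $P_{i,j}$ under $\cD'$, which are not given by the structure theorem directly; this is precisely what \Cref{FACT:PTFPROOF:expecPijundermomentmatching} is for, and it is the place where one first uses the constraint $\eta \le n^{-\Omega_d(k)}$.
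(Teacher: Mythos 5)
Your proposal is correct and follows essentially the same three-step decomposition as the paper's proof: interpose $h = T \circ P$, bound the two Taylor remainders (under $Y$ via Kane's Proposition~8 and under $X$ via Lemma~\ref{LEM:PTFPROOF:taylorerrormomentmatching}, which relies on Fact~\ref{FACT:PTFPROOF:expecPijundermomentmatching}), and control the middle term $|\Expec{T(P(Y))} - \Expec{T(P(X))}|$ by the inner-product bound $\|t\|_1 \cdot \|u\|_\infty$ using Lemma~\ref{LEM:PTFPROOF:Taylorboundonpartialderivatives} and Fact~\ref{FACT:PTFPROOF:sumofabscoeffofPij}. The only cosmetic deviation is that the paper truncates the Taylor series at degree $m_i$ separately for each block of coordinates (which is what makes Kane's remainder bound $\prod_i(1 + C_i^{m_i}\|P_i\|_2^{m_i}/m_i!) - 1$ apply directly), rather than a single global degree $\kappa$; since you defer to Kane's choice anyway, this does not affect the substance.
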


We want to do this similar to \cite[Section 6]{Kane:foolingforkGaussians}.
For this, let $T$ be the Taylor approximation of $\tilde{F}$ around~$0$.
We use degree $m_i$ for the $i$th batch of coordinates (recall that $\tilde{F}: \R^{n_1} \times \dots \times \R^{n_d} \rightarrow \R$).
The strategy to show \Cref{LEM:PTFPROOF:tildefcloseforXandY} is to proceed in the following three steps, where again $Y \sim \cN(0,I_n)$ and $X \sim \cD'$,
\begin{equation}\label{EQ:PTFPROOF:strategyformiddle}
    \Expec{\smash{\tilde{f}(Y)}}[Y] \overset{\text{\cite{Kane:foolingforkGaussians}}}{\approx} \Expec{T(P(Y))}[Y] \overset{\text{Pf. of Lem. \ref{LEM:PTFPROOF:tildefcloseforXandY}}}{\approx} \Expec{T(P(X))}[X] \overset{\text{Lem. \ref{LEM:PTFPROOF:taylorerrormomentmatching}}}{\approx}\Expec{\smash{\tilde{f}(X)}}[X].
\end{equation}

The first approximation above only involves the Gaussian $Y$, so we get directly from \cite[Proof of Proposition 8]{Kane:foolingforkGaussians} that $\vphantom{\tilde{f}}\Expec{\left|\smash{\tilde{f}(Y)-T(P(Y))}\right|}[Y \sim \mathcal{N}(0,I_n)] \leq O(\varepsilon)$.
We now want to extend this also to moment-matching distribution, which we do in the following lemma, i.e. show the third approximation in \eqref{EQ:PTFPROOF:strategyformiddle}.

\begin{lemma}\torestate{\label{LEM:PTFPROOF:taylorerrormomentmatching}
    Let $\e > 0$.
    Suppose that $\cD'$ approximately matches the moments of $\cN(0,I_n)$ up to degree $k$ and slack $\eta$, where $k \geq \Omega_d\left(\e^{-4d \cdot 7^d}\right)$ and $\eta \leq \frac{1}{kd}$.
    Then, we have that 
    \[\Expec{\left|\smash{\tilde{f}(X)-T(P(X))}\right|}[X \sim \mathcal{D}'] \leq O(\varepsilon)\]}
\end{lemma}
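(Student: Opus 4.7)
My plan is to mirror Kane's proof of the analogous Gaussian statement $\Expec{|\tilde f(Y) - T(P(Y))|}[Y \sim \mathcal{N}(0, I_n)] \leq O(\varepsilon)$ from~\cite{Kane:foolingforkGaussians}, replacing the Gaussian moment bound on the auxiliary polynomials $P_{i,j}$ (Condition~(2) of \Cref{LEM:PTFPROOF:structure}) with its moment-matching analog, \Cref{FACT:PTFPROOF:expecPijundermomentmatching}.

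The first step is to derive a pointwise upper bound on the Taylor remainder $|\tilde F(y) - T(y)|$. Since $\tilde F = F \ast \rho$ with $|F| \leq 1$ and $\rho(y) = \rho_{C_1}(y_1) \cdots \rho_{C_d}(y_d)$, property~(3) of $\rho_C$ yields $|D^\alpha \tilde F(y)| \leq \prod_{i=1}^d C_i^{|\alpha_i|}$, where $\alpha_i$ denotes the portion of the multi-index $\alpha$ in the $i$-th batch of coordinates. As $T$ is the Taylor polynomial of $\tilde F$ around $0$ of degree $m_i$ in the $i$-th batch, applying Taylor's theorem batchwise yields an estimate of the form
\[
|\tilde F(y) - T(y)| \leq \sum_{i=1}^d \frac{(C_i \|y_i\|)^{m_i + 1}}{(m_i+1)!}.
\]

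The second step is to take expectation at $y = P(X)$ under $X \sim \mathcal{D}'$, reducing the task to bounding $\Expec{\|P_i(X)\|^{m_i+1}}[X \sim \mathcal{D}']$ for each $i \in [d]$. A power-mean inequality reduces this in turn to controlling even moments of the individual polynomials $P_{i,j}(X)$ of degree at most $O(m_i)$. Since $O(m_i) \leq O(m_d) \leq k/d$ by the choice of parameters, \Cref{FACT:PTFPROOF:expecPijundermomentmatching} then yields an upper bound of the form ``Gaussian moment $+\, \eta \cdot n^{O_d(m_i)}$''; the Gaussian part is at most $O_d(\sqrt{m_i})^{m_i}$ by Condition~(2) of \Cref{LEM:PTFPROOF:structure}.

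The Gaussian contribution, after multiplication by the Taylor prefactor $C_i^{m_i+1}/(m_i+1)!$ and summation over $i$, reproduces Kane's $O(\varepsilon)$ estimate. The main obstacle is thus controlling the correction coming from the slack term $\eta \cdot n^{O_d(m_i)}$. I expect the super-exponentially small prefactor $C_i^{m_i+1}/(m_i+1)!$---which arises from the choices $C_i = \Theta_d(\varepsilon^{-7^i d})$ and $m_i = \Theta_d(\varepsilon^{-3 \cdot 7^i d})$---to absorb this slack under the hypothesis $\eta \leq 1/(kd)$. Verifying this cancellation carefully---in particular checking at every batch level $i$ that the factorial decay dominates the $n$-dependence in the slack---will be the most delicate technical step. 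Once this is established, summing over the $d$ batches yields the claimed $O(\varepsilon)$ bound.
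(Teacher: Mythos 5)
Your high-level plan matches the paper's: start from Kane's pointwise Taylor-remainder bound (restated in the appendix as \Cref{LEM:APPENDIX:taylorerrornodstribution}), take expectation under $\mathcal{D}'$, and replace the Gaussian moment bound on the $P_{i,j}$ with \Cref{FACT:PTFPROOF:expecPijundermomentmatching}. However, two concrete steps in your proposal do not work as stated.

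\emph{The pointwise remainder bound is not a sum.} You claim $|\tilde F(y) - T(y)| \leq \sum_{i} (C_i\|y_i\|)^{m_i+1}/(m_i+1)!$, but this is false. Taylor truncation in one batch does not pass through as a contraction: writing $T = T_1\cdots T_d$ and telescoping gives terms of the form $T_1\cdots T_{j-1}R_j\tilde F$, and the prefix truncations $T_1\cdots T_{j-1}$ can grow polynomially in $\|y_1\|, \ldots, \|y_{j-1}\|$. Concretely, for $\tilde F(y_1,y_2) = \sin(C_1 y_1)\sin(C_2 y_2)/(C_1 C_2)$ (which satisfies the mixed-derivative bound $|\partial^\alpha\tilde F| \leq \prod_i C_i^{|\alpha_i|}$), the remainder at a point where both $C_1 y_1$ and $C_2 y_2$ are large behaves like $\frac{(C_1 y_1)^{m_1}}{m_1!}\cdot\frac{(C_2 y_2)^{m_2}}{m_2!}$, which dominates the sum form. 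The correct bound is Kane's product form $\prod_{i=1}^d\bigl(1 + C_i^{m_i}\|P_i(x)\|_2^{m_i}/m_i!\bigr) - 1$ (\Cref{LEM:APPENDIX:taylorerrornodstribution}). The cross-terms are the reason the paper expands this product into $2^d - 1$ subset terms, applies AM--GM, and then must control moments of $\|P_i(X)\|_2$ of degree $m_i|S|$ for $|S|$ up to $d$ --- not just $m_i+1$ as in your plan. This is bookkeeping rather than a conceptual obstacle, but your current estimate is simply not an upper bound.

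\emph{The factorial prefactor cannot absorb the $n$-dependence.} You expect $C_i^{m_i+1}/(m_i+1)!$ to cancel the slack term $\eta\cdot n^{O_d(m_i)}$ under the hypothesis $\eta \leq 1/(kd)$. This cannot work: the Taylor prefactor depends only on $\varepsilon$ and $d$, while $n^{O_d(m_i)}$ is unbounded in $n$. Under $\eta \leq 1/(kd)$ alone (independent of $n$), the slack contribution $\eta\cdot n^{O_d(m_i)}$ blows up as $n \to \infty$. The paper's appendix proof explicitly uses $\eta \leq 1/n^{dk}$ at this step (``using that $\eta\leq\frac{1}{n^{d\ell_i}}$ \ldots since \ldots $\eta\leq\frac{1}{n^{dk}}$ implies $\eta\leq\frac{1}{n^{d\ell_i}}$''). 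The hypothesis $\eta \leq 1/(kd)$ in the lemma statement appears to be a typo for $\eta \leq n^{-kd}$; it causes no downstream issue because the lemma is only invoked under the far stronger $\eta\leq n^{-\Omega_d(k)}k^{-\Omega_d(k)}$ of \Cref{LEM:PTFPROOF:tildefcloseforXandY}, but your proof cannot get by with the literal stated bound, and you should not expect the factorial decay to do this work.
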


This proof follows closely \cite[Proof of Proposition 8]{Kane:foolingforkGaussians}, which is why we defer the proof to \Cref{SEC:APPENDIX:proofs}.

Note that the condition on $\eta$ in this lemma is also satisfied for the $\eta$ from \Cref{LEM:PTFPROOF:tildefcloseforXandY}.
Thus, it remains to prove
\[\left|\Expec{T(P(Y))}[Y \sim \mathcal{N}(0,I_n)] - \Expec{T(P(X))}[X \sim \mathcal{D}']\right| \leq O(\varepsilon)\]
to complete the proof of \Cref{LEM:PTFPROOF:tildefcloseforXandY}.

Note that in contrast to \cite{Kane:foolingforkGaussians}, this quantity is not $0$ in our case since we only have \emph{approximate} moment matching.
This is the main technical difficulty in our proof for the multilinear case.
We need to give a different argument here and argue that this is small even if $X$ is only approximately moment matching a Gaussian and not a $k$-independent Gaussian.
We can write the Taylor expansion as follows
\[T(x) = \sum_{\alpha = (\alpha_1, \dots, \alpha_d): \: |\alpha_i| \leq m_i} \frac{1}{\alpha!} \partial^\alpha\tilde{F}(0)x^\alpha,\]
where the $\alpha_i$ are multi-indices in $\N^{n_i}$.
We want to prove the following lemma about the coefficients in the Taylor expansion.
\begin{lemma}\torestate{\label{LEM:PTFPROOF:Taylorboundonpartialderivatives}
    For any multi-index $\alpha = (\alpha_1, \dots, \alpha_d) \in \N^{n_1} \times \dots \times \N^{n_d}$, we have
    \[\partial^\alpha\tilde{F}(0) \leq \prod_{i=1}^d C_i^{|\alpha_i|}.\]}
\end{lemma}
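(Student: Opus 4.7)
The plan is to estimate $\partial^\alpha \tilde F(0)$ directly by pushing all derivatives onto the smooth mollifier $\rho$ and then exploiting its product structure across the $d$ blocks, together with the directional-derivative bound (Property 3) from the mollifier lemma of Kane.

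First, since $F$ is a PTF (so $|F| \leq 1$ pointwise) and $\rho$ and its derivatives decay rapidly, differentiating under the integral sign gives
\[
    \partial^\alpha \tilde F(0) \;=\; \int F(-y)\,\partial^\alpha \rho(y) \dif y, \qquad \text{and hence} \qquad \left|\partial^\alpha \tilde F(0)\right| \;\leq\; \int \left|\partial^\alpha \rho(y)\right| \dif y.
\]
Next, I use $\rho(y_1, \ldots, y_d) = \prod_{i=1}^d \rho_{C_i}(y_i)$ together with the observation that the block-multi-index $\alpha_i \in \N^{n_i}$ touches only the $y_i$-variables. Thus $\partial^\alpha \rho = \prod_i \partial^{\alpha_i} \rho_{C_i}$ pointwise, and by Fubini,
\[
    \int \left|\partial^\alpha \rho(y)\right| \dif y \;=\; \prod_{i=1}^d \int \left|\partial^{\alpha_i} \rho_{C_i}(y_i)\right|\dif y_i.
\]

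It then suffices to establish the single-block bound $\int |\partial^{\alpha_i} \rho_{C_i}(y_i)| \dif y_i \leq C_i^{|\alpha_i|}$ for each $i \in [d]$. When $\alpha_i = \ell\, e_j$ is supported on a single coordinate, this is exactly Property~3 of the mollifier lemma applied with the unit vector $v = e_j$. For a general mixed multi-index, I reduce to this case via the polarization identity, which expresses $\partial^{\alpha_i}$ as a signed combination of pure $|\alpha_i|$-fold directional derivatives $D_v^{|\alpha_i|}$ along unit directions $v$; applying Property~3 to each term and summing gives the claim. Multiplying the $d$ single-block estimates together then yields $\left|\partial^\alpha \tilde F(0)\right| \leq \prod_i C_i^{|\alpha_i|}$.

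The only subtle point is the polarization step in the single-block reduction: a priori it introduces combinatorial constants depending on $|\alpha_i|$ but independent of $n_i$ (and of $n$). These are harmlessly absorbed into the definition of the $C_i$, which are anyway only pinned down up to $\Theta_d$-factors via $C_i = \Theta_d\bigl(\varepsilon^{-7^i d}\bigr)$; this absorption does not affect any downstream computation, since all later bounds involving $C_i$ are used modulo $O_d(1)^{|\alpha|}$-factors.
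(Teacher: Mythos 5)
Your proof is correct, but it takes a genuinely different route from the paper's for the crucial single-block estimate $\|\partial^{\alpha_i}\rho_{C_i}\|_{L^1} \leq C_i^{|\alpha_i|}$. The paper opens up the definition $\rho_2 = |\hat B|^2/\|B\|_{L^2}^2$: it applies the Leibniz rule to the product $\hat B \cdot \overline{\hat B}$, rewrites $\partial^\beta \hat B = \mathcal{F}(x^\beta B)$, passes to $\|x^\beta B\|_{L^2}\|x^{\alpha_i-\beta}B\|_{L^2}$ via Cauchy--Schwarz and Plancherel, and exploits the fact that $B$ is supported in the unit ball (so $|x^\beta|\leq 1$ there). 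Summing the binomial coefficients gives exactly $2^{|\alpha_i|}$, hence $C_i^{|\alpha_i|}$ after the chain-rule rescaling. Your argument instead treats the mollifier as a black box and invokes only Property~3 (the directional-derivative $L^1$ bound), reducing the mixed partial $\partial^{\alpha_i}$ to pure $|\alpha_i|$-fold directional derivatives via polarization. This is more robust — it would work for any mollifier with Property~3 — but it is lossier, for a reason you gesture at but should make precise: the polarization identity writes $\partial^{\alpha_i}$ as $\frac{1}{m!}\sum_{S\subseteq[m]}(-1)^{m-|S|}D_{v_S}^m$ with $m = |\alpha_i|$, and the directions $v_S = \sum_{i\in S}e_{j_i}$ are \emph{not} unit vectors; after normalizing, $\int|D_{v_S}^m\rho_{C_i}| \leq \|v_S\|^m C_i^m \leq m^m C_i^m$, so summing the $2^m$ terms gives $\frac{2^m m^m}{m!}C_i^m \leq (2eC_i)^m$. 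Your claim that this extra $(O(1))^{|\alpha_i|}$ factor is absorbable into $C_i = \Theta_d(\e^{-7^id})$, and is harmless downstream (in the bound $\|t\|_1 \leq k^{O_d(k)}$ of \Cref{LEM:PTFPROOF:tildefcloseforXandY}), is correct. The paper's more hands-on Fourier argument just avoids this constant-per-order loss entirely.
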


The proof of this lemma is in \Cref{SEC:APPENDIX:proofs}. The ideas of this proof are based on \cite[Proof of Lemmas 5 and 7]{Kane:foolingforkGaussians}.
We can now prove \Cref{LEM:PTFPROOF:tildefcloseforXandY}.

\begin{proof}[Proof of \Cref{LEM:PTFPROOF:tildefcloseforXandY}]
    As argued above, we already know that
    \[
        \Expec{\left|\smash{\tilde{f}(Y)-T(P(Y))}\right|}[Y \sim \mathcal{N}(0,I_n)] \leq O(\varepsilon) \quad \text{ and } \quad \Expec{\left|\smash{\tilde{f}(X)-T(P(X))}\right|}[X \sim \cD'] \leq O(\varepsilon).
    \]
    It thus remains to argue that
    \[
        \left|\Expec{T(P(Y))}[Y \sim \mathcal{N}(0,I_n)] - \Expec{T(P(X))}[X \sim \mathcal{D}']\right| \leq O(\varepsilon).
    \]
    Define the vectors $t = (t_\alpha)$ and $u = (u_\alpha)$ by 
    \[
        t_\alpha \coloneqq \frac{1}{\alpha!} \partial^\alpha \tilde{F}(0) \quad \text{ and } \quad u_\alpha \coloneqq \Expec{P(Y)^\alpha}[Y \sim \mathcal{N}(0,I_n)] - \Expec{P(X)^\alpha}[X \sim \mathcal{D}'].
    \]
    We then have that 
    \[
        \left|\Expec{T(P(Y))}[Y \sim \mathcal{N}(0,I_n)] - \Expec{T(P(X))}[X \sim \mathcal{D}']\right| = \left|\langle t,u \rangle \right| \leq \|t\|_1 \cdot \|u\|_\infty.
    \]
    We now want to bound $\|t\|_1$ and $\|u\|_\infty$ separately.
    For the bound on $\|t\|_1$, note that, by \Cref{LEM:PTFPROOF:Taylorboundonpartialderivatives}, we have
    \[
        |t_\alpha| = \frac{1}{\alpha!} \partial^\alpha \tilde{F}(0) \leq \prod_{i=1}^d C_i^{|\alpha_i|}.
    \]
    Plugging in the definition of $C_i = \Theta_d\left(\varepsilon^{-7^i d}\right) \leq O_d(k)$ and using ${|\alpha| = \sum_{i=1}^d |\alpha_i| \leq k}$, we get that
    \[
        |t_\alpha| \leq k^{O_d(k)}.
    \]
    Since we have that $n_i \leq k$, we can conclude that the number of multi-indices $\alpha$ with $|\alpha_i| \leq k$ is at most $(dk)^k \leq k^{O_d(k)}$ and thus, we have that
    \[
        \|t\|_1 = \sum_\alpha |t_\alpha| \leq k^{O_d(k)}.
    \]
    We now move on to bound $\|u\|_\infty$.
    We write $P_{i,j}$ as follows
    \[P_{i,j}(x) = \sum_{\beta}a_{i,j}^{(\beta)} x^\beta.\]
    Here, the $\beta$ in the sum goes over all multi-indices with $|\beta|$ being the degree of $P_{i,j}$, which is at most $d$.
    By \Cref{FACT:PTFPROOF:sumofabscoeffofPij}, we have that 
    \[\sum_\beta |a_{i,j}^{(\beta)}| \leq n^{d/2}.\]
    Let $|\alpha| = \ell$ be such that $|\alpha_i| \leq m_i$ (i.e. the term appears in the Taylor expansion) and let $(i_1, j_1), \dots, (i_\ell, j_\ell)$ be such that 
    \[P(x)^\alpha = P_{i_1,j_1}(x) \dots P_{i_\ell, j_\ell}(x).\]
    Note that if some $(\alpha_i)_j > 1$, we include the corresponding factor $P_{i,j}$ multiple times.
    We can now expand $P(x)^\alpha$ as follows 
    \begin{align*}
        P(x)^\alpha &= \left(\sum_{\beta_1}a_{i_1,j_1}^{(\beta_1)}x^{\beta_1}\right) \dots \left(\sum_{\beta_\ell}a_{i_\ell,j_\ell}^{(\beta_\ell)}x^{\beta_\ell} \right)\\
        &= \sum_{\beta_1} \dots \sum_{\beta_\ell} a_{i_1,j_1}^{(\beta_1)}\dots a_{i_\ell,j_\ell}^{(\beta_\ell)}x^{\beta_1 + \dots + \beta_\ell}.
    \end{align*}
    Now, note that $k \geq d(m_1 + \dots + m_d)$ (this condition is in addition to the conditions on $k$ in \cite{Kane:foolingforkGaussians} but it does not change the asymptotic value of $k$ as stated in \eqref{EQ:PTFPROOF:definitionofk}). We get that the degree of the terms appearing in the sum is $|\beta_1| + \dots + |\beta_\ell| \leq d|\alpha| \leq d(m_1 + \dots m_d) \leq k$ and thus that
    \[|\Expec{P(Y)^\alpha}[Y \sim \mathcal{N}(0,I_n)] - \Expec{P(X)^\alpha}[X \sim \mathcal{D}']| \leq \sum_{\beta_1} \dots \sum_{\beta_\ell} |a_{i_1,j_1}^{(\beta_1)}|\dots |a_{i_\ell,j_\ell}^{(\beta_\ell)}| \eta.\]
    Here, we used the triangle inequality and the fact that 
    \[|\Expec{Y^{\beta_1 + \dots + \beta_\ell}}[Y \sim \mathcal{N}(0,I_n)] - \Expec{X^{\beta_1 + \dots + \beta_\ell}}[X \sim \mathcal{D}']| \leq \eta.\]
    Thus, we can compute
    \begin{align*}
    |\Expec{P(Y)^\alpha}[Y \sim \mathcal{N}(0,I_n)] - \Expec{P(X)^\alpha}[X \sim \mathcal{D}']| &\leq \sum_{\beta_1} \dots \sum_{\beta_\ell} |a_{i_1,j_1}^{(\beta_1)}|\dots |a_{i_\ell,j_\ell}^{(\beta_\ell)}| \eta\\
    &= \left(\sum_{\beta_1}|a_{i_1,j_1}^{(\beta_1)}|\right) \dots \left(\sum_{\beta_\ell}|a_{i_\ell,j_\ell}^{(\beta_\ell)}| \right) \eta\\
    &= \|a_{i_1,j_1}\|_1 \dots \|a_{i_\ell,j_\ell}\|_1 \eta \\ &\leq n^{\ell d/2} \eta \\ &= n^{|\alpha|d/2} \eta.
    \end{align*}
    Thus, we get
    \[
        \|u\|_\infty \leq n^{O_d(k)} \eta.
    \]
    Finally, we can conclude that 
    \[
        \left|\Expec{T(P(Y))}[Y \sim \mathcal{N}(0,I_n)] - \Expec{T(P(X))}[X \sim \mathcal{D}']\right| \leq \|t\|_1 \cdot \|u\|_\infty \leq n^{O_d(k)} k^{O_d(k)} \eta \leq O(\e)
    \]
    since we have the conditions $\eta \leq n^{-\Omega_d(k)} k^{-\Omega_d(k)}$ and $k^{-1} \leq O(\varepsilon)$.
\end{proof}

\subsubsection{The functions \texorpdfstring{$f$}{f} and \texorpdfstring{$\tilde{f}$}{f̃} are close in expectation}
\label{SEC:PTFPROOF:tildeFgoodapproximation}
In this section, we want to complete the proof of \Cref{PROP:PTFPROOF:foolingmultilinear} that shows
\[\left|\Expec{f(X)}[X \sim \mathcal{D}'] - \Expec{f(Y)}[Y \sim \mathcal{N}(0,I_n)]\right| \leq O_d(\varepsilon).\]
So far, we already showed in \Cref{LEM:PTFPROOF:tildefcloseforXandY} that 
\[\left|\Expec{\tilde{f}(Y)}[Y \sim \mathcal{N}(0,I_n)] - \Expec{\tilde{f}(X)}[X \sim \mathcal{D}']\right| \leq O(\varepsilon).\]
Thus, it remains to show that under both $X \sim \mathcal{D}'$ and $Y \sim \mathcal{N}(0,I_n)$, the expectation of $f$ and $\tilde{f}$ differ by at most $O_d(\varepsilon)$.
For $Y$, we directly get the following. Note that the approximation $\tilde{f}$ depends on $\e$ via the numbers $m_i$ in the structure theorem \Cref{LEM:PTFPROOF:structure} and the $C_i$ in the definition of $\rho$.
\begin{lemma}[{\cite[Proposition 14]{Kane:foolingforkGaussians}}]\label{LEM:PTFPROOF:FtildegoodapproximationunderY}
    Let $\e > 0$. Then, we have that
    \[\left|\Expec{f(Y)}[Y \sim \mathcal{N}(0,I_n)] - \Expec{\tilde{f}(Y)}[Y \sim \mathcal{N}(0,I_n)]\right| \leq O(\varepsilon).\]   
\end{lemma}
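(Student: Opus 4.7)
Since $F$ is a $\pm 1$-valued PTF on $\R^{n_1+\cdots+n_d}$ and $\tilde F = F \ast \rho$, the quantity of interest equals
\[
    \left|\Expec{\int \bigl(F(P(Y)) - F(P(Y)-z)\bigr)\,\rho(z)\,\dif z}[Y \sim \cN(0,I_n)]\right|,
\]
where the integrand vanishes unless $P(Y)$ and $P(Y)-z$ lie on opposite sides of the boundary $\{\sum_i h_i = 0\}$. The plan is to combine a tail bound for $\rho$ with anti-concentration of $p(Y) = \sum_i h_i(P_i(Y))$ near zero.

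I would first split the $z$-integral into a ``bulk'' region $\|z_i\|_2 \leq D_i$ in each block $i$, and a complementary ``tail.'' By property~(4) of the mollifier $\rho_{C_i}$, integrating out the other blocks, the tail contributes at most $O\bigl(\sum_{i=1}^d (n_i/(C_i D_i))^2\bigr)$ to the error. Using $n_i \leq O_d(m_1 \cdots m_{i-1})$ and the parameter choices $C_i = \Theta_d(\varepsilon^{-7^i d})$ and $m_i = \Theta_d(\varepsilon^{-3 \cdot 7^i d})$, choosing each $D_i$ slightly larger than $n_i / C_i$ drives the tail contribution below $\varepsilon$.

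In the bulk region, a sign change of $\sum_i h_i(y_i)$ between $y = P(Y)$ and $y - z$ forces $|p(Y)| \leq \Delta$ for a window $\Delta = \Delta(D_i, h_i)$ whose size is obtained by a Taylor-type estimate on each $h_i$, using the moment bounds $\Expec{|P_{i,j}(Y)|^\ell} \leq O_d(\sqrt\ell)^\ell$ from item~(2) of \Cref{LEM:PTFPROOF:structure} to control the typical size of $P_i(Y)$ (and thus the domain on which derivatives of $h_i$ need to be controlled). Invoking Carbery--Wright anti-concentration for polynomials of degree $d$ in standard Gaussians gives $\Prob{|p(Y)| \leq \Delta} \lesssim d \cdot \Delta^{1/d}$, and balancing $\Delta^{1/d}$ against $\varepsilon$ via the choices of $m_i$ and $C_i$ yields the stated bound.

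The main obstacle is making the ``sign change implies $|p(Y)|$ small'' step quantitative across the multi-scale mollification: the mollifier acts at scale $1/C_i$ in block $i$, so a perturbation $z_i$ must be translated into a change in $\sum_i h_i(y_i)$ via derivative bounds on the $h_i$, which are not written explicitly in the structure theorem but follow from it together with the moment bounds on the $P_{i,j}$. Since the statement involves only $Y \sim \cN(0, I_n)$, it coincides with \cite[Proposition~14]{Kane:foolingforkGaussians}, and one may ultimately cite that reference directly; our real contribution in this section is the analogous estimate under the approximately moment-matching $\cD'$, handled by \Cref{LEM:PTFPROOF:tildefcloseforXandY}.
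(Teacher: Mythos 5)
The paper does not actually prove this lemma: it is invoked as a direct citation of \cite[Proposition~14]{Kane:foolingforkGaussians}, justified by the one-sentence remark that the statement only involves the Gaussian $\mathcal{N}(0,I_n)$ and not the moment-matching distribution $\cD'$. Your closing observation makes exactly this point, and the sketch you give beforehand (bulk/tail split of the mollifier using property~(4) of $\rho_{C_i}$, then Carbery--Wright anti-concentration for $p(Y)$ near zero, conditioning on $|P_{i,j}(Y)| \leq B_i$ to control $h_i$-derivatives) is a reasonable reconstruction of Kane's argument, so you are in agreement with the paper.
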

The reason why we get this directly from \cite{Kane:foolingforkGaussians} is that this lemma only concerns the Gaussian and not the moment matching distribution.
Combining this with the above, we have now shown that
\[\left|\Expec{f(Y)}[Y \sim \mathcal{N}(0,I_n)] - \Expec{\tilde{f}(X)}[X \sim \mathcal{D}']\right| \leq O(\varepsilon).\]
To conclude \Cref{PROP:PTFPROOF:foolingmultilinear} we want to use the following lemma.
It is analogous to \cite[Proof of Proposition 2]{Kane:foolingforkGaussians} and we use the same definition of $B_i$, i.e. we define $B_i = \Theta_d(\sqrt{\log(1/\varepsilon)})$.
We prove this lemmas in \Cref{SEC:APPENDIX:proofs}.
\begin{lemma}[{analogous to \cite[Proof of Proposition 2]{Kane:foolingforkGaussians}}]\torestate{\label{LEM:PTFPROOF:conclusionfooling}
    Let $\e > 0$.
    Suppose that $\cD'$ is a distribution such that the following holds
    \begin{itemize}
        \item $\left|\Expec{f(Y)}[Y \sim \mathcal{N}(0,I_n)] - \Expec{\tilde{f}(X)}[X \sim \mathcal{D}']\right| \leq O(\varepsilon)$, and
        \item $\Prob{\exists i,j: |P_{i,j}(X)| > B_i}[X \sim \mathcal{D}'] \leq O(\varepsilon)$.
    \end{itemize}
    Then, we have that
    \[|\Expec{f(Y)}[Y \sim \mathcal{N}(0,I_n)]-\Expec{f(X)}[X \sim \mathcal{D}']| \leq O_d(\varepsilon).\]}
\end{lemma}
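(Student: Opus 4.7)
The plan is to introduce $\tilde{f}(X)$ as an intermediate quantity via the triangle inequality. Writing $Y \sim \cN(0,I_n)$ and $X \sim \cD'$, hypothesis (i) yields
\begin{align*}
|\Expec{f(Y)}[Y] - \Expec{f(X)}[X]| &\leq |\Expec{f(Y)}[Y] - \Expec{\tilde{f}(X)}[X]| + |\Expec{\tilde{f}(X) - f(X)}[X]| \\
&\leq O(\varepsilon) + |\Expec{\tilde{f}(X) - f(X)}[X]|,
\end{align*}
so it suffices to show that $|\Expec{\tilde{f}(X) - f(X)}[X]| \leq O_d(\varepsilon)$.

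Next, I would bound this by $\Expec{|\tilde{f}(X) - f(X)|}[X]$ and split the expectation along the indicator $\mathbf{1}_G$ of the good event $G = \{\forall i, j : |P_{i,j}(X)| \leq B_i\}$. Since $|f|, |\tilde{f}| \leq 1$, hypothesis (ii) immediately gives $\Expec{|\tilde{f}(X) - f(X)| \cdot \mathbf{1}_{G^c}}[X] \leq 2\Prob{G^c}[X] \leq O(\varepsilon)$, so the problem reduces to controlling $\Expec{|\tilde{f}(X) - f(X)| \cdot \mathbf{1}_G(X)}[X]$ by $O_d(\varepsilon)$.

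The main step, following the strategy of \cite[Proof of Proposition 2]{Kane:foolingforkGaussians}, uses the mollification identity $\tilde{F}(y) - F(y) = \Expec{F(y - Z) - F(y)}[Z \sim \rho]$ to yield the pointwise bound $|\tilde{F}(y) - F(y)| \leq 2\Prob{F(y - Z) \neq F(y)}[Z \sim \rho]$. The right-hand side vanishes outside a tube of width $O(1/C_i)$ around the discontinuity $\{\sum_i h_i(y_i) = 0\}$ of $F$, and on the bounded box $B = \prod_i [-B_i, B_i]^{n_i}$ this tube has controllable geometry in terms of the coefficients of the $h_i$. The scales $B_i = \Theta_d(\sqrt{\log(1/\varepsilon)})$ and $C_i$ are chosen exactly as in \cite{Kane:foolingforkGaussians} so that the resulting contribution to the expectation on $G$ is $O_d(\varepsilon)$, mirroring the Gaussian calculation behind \Cref{LEM:PTFPROOF:FtildegoodapproximationunderY}.

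The main obstacle will be that Kane's original argument for this final step exploits the fact that his $\cD'$ is a $k$-independent Gaussian, so that $p(X) = \sum_i h_i(P_i(X))$ automatically inherits the Gaussian anti-concentration of $p(Y)$ at zero. In our setting we have only the weaker inputs (i) and (ii); hypothesis (ii) gives tail control on each $P_{i,j}(X)$ but does not by itself produce anti-concentration of $p(X)$ near zero. The crucial technical work will therefore be to combine (i), which directly couples $\Expec{\tilde{f}(X)}[X]$ to $\Expec{f(Y)}[Y]$, with the tail bound (ii) to substitute for the missing anti-concentration and establish the desired $O_d(\varepsilon)$ bound on the good event.
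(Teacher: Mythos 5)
Your high-level instinct — introduce $\tilde f(X)$ as a bridge, split along the good event $G = \{\forall i,j : |P_{i,j}(X)| \leq B_i\}$ — is a natural first move, and you correctly pinpoint the central obstacle: on $G$, the mollification argument only tells you that $|\tilde f(X) - f(X)|$ is small away from a tube of radius $O_d(\varepsilon^d)$ around $\{p = 0\}$, so bounding $\Expec{|\tilde f(X) - f(X)| \cdot \mathbf{1}_G}[X \sim \cD']$ reduces to bounding $\Prob{|p(X)| \leq O_d(\varepsilon^d)}[X \sim \cD']$, which is precisely the anti-concentration statement we lack under $\cD'$. However, your proposal then stops: you say the "crucial technical work" is to "combine (i) with (ii) to substitute for the missing anti-concentration," but you do not say how, and indeed the quantity $\Expec{|\tilde f(X) - f(X)|}$ is the wrong thing to control — once you take the absolute value inside the expectation, the anti-concentration of $p(X)$ under $\cD'$ is genuinely needed and cannot be recovered from (i) and (ii) alone. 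This is the gap.

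The paper's proof avoids this dead end by \emph{never} forming $\Expec{|\tilde f(X) - f(X)|}$. Instead, following Kane, it establishes signed one-sided inequalities
\[
\Expec{f(X)}[X] \geq \Expec{\smash{\tilde f(X)}}[X] - O(\varepsilon) - 2\Prob{-O_d(\varepsilon^d) < p(X) < 0}[X], \quad \Expec{f(X)}[X] \leq \Expec{\smash{\tilde f(X)}}[X] + O(\varepsilon) + 2\Prob{0 < p(X) < O_d(\varepsilon^d)}[X],
\]
then uses (i) to replace $\Expec{\tilde f(X)}[X]$ by $\Expec{f(Y)}[Y]$, and rewrites the probability terms as \emph{threshold shifts}, e.g.\ $\Expec{\sign(p(X))}[X] + 2\Prob{-O_d(\varepsilon^d)<p(X)<0}[X] = \Expec{\sign(p(X)+O_d(\varepsilon^d))}[X]$. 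The decisive trick is to then re-apply the resulting inequalities to the shifted polynomials $p \mp O_d(\varepsilon^d)$, which moves the shift from the $X$-side onto the $Y$-side, yielding
\[
\Expec{\sign(p(Y)-O_d(\varepsilon^d))}[Y] - O(\varepsilon) \leq \Expec{f(X)}[X] \leq \Expec{\sign(p(Y)+O_d(\varepsilon^d))}[Y] + O(\varepsilon),
\]
while $\Expec{f(Y)}[Y]$ lies in the same sandwich pointwise. The width of this sandwich is $2\Prob{|p(Y)| \leq O_d(\varepsilon^d)}[Y \sim \mathcal{N}(0,I_n)]$, and this is the \emph{only} place anti-concentration is used — applied to the Gaussian $Y$ via Carbery--Wright, where it is available. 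In short: the resolution you were looking for is to keep the error terms signed, express them as shifts of the PTF's threshold, and bounce those shifts to the Gaussian side; taking absolute values as in your proposal destroys the structure that makes this possible.
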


We are now ready to prove \Cref{PROP:PTFPROOF:foolingmultilinear}.

\begin{proof}[Proof of \Cref{PROP:PTFPROOF:foolingmultilinear}]
    Using Lemmas \ref{LEM:PTFPROOF:tildefcloseforXandY}, \ref{LEM:PTFPROOF:FtildegoodapproximationunderY} and \ref{LEM:PTFPROOF:conclusionfooling}, it remains to argue that
    \[\Prob{\exists i,j: |P_{i,j}(X)| > B_i}[X \sim \mathcal{D}'] \leq O(\varepsilon).\]
    This is true by Markov's inequality and looking at the $\log(dn_i/\varepsilon)=\ell$-th moment since this implies that (assuming without loss of generality that $\ell$ is even)
    \begin{align*}
        \Prob{|P_{i,j}(Y)| \geq B_i}[X \sim \mathcal{D}'] &\leq \frac{\Expec{P_{i,j}(X)^\ell}[X \sim \mathcal{D}']}{B_i^\ell}\\
        &\leq \frac{\Expec{P_{i,j}(Y)^\ell}[Y \sim \mathcal{N}(0,I_n)] + \eta n^{d\ell/2}}{B_i^\ell}\\
        &\leq \frac{\Expec{P_{i,j}(Y)^\ell}[Y \sim \mathcal{N}(0,I_n)] + 1}{B_i^\ell}\\
        &\leq \frac{O_d(\sqrt{\ell})^\ell}{B_i^\ell}\\
        &\leq e^{-\ell}\\
        &= \frac{\varepsilon}{dn_i}.
    \end{align*}    
    In the second step we used \Cref{FACT:PTFPROOF:expecPijundermomentmatching}.
    Note that $\ell = \log(dn_i/\varepsilon) \leq k/d$ is ensured by the choice of $k$ as in \cite{Kane:foolingforkGaussians}.
    In the third step we used that since $d\ell/2 \leq k$ and thus the condition on $\eta$ in the statement of this proposition ensures $\eta \leq \frac{1}{n^{d\ell/2}}$.
    In the fourth step we used \Cref{LEM:PTFPROOF:structure}.
    In the fifth step we used the condition $B_i \geq \Omega_d(\sqrt{\ell})$. This is true by the choice of $B_i = \Theta_d(\sqrt{\log(1/\varepsilon)})$ and the fact that $\log(n_i) \leq \sum_{j=1}^{i-1} \log(m_j) \leq O_d(\log(1/\varepsilon))$.
    In the last step, we then used the definition of $\ell$.
    Taking a union bound over $j$ and then over $i$ gives 
    \[\Prob{\exists i,j: |P_{i,j}(X)| > B_i}[X \sim \mathcal{D}'] \leq O(\varepsilon),\]
    which completes the proof.
\end{proof}

\subsection{Fooling for arbitrary PTFs}
\label{SEC:PTFPROOF:generalisationtoarbitrary}

In this section, we want to prove a result similar to \Cref{PROP:PTFPROOF:foolingmultilinear} for arbitrary PTFs and not just multilinear ones.
Namely, we show \Cref{PROP:TO:foolingarbitrary}, which we restate with a slight modification below.
Namely, we only prove $\left|\Expec{f(Y)}[Y \sim \mathcal{N}(0,I_n)] - \Expec{f(X)}[X \sim \mathcal{D}']\right| \leq O_d(\varepsilon)$ instead of $\varepsilon/2$.
The reason for this is, as for \Cref{PROP:PTFPROOF:foolingmultilinear}, this simplifies the proof and we take care of this difference when we apply \Cref{THM:TO:testablelearningusingfooling} in \Cref{SEC:PTFPROOF:conclusiontestablelearning}.

\begin{proposition}[Restatement of \Cref{PROP:TO:foolingarbitrary}] \torestate{\label{PROP:TO:foolingarbitrary:RESTATED:PTFPROOF}
Let $\e > 0$. Suppose that $\mathcal{D}'$ approximately matches the moments of $\mathcal{N}(0, I_n)$ up to degree $k$ and slack $\eta$, where $k \geq \Omega_d\big(\e^{-4d \cdot 7^d}\big)$, and 
${\eta \leq n^{-\Omega_d(k)}k^{-\Omega_d(k)}}$. Then, we have that for any $f \in \Fptf{d}$
\begin{equation*}
\left|\Expec{f(Y)}[Y \sim \mathcal{N}(0,I_n)] - \Expec{f(X)}[X \sim \mathcal{D}']\right| \leq O_d(\varepsilon).
\end{equation*}}
\end{proposition}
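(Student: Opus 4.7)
The plan is to reduce the general PTF case to the multilinear case already handled in \Cref{PROP:PTFPROOF:foolingmultilinear}, following the strategy outlined in \Cref{SEC:TO:foolingall} and based on \cite{Kane:foolingforkGaussians}. Fix $f(x) = \sign(p(x)) \in \Fptf{d}$ and let $X \sim \mathcal{D}'$, $Y \sim \mathcal{N}(0,I_n)$. For a parameter $N$ to be chosen sufficiently large (depending on $d$ and $\e$), introduce for each $i \in [n]$ an independent $N$-dimensional Gaussian $Z^{(i)}$ with mean $0$, variances $1-1/N$, and covariances $-1/N$. Setting $\hat X_{ij} \coloneqq X_i/\sqrt N + Z^{(i)}_j$ and $\hat Y_{ij}$ analogously, one immediately gets $\hat Y \sim \mathcal{N}(0,I_{nN})$ and $\varphi(\hat X) = X$, $\varphi(\hat Y) = Y$ for $\varphi(\hat x) = (\sum_j \hat x_{ij}/\sqrt N)_{i \in [n]}$. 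Define $\hat p$ to be the multilinearization of $p \circ \varphi$, removing monomials with any exponent $\geq 3$ and reducing exponents $= 2$ to $1$, and set $\hat f \coloneqq \sign(\hat p)$.

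The first task is to control the slack $\hat \eta$ on $\hat X$ relative to $\hat Y$. Since the law of each $Z^{(i)}$ is identical and independent of $X$, each moment $\Expec{\hat X^\beta}$ with $|\beta| \leq k$ expands, via the multinomial theorem, as a finite sum (with combinatorial coefficients bounded by $(2k)^{k/2}$) of products of moments of the $Z^{(i)}$ and moments $\Expec{X^\alpha}$ with $|\alpha| \leq |\beta|$. The same expansion holds for $\hat Y$ with $X$ replaced by $Y$, and only the $X$-versus-$Y$ moments differ; applying the $\eta$-slack assumption termwise yields $\hat\eta \leq (2k)^{k/2}\cdot \eta$, so that $\hat\eta \leq (nN)^{-\Omega_d(k)}k^{-\Omega_d(k)}$ provided $N$ is at most polynomial in $n,k,1/\e$ (which we will ensure). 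This makes $\hat X, \hat Y$ eligible for \Cref{PROP:PTFPROOF:foolingmultilinear}, giving $|\Expec{\hat f(\hat Y)} - \Expec{\hat f(\hat X)}| \leq O_d(\e)$.

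The second task is the approximation
\[
    \big|\Expec{f(Y)} - \Expec{f(X)}\big| \leq \big|\Expec{\hat f(\hat Y)} - \Expec{\hat f(\hat X)}\big| + O_d(\e).
\]
Since $f(Y) = \sign(p(\varphi(\hat Y)))$ and similarly for $X$, it suffices to show that $\sign(p(\varphi(\hat X))) = \hat f(\hat X)$ with probability $1 - O_d(\e)$ under $\mathcal{D}'$, and analogously under the Gaussian. Expanding $p(\varphi(\hat x)) - \hat p(\hat x)$ in $\hat x$, the discrepancy is controlled by the quantities $a_i, b_i, c_{i,\ell}$ defined in \Cref{SEC:TO:foolingall}: a standard anti-concentration argument for $p$ under $\mathcal{N}(0,I_n)$ (via the Carbery--Wright inequality) shows that $|p(\varphi(\hat Y))|$ is rarely too small, so if $a_i, b_i, c_{i,\ell}$ are all small the two signs agree.

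The main obstacle, and the key technical contribution of this section, is bounding $\Prob{|a_i| \geq \delta}$, $\Prob{|b_i| \geq \delta}$, $\Prob{|c_{i,\ell}| \geq \delta}$ under $\mathcal{D}'$. For $b_i$ and $c_{i,\ell}$, one can mimic the argument in \cite{Kane:foolingforkGaussians} since these involve only constant-degree polynomials in $\hat X$ and thus only constant-degree moments of $X$, which $\mathcal{D}'$ matches up to $\eta$; Markov applied to a constant even power is enough after absorbing the resulting $\eta \cdot n^{O_d(1)}$ slack into the budget. For $a_i = |X_i/\sqrt N \cdot \sqrt N| = |X_i|$ (up to the deterministic $\sum_j Z^{(i)}_j = 0$), Kane uses Gaussianity of $X_i$ to get a sub-Gaussian tail, which we cannot do; instead, we only know $\Expec{X_i^{2r}} \leq \Expec{Y_i^{2r}} + \eta = (2r-1)!! + \eta$ for $2r \leq k$. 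Applying Markov to the $2r$-th moment for a fixed constant $r = r(d,\e)$ gives only a power-law tail $\Prob{|a_i| \geq B} \leq O_d(B^{-2r})$, which is weaker than Kane's bound. The remedy is to choose $N$ polynomially large in $n$ so that the ensuing union bound over $i \in [n]$ (and the error introduced in the multilinearization) still contributes $O_d(\e)$, imposing stronger deviation bounds on the $b_i, c_{i,\ell}$ to compensate; the verification that these tightened bounds still suffice to push the remainder of Kane's argument through is what occupies \Cref{SEC:PTFPROOF:proofofexistenceofpdelta}. Combining this with the multilinear fooling guarantee from the previous step and paying one final factor in the leading constant yields the claim.
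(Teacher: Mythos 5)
Your reduction closely follows the paper's: the construction of $\hat X$, $\hat Y$ via the $Z^{(i)}$, the slack computation $\hat\eta \le (2k)^{k/2}\eta$ (matching \Cref{LEM:PTFPROOF:XmomentmatchingthentildeXmomentmatching}), the multilinearization of $p\circ\varphi$, and the identification of the $a_i$-concentration as the step where Kane's Gaussianity argument fails are all correct and are what the paper does in \Cref{LEM:PTFPROOF:existenceofpdelta} and its supporting lemmas.

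However, there is a genuine gap in how you close the argument. You assert that ``it suffices to show that $\sign(p(\varphi(\hat X))) = \hat f(\hat X)$ with probability $1-O_d(\e)$ under $\mathcal{D}'$,'' and justify sign agreement by Carbery--Wright anti-concentration. Carbery--Wright is a Gaussian statement; it controls $\Prob{|p(Y)| \le \delta}$ and $\Prob{|\hat p(\hat Y)| \le \delta}$, but gives nothing about $\Prob{|p(X)| \le \delta}$ for $X\sim\mathcal{D}'$. From the concentration analysis of the $a_i,b_i,c_{i,\ell}$ you obtain only $\Prob{|p(X) - \hat p(\hat X)| > \delta} < \delta$, and without anti-concentration under $\mathcal{D}'$ this does not yield sign agreement with high probability under $\mathcal{D}'$ --- in fact, controlling $\Prob{|p(X)|\le\delta}$ under $\mathcal{D}'$ is essentially equivalent to the fooling statement you are trying to prove, so the claim is circular.

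The paper never shows sign agreement under $\mathcal{D}'$. Instead it uses a threshold-shift argument, applying the multilinear fooling bound to the shifted PTFs $\sign(\hat p \mp \delta)$ and invoking Carbery--Wright only for the Gaussian $\hat Y$:
\[
\Prob{p(X)\ge 0} \;\ge\; \Prob{\hat p(\hat X)\ge\delta} - \delta
\;\ge\; \Prob{\hat p(\hat Y)\ge\delta} - O_d(\e)
\;\ge\; \Prob{\hat p(\hat Y)\ge -\delta} - O_d(\e)
\;\ge\; \Prob{p(Y)\ge 0} - O_d(\e),
\]
with a symmetric chain for the reverse direction, and then converting probabilities to expectations of signs via $\Expec{\sign(p(X))} = 2\Prob{p(X)\ge 0}-1$. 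The outer inequalities use the deviation bound $|p-\hat p\circ\cdot|\le\delta$ w.h.p.\ (under each distribution respectively), the second uses \Cref{PROP:PTFPROOF:foolingmultilinear} applied to $\hat p - \delta$, and only the third invokes anti-concentration, for $\hat Y$ alone. Your proposal needs to be recast along these lines; the ``pay one final factor in the leading constant'' remark does not address the missing anti-concentration under $\mathcal{D}'$.
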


The general strategy for this will be to, given a polynomial $p$, find another polynomial $p_\delta$ and reduce to \Cref{PROP:PTFPROOF:foolingmultilinear}.
This strategy and the construction described in what follows are the same as used in \cite[Lemma 15]{Kane:foolingforkGaussians}.
The following lemma is an analog of this lemma for our case.
However, there is one key part of the proof of \cite{Kane:foolingforkGaussians} that breaks in our setting, as explained in \Cref{SEC:TO:foolingall}.
Specifically, in \cite{Kane:foolingforkGaussians} all restrictions to coordinates are \emph{exactly} Gaussian, and in particular we have access to moments of all orders.
The proof in \cite{Kane:foolingforkGaussians} exploits this since it considers a number of moments depending on the dimension, whereas we only have access to a constant number of moments.

\begin{lemma}\torestate{\label{LEM:PTFPROOF:existenceofpdelta}
    Let $\delta > 0$.
    Suppose $X \sim \mathcal{D}'$ approximately matches the moments of $\mathcal{N}(0,I_n)$ up to degree $k$ and slack $\eta$, where $\eta \leq \delta^{O_d(1)}n^{-\Omega_d(1)}k^{-k}$ and $k \geq \Omega_d(1)$.
    Let $Y \sim \mathcal{N}(0,I_n)$.
    Then there are a polynomial $p_\delta$ and random variables $\hat{X}$ and $\hat{Y}$, all in more variables, such that
    \begin{itemize}
        \item $\hat{Y}$ is a Gaussian with mean $0$ and covariance identity,
        \item $\hat{X}$ is approximately moment-matching $\hat{Y}$ up to degree $k$ and with slack \mbox{$\hat{\eta} = \eta (2k)^{k/2}$},
        \item $\displaystyle\Prob{|p(Y) - p_\delta(\hat{Y})| > \delta}[Y, \hat{Y}] < \delta$, and,
        \item $\displaystyle\Prob{|p(X) - p_\delta(\hat{X})| > \delta}[X, \hat{X}] < \delta$.
    \end{itemize}}
\end{lemma}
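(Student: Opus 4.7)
The plan is to follow the construction of Kane in \cite{Kane:foolingforkGaussians} verbatim, and to rework the single step that genuinely requires high-degree concentration so that it goes through from only a bounded number of moments of $\cD'$. For each $i \in [n]$, introduce an $N$-dimensional Gaussian $Z^{(i)}$ with mean $0$, diagonal variance $1-1/N$, and off-diagonal covariance $-1/N$ (equivalently, a standard $N$-dimensional Gaussian conditioned on $\sum_j Z^{(i)}_j = 0$); take the $Z^{(i)}$ jointly independent and independent from $X, Y$. Set $\hat{X}_{i,j} \coloneqq X_i/\sqrt{N} + Z^{(i)}_j$ and $\hat{Y}_{i,j} \coloneqq Y_i/\sqrt{N} + Z^{(i)}_j$, so that $\varphi(\hat{X}) = X$ and $\varphi(\hat{Y}) = Y$ hold deterministically, where $\varphi(\hat{x})_i = \sum_j \hat{x}_{i,j}/\sqrt{N}$. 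Define $p_\delta$ by expanding $p \circ \varphi$ in the $\hat{x}_{i,j}$ and keeping only the multilinear part, as in \Cref{SEC:TO:foolingall}. The integer $N$ is chosen at the end of the argument, large enough as a function of $d$, $\delta$, and $p$.

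A direct covariance computation gives $\hat{Y} \sim \cN(0, I_{nN})$. For the moment-matching slack of $\hat{X}$, I would fix $\hat{\alpha} \in \N^{nN}_k$ and expand $\hat{X}^{\hat{\alpha}} = \prod_{i,j}(X_i/\sqrt{N} + Z^{(i)}_j)^{\hat{\alpha}_{i,j}}$ via the binomial theorem. By independence of $X$ from the $Z^{(i)}$, the expectation of each resulting term factors as a moment of $X$ of total degree at most $|\hat{\alpha}| \leq k$ times a moment in the $Z^{(i)}$. Doing the same for $\hat{Y}$ and subtracting termwise, the $Z$-moments cancel and each $X$- versus $Y$-moment difference contributes at most $\eta$; the total slack is thus bounded by $\eta$ times the $\ell^1$-mass of the expansion coefficients, which a crude count of binomial terms and Gaussian $Z$-moments bounds by $(2k)^{k/2}$. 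Hence $\hat{\eta} \leq (2k)^{k/2}\eta$, as claimed.

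The two remaining claims, that $|p(X) - p_\delta(\hat{X})| \leq \delta$ and $|p(Y) - p_\delta(\hat{Y})| \leq \delta$ each hold with probability $\geq 1-\delta$, reduce as in \cite{Kane:foolingforkGaussians} to showing that the quantities $a_i$, $b_i$, $c_{i,\ell}$ defined in \Cref{SEC:TO:foolingall} (and their $\hat{Y}$-analogues) are all small. Observe that $a_i = |X_i|$ (respectively $|Y_i|$) by the design of the $Z^{(i)}$, while the $b_i$ and $c_{i,\ell}$ are genuine sums over $j \in [N]$ that concentrate around $0$ as $N \to \infty$. The bounds on $b_i, c_{i,\ell}$ can be adapted from \cite{Kane:foolingforkGaussians} essentially without change, since they rely only on constant-degree moment calculations combined with Markov's inequality; under the hypothesis $\eta \leq \delta^{O_d(1)} n^{-\Omega_d(1)} k^{-k}$ these moments transfer from the standard Gaussian to $\cD'$ with negligible loss.

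The main obstacle, flagged in \Cref{SEC:TO:foolingall}, is controlling $a_i = |X_i|$ under $\cD'$. In Kane's setting $X_i \sim \cN(0,1)$, so a moment of degree $\Theta(\log(n/\delta))$ yields $\max_i |X_i| = O(\sqrt{\log(n/\delta)})$ with probability $1-\delta$. We cannot invoke such a moment because $\cD'$ only matches Gaussian moments up to degree $k = O_d(\poly(1/\e))$, which must remain independent of $n$. Instead I would apply Markov's inequality to a constant-degree moment: by approximate moment matching, $\Expec{X_i^{2r}}[X \sim \cD'] \leq \Expec{Y_i^{2r}}[Y \sim \cN(0,1)] + \eta \leq O_r(1)$ for constant $r$, and a union bound over $i \in [n]$ gives $\max_i |X_i| \leq (n/\delta)^{O(1)}$ with probability $\geq 1-\delta$. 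This polynomial-in-$n$ tail is much weaker than Kane's logarithmic one, but it suffices provided $N$ is taken correspondingly larger so that the tighter concentration of the $b_i, c_{i,\ell}$ absorbs the weaker control on the $a_i$ in Kane's error analysis. With this adjustment the remainder of the argument in \cite{Kane:foolingforkGaussians} carries through, completing the proof.
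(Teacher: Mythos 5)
Your proposal follows the paper's proof closely: the same construction of $\hat{X},\hat{Y}$ via the zero-sum Gaussians $Z^{(i)}$, the same binomial-expansion argument showing $\hat\eta\le(2k)^{k/2}\eta$, the same observation that $a_i=|X_i|$ (since $\sum_j Z^{(i)}_j=0$ deterministically), and the same fix of settling for a polynomial-in-$n/\delta$ tail on $\max_i|X_i|$ from constant-degree moments while compensating with a larger $N$ and tighter bounds on the $b_i,c_{i,\ell}$. The one place you gesture rather than prove — that "the remainder of the argument in Kane carries through" once the $b_i,c_{i,\ell}$ bounds are tightened to $\delta'^{d+1}$ — is exactly the content of \Cref{LEM:PTFPROOF:replacementformonomialwithindelta'} in the paper, whose inductive error analysis verifies that the multiplicative blow-up of order $(1/\delta')^r$ from the weak $a_i$ control is absorbed by the $\delta'^{d+1}$ factors; the outline is correct as stated.
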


Given this lemma, we can prove \Cref{PROP:TO:foolingarbitrary:RESTATED:PTFPROOF}. Since this proof follows closely \cite[Proof of Theorem 1]{Kane:foolingforkGaussians}, we defer it to \Cref{SEC:APPENDIX:proofs}

It remains to prove \Cref{LEM:PTFPROOF:existenceofpdelta} and to explain how we construct $p_\delta$ as well as $\hat{X}$ and $\hat{Y}$.
We do the latter in \Cref{SEC:PTFPROOF:constructiontildexandtildey}.
Namely, we show there how to construct the random variables $\hat{X}$ and $\hat{Y}$ from $X$ and $Y$.
In this section, we also make precise in how many variables the polynomial $p_\delta$ is (and thus also the random variable $\hat{X}$ and~$\hat{Y}$).
The proof that they satisfy the condition required by \Cref{LEM:PTFPROOF:existenceofpdelta} can be found in \Cref{SEC:APPENDIX:proofs}.
In \Cref{SEC:PTFPROOF:proofofexistenceofpdelta} we then state a lemma about how we want to replace a factor $X_i^\ell$ in $p$ by a multilinear polynomial in $\hat{X}$ (whose proof is in \Cref{SEC:APPENDIX:proofs}) and use it construct $p_\delta$ and prove \Cref{LEM:PTFPROOF:existenceofpdelta}.

\subsubsection{Construction of the random variables \texorpdfstring{$\hat{X}$}{X̂} and \texorpdfstring{$\hat{Y}$}{Ŷ}}
\label{SEC:PTFPROOF:constructiontildexandtildey}
To show \Cref{LEM:PTFPROOF:existenceofpdelta}, we want, given a polynomial $p$ and $\delta > 0$ as well as two random variables $X$ and $Y$, where $Y \sim \cN(0, I_n)$ and $X$ matches the moments of $\cN(0,I_n)$ up to degree $k$ and slack $\eta$, to construct a multilinear polynomial $p_\delta$ in more variables and two random variables $\hat{X}$ and $\hat{Y}$ such that $\hat{Y}$ is a again Gaussian with mean $0$ and covariance identity and $\hat{X}$ matches the moments of $\hat{Y}$ up to degree $k$ and slack $\hat{\eta}$.
The guarantee we then want to show in \Cref{LEM:PTFPROOF:existenceofpdelta} is that 
\[\Prob{|p(Y) - p_\delta(\hat{Y})| > \delta}[Y, \hat{Y}] < \delta\]
and 
\[\Prob{|p(X) - p_\delta(\hat{X})| > \delta}[X, \hat{X}] < \delta,\]
where the probability is over the joint distribution of $Y$ and $\hat{Y}$ respectively $X$ and $\hat{X}$.

Let $N$ be a (large) positive integer that will be chosen later.
Then the number of variables of the new polynomial $p_\delta$ is $n \cdot N$, i.e. we replace every variable of $p$ by $N$ variables for $p_\delta$.
We make the following definition.
For $i \in \{1, \dots, n\}$ and $j \in \{1, \dots, N\}$, we define 
\[\hat{X}_{i,j} \coloneqq \frac{1}{\sqrt{N}} X_i + Z^{(i)}_j,\]
where $Z^{(i)}$ is are multivariate Gaussians with mean $0$, variance $1-\frac{1}{N}$ and covariance $-\frac{1}{N}$, independent for different $i$ and independent from $X$.
In particular, the choice of the covariance matrix ensures that we deterministically have $Z_{i,1} + \dots Z_{i,N} = 0$ and thus $X_i = \sum_{j=1}^N \hat{X}_{i,j}$.
The construction for $\hat{Y}$ is the same, i.e.
\[\hat{Y}_{i,j} \coloneqq \frac{1}{\sqrt{N}} Y_i + Z'^{(i)}_j,\]
where $Z'^{(i)}$ are again multivariate Gaussians with mean $0$, variance $1-\frac{1}{N}$ and covariance $-\frac{1}{N}$, independent for different $i$ and also independent from $Y$ (as well as $X$ and $Z$).

We now have the following two lemmas that relate $\hat{X}$ to $X$ and $\hat{Y}$ to $Y$. The proofs of these lemmas are in \Cref{SEC:APPENDIX:proofs}.

\begin{lemma}\torestate{\label{LEM:PTFPROOF:XmomentmatchingthentildeXmomentmatching}
    Suppose $X$ approximately matches the moments of $\cN(0,I_n)$ up to degree $k$ and slack $\eta$. Then, $\hat{X}$ approximately matches the moments of $\cN(0,I_{nN})$ up to degree~$k$ and slack $\hat{\eta} = (2k)^{k/2} \eta$.}
\end{lemma}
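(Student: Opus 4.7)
The plan is to compute $\Expec{\hat X^\alpha}$ and $\Expec{\hat Y^\alpha}$ directly for an arbitrary multi-index $\alpha \in \N^{nN}_k$, and to express their difference purely in terms of $X$-vs-$Y$ moment differences (which are controlled by $\eta$) times combinatorial and Gaussian factors (which we bound separately). I would first expand each $\hat X_{i,j}^{\alpha_{i,j}} = (X_i/\sqrt N + Z^{(i)}_j)^{\alpha_{i,j}}$ by the binomial theorem, multiply over $(i,j)$, and then take expectation. Using the independence of $X$ from all $Z^{(i)}$, and of $Z^{(i)}$ from $Z^{(i')}$ for $i \neq i'$, this gives
\[
\Expec{\hat X^\alpha} = \sum_{\gamma \leq \alpha} \left[\prod_{i,j}\binom{\alpha_{i,j}}{\gamma_{i,j}}\right] N^{-|\gamma|/2}\, \Expec{X^{s(\gamma)}} \prod_{i=1}^n \Expec{\prod_j (Z^{(i)}_j)^{\alpha_{i,j}-\gamma_{i,j}}},
\]
where $s(\gamma)_i := \sum_j \gamma_{i,j}$, so $|s(\gamma)| = |\gamma|$. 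The same expansion holds for $\Expec{\hat Y^\alpha}$ with $Y$ replacing $X$ and $Z'^{(i)}$ replacing $Z^{(i)}$. Since $Z^{(i)}$ and $Z'^{(i)}$ are identically distributed, the $Z$-moment factors cancel when I take the difference, leaving
\[
\Expec{\hat X^\alpha} - \Expec{\hat Y^\alpha} = \sum_{\gamma \leq \alpha} c_\gamma \big(\Expec{X^{s(\gamma)}} - \Expec{Y^{s(\gamma)}}\big),
\]
where $c_\gamma$ collects the binomials, the factor $N^{-|\gamma|/2}$, and the Gaussian moments of the $Z^{(i)}$.

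Since $|s(\gamma)| = |\gamma| \leq |\alpha| \leq k$, the moment-matching hypothesis bounds every difference $|\Expec{X^{s(\gamma)}} - \Expec{Y^{s(\gamma)}}|$ by $\eta$, and it remains to show $\sum_{\gamma \leq \alpha} |c_\gamma| \leq (2k)^{k/2}$. The plan for this step is to exploit that, by the independence of the $Z^{(i)}$ across $i$, the sum factorizes as $\sum_\gamma |c_\gamma| = \prod_i S_i$ with
\[
S_i := \sum_{\gamma_i \leq \alpha_i} \left[\prod_j \binom{\alpha_{i,j}}{\gamma_{i,j}}\right] N^{-|\gamma_i|/2} \left|\Expec{\prod_j (Z^{(i)}_j)^{\alpha_{i,j}-\gamma_{i,j}}}\right|.
\]
Bounding the signed Gaussian moment by its absolute value and applying H\"older's inequality inside the batch, together with the crude bound $\Expec{|Z^{(i)}_j|^m} \leq m^{m/2}$ (valid since $\Var{Z^{(i)}_j} = 1-1/N \leq 1$), estimates the inner expectation by $r_i^{r_i/2}$ with $r_i := |\alpha_i| - |\gamma_i|$. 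The Vandermonde identity $\sum_{\gamma_i : |\gamma_i|=t}\prod_j \binom{\alpha_{i,j}}{\gamma_{i,j}} = \binom{|\alpha_i|}{t}$ then collapses the inner sum into $\sum_t \binom{|\alpha_i|}{t} N^{-t/2}(|\alpha_i|-t)^{(|\alpha_i|-t)/2}$, which one bounds elementarily; multiplying $S_i$ across $i$ and using $\sum_i |\alpha_i| = |\alpha| \leq k$ produces the claimed bound, and hence the slack $\hat\eta = (2k)^{k/2}\eta$.

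The main obstacle will be the combinatorial bookkeeping in this last step: the Gaussian expectation $\Expec{\prod_j (Z^{(i)}_j)^{\mu_j}}$ is a \emph{joint} moment of the correlated variables $Z^{(i)}_j$ (which have covariance $-1/N$), not a product, so one must apply H\"older (or Wick's formula) within each batch to reduce to one-dimensional Gaussian moments before the Vandermonde collapse can be applied. Once the split over $i$ is in place and the $Z$-moments are controlled, the remainder is a routine computation that tracks constants carefully enough to match the stated slack.
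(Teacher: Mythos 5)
Your proposal is correct and follows essentially the same route as the paper: expand $\hat X^\alpha$ (and $\hat Y^\alpha$) by the binomial theorem, use independence of $X$ from the $Z^{(i)}$ and of the $Z^{(i)}$ across $i$ so that the $Z$-moment factors cancel when you take the difference, bound each correlated batch moment $|\Expec{\prod_j (Z^{(i)}_j)^{\mu_j}}|$ by $|\mu|^{|\mu|/2}$ via H\"older (the paper isolates this as a separate fact), and sum the binomials. The one cosmetic difference is bookkeeping: the paper first relaxes $r_i^{r_i/2}\le \sqrt{k}^{r_i}$ and then applies the multi-index binomial theorem directly to get $(\sqrt k+1)^{|\alpha|}$, whereas you insert a Vandermonde collapse per batch before doing the same relaxation — a detour that yields the same bound but is not strictly needed.
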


\begin{lemma}\torestate{\label{LEM:PTFPROOF:YGaussianthentildeYGaussian}
    If $Y \sim \mathcal{N}(0,I_n)$, then $\hat{Y} \sim \mathcal{N}(0,I_{nN})$.}
\end{lemma}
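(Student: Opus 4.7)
The plan is to show that $\hat Y$ is jointly Gaussian with mean zero and identity covariance, from which the conclusion follows immediately. Jointness of Gaussianity is automatic from the construction: each $\hat Y_{i,j} = Y_i/\sqrt{N} + Z'^{(i)}_j$ is a linear combination of the entries of the jointly Gaussian vector $(Y, Z'^{(1)}, \ldots, Z'^{(n)})$, so the collection $(\hat Y_{i,j})$ is jointly Gaussian as well. Thus the claim reduces to computing first and second moments.

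First I would handle the mean. Since $\mathbb{E}[Y_i]=0$ and $\mathbb{E}[Z'^{(i)}_j]=0$ by construction, linearity gives $\mathbb{E}[\hat Y_{i,j}] = 0$ for every $i \in [n]$, $j \in [N]$.

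Next I would compute $\operatorname{Cov}(\hat Y_{i,j}, \hat Y_{i',j'})$ by splitting into two cases. When $i \neq i'$, the variables $Y_i, Y_{i'}$ are independent (since $Y \sim \mathcal{N}(0,I_n)$) and $Z'^{(i)}$, $Z'^{(i')}$ are independent by construction; moreover $Z'$ is independent of $Y$. Hence all cross terms vanish and the covariance is $0$. When $i = i'$, only the contributions from $Y_i/\sqrt{N}$ and from $Z'^{(i)}$ survive, giving
\[
\operatorname{Cov}(\hat Y_{i,j}, \hat Y_{i,j'}) = \frac{1}{N}\operatorname{Var}(Y_i) + \operatorname{Cov}(Z'^{(i)}_j, Z'^{(i)}_{j'}).
\]
Plugging in $\operatorname{Var}(Y_i)=1$ and the prescribed covariance structure of $Z'^{(i)}$ (variance $1-1/N$, off-diagonal $-1/N$) yields $\tfrac{1}{N} + (1-\tfrac{1}{N}) = 1$ when $j=j'$ and $\tfrac{1}{N} - \tfrac{1}{N} = 0$ when $j \neq j'$.

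Combining these computations, the covariance matrix of $\hat Y$ is exactly $I_{nN}$, and since $\hat Y$ is jointly Gaussian with mean zero we conclude $\hat Y \sim \mathcal{N}(0, I_{nN})$. No step here is a real obstacle; the only point that requires care is tracking that the $Z'^{(i)}$ are independent across $i$ and jointly independent of $Y$, so that no spurious cross-covariances appear. The somewhat unusual covariance of $Z'^{(i)}$ is precisely engineered so that the $1/N$ contribution from $Y_i/\sqrt N$ combines with the $-1/N$ off-diagonal of $Z'^{(i)}$ to cancel, making $\hat Y$ isotropic despite the shared $Y_i$ component.
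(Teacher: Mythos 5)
Your proof is correct and follows essentially the same route as the paper's: observe that $\hat Y$ is jointly Gaussian as a linear transformation of the Gaussian vector $(Y, Z')$, then verify the mean is zero and compute the covariance case-by-case (same $i$, same $j$; same $i$, different $j$; different $i$) using independence and the prescribed covariance structure of $Z'^{(i)}$. No discrepancy worth noting.
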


\Cref{LEM:PTFPROOF:YGaussianthentildeYGaussian} is already proven in \cite[Lemma 15]{Kane:foolingforkGaussians}, but for completeness we also make it explicit in \Cref{SEC:APPENDIX:proofs}.

\subsubsection{Proof of \texorpdfstring{\Cref{LEM:PTFPROOF:existenceofpdelta}}{existence of pδ}}\label{SEC:PTFPROOF:proofofexistenceofpdelta}
After constructing the random variables $\hat{X}$ and $\hat{Y}$, we now move on to construct the polynomial $p_\delta$.
As in \cite[Proof of Lemma 15]{Kane:foolingforkGaussians}, the goal is to replace every factor $x_i^\ell$ of $p$ by a multilinear polynomial in variables $(\hat{x}_{i,j})_j$ such that $X_i^\ell$ is close to this polynomial evaluated in $(\hat{X}_{i,j})_j$ with large probability.
Doing this for all factors $x_i^\ell$ appearing in $p$ and combining the new multilinear terms, this then gives a multilinear polynomial $p_\delta$ of degree $d$.
Note that the polynomial is in fact multilinear since for replacing $x_i$ we only use the variables $\hat{x}_{i',j}$ where $i' = i$.

Note that it is enough to show
\[\Prob{|p(X) - p_\delta(\hat{X})| > \delta} < \delta.\]
The reason for this is that, if $Y \sim \mathcal{N}(0,I_n)$, then $Y$ in particular matches the moments of $\mathcal{N}(0,I_n)$ (exactly) and the proof for $X$ applies and we can conclude \Cref{LEM:PTFPROOF:existenceofpdelta}, using also Lemmas \ref{LEM:PTFPROOF:XmomentmatchingthentildeXmomentmatching} and \ref{LEM:PTFPROOF:YGaussianthentildeYGaussian}.

In order to get the above, we let $\delta'$ be a small positive number (depending on $\delta$, $n$, $d$) to be chosen later.
We need the following lemma.

\begin{lemma}\label{LEM:PTFPROOF:replacementformonomialwithindelta'}
    Let $\delta' > 0$.
    Let $i \in [n]$ and $\ell \in [d]$.
    Assume that each of the following conditions holds with probability $1-\frac{\delta'}{d}$
    \begin{align}
        \left|\sum_{j=1}^N \frac{\hat{X}_{i,j}}{\sqrt{N}}\right| &\leq \frac{1}{\delta'} \label{EQ:PTFPROOF:replacementconcentrationa=1}\\
        \left|\sum_{j=1}^N \left(\frac{\hat{X}_{i,j}}{\sqrt{N}}\right)^2 - 1\right| &\leq \delta'^{d+1}\label{EQ:PTFPROOF:replacementconcentrationa=2}\\
        \left|\sum_{j=1}^N \left(\frac{\hat{X}_{i,j}}{\sqrt{N}}\right)^a\right| &\leq \delta'^{d+1} &\forall \, 3 \leq a \leq d.\label{EQ:PTFPROOF:replacementconcentrationa>=3}
    \end{align}
    Then, there is a multilinear polynomial in $\hat{X}_{i,j}$ that is within $O_d(\delta')$ of $X_i^\ell$ with probability $1-\delta'$.
\end{lemma}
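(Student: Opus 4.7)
The plan is to exploit the identity $X_i = \sum_{j=1}^N u_{i,j}$ (where $u_{i,j} := \hat X_{i,j}/\sqrt N$) that holds \emph{deterministically} by the construction in \Cref{SEC:PTFPROOF:constructiontildexandtildey}, together with the fact that the three hypothesized concentration bounds precisely control the power sums $S_a := \sum_{j=1}^N u_{i,j}^a$ for $1 \le a \le d$. A union bound over the $d$ hypothesized events (one instance each of \eqref{EQ:PTFPROOF:replacementconcentrationa=1} and \eqref{EQ:PTFPROOF:replacementconcentrationa=2}, and $d-2$ instances of \eqref{EQ:PTFPROOF:replacementconcentrationa>=3}) yields a single good event of probability at least $1-\delta'$ on which $|X_i|=|S_1|\le 1/\delta'$, $|S_2-1|\le \delta'^{d+1}$, and $|S_a|\le \delta'^{d+1}$ for $3\le a\le d$. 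The entire argument will condition on this good event.

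I will construct the desired multilinear approximations by induction on $\ell$, producing polynomials $q_1,\dots,q_d$ in the $\hat X_{i,j}$ with $|q_\ell(\hat X_{i,\cdot}) - X_i^\ell| \le O_d(\delta')$ on the good event. Take $q_1 := \sum_j \hat X_{i,j}/\sqrt N$, which is identically $X_i$. For $\ell \ge 2$, apply the Newton--Girard identity to expand $X_i^\ell = p_1^\ell$ in the power-sum basis,
\[
X_i^\ell \;=\; \ell!\, e_\ell(u_{i,\cdot}) \;-\; \sum_{\lambda \vdash \ell,\; \lambda \neq (1^\ell)} \frac{\ell!\,(-1)^{\ell - \ell(\lambda)}}{z_\lambda}\, \prod_{k=1}^{\ell(\lambda)} S_{\lambda_k},
\]
which cleanly isolates the unique fully multilinear piece $\ell!\, e_\ell$ from correction terms, each containing at least one factor $S_a$ with $a \ge 2$. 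Substituting $S_2 \mapsto 1$ and $S_a \mapsto 0$ for $a \ge 3$ annihilates any partition with a part $\ge 3$, and collapses each remaining partition (necessarily of shape $(2^k, 1^{\ell-2k})$ with $k \ge 1$) to a constant multiple of $X_i^{\ell-2k}$. Replacing those lower powers by the inductively available $q_{\ell-2k}$ yields a multilinear polynomial $q_\ell$ in $(\hat X_{i,j})_j$.

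For the error analysis, I decompose $|q_\ell(\hat X_{i,\cdot}) - X_i^\ell|$ into substitution errors (replacing power sums by their nominal values) and recursion errors (inductive errors $|q_{\ell-2k} - X_i^{\ell-2k}|$). A partition with a part $\ge 3$ contributes a substitution error of at most $O_d(1)\cdot |X_i|^{m_1(\lambda)}\cdot O_d(1)\cdot \delta'^{d+1} \le O_d(\delta'^{d+1-\ell})$, using $|X_i|\le \delta'^{-1}$ and $m_1(\lambda)\le \ell$. A partition of shape $(2^k, 1^{\ell-2k})$ contributes at most $O_d(1)\cdot |X_i|^{\ell-2k} \cdot |S_2^k - 1|$, and a telescoping argument gives $|S_2^k - 1| \le k\cdot 2^{k-1}\cdot \delta'^{d+1} = O_d(\delta'^{d+1})$, so this contribution is $O_d(\delta'^{d+1-\ell+2k})$. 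Since $\ell \le d$, both are $O_d(\delta')$. The recursion errors are $O_d(\delta')$ by the inductive hypothesis, each scaled by a bounded coefficient $|\kappa_\lambda|=\ell!/z_\lambda = O_d(1)$. Summing over the $O_d(1)$ partitions of $\ell$ yields $|q_\ell(\hat X_{i,\cdot}) - X_i^\ell| \le O_d(\delta')$, completing the induction.

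The main obstacle is managing the tension between the possibly large factor $|X_i|^\ell$, which can be as large as $\delta'^{-d}$, and the concentration bound $\delta'^{d+1}$ on the non-trivial power sums; the exponent $d+1$ in hypotheses \eqref{EQ:PTFPROOF:replacementconcentrationa=2} and \eqref{EQ:PTFPROOF:replacementconcentrationa>=3} is precisely the slack needed to absorb this growth. Beyond this quantitative balance, the bookkeeping is to identify correctly which partitions vanish under the substitution (those with a part $\ge 3$) and which collapse to strictly lower powers of $X_i$ (those with only $1$s and $2$s); this dichotomy is what makes the recursive construction well-founded and preserves multilinearity.
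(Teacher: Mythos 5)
Your proof is correct and takes a genuinely different route from the paper's. Both arguments start from the same basic decomposition of $X_i^\ell = \bigl(\sum_j \hat X_{i,j}/\sqrt N\bigr)^\ell$ according to how the exponent is distributed over the index $j$, and both use the hypotheses to control the errors from discarding terms with a repeated index of multiplicity $\geq 3$ and ``de-squaring'' those of multiplicity exactly $2$. Where you diverge is in the organizing identity and the shape of the induction. The paper works directly with the expansion into sums over \emph{distinct} index tuples (essentially the monomial-symmetric-polynomial basis $m_\lambda$), drops the terms with a part $\geq 3$, replaces each $(2^k,1^{\ell-2k})$-term by its de-squared version, and then runs an induction on $r$ (the number of distinct indices in a partition) carrying three interlocked inductive claims — the error for dropping (\ref{EQ:APPENDIX:proofreplacementmonomialhatl>=3}), the error for de-squaring (\ref{EQ:APPENDIX:proofreplacementmonomialhatl=2}), and an a priori magnitude bound on the multilinear pieces (\ref{EQ:APPENDIX:proofreplacementmonomialboundonmultilinearterm}) needed to control the first two. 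You instead pass to the power-sum basis via the Newton–Girard identity $p_1^\ell = \ell!\,e_\ell - \sum_{\lambda\neq(1^\ell)}\frac{\ell!\,\epsilon_\lambda}{z_\lambda}p_\lambda$, which isolates the fully multilinear part $\ell!\,e_\ell$ in one stroke; the substitution $S_2\mapsto 1$, $S_a\mapsto 0$ ($a\geq 3$) then annihilates partitions with a part $\geq 3$ and collapses the remaining corrections to \emph{strictly lower powers} $X_i^{\ell-2k}$, so a single induction on $\ell$ (replacing $X_i^{\ell-2k}$ by the inductively constructed $q_{\ell-2k}$) suffices. This buys a considerably simpler bookkeeping: one inductive claim instead of three, and the tension between $|X_i|^\ell\lesssim\delta'^{-\ell}$ and the slack $\delta'^{d+1}$ is handled in a single line per partition type rather than being threaded through a three-part induction. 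The resulting polynomial $q_\ell$ agrees (after unrolling the recursion) with the paper's replacement polynomial, so the two constructions are two presentations of the same object; your error analysis, including the telescoping bound $|S_2^k-1|\leq k\cdot 2^{k-1}\delta'^{d+1}$ and the observation that $d+1-\ell+2k\geq 1$ for $\ell\leq d$, $k\geq 1$, is sound.
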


The proof of this lemma is analogous to \cite[Proof of Lemma 15]{Kane:foolingforkGaussians} and is deferred to \Cref{SEC:APPENDIX:proofs}.
However, in contrast to \cite{Kane:foolingforkGaussians}, there is a key difference in this lemma.
The bound on the RHS of \eqref{EQ:PTFPROOF:replacementconcentrationa=1} is much weaker than the one used by Kane.
The reason for that is that the bounds used there do not hold in our case, since we only have that $X$ (approximately) matches the moments of $\cN(0,I_n)$. 
By using stronger bounds for \eqref{EQ:PTFPROOF:replacementconcentrationa=2} and \eqref{EQ:PTFPROOF:replacementconcentrationa>=3}, we are able to generalize the proof from Kane to our setting.
For a more detailed explanation why we need to change the bounds, we refer to \Cref{SEC:TO:foolingall}.

We now define
\begin{equation} \label{EQ:PTFPROOF:definitionofdelta'}
    \delta' \coloneqq \min\left\{\frac{\delta}{2dn}, \Theta_d\left(\frac{\delta^{d+1}}{n^{3d/2}}\right)\right\} = \Theta_d\left(\frac{\delta^{d+1}}{n^{3d/2}}\right).
\end{equation}
Why we make this choice will become clear in the proof of \Cref{LEM:PTFPROOF:existenceofpdelta} below.
In \Cref{SEC:APPENDIX:proofs}, we prove the following lemma that states that this choice of $\delta'$ and the condition on $\eta$ from \Cref{LEM:PTFPROOF:existenceofpdelta} ensure that we can apply \Cref{LEM:PTFPROOF:replacementformonomialwithindelta'}.

\begin{lemma}\label{LEM:PTFPROOF:concentrationsummary}
    Let $\delta'$ as in \eqref{EQ:PTFPROOF:definitionofdelta'}. Assuming $\eta \leq \delta^{O_d(1)}n^{-\Omega_d(1)}k^{-k}$, there is a choice of $N$ (independent of $i$ or $\ell$) such that \eqref{EQ:PTFPROOF:replacementconcentrationa=1}, \eqref{EQ:PTFPROOF:replacementconcentrationa=2} and \eqref{EQ:PTFPROOF:replacementconcentrationa>=3} hold, each with probability $1-\frac{\delta'}{d}$.
\end{lemma}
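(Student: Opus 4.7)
The plan is to verify each of the three conditions separately by Markov's or Chebyshev's inequality, using the approximate moment-matching of $\mathcal{D}'$ to control the required low-order moments of $X_i$ and the Gaussian structure of $Z^{(i)}$ to handle the sums in $j$. For \eqref{EQ:PTFPROOF:replacementconcentrationa=1}, the deterministic identity $\sum_{j=1}^{N} Z_j^{(i)} = 0$ built into the covariance of $Z^{(i)}$ gives $\sum_j \hat X_{i,j}/\sqrt{N} = X_i$, so the event reduces to $|X_i| \leq 1/\delta'$. Moment matching with slack $\eta$ up to degree $k \geq 2$ yields $\Expec{X_i^2}[X \sim \mathcal{D}'] \leq 1 + \eta$, and Markov gives failure probability at most $(1+\eta)(\delta')^2 \leq \delta'/d$, which holds since $\delta'$ from \eqref{EQ:PTFPROOF:definitionofdelta'} is much smaller than $1/d$.

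For \eqref{EQ:PTFPROOF:replacementconcentrationa=2}, using $\sum_j Z_j^{(i)} = 0$ I would write
\[
\sum_{j=1}^N \bigl(\hat X_{i,j}/\sqrt{N}\bigr)^2 - 1 \;=\; \frac{X_i^2 - 1}{N} \;+\; \frac{1}{N}\Bigl(\sum_{j=1}^N (Z_j^{(i)})^2 - (N-1)\Bigr) \;-\; \frac{1}{N}.
\]
Cochran's theorem identifies $\sum_j (Z_j^{(i)})^2$ as a $\chi^2_{N-1}$ random variable, so the middle term has mean $0$ and variance $2(N-1)/N^2 = O(1/N)$; Chebyshev bounds it by $\delta'^{d+1}/3$ with failure probability $O(1)/(N\, \delta'^{2(d+1)})$. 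Markov on $\Expec{(X_i^2-1)^2}[X \sim \mathcal{D}'] \leq 2 + O(\eta)$ handles the first term with failure probability $O(1)/(N^2\, \delta'^{2(d+1)})$, and the deterministic $1/N$ is absorbed once $N \geq 3/\delta'^{d+1}$. Choosing $N$ a sufficiently large polynomial in $1/\delta'$ makes the total failure probability at most $\delta'/d$.

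For \eqref{EQ:PTFPROOF:replacementconcentrationa>=3} with $3 \leq a \leq d$, I would binomially expand
\[
\sum_{j=1}^N \bigl(\hat X_{i,j}/\sqrt{N}\bigr)^a \;=\; \sum_{b=0}^{a} \binom{a}{b}\, X_i^{b}\, N^{-(a+b)/2}\, \sum_{j=1}^N (Z_j^{(i)})^{a-b},
\]
and control each summand separately. The Gaussian sum $\sum_j (Z_j^{(i)})^{c}$ has mean of order $N$ or $0$ depending on the parity of $c$, and variance $O_d(N)$ (the covariance $\mathrm{Cov}((Z_j^{(i)})^c,(Z_{j'}^{(i)})^c)$ is $O_d(1/N)$ under the correlation $-1/(N-1)$). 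Combined with $\Expec{X_i^{2b}}[X \sim \mathcal{D}'] \leq (2b-1)!! + \eta \leq O_d(1)$ (valid since $2b \leq 2d \leq k$), each summand has expectation of order $N^{1-a/2}$ and variance that is a negative power of $N$; Chebyshev then gives failure probability $\leq \delta'/d$ once $N$ is a sufficiently large polynomial in $1/\delta'$.

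The main technical point is the quantitative interplay between $\eta$ and the moments we extract: every appeal to moment matching involves a monomial $X_i^c$ of degree $c \leq 2d \leq k$ and incurs only an additive $\eta$-error on top of the Gaussian value, so the hypothesis $\eta \leq \delta^{O_d(1)} n^{-\Omega_d(1)} k^{-k}$ is far more than enough to render this error negligible compared to the $O_d(1)$ Gaussian moments. Taking $N$ to be the maximum of the three polynomial lower bounds above, which is still $N = \poly_d(1/\delta')$, yields a single choice independent of $i$ and $\ell$ satisfying all three conditions simultaneously, as required.
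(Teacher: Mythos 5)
Your plan is correct, but it takes a genuinely different route from the paper's. The paper proves the three concentration bounds (\Cref{LEM:APPENDIX:concentrationfora=1,LEM:APPENDIX:concentrationfora=2,LEM:APPENDIX:concentrationfora>=3}) by applying Markov's inequality directly to $\sum_j \hat X_{i,j}^a$ and expanding $\Expec{(\sum_j\hat X_{i,j}^a)^2} = \sum_{|\alpha|=2}\binom{2}{\alpha}\Expec{\hat X_{i,\cdot}^{a\alpha}}$, then controlling each joint moment of $\hat X$ via the induced $\hat\eta = (2k)^{k/2}\eta$ slack from \Cref{LEM:PTFPROOF:XmomentmatchingthentildeXmomentmatching}. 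You instead decompose $\hat X_{i,j} = X_i/\sqrt{N} + Z^{(i)}_j$ and exploit the deterministic identity $\sum_j Z^{(i)}_j = 0$, which the paper states but never exploits in these proofs: for $a=1$ the entire quantity collapses to $X_i$, for $a=2$ the cross term vanishes and $\sum_j (Z^{(i)}_j)^2 \sim \chi^2_{N-1}$, and for $a\ge 3$ you binomially expand and control each term using independence of $X_i$ and $Z^{(i)}$ together with moment matching of $X$ with the sharper slack $\eta$ rather than $\hat\eta$. This is more elementary (it never touches the $\hat\eta$-moment matching of $\hat X$) and gives slightly sharper constants, at the cost of extra casework: for $a\ge 3$ you need covariance estimates for $(Z^{(i)}_j)^c, (Z^{(i)}_{j'})^c$ under the correlation $-1/(N-1)$, and to track both the nonzero means (for $c$ even) and the fluctuations. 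Both routes give $N = \mathrm{poly}_d(1/\delta')$ under the hypothesis on $\eta$.

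One small algebra slip: in your decomposition for \eqref{EQ:PTFPROOF:replacementconcentrationa=2}, the correct identity is
\[
\sum_{j=1}^N \bigl(\hat X_{i,j}/\sqrt{N}\bigr)^2 - 1 \;=\; \frac{X_i^2 - 1}{N} \;+\; \frac{1}{N}\Bigl(\sum_{j=1}^N (Z_j^{(i)})^2 - (N-1)\Bigr),
\]
without the trailing $-1/N$; the two centerings $-1/N$ and $-(N-1)/N$ already sum to $-1$. This is harmless for the argument since the extra $1/N$ is of the same order as the other error terms, but it should be fixed.
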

\begin{proof}[Proof of \Cref{LEM:PTFPROOF:existenceofpdelta}]
    \Cref{LEM:PTFPROOF:replacementformonomialwithindelta'} (together with \Cref{LEM:PTFPROOF:concentrationsummary}) shows that we can replace $X_i^\ell$ by a multilinear polynomial in $\hat{X}_{i,j}$ that is within $O_d(\delta')$ of $X_i^\ell$ with probability $1-\delta'$ for our choice of $\delta'$ as in \eqref{EQ:PTFPROOF:definitionofdelta'}.
    Since $\delta' \leq \frac{\delta}{2dn}$, we can union bound these events over all $i \in [n]$ and $\ell \in [d]$ and get that with probability $1-\frac{\delta}{2}$, we have that for any $i \in [n]$ and $\ell \in [d]$, we have that the replacement polynomial for $X_i^\ell$ is within $O_d(\delta')$ of $X_i^\ell$.

    Furthermore, for any $i \in [n]$, we have that with probability $1-\frac{\delta}{2n}$ that $|X_i| \leq 3 \frac{n}{\delta}$.
    This is true since we match $k \geq 2$ moments (and $\eta \leq \frac{1}{4}$), and thus 
    \[\Prob{|X_i| \geq 3\frac{n}{\delta}} \leq \frac{\Expec{X_i^2}}{\left(3\frac{n}{\delta}\right)^2} \leq \frac{(2+\eta)\delta^2}{9n^2} \leq \frac{\delta}{4n}.\]
    Hence, we can again apply the union bound to show that with probability $1-\frac{\delta}{2}$, we have for any~$i \in [n]$ that $|X_i| \leq 3 \frac{n}{\delta}$.

    Thus, with probability $1-\delta$, all the above events holds.
    Conditioned on that, we have that the replacement polynomial $p_\delta$ is off by at most $\left(3\frac{n}{\delta}\right)^d O_d\left(\delta'\right)$ multiplied by the sum of the coefficients of $p$.
    The later is at most $n^{d/2}$ by \Cref{FACT:PTFPROOF:sumofabscoeffofPij} applied to $p$ instead of $P_{i,j}$.
    Hence, the replacement polynomial is off by at most $O_d\left(\frac{n^{3d/2}}{\delta^d}\right)\delta'$.
    Thus, since $\delta' \leq \frac{\delta}{\Omega_d\left(\frac{n^{3d/2}}{\delta^d}\right)}$, we get that with probability $1-\delta$, $p_\delta$ is off by at most $\delta$, which is what we wanted to show.
\end{proof}

\subsection{Proof of testable learning of PTFs}
\label{SEC:PTFPROOF:conclusiontestablelearning}
In this section, we prove \Cref{THM:PTFPROOF:testablelearningofPTFs}.
As already mentioned in the beginning of this section, we want to apply \Cref{THM:TO:testablelearningusingfooling} for this.
In \Cref{PROP:TO:foolingarbitrary:RESTATED:PTFPROOF}, we have shown that if we have moment matching up to error $\eta \leq n^{-\Omega_d(k)}k^{-\Omega_d(k)}$, then we have the fooling condition of \Cref{THM:TO:testablelearningusingfooling}.

Note that the fooling condition requires $|\Expec{f(Y)}[Y \sim \mathcal{N}(0,I_n)] - \Expec{f(X)}[X \sim \mathcal{D}']| \leq \frac{\varepsilon}{2}$ but \Cref{PROP:TO:foolingarbitrary:RESTATED:PTFPROOF} only gives $O_d(\varepsilon)$. Thus, technically, we apply \Cref{PROP:TO:foolingarbitrary:RESTATED:PTFPROOF} for $\varepsilon' = \frac{\varepsilon}{\Omega_d(1)}$. However, this does not change the asymptotic condition on $\eta$ described above. In summary, if $\eta$ satisfies the condition as described above, we get indeed the fooling condition as needed for \Cref{THM:TO:testablelearningusingfooling}.

The remaining part to prove \Cref{THM:PTFPROOF:testablelearningofPTFs} is to find an $m$ such that with high probability over $m$ samples from $\mathcal{N}(0,I_n)$ we have that the empirical distribution matches the moments up to degree $k$ with error at most $\eta$.
Then, we get testable learning of PTFs with respect to Gaussian in time and sample complexity $m + n^{O(k)}$ by \Cref{THM:TO:testablelearningusingfooling}.

To get $m$, we use the following fact, which we prove in \Cref{SEC:APPENDIX:factsandcomputations}. Using this, we can then prove \Cref{THM:PTFPROOF:testablelearningofPTFs}.

\begin{fact}\torestate{\label{FACT:PTFPROOF:empiricaldistributionmomentmatching}
    Given $m \geq \Omega((2kn)^k\eta^{-2})$ samples of $\mathcal{N}(0,I_n)$, we have that with high probability the empirical distribution matches the moments of $\mathcal{N}(0,I_n)$ up to degree $k$ and slack $\eta$.}
\end{fact}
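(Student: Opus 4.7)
The plan is to apply Chebyshev's inequality individually to each empirical moment, and then union bound over the multi-indices $\alpha \in \N_k^n$. Fix $\alpha \in \N_k^n$ and let $X^{(1)}, \dots, X^{(m)}$ be i.i.d.\ samples from $\mathcal{N}(0,I_n)$. Set $\hat{M}_\alpha \coloneqq \frac{1}{m}\sum_{i=1}^m (X^{(i)})^\alpha$, so that $\Expec{\hat{M}_\alpha} = \Expec{X^\alpha}[X \sim \mathcal{N}(0,I_n)]$ and $\Var{\hat{M}_\alpha} = \frac{1}{m}\Var{X^\alpha}[X \sim \mathcal{N}(0,I_n)] \leq \frac{1}{m}\Expec{X^{2\alpha}}[X \sim \mathcal{N}(0,I_n)]$.

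The key calculation is a bound on $\Expec{X^{2\alpha}}[X \sim \mathcal{N}(0,I_n)]$. By independence of coordinates and the formula $\Expec{Z^{2r}}[Z \sim \mathcal{N}(0,1)] = (2r-1)!! \leq (2r)^r$, we obtain
\[
\Expec{X^{2\alpha}}[X \sim \mathcal{N}(0,I_n)] \;=\; \prod_{i=1}^n (2\alpha_i-1)!! \;\leq\; \prod_{i=1}^n (2\alpha_i)^{\alpha_i} \;\leq\; (2k)^{\sum_i \alpha_i} \;\leq\; (2k)^k,
\]
using $|\alpha| \leq k$ in the last step. Therefore $\Var{\hat{M}_\alpha} \leq (2k)^k / m$, and Chebyshev's inequality yields
\[
\Prob{\,\left|\hat{M}_\alpha - \Expec{X^\alpha}[X\sim\mathcal{N}(0,I_n)]\right| > \eta\,} \;\leq\; \frac{(2k)^k}{m\,\eta^2}.
\]

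To finish, I would union bound this tail over all $\alpha \in \N_k^n$. Using the crude count $|\N_k^n| \leq (n+1)^k \leq (2n)^k$, the probability that \emph{some} degree-$\leq k$ moment deviates by more than $\eta$ is at most $(2n)^k \cdot (2k)^k / (m\eta^2) = (4kn)^k / (m\eta^2)$. Choosing $m \geq \Omega((2kn)^k \eta^{-2})$ with a sufficiently large absolute constant makes this at most $0.01$, giving the claimed ``with high probability'' conclusion. There is no serious obstacle: the only thing one must be careful about is the Gaussian moment bound above, which makes the dependence $(2kn)^k$ (and not something worse like $k^{O(k)}n^k$ with a larger constant in the exponent) come out correctly.
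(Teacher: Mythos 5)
Your proof is correct and follows essentially the same route as the paper: Chebyshev's inequality on each empirical moment, bound the variance via the Gaussian $2\alpha$-moment $\leq (2k)^k$, then union bound over the $O(n^k)$ multi-indices of degree at most $k$. The only cosmetic differences are that the paper bounds $\Var{Y^\alpha}$ by passing through $\Var{Y_1^{|\alpha|}}$ rather than the coordinatewise product you use, and your count $(2n)^k$ is slightly looser than $\binom{n+k}{k}\leq(n+1)^k$, which costs an irrelevant $2^k$ factor absorbed in the $\Omega(\cdot)$.
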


\begin{proof}[Proof of \Cref{THM:PTFPROOF:testablelearningofPTFs}]
    Using \Cref{THM:TO:testablelearningusingfooling}, as noted before, by \Cref{PROP:TO:foolingarbitrary:RESTATED:PTFPROOF}, we get testable learning of degree $d$ PTFs with respect to Gaussian in time and sample complexity bounded by $m + n^{O(k)}$, where $m$ is such that with high probability over $m$ samples from $\mathcal{N}(0,I_n)$ we have that the empirical distribution matches the moments up to degree $k$ with error at most $\eta$.
    It remains to determine $m$.
    By \Cref{FACT:PTFPROOF:empiricaldistributionmomentmatching}, we get that the choice of $m = \Theta((2kn)^k\eta^{-2})$ is enough.
    Now, in order to apply \Cref{PROP:TO:foolingarbitrary:RESTATED:PTFPROOF}, we need to choose $\eta = n^{-\Theta_d(k)}k^{-\Theta_d(k)}$.
    Plugging in the value $k = \Theta_d\left(\varepsilon^{-4d\cdot 7^d}\right)$ we get $\eta = \varepsilon^{\Theta_d\left(\varepsilon^{-4d \cdot 7^d}\right)}n^{-\Theta_d\left(\varepsilon^{-4d \cdot 7^d}\right)}$ and hence
    \[m = \Theta\left((2kn)^k n^{\Theta_d(\varepsilon^{-4d \cdot 7^d})} \varepsilon^{-\Theta_d(\varepsilon^{-4d \cdot 7^d})}\right).\]
    Thus, the time and sample complexity for testably learning PTFs is 
    \[O\left((2kn)^k n^{O_d\left(\varepsilon^{-4d \cdot 7^d}\right)} \varepsilon^{-\Omega_d\left(\varepsilon^{-4d \cdot 7^d}\right)} n^{O(k)}\right).\]
    Again, by plugging in the value $k = \Theta_d\left(\varepsilon^{-4d\cdot 7^d}\right)$, we can simplify this to get that the sample and time complexity for testably learning PTFs is 
    \[n^{O_d\left(\varepsilon^{-4d\cdot 7^d}\right)} \varepsilon^{-\Omega_d\left(\varepsilon^{-4d \cdot 7^d}\right)},\]
    which completes the proof of this theorem.
\end{proof}

\section{Impossibility of approximating PTFs via the push-forward} \label{SEC:LSLdetails}
In this section, we provide further details on the results claimed in \Cref{SEC:TO:NOGO}. Specifically, we prove our impossibility result~\Cref{THM:NOGO:degree6} below in \Cref{SEC:LSLdetails:deg6details}. For this, we first need to establish our `one-sided' analog \Cref{THM:NOGO:truncatedBunSteinke} of the inapproximability result for LSL-distributions of Bun~\&~Steinke (\Cref{THM:NOGO:bunsteinke}), which we do in \Cref{SEC:LSLdetails:truncated}.

\subsection{Proof of \texorpdfstring{\Cref{THM:NOGO:truncatedBunSteinke}}{one-sided inapproximability result}} \label{SEC:LSLdetails:truncated}
Let us begin by restating some important definitions and results from \Cref{SEC:TO:NOGO} for convenience. For $\gamma > 0$, we write \mbox{$w_\gamma(x) \coloneqq C_\gamma \cdot \exp(-|x|^\gamma)$}, where $C_\gamma$ is a normalizing constant which ensures $w_\gamma$ is a probability density. A distribution $\mathcal{D}$ on $\R$ is called \emph{log-superlinear (LSL)} if its density function\footnote{Technically, density functions are defined only up to measure-zero sets. Therefore, one should read statements of the form `$w(x) \geq \ldots$ for all $x \in \ldots$' as only holding a.e. throughout.} satisfies $w(x) \geq C \cdot w_\gamma (x)$ for all $x \in \R$, for some $\gamma \in (0,1)$ and $C > 0$. Recall the following.
\restatetheorem{THM:NOGO:bunsteinke}
We defined in~\Cref{SEC:TO:NOGO} a `one-sided' analog of LSL-distributions as follows.
\restatedefinition{DEF:NOGO:onesided}
In this section, we prove~\Cref{THM:NOGO:truncatedBunSteinke}, which is an analog of~\Cref{THM:NOGO:bunsteinke} for one-sided LSL-distributions, and the basis of our proof of \Cref{THM:NOGO:degree6} in \Cref{SEC:LSLdetails:deg6details}.
\restatetheorem{THM:NOGO:truncatedBunSteinke}

For our proof, it is useful to first recall the main ingredient of the proof of \Cref{THM:NOGO:bunsteinke} in~\cite{BunSteinke:impossibilityofapproximation}, which is the following inequality.
\begin{proposition}[{\cite[Lemma 20]{BunSteinke:impossibilityofapproximation}}] \label{PROP:NOGO:MarkovNikol}
Let $q$ be a univariate polynomial, and let $\gamma \in (0, 1)$. Then, there exists a constant $M_\gamma > 0$, depending only on~$\gamma$, so that
\[
    \sup_{x \in \R} |q'(x) w_\gamma(x)| \leq M_\gamma \cdot \int_\R |q(x)| w_\gamma(x) \, dx.
\]    
\end{proposition}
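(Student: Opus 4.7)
\medskip
\noindent\textbf{Proof plan.}
The goal is a weighted Markov--Nikolskii inequality bounding $\|q'w_\gamma\|_\infty$ by $\|qw_\gamma\|_1$ with the constant $M_\gamma$ independent of the degree of $q$. My plan is to localize around the point $x^\ast$ where the supremum on the left-hand side is attained, and to exploit the concavity of $t\mapsto t^\gamma$ on $[0,\infty)$ for $\gamma\in(0,1)$. Since $w_\gamma$ decays faster than any polynomial grows, $|q'(x)|w_\gamma(x)$ is continuous and vanishes at infinity, so its supremum is attained at some finite $x^\ast\in\R$, which we may take non-negative by symmetry of $w_\gamma$.

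The core localization estimate uses the subadditivity $|x|^\gamma \leq |x^\ast|^\gamma + |x-x^\ast|^\gamma$ (valid by concavity of $t\mapsto t^\gamma$ on $[0,\infty)$ for $\gamma<1$): on an interval $I=[x^\ast-h,x^\ast+h]$ this gives $w_\gamma(x)\geq e^{-h^\gamma}w_\gamma(x^\ast)$. Hence the weight is essentially constant on $I$ up to the factor $e^{h^\gamma}$, and any inequality bounding $|q'(x^\ast)|$ by $\|q\|_{L^1(I)}$ lifts to the desired bound on $|q'(x^\ast)|w_\gamma(x^\ast)$ by $\int_\R|q|w_\gamma$ up to the same factor. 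For the interval-local bound, the natural tool is to compose Markov's inequality $\|q'\|_{L^\infty(I)}\leq Cn^2/h\cdot\|q\|_{L^\infty(I)}$ with Nikolskii's $\|q\|_{L^\infty(I)}\leq Cn^2/h\cdot\|q\|_{L^1(I)}$ (both for degree-$n$ polynomials on $I$), yielding $|q'(x^\ast)|\leq Cn^4/h^2\cdot\|q\|_{L^1(I)}$. Balancing $e^{h^\gamma}$ against $1/h^2$ fixes $h$ as a constant depending only on $\gamma$, so that overall $|q'(x^\ast)|w_\gamma(x^\ast)\leq C(\gamma)\,n^4\int_\R|q|w_\gamma$.

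The main obstacle is eliminating the residual polynomial dependence on the degree of $q$ introduced by the Markov--Nikolskii composition, and I expect this to be the delicate step. To get a constant depending only on $\gamma$, one must exploit the heaviness of the tails of $w_\gamma$ more sharply. Concretely, for $\gamma<1$ the Mhaskar--Rakhmanov--Saff number associated to the Freud weight $w_\gamma$ grows like $n^{1/\gamma}\gg n$, reflecting that degree-$n$ weighted polynomials genuinely spread out to scale $n^{1/\gamma}$ rather than a constant. This lets one take $I$ of length $\sim n^{1/\gamma-1}$ and obtain a sharp weighted Markov bound of the form $\|q'w_\gamma\|_\infty\leq Cn^{1-1/\gamma}\|qw_\gamma\|_\infty$ whose degree factor is actually \emph{decreasing} in $n$ (since $1-1/\gamma<0$). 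Combined with the corresponding weighted Nikolskii inequality, this absorbs the spurious degree factor and yields the claim with $M_\gamma$ depending only on $\gamma$. An alternative route would be a compactness-plus-contradiction argument that leverages the non-denseness of polynomials in $L^1(w_\gamma)$ for $\gamma<1$.
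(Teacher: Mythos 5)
The paper does not prove this proposition at all: it is cited verbatim as \cite[Lemma~20]{BunSteinke:impossibilityofapproximation} and used as a black box. So there is no ``paper's own proof'' to compare against; the relevant question is only whether your argument would close the gap, and I do not believe it does.

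Your localization scheme in the first half (attain the sup at $x^\ast\geq 0$, use subadditivity of $t\mapsto t^\gamma$ to get $w_\gamma(x)\geq e^{-h^\gamma}w_\gamma(x^\ast)$ on $[x^\ast-h,x^\ast+h]$, then compose unweighted Markov and Nikolskii on that interval) is correct as far as it goes, but as you note it leaves a factor of $n^4$ in the degree $n=\deg q$, and the entire difficulty of the lemma is precisely eliminating that factor. The fix you propose is where the argument breaks. You invoke Mhaskar--Rakhmanov--Saff theory for the Freud-type weight $e^{-|x|^\gamma}$ and assert sharp weighted Markov and Nikolskii inequalities with $a_n\sim n^{1/\gamma}$, concluding that the degree factor $n^{1-1/\gamma}$ is \emph{decreasing} for $\gamma<1$ and hence can be absorbed. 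But the MRS framework, the infinite--finite range inequality, and the weighted Markov--Bernstein estimates you are reaching for are all developed under the hypothesis that the external field $Q(x)=|x|^\gamma$ is convex on $(0,\infty)$, i.e.\ $\gamma\geq 1$ (Freud/Erd\H{o}s weights). For $\gamma<1$, $Q$ is concave, the potential-theoretic machinery behind the infinite--finite range inequality does not apply, and the inequalities you assert are, to my knowledge, not theorems in the literature. You state them without proof or citation and then treat them as established; that is the missing idea, not a routine gap. Unless you can independently prove a degree-uniform weighted Nikolskii inequality for $\gamma<1$ (which really is the core content of the lemma), the argument does not go through.

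It is worth noting that the obstruction is not cosmetic. The reason a degree-independent constant exists here at all is tied to the indeterminacy of the moment problem and the \emph{non-denseness} of polynomials in $L^1(w_\gamma)$ for $\gamma<1$, which you mention only in a single closing sentence as ``an alternative route.'' That alternative is in fact the substantive one: Bun and Steinke's proof exploits exactly this phenomenon rather than quantitative Markov--Bernstein theory. If you want to give an independent proof, I would develop that second route (e.g.\ exhibit or invoke a nonzero $\phi\in L^\infty$ with $\int q\phi\,w_\gamma=0$ for all polynomials $q$, and turn it into a uniform bound) rather than trying to push the Freud-weight machinery past its domain of validity. In the context of this paper, however, the cleanest course is simply to cite \cite[Lemma~20]{BunSteinke:impossibilityofapproximation}, as the authors do.
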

Given~\Cref{PROP:NOGO:MarkovNikol},
the intuition for the proof of~\Cref{THM:NOGO:bunsteinke} is that, if $q$ is a good approximation of the sign-function on~$\R$, it must have very large derivative near the origin. On the other hand, $\Expec{|q(Y)|}[Y \sim \mathcal{D}]$ is bounded from above (as ${\Expec{|\sign(Y)|}[Y \sim \mathcal{D}] = 1}$), leading to a contradiction.

The key technical tool in our proof of \Cref{THM:NOGO:truncatedBunSteinke} is a  version of \Cref{PROP:NOGO:MarkovNikol} that applies to \emph{one-sided} LSL distributions. That is, a bound on the derivative of a polynomial in terms of its $L_1$-norm on $(-\infty, 1]$ w.r.t. the weight $w_\gamma(x) = C_\gamma \cdot \exp(-|x|^\gamma)$, $\gamma < 1/2$. 
We will only be able to obtain such a bound in a small neighborhood of $0$ (\Cref{PROP:NOGO:derivbound} below), but this will turn out to be sufficient. 
We first need the following lemma, which bounds the derivative near $1$. One can think of it as a one-sided \emph{Bernstein}-Nikolskii-type inequality (whereas \Cref{PROP:NOGO:MarkovNikol} is a \emph{Markov}-Nikolskii-type inequality).
\begin{lemma} \label{LEM:NOGO:bernstein}
Let $q$ be a univariate polynomial, and let $\gamma < 1/2$. Then, there exists a constant $M'_\gamma > 0$, depending only on~$\gamma$, so that
\[
    \sup_{|x - 1| \leq \frac{1}{4}} |q'(x)| \leq M'_\gamma \cdot \int_{0}^{\infty} |q(x)| w_\gamma(x) \, dx.
\]    
\end{lemma}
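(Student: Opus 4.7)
The plan is to apply $\Cref{PROP:NOGO:MarkovNikol}$ with $\gamma$ replaced by $2\gamma$ (which is permitted since $2\gamma < 1$) to the odd polynomial $\hat q(y) \coloneqq y\cdot q(y^2)$ of degree $2\deg(q)+1$. The key advantage of this specific choice is that the change of variable $x = y^2$ introduces no singular Jacobian in the resulting $L^1$ identity: using that $|\hat q(y)| = |y|\cdot|q(y^2)|$ is even in $y$ and substituting $x = y^2$, a direct computation should yield
\[
    \int_\R |\hat q(y)| w_{2\gamma}(y)\, dy = \frac{C_{2\gamma}}{C_\gamma}\int_0^\infty |q(x)| w_\gamma(x)\, dx,
\]
with the factor $|y|\,dy$ cancelling exactly against the $1/(2\sqrt{x})$ from $dy = dx/(2\sqrt{x})$. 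Applying $\Cref{PROP:NOGO:MarkovNikol}$ to $\hat q$ with exponent $2\gamma$ and bounding $w_{2\gamma}(y_0)$ from below by the positive constant $w_{2\gamma}(2)$ on $|y_0| \leq 2$, I will deduce $|\hat q'(y_0)| \leq K_0 \cdot I$ for $|y_0|\leq 2$, where $I \coloneqq \int_0^\infty |q(t)|w_\gamma(t)\, dt$ and $K_0$ depends only on $\gamma$. Since $\hat q'(y) = q(y^2) + 2y^2 q'(y^2)$, setting $x = y_0^2 \in [0,4]$ translates this to
\[
    |q(x) + 2x\, q'(x)| \leq K_0\, I \quad \text{for all } x \in [0,4].
\]

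The main task will then be to separate $q$ from $q'$ in this coupled bound without incurring a degree-dependent constant. For this I plan to exploit the algebraic identity
\[
    q(x) + 2x\, q'(x) = 2\sqrt{x}\cdot\frac{d}{dx}\bigl[\sqrt{x}\, q(x)\bigr].
\]
Setting $g(x) \coloneqq \sqrt{x}\, q(x)$ (so that $g(0) = 0$), the displayed bound becomes $|g'(x)| \leq K_0 I/(2\sqrt{x})$ for $x \in (0, 4]$. Integrating from $0$ to $x_0 \in [0,4]$ will give $|g(x_0)| \leq K_0 I \sqrt{x_0}$, and dividing by $\sqrt{x_0}$ (and using continuity at $0$) yields $|q(x_0)| \leq K_0 I$ for every such $x_0$. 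Combining this bound on $q$ with the bound on $q + 2xq'$ via the triangle inequality produces $|2 x_0\, q'(x_0)| \leq 2 K_0 I$, and hence $|q'(x_0)| \leq K_0 I/x_0 \leq \tfrac{4}{3} K_0 I$ for $x_0 \in [3/4, 5/4]$, establishing the lemma with $M'_\gamma = \tfrac{4}{3}K_0$.

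The main obstacle here is that the most direct substitution $\tilde q(y) = q(y^2)$ (which satisfies $\tilde q'(y) = 2y q'(y^2)$ and thus gives access to $q'$ alone) introduces a $1/\sqrt{x}$ factor in the $L^1$ identity that cannot be bounded by $I$ uniformly in $\deg q$; conversely, the multiplicative $y$ in $\hat q(y) = y\, q(y^2)$ fixes the $L^1$ identity but introduces a residual $q$ term in the derivative. The ODE-style integration through $g = \sqrt{x}\, q$ is exactly tailored to recover $q$ from the combination $q + 2xq'$, and it is this precise algebraic alignment that makes the two-step argument close without any Nikolskii-type inequality for $|q|$ itself.
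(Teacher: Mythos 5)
Your proof is correct. The first half of your argument—the substitution to $\hat q(y) = y\,q(y^2)$, the clean $L^1$ identity via $w_{2\gamma}(\sqrt{x}) = (C_{2\gamma}/C_\gamma)\,w_\gamma(x)$, and the application of \Cref{PROP:NOGO:MarkovNikol} with exponent $2\gamma$—is the same as in the paper (the paper only needs to control $w_{2\gamma}$ below on $|u|\leq\sqrt{5}/2$, you use $|y|\leq 2$, but that is cosmetic). Where you genuinely diverge is in the second half, namely how to decouple $q'$ from the combination $q + 2xq'$ that emerges from $\hat q'(y) = q(y^2) + 2y^2 q'(y^2)$. The paper bounds $\sup_{|x-1|\leq 1/4}|q(x)|$ by a case analysis on a quantity $c_q$ (the ratio of this sup to the weighted $L^1$ norm): if $c_q$ is small one concludes directly, and if $c_q$ is large one applies the mean value theorem on $[3/4,5/4]$ to show the derivative must dominate the sup, closing a self-referential bound. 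Your route instead observes the exact factorization $q + 2xq' = 2\sqrt{x}\,\frac{d}{dx}\bigl[\sqrt{x}\,q(x)\bigr]$, integrates the resulting bound $|g'(x)|\leq K_0 I/(2\sqrt{x})$ from $0$ (using $g(0)=0$, which is why the extra factor of $y$ in $\hat q$ matters a second time), and extracts the pointwise bound $|q(x_0)|\leq K_0 I$ on all of $[0,4]$ directly, after which triangle inequality and $x_0\geq 3/4$ finish. Your version avoids the MVT case split entirely and delivers the supremum bound on $q$ as a byproduct with a cleaner constant; the tradeoff is that you need the bound on $q+2xq'$ on the full interval $[0,4]$ down to the origin to make the integration work, whereas the paper only uses information near $x=1$. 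Both arguments are sound and of comparable length; yours is arguably more transparent about where the extra factor of $y$ earns its keep.
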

\begin{proof}
By a substitution $u^2 = x$, and using the fact that $w_\gamma(x) \coloneqq C_\gamma \exp(-|x|^\gamma)$ is even, we find
\[
    \int_{0}^\infty |q(x)| w_\gamma(x) \, dx = \int_{0}^\infty |q(u^2)| w_{\gamma}(u^2) \cdot 2u \, du = \frac{C_\gamma}{C_{2\gamma}} \cdot \int_{-\infty}^\infty |q(u^2)u| w_{2\gamma}(u) \, du.
\]
Now, since $2 \gamma < 1$, we may apply \Cref{PROP:NOGO:MarkovNikol} to the polynomial $\hat q(u) = q(u^2) u$, yielding an upper bound on its derivative for all $u \in \R$; namely
\begin{align*}
w_{2\gamma}(u) \cdot \left| \frac{\dif}{\dif u} \hat q(u)\right| &\leq M_{2\gamma} \cdot \int_{-\infty}^\infty |q(u^2)u| w_{2\gamma}(u) \, du 
\\ 
&= \frac{M_{2\gamma} C_{2\gamma}}{C_\gamma} \cdot \int_{0}^\infty |q(x)| w_\gamma(x) \, dx.
\end{align*}
As $w_{2\gamma}$ is monotonically decreasing, we have $w_{2\gamma}(x) \geq w_{2\gamma}(5/4)$ for all $x \leq 5/4$, and so we find
\begin{equation} \label{EQ:NOGO:qhatbound}
\sup_{|u^2 - 1| \leq \frac{1}{4}} \left| \frac{\dif}{\dif u} \hat q(u)\right| \leq \hat M_\gamma \cdot \int_{0}^\infty |q(x)| w_\gamma(x) \, dx, \quad \hat M_\gamma \coloneqq \frac{M_{2\gamma} C_{2\gamma}}{C_\gamma \cdot w_{2\gamma}(\frac{5}{4})}.
\end{equation}
We wish to transform this bound on the derivative of $\hat q$ into a bound on the derivative of $q$. Note that
\[
\frac{\dif}{\dif u} \hat q(u) = \frac{\dif}{\dif u}[q(u^2)u] = 2q'(u^2)u^2 + q(u^2).
\]
We can bound this as follows
\[
\sup_{|u^2-1| \leq \frac{1}{4}} |2q'(u^2)u^2 + q(u^2)| \geq \frac{3}{2}\cdot \sup_{|u^2-1| \leq \frac{1}{4}} |q'(u^2)| - \sup_{|u^2-1| \leq \frac{1}{4}}  |q(u^2)|.
\]
Now, switching our notation back to the variable $x=u^2$, and using~\eqref{EQ:NOGO:qhatbound}, we have
\begin{equation} \label{EQ:NOGO:q'badbound}
\frac{3}{2} \cdot \sup_{|x-1| \leq \frac{1}{4}} |q'(x)| \leq \hat M_\gamma \cdot \int_{0}^\infty |q(x)| w_\gamma(x) \, dx + \sup_{|x-1| \leq \frac{1}{4}}  |q(x)|.
\end{equation}
It remains to bound the rightmost term in this inequality. Write $c_q > 0$ for the constant (depending on $q$) satisfying:
\begin{equation} \label{EQ:NOGO:cq}
\sup_{|x-1| \leq \frac{1}{4}}  |q(x)| = \frac{c_q}{w_\gamma(\frac{5}{4})} \cdot \int_{0}^\infty |q(x)| w_\gamma(x) \, dx.
\end{equation}
If $c_q \leq 2$, we are done immediately by~\eqref{EQ:NOGO:q'badbound}, as then
\[
\frac{3}{2} \cdot \sup_{|x-1| \leq \frac{1}{4}} |q'(x)| \leq \left( \hat M_\gamma + \frac{2}{{w_\gamma(\frac{5}{4})}} \right) \cdot \int_{0}^\infty |q(x)| w_\gamma(x) \, dx.
\]
So assume that $c_q > 2$. Note that
\[
\int_{0}^\infty |q(x)| w_\gamma(x) \, dx \geq w_{\gamma}(\tfrac{5}{4}) \cdot \int_{\frac{3}{4}}^{\frac{5}{4}} |q(x)| \, dx \geq  w_{\gamma}(\tfrac{5}{4}) \cdot \inf_{|x-1| \leq \frac{1}{4}} |q(x)|,
\]
and so
\begin{equation} \label{EQ:NOGO:qinf}
\inf_{|x-1| \leq \frac{1}{4}} |q(x)| \leq \frac{1}{w_\gamma(\frac{5}{4})} \cdot \int_{0}^\infty |q(x)| w_\gamma(x) \, dx.
\end{equation}
Applying the mean value theorem to \eqref{EQ:NOGO:cq} and \eqref{EQ:NOGO:qinf} on the interval $[\frac{3}{4}, \frac{5}{4}]$, we find that
\[
\sup_{|x-1| \leq \frac{1}{4}} |q'(x)| \geq 2 \cdot \frac{c_q - 1}{w_\gamma(\frac{5}{4})} \cdot \int_{0}^\infty |q(x)| w_\gamma(x) \, dx = \frac{2(c_q-1)}{c_q} \cdot \sup_{|1-x| \leq \frac{1}{4}}  |q(x)|.
\]
As $c_q > 2$ by assumption, this yields
\[
\sup_{|x-1| \leq \frac{1}{4}} |q'(x)| \geq \sup_{|x-1| \leq \frac{1}{4}}  |q(x)|,
\]
and we may conclude from \eqref{EQ:NOGO:q'badbound} that
\[
\left(\frac{3}{2} - 1\right) \cdot \sup_{|x-1| \leq \frac{1}{4}} |q'(x)| \leq \hat M_\gamma \cdot \int_{0}^\infty |q(x)| w_\gamma(x) \, dx. \qedhere 
\]
Combining the cases $c_q \leq 2$, $c_q > 2$ and setting ${M'_\gamma = \max\{ \hat M_\gamma + \frac{2}{{w_\gamma(5/4)}}, \, 2 \hat M_\gamma \}}$ finishes the proof.
\end{proof}
\begin{proposition} \label{PROP:NOGO:derivbound}
    Let $\mathcal{D}$ be a one-sided LSL distribution on $\R$. Then there exists a constant $C_{\mathcal{D}} > 0$ such that, for any polynomial $q$, we have    
\[
    \sup_{|x| \leq \frac{1}{4}} |q'(x)| \leq C_{\mathcal{D}} \cdot \Expec{|q(X)|}[X \sim \mathcal{D}].
\]    
\end{proposition}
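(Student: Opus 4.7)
The plan is to reduce the claim to Lemma \ref{LEM:NOGO:bernstein} by the reflection $y \mapsto 1-y$, which sends the half-line $[0,\infty)$ (appearing on the right-hand side of that lemma) to the half-line $(-\infty,1]$ (on which the one-sided LSL assumption provides a lower bound on $w$), and sends the small interval $\{|x-1|\le 1/4\}$ (on the left-hand side of the lemma) to $\{|y|\le 1/4\}$ (which is exactly where we want to control $|q'|$). Concretely, given $q$, I would apply Lemma \ref{LEM:NOGO:bernstein} to the polynomial $\tilde q(x) \coloneqq q(1-x)$. Since $\tilde q'(x) = -q'(1-x)$, the substitution $y = 1-x$ gives
\[
\sup_{|x-1|\le 1/4} |\tilde q'(x)| = \sup_{|y|\le 1/4} |q'(y)|,
\]
and the same substitution transforms the right-hand side into $\int_{-\infty}^1 |q(y)|\,w_\gamma(1-y)\,dy$.

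The main step is then to compare $w_\gamma(1-y)$ to the density $w$ of $\mathcal{D}$ pointwise on $(-\infty,1]$. Since $w(y)\ge C\cdot w_\gamma(y)$ on this half-line, it suffices to show that $w_\gamma(1-y) \le K \cdot w_\gamma(y)$ for some absolute constant $K$ (depending only on $\gamma$), i.e.\ that $|y|^\gamma - |1-y|^\gamma$ is bounded from above on $(-\infty,1]$. For $y<0$, writing $t=-y>0$ the exponent becomes $t^\gamma - (1+t)^\gamma \le 0$, since $x\mapsto x^\gamma$ is monotone. For $0\le y\le 1$, $|y|^\gamma \le 1$ and $|1-y|^\gamma\ge 0$, so the exponent is at most $1$. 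Hence $w_\gamma(1-y) \le e\cdot w_\gamma(y) \le (e/C)\cdot w(y)$ for all $y\le 1$, and dropping the part of the $\mathcal{D}$-integral on $(1,\infty)$ (the integrand is nonnegative) yields
\[
\int_{-\infty}^1 |q(y)|\,w_\gamma(1-y)\,dy \;\le\; \frac{e}{C}\cdot \Expec{|q(X)|}[X\sim\mathcal{D}].
\]
Combining with Lemma \ref{LEM:NOGO:bernstein} gives the proposition with $C_\mathcal{D} = e M'_\gamma / C$.

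The only non-routine ingredient is the pointwise weight comparison, but it rests only on elementary monotonicity of $x\mapsto x^\gamma$ (which holds for any $\gamma\in(0,1)$); the stronger assumption $\gamma<1/2$ from Definition \ref{DEF:NOGO:onesided} is not needed at this step but is of course required to invoke Lemma \ref{LEM:NOGO:bernstein}. I do not foresee any serious obstacle beyond carefully tracking the substitution and the half-line restriction.
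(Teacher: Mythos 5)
Your proof is correct and matches the paper's argument essentially step for step: you apply Lemma \ref{LEM:NOGO:bernstein} to the reflected polynomial $q(1-x)$ and then transfer the weight $w_\gamma(1-y)$ to $w(y)$ on $(-\infty,1]$ via the one-sided LSL hypothesis. The only cosmetic difference is that you make the comparison constant explicit ($K=e$, by bounding $|y|^\gamma - |1-y|^\gamma$ directly), whereas the paper argues more loosely from evenness, boundedness on $[0,1]$, and monotonicity; both yield the same conclusion with $C_{\mathcal{D}}$ a constant multiple of $M_\gamma'/C$.
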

\begin{proof}
    We wish to use \Cref{LEM:NOGO:bernstein}, which gives us a bound on the derivative $q'(x)$ of  $q$ when $x \approx 1$.    
    To transport this bound to the origin, we consider the shifted polynomial $\hat q(x) \coloneqq q(1-x)$.
    Let $w$ be the density function of $\mathcal{D}$. Since $\mathcal{D}$ is a one-side LSL-distribution, there exists a constant $C > 0$ and a $\gamma \in (0, 1/2)$ such that $w(x) \geq C \cdot w_\gamma(x)$ for all $x \in (-\infty, 1]$. 
    As $w_\gamma$ is even, bounded from above and below on~$[0, 1]$, and $w_\gamma(x-1) \leq w_\gamma(x)$ for $x \leq 0$, we can find a constant $C' > 0$ such that
    \[
        w(x) \geq C' \cdot w_\gamma(1-x) \quad \forall \, x \in (-\infty, 1].
    \]
    Now, we find that
    \begin{align*}
    \int_{0}^\infty |\hat q(x)| w_\gamma(x) \, dx 
    &= 
    \int_{-\infty}^1 |\hat q(1-x)| w_\gamma(1-x) \, dx
    \\
    &\leq \frac{1}{C'} \cdot \int_{-\infty}^1 |q(x)| w(x) \, dx \leq \frac{1}{C'} \cdot \Expec{|q(X)|}[X \sim \mathcal{D}].
    \end{align*}
    As $\gamma < 1/2$, we may apply \Cref{LEM:NOGO:bernstein} to find that
    \[
    \sup_{|x - 1| \leq \frac{1}{4}} |(\hat q)'(x)| \leq M'_\gamma \cdot \int_{0}^{\infty} |\hat q(x)| w_\gamma(x) \, dx \leq \frac{M'_\gamma}{C'} \cdot \Expec{|q(X)|}[X \sim \mathcal{D}].
    \]
    To finish the proof (with $C_\mathcal{D} = M_\gamma'/C'$), it remains to note that
    \[
    \sup_{|x - 1| \leq \frac{1}{4}} |(\hat q)'(x)| = \sup_{|x| \leq \frac{1}{4}} |q'(x)|. \qedhere
    \]
\end{proof}
We are now ready to prove \Cref{THM:NOGO:truncatedBunSteinke}. Our approach is similar to the proof of \Cref{THM:NOGO:bunsteinke} in~\cite{BunSteinke:impossibilityofapproximation}.
\begin{proof}[Proof of \Cref{THM:NOGO:truncatedBunSteinke}.]
Let $\mathcal{D}$ be a one-side LSL-distribution, and suppose that $q$ is a univariate polynomial satisfying
\[
    \Expec{|\sign(X) - q(X)|}[X \sim \mathcal{D}] \leq 1.
\]
Since $\Expec{|\sign(X)|}[X \sim \mathcal{D}] = 1$, we may use the triangle inequality to find that 
\[
\Expec{|q(X)|}[X \sim \mathcal{D}] \leq 1 + 1 = 2.
\]
By \Cref{PROP:NOGO:derivbound}, this means that, for some constant $C_\mathcal{D} > 1$ depending only on~$\mathcal{D}$,
\[
\sup_{|x| \leq \frac{1}{4}} |q'(x)| \leq 2 \cdot C_\mathcal{D}.
\]
Set $\eta = (10 C_\mathcal{D})^{-1}$. It follows that
\[
    \sup_{|x| \leq \eta} |q(x) - q(0)| \leq (10 C_\mathcal{D})^{-1} \cdot 2 C_\mathcal{D} = \frac{1}{5}.
\]
Assuming first that $q(0) \leq 0$, this means that
\[
    |q(x) - \sign(x)| \geq \frac{4}{5} \quad \forall \, x \in [0, \eta].
\]
Let $w$ be the density function $\mathcal{D}$. As $\mathcal{D}$ is one-sided LSL, and $\eta < 1$, we know there is a constant $C > 0$, independent of $\eta$, such that $w(x) \geq C$ for $x \in [0, \eta]$.
But, this implies that
\[
\Expec{|\sign(X) - q(X)|}[X \sim \mathcal{D}] \geq C \int_{0}^\eta |\sign(x) - q(x)| \, dx \geq \frac{4}{5} \eta \cdot C,
\]
giving a uniform lower bound on $\Expec{|\sign(X) - q(X)|}[X \sim \mathcal{D}]$ for all polynomials $q$. If, on the other hand $q(0) \geq 0$, the same argument works after replacing $[0, \eta]$ by $[-\eta, 0]$.
\end{proof}

\subsection{ \texorpdfstring{Proof details for \Cref{THM:NOGO:degree6}}{Proof details for impossibility result}} \label{SEC:LSLdetails:deg6details}
In this section, we complete the proof of~\Cref{THM:NOGO:degree6} started in \Cref{SEC:TO:NOGO}. Recall that, in light of \Cref{THM:NOGO:truncatedBunSteinke}, it remained to prove the following.
\begin{proposition}
    Let $p(x) = -x(x-1)(x-2)(x-3)(x-4)(x-5)$. Then, $p$ is square-free, $\{ p \geq 0\} \subseteq \R$ is compact, and $p_\#\mathcal{N}(0,1)$ is one-sided log-superlinear.
\end{proposition}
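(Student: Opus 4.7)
The plan is to dispense with the first two claims quickly and focus the main work on the log-superlinear property of the push-forward density. Square-freeness is immediate: $p$ is a product of six distinct simple factors, so shares no factor with its derivative. Compactness of $\{p \geq 0\}$ follows because $p$ has degree six with negative leading coefficient, hence $p(x) \to -\infty$ as $|x| \to \infty$, making $\{p \geq 0\}$ a bounded closed set. (A sign check actually gives $\{p \geq 0\} = [0,1] \cup [2,3] \cup [4,5]$.)

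For the LSL property, I will work from the change-of-variables formula: at every $t$ that is not a critical value of $p$,
\[
    w(t) \;=\; \sum_{x \in p^{-1}(t)} \frac{\phi(x)}{|p'(x)|}, \qquad \phi(x) = \frac{1}{\sqrt{2\pi}}\, e^{-x^2/2}.
\]
I will split $(-\infty, 1]$ into a far tail $(-\infty, -M]$ and a bounded window $[-M, 1]$, and verify the inequality $w(t) \geq C \, w_\gamma(t)$ on each, for some fixed $\gamma \in (1/3, 1/2)$. In the tail, the key observation is that $p(x) = -x^6 + O(x^5)$, so for $t$ sufficiently negative the equation $p(x) = t$ admits two ``outer'' real roots $\alpha(t) < 0 < \beta(t)$ with $|\alpha(t)|, \beta(t) \sim |t|^{1/6}$ and $|p'(\alpha(t))|, |p'(\beta(t))| \sim 6|t|^{5/6}$. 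Retaining only the contributions of these two roots yields
\[
    w(t) \;\gtrsim\; \frac{\exp\!\big(-|t|^{1/3}/2\big)}{|t|^{5/6}} \quad (t \to -\infty),
\]
which dominates $C_\gamma \exp(-|t|^\gamma)$ for any $\gamma > 1/3$ once $|t|$ is large enough.

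For the bounded window $[-M, 1]$, I only need $w$ to be bounded below by a positive constant, since $w_\gamma$ is bounded above on this compact interval. Positivity is immediate from the intermediate value theorem: as $p$ is continuous with $p(x) \to -\infty$ at $\pm\infty$ and $p(1/2) = 945/64 > 1$, every $t \leq 1$ lies in the image of $p$, so $w(t) > 0$ at all points where the formula above applies. Away from the finitely many critical values $v_1, \dots, v_r$ of $p$, the density $w$ is a finite sum of smooth positive functions, hence continuous and bounded below on compacts. Near each $v_i$, a local expansion $p(x) - v_i = c(x - x_0)^k + O((x-x_0)^{k+1})$ with $k \geq 2$ and $c \neq 0$ shows that $w$ has an integrable singularity tending to $+\infty$ on one side of $v_i$ (of order $|t - v_i|^{-1+1/k}$) while remaining continuous and positive on the other; in either case $w$ is bounded below in a neighborhood of $v_i$. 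The main technical care will be this local behavior at critical values; for our concrete degree-six $p$ the critical points can be listed explicitly and each local expansion verified, so no new ideas are needed beyond the bookkeeping. Concatenating the tail and bounded bounds then yields $w \geq C' w_\gamma$ on $(-\infty, 1]$ for any $\gamma \in (1/3, 1/2)$, establishing one-sided log-superlinearity.
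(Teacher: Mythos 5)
Your proof is correct and follows the same overall structure as the paper's: apply the change-of-variables formula for the push-forward density, split $(-\infty, 1]$ into a far tail and a bounded window, and verify the lower bound on each piece. The tail analysis is essentially identical to the paper's, which bounds all preimages of $y \leq -K$ by $2|y|^{1/6}$ in absolute value and plugs into the formula to get $w(y) \geq \exp(-|y|^\gamma)$ for some $\gamma \in (1/3, 1/2)$.

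The one place you do noticeably more work than necessary is the bounded window $[-M,1]$. You invoke local expansions $p(x) - v_i = c(x-x_0)^k + \cdots$ near each critical value, case-split on one-sided singular vs.\ continuous behavior, and defer the verification to ``bookkeeping.'' The paper sidesteps all of this: choose $M$ large enough that every preimage of every $y \in [-K', 1]$ lands in $[-M,M]$; then, since $p^{-1}(y)$ is nonempty, a single term of the sum already gives
\[
    w(y) \;\geq\; \inf_{|x|\leq M} \frac{\phi(x)}{|p'(x)|} \;\geq\; \frac{\min_{|x|\leq M}\phi(x)}{\max_{|x|\leq M}|p'(x)|} \;>\; 0,
\]
with no need to discuss critical values or singularities at all (the infimum is attained where $|p'|$ is \emph{large}, not small). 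Your approach would also go through for this concrete sextic, but it introduces a case analysis that the simpler pointwise lower bound makes irrelevant. One small bonus on your side: you use the correct Gaussian normalization $\phi(x) = (2\pi)^{-1/2} e^{-x^2/2}$, whereas the paper writes the density as $C_2 e^{-|x|^2}$, a harmless but imprecise shorthand that only affects constants, not the exponent $|y|^{1/3}$.
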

\begin{proof}
    The first two properties follow directly. For the third, let $w$ be the density function of~$p_\#\mathcal{N}(0,1)$. Applying the inverse function rule to $w_2(x) = C_2 \cdot \exp(-|x|^2)$, we have
    \[
    w(y) = C_2 \cdot \sum_{x \in p^{-1}(y)}\exp(-|x|^2) \cdot \frac{1}{|p'(x)|} \quad \forall \, y \in \R.
    \]
    We need to show that, for some $C > 0$ and $\gamma \in (0, 1/2)$,
    \begin{equation} \label{EQ:NOGO:pgoal}
        w(y) \geq C \cdot \exp(-|y|^\gamma) \quad \forall \, y \in (-\infty, 1].
    \end{equation}
    Qualitatively, the behavior of $w$ can be described as follows. If ${y \geq \sup_{x \in \R} p(x) \approx 17}$, then $w(y) = 0$. When $y \ll 0$, then $x \in p^{-1}(y)$ implies $|x| \approx |y|^{1/6}$, and so $w(y) \approx \exp(-|y|^{1/3})$. Finally, for any $K>0$, there is a uniform lower bound on $w(y)$ for all $y \in [-K, 1]$. To show~\eqref{EQ:NOGO:pgoal}, it remains to make this description quantitative.
    
    As the leading term of $p$ is $-x^6$, there is a $K > 0$ such that $|p^{-1}(y)| = 2$ for all $y \leq -K$, and
    \[ 
    x \in p^{-1}(y) \implies |x| \leq 2|y|^{1/6} \quad \forall \, y \leq -K.
    \]
    This means that, for a (possibly larger) constant $K' > 0$, and some $\gamma \in (1/3, 1/2)$, we have
    \begin{equation} \label{EQ:NOGO:p6xlarge}
    w(y) \geq 2 \cdot C_2 \cdot \exp(-4|y|^{1/3}) \cdot \frac{1}{|p'(2|y|^{1/6})|} \geq \exp(-|y|^{\gamma}) \quad \forall \, y \leq -K'.
    \end{equation}
    Now, there is an $M > 0 $ so that 
    \[
    x \in p^{-1}(y) \implies |x| \leq M \quad \forall \, y \in [-K', 1].
    \]
    It follows, as $p^{-1}(y)$ is non-empty for $y \leq 1$, that 
    \begin{equation} \label{EQ:NOGO:p6xsmall}
    w(y) \geq \inf_{|x| \leq  M} \left[ C_2 \cdot \exp(-|x|^2) \cdot \frac{1}{|p'(x)|} \right] > 0 \quad \forall \, y \in [-K', 1].
    \end{equation}
    Combining \eqref{EQ:NOGO:p6xlarge} and \eqref{EQ:NOGO:p6xsmall}, and choosing $C > 0$ small enough, yields \eqref{EQ:NOGO:pgoal}.
\end{proof}

\section*{Acknowledgments}
We thank the anonymous reviewers for their valuable comments and suggestions. We thank Arsen Vasilyan for helpful discussions.

\section*{References}
\printbibliography[heading=none]
\clearpage

\appendix
\crefalias{section}{appendix}

\section{Technical details on the proof of testable learning} \label{SEC:APPENDIX}
\subsection{Facts and computations}
\label{SEC:APPENDIX:factsandcomputations}
In this section, we prove several facts that we used in the proofs in \Cref{SEC:PTFPROOF} about the moments and other properties of a Gaussian random variable $Y$ and a random variable $X$ that matches the moments of a Gaussian up to degree $k$ and slack $\eta$.

First, we want to prove \Cref{FACT:PTFPROOF:sumofabscoeffofPij}, which we restate below, about the sum of the absolute values of the coefficients of the polynomials $P_{i,j}$. Recall from \Cref{LEM:PTFPROOF:structure} that the sum of the squares of the coefficients of the polynomials $P_{i,j}$ is $1$.

\hypertarget{FACT:PTFPROOF:sumofabscoeffofPij:RESTATED:APPENDIX}{}\restatefact{FACT:PTFPROOF:sumofabscoeffofPij}
\begin{proof}
    Similar to before, we write the $P_{i,j}$ as follows
    \[P_{i,j}(x) = \sum_{\beta}a_{i,j}^{(\beta)} x^\beta.\]
    The $\beta$ in the sum goes over all multi-indices with $|\beta|$ being the degree of $P_{i,j}$, which is at most $d$.
    Thus, there are at most $n^d$ terms in the sum.
    By the structure theorem (\Cref{LEM:PTFPROOF:structure}), we know that $\|a_{i,j}\|_2 = \sum_{\beta} (a_{i,j}^{(\beta)})^2 = 1$.
    We can now get a bound on the $1$-norm as follows (since the number of coefficients $a_{i,j}^{(\beta)}$ is at most $n^d$)
    \[\sum_\beta |a_{i,j}^{(\beta)}| = \|a_{i,j}\|_1 \leq n^{d/2}\|a_{i,j}\|_2 \leq n^{d/2}. \qedhere\]
\end{proof}

Next, we want to show two facts about the moments of $X$.
On the one hand, we want to show that under very mild assumptions on $\eta$, we can bound the moments of $X$ similar as the moments of a Gaussian $Y$.
We also prove \Cref{FACT:PTFPROOF:expecPijundermomentmatching}, which we also restate below, about the expectation of $P_{i,j}(X)$. For the Gaussian, we get a bound by \Cref{LEM:PTFPROOF:structure} and we generalize this in this fact to the moment-matching distribution.

\begin{fact}\label{FACT:APPENDIX:boundformomentsofD'}
    Let $\eta \leq 1$ and let $\mathcal{D}'$ be a distribution that matches the moments of $\mathcal{N}(0,I_n)$ up to degree $k$ and slack $\eta$.
    Then, we have that for any multi-index $\alpha$ with $|\alpha| \leq k$ we have
    \[|\Expec{X^\alpha}[X \sim \mathcal{D}']| \leq \sqrt{|\alpha|}^{|\alpha|}.\]
\end{fact}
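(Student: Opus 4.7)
The plan is to reduce the statement to a bound on the Gaussian moment $\Expec{Y^\alpha}[Y \sim \cN(0, I_n)]$ via the moment-matching hypothesis, and then estimate the Gaussian moment directly using coordinate independence. Specifically, since $|\alpha| \leq k$, approximate moment matching with slack $\eta$ gives
\[
    \bigl|\Expec{X^\alpha}[X \sim \cD'] - \Expec{Y^\alpha}[Y \sim \cN(0, I_n)]\bigr| \leq \eta \leq 1,
\]
and the standard Gaussian moment is always non-negative, so by the triangle inequality it suffices to show $\Expec{Y^\alpha}[Y \sim \cN(0, I_n)] + 1 \leq \sqrt{|\alpha|}^{|\alpha|}$ whenever $|\alpha| \geq 1$. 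The case $|\alpha| = 0$ is immediate as both expectations equal $1$.

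To bound the Gaussian moment, I would use independence to write $\Expec{Y^\alpha}[Y \sim \cN(0,I_n)] = \prod_i \Expec{Y_i^{\alpha_i}}[Y_i \sim \cN(0,1)]$. If any $\alpha_i$ is odd the product vanishes and the claim follows trivially from $\eta \leq 1 \leq \sqrt{|\alpha|}^{|\alpha|}$. Otherwise all $\alpha_i$ are even, so $\Expec{Y_i^{\alpha_i}} = (\alpha_i - 1)!!$, a product of $\alpha_i/2$ odd numbers each at most $\alpha_i - 1$; hence $\Expec{Y_i^{\alpha_i}} \leq (\alpha_i - 1)^{\alpha_i / 2}$. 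Taking the product over the indices with $\alpha_i \neq 0$ and using $\alpha_i - 1 \leq |\alpha| - 1$ together with the elementary log-monotonicity inequality $\sum_i \alpha_i \log(\alpha_i - 1) \leq |\alpha| \log(|\alpha| - 1)$ yields $\Expec{Y^\alpha}[Y \sim \cN(0,I_n)] \leq (|\alpha| - 1)^{|\alpha|/2}$.

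The final step is to absorb the additive $\eta \leq 1$: for $|\alpha| \geq 2$, I would verify the elementary bound $(|\alpha| - 1)^{|\alpha|/2} + 1 \leq |\alpha|^{|\alpha|/2}$, which follows from the binomial expansion (the gap is at least $(|\alpha|/2) \cdot (|\alpha| - 1)^{|\alpha|/2 - 1} \geq 1$). No part of this argument presents a real obstacle; the only care required is in the boundary cases $|\alpha| \in \{0, 1\}$ and in using the slightly sharper per-coordinate bound $(\alpha_i - 1)^{\alpha_i / 2}$, rather than the coarser $\alpha_i^{\alpha_i / 2}$, to leave room for the additive $\eta$-slack.
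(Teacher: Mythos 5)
Your proof is correct and follows essentially the same approach as the paper: use $\eta\le1$ to reduce to bounding $\Expec{Y^\alpha}[Y\sim\cN(0,I_n)]+1$, then bound the Gaussian moment via elementary double-factorial estimates. The only (minor) difference is that you bound $\prod_i(\alpha_i-1)!!$ directly by $\prod_i(\alpha_i-1)^{\alpha_i/2}\le(|\alpha|-1)^{|\alpha|/2}$, whereas the paper compares to the single univariate moment $\Expec{Z^{|\alpha|}}=(|\alpha|-1)!!$ before applying the same kind of power estimate.
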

\begin{proof}
    We have that
    \begin{align*}
        |\Expec{X^\alpha}[X \sim \mathcal{D}']| &\leq |\Expec{Y^\alpha}[Y \sim \mathcal{N}(0,I_n)]| + 1\\
        &\leq |\Expec{Z^{|\alpha|}}[Z \sim \mathcal{N}(0,1)]| + 1\\
        &\leq 1 + \begin{cases} 0 &\text{if $|\alpha|$ is odd}\\ (|\alpha|-1)!! &\text{otherwise}\end{cases}\\
        &\leq  \sqrt{|\alpha|}^{|\alpha|},
    \end{align*}
    which is what we wanted to proof.
\end{proof}

\hypertarget{FACT:PTFPROOF:expecPijundermomentmatching:RESTATED:APPENDIX}{}\restatefact{FACT:PTFPROOF:expecPijundermomentmatching}
\begin{proof}
    Writing $P_{i,j}$ as in {Fact~\protect\hyperlink{FACT:PTFPROOF:sumofabscoeffofPij:RESTATED:APPENDIX}{\ref*{FACT:PTFPROOF:sumofabscoeffofPij}}}
    \[P_{i,j}(x) = \sum_{\beta}a_{i,j}^{(\beta)} x^\beta,\]
    we can compute $\Expec{P_{i,j}(X)^\ell}[X \sim \mathcal{D}']$ as follows
    \[\Expec{P_{i,j}(X)^\ell}[X \sim \mathcal{D}'] = \sum_{\beta_1, \dots, \beta_\ell} a_{i,j}^{(\beta_1)} \dots a_{i,j}^{(\beta_\ell)} \Expec{X^{\beta_1 + \dots \beta_\ell}}[X \sim \mathcal{D}'].\]
    Now, since $X$ is $\eta$-approximately moment matching, we have that (note that $P_{i,j}$ has degree at most $d$ and thus any term in the sum that degree at most $d \ell \leq k$ and thus the moment matching applies)
    \[\Expec{X^{\beta_1 + \dots \beta_\ell}}[X \sim \mathcal{D}'] = \Expec{Y^{\beta_1 + \dots \beta_\ell}}[Y \sim \mathcal{N}(0,I_n)] \pm \eta.\]
    Combining this with the above, we get
    \[\Expec{P_{i,j}(X)^\ell}[X \sim \mathcal{D}'] \leq \Expec{P_{i,j}(Y)^\ell}[Y \sim \mathcal{N}(0,I_n)] + \eta \left(\sum_{\beta}\left|a_{i,j}^{(\beta)}\right|\right)^\ell.\]
    By {Fact~\protect\hyperlink{FACT:PTFPROOF:sumofabscoeffofPij:RESTATED:APPENDIX}{\ref*{FACT:PTFPROOF:sumofabscoeffofPij}}}, we thus get that
    \[\Expec{P_{i,j}(X)^\ell}[X \sim \mathcal{D}'] \leq \Expec{P_{i,j}(Y)^\ell}[Y \sim \mathcal{N}(0,I_n)] + \eta n^{d\ell/2}\]
    as wanted.
\end{proof}

Finally, we show two facts about the Gaussian distribution.
First, we want to give a bound on the moments of a Gaussian random vector that has not necessarily independent entries and for which we only know that the variances are at most $1$.
If $Z$ were independent, then the bound we show would follow directly by the formulas for the moments of a standard Gaussian.
Second, we show \Cref{FACT:PTFPROOF:empiricaldistributionmomentmatching} that shows how many samples $m$ of $\mathcal{N}(0,I_n)$ we need such that the empirical moments up to degree~$k$ match the actual moments of $\mathcal{N}(0,I_n)$ up to slack $\eta$.

\begin{fact} \label{FACT:APPENDIX:momentsofnonindependentGaussians}
    Let $Z$ be a (multivariate) Gaussian random variable with mean $0$ and variances at most $1$.
    Then, for any multi-index $\beta$, we have that 
    \[\left|\Expec{Z^\beta}\right| \leq |\beta|^{|\beta|/2}.\]
\end{fact}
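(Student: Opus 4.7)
The plan is to invoke Isserlis' theorem (also known as Wick's theorem), which gives a closed-form expression for moments of jointly Gaussian mean-zero random variables as a sum over perfect matchings of products of pairwise covariances. Concretely, write $Z^\beta = Z_{i_1} Z_{i_2} \cdots Z_{i_{|\beta|}}$, where each index $i \in [n]$ is repeated $\beta_i$ times in the list. If $|\beta|$ is odd, then $\Expec{Z^\beta} = 0$ by the symmetry $Z \sim -Z$, so we may assume $|\beta| = 2m$ for some integer $m \geq 0$.

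By Isserlis' theorem,
\[
    \Expec{Z^\beta} \;=\; \sum_{P} \prod_{\{a,b\} \in P} \Expec{Z_{i_a} Z_{i_b}},
\]
where the sum runs over all perfect matchings $P$ of the index set $\{1, 2, \ldots, 2m\}$. The number of such matchings is the double factorial $(2m-1)!! = 1 \cdot 3 \cdot 5 \cdots (2m-1)$. Each individual covariance is bounded using Cauchy-Schwarz and the assumed variance bound:
\[
    \bigl|\Expec{Z_{i_a} Z_{i_b}}\bigr| \;\leq\; \sqrt{\Expec{Z_{i_a}^2}\Expec{Z_{i_b}^2}} \;\leq\; 1.
\]

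Combining these gives $|\Expec{Z^\beta}| \leq (2m-1)!!$. To conclude, I would estimate the double factorial crudely by
\[
    (2m-1)!! \;=\; \prod_{j=1}^{m}(2j-1) \;\leq\; (2m)^m \;=\; |\beta|^{|\beta|/2},
\]
which gives exactly the claimed bound. The case $|\beta| = 0$ is trivial (with the convention $0^0 = 1$), so this completes the argument.

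There is no serious obstacle here; the whole proof reduces to correctly applying Isserlis' theorem and combining two elementary inequalities. The only mild subtlety is making sure the Cauchy-Schwarz step uses that the \emph{variances} (not the full covariance matrix) are bounded by $1$, which is exactly what the hypothesis provides.
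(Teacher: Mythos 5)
Your proof is correct, but it takes a genuinely different route from the paper. You invoke Isserlis' (Wick's) theorem to write the multivariate moment as a sum over the $(|\beta|-1)!!$ perfect matchings, bound each pairwise covariance by $1$ via Cauchy--Schwarz, and then estimate the double factorial by $(2m-1)!! \le (2m)^m$. The paper instead proves by induction on $\ell$, via H\"older's inequality, the generic bound $\bigl(\Expec{\prod_{j=1}^\ell W_j}\bigr)^\ell \le \prod_{j=1}^\ell \Expec{W_j^\ell}$ for arbitrary random variables; it then specializes $W_1,\dots,W_\ell$ to the coordinates $Z_i$ (with multiplicities $\beta_i$) and uses the univariate Gaussian moment bound $\Expec{Z_i^\ell}\le \ell^{\ell/2}$. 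Both arguments end with essentially the same double-factorial estimate, but they arrive at it differently: your approach localizes the Gaussian structure in Isserlis' theorem and treats the dependence combinatorially, while the paper's H\"older step is distribution-free and only uses Gaussianity in the univariate tail step. The paper's route is arguably more modular --- the H\"older inequality would let one plug in moment bounds for any (not necessarily Gaussian) coordinates --- whereas your proof is shorter and more self-contained for the Gaussian case at hand. Either way the conclusion and the constant are identical.
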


\begin{proof}
    We want to show that for any random variables $W_1, \dots, W_\ell$ we have
    \[\left(\Expec{\prod_{j=1}^\ell W_j}\right)^\ell \leq \prod_{j=1}^\ell \Expec{W_j^\ell}.\]
    To prove this, we use induction and Hölder's inequality.
    The case $\ell=1$ follows directly and for $\ell \geq 2$ we can compute using Hölder's inequality, since $\frac{1}{\frac{\ell}{\ell-1}} + \frac{1}{\ell} = 1$,
    \[\Expec{\prod_{j=1}^\ell W_j} \leq \left(\Expec{\prod_{j=1}^{\ell-1} W_j^\frac{\ell}{\ell-1}}\right)^\frac{\ell-1}{\ell} \cdot \left(\Expec{W_\ell^\ell}\right)^\frac{1}{\ell}.\]
    Thus, we get, using the induction hypothesis,
    \[\left(\Expec{\prod_{j=1}^\ell W_j}\right)^\ell \leq \left(\Expec{\prod_{j=1}^{\ell-1} W_j^\frac{\ell}{\ell-1}}\right)^{\ell-1} \cdot \Expec{W_\ell^\ell} \leq \prod_{j=1}^\ell \Expec{W_j^\ell}.\]
    We now use this result with $\ell = |\beta|$ and the $W_1, \dots, W_\ell$ being the $Z_j$, where every entry $Z_j$ occurs $\beta_j$ times.
    Then we get
    \[\left|\Expec{Z^\beta}\right| \leq \left(\prod_{j}\left(\Expec{Z_j^\ell}\right)^{\beta_j}\right)^\frac{1}{\ell}.\]
    Since $\Expec{Z_j} = 0$ and $\Var{Z_j} \leq 1$, we have that
    \[\Expec{Z_j^\ell} \leq \ell^{\ell/2}\]
    and thus we get
    \[\left|\Expec{Z^\beta}\right| \leq \ell^{\ell/2} = |\beta|^{|\beta|/2}. \qedhere\]
\end{proof}

\hypertarget{FACT:PTFPROOF:empiricaldistributionmomentmatching:RESTATED:APPENDIX}{}\restatefact{FACT:PTFPROOF:empiricaldistributionmomentmatching}

\begin{proof}
    Given $m$ samples from $\mathcal{N}(0,I_n)$, we want to compute the probability that for some $\alpha$ with $|\alpha| \leq k$ we have that the empirical moment is close to the moment with high probability.
    We can compute using Chebyshev's inequality, where $c_\alpha$ is the $\alpha$-moment of $\mathcal{N}(0,I_n)$ and $\hat{c}_\alpha$ is the empirical moment, that 
    \[\Prob{|\hat{c}_\alpha - c_\alpha| > \eta} \leq \frac{\Var{\hat{c}_\alpha}}{\eta^2}=\frac{1}{m}\frac{\Var{Y^\alpha}}{\eta^2} \leq \frac{1}{m}\frac{\Var{Y_1^{|\alpha|}}}{\eta^2} \leq \frac{1}{m}\frac{(2|\alpha|)^{|\alpha|}}{\eta^2} \leq \frac{1}{m}\frac{(2k)^k}{\eta^2}.\]
    To be able to use a union-bound, we need this to be smaller than $O(n^{-k})$, i.e. we need 
    \[m \geq \Omega\left(\frac{(2kn)^k}{\eta^2}\right). \qedhere\]
\end{proof}

\subsection{Remaining proofs}
\label{SEC:APPENDIX:proofs}
In this section, we prove several lemmas that follow closely \cite{Kane:foolingforkGaussians}.
Some of these lemmas are also proven in \cite{Kane:foolingforkGaussians}, but we include them here for completeness.
We first prove Lemmas \ref{LEM:PTFPROOF:taylorerrormomentmatching} and \ref{LEM:PTFPROOF:Taylorboundonpartialderivatives}, which we need in order to show that the expectation of $\tilde{f}$ is close under the moment-matching distribution and the Gaussian distribution.

\hypertarget{LEM:PTFPROOF:taylorerrormomentmatching:RESTATED:APPENDIX}{}\restatelemma{LEM:PTFPROOF:taylorerrormomentmatching}
As mentioned earlier, this proof follows closely \cite[Proof of Proposition 8]{Kane:foolingforkGaussians}.
In particular, we also need the following lemma from \cite{Kane:foolingforkGaussians}.

\begin{lemma}[{\cite[Proposition 6]{Kane:foolingforkGaussians}}]\label{LEM:APPENDIX:taylorerrornodstribution}
    We have that, for $x \in \R^n$,
    \[|T(P(x)) - \tilde{F}(P(x))| \leq \prod_{i=1}^d \left(1 + \frac{C_i^{m_i} \|P_i(x)\|_2^{m_i}}{m_i!}\right) - 1.\]
\end{lemma}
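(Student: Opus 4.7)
The plan is to decompose the multivariate Taylor error into a signed sum indexed by non-empty subsets $S \subseteq [d]$ of batches, where each term isolates the error contribution from simultaneously truncating the Taylor expansion in exactly the batches in $S$. Since partial derivatives in distinct batches commute, the batch-wise Taylor-truncation operators $T^{(i)}$ commute pairwise, so the full Taylor approximation satisfies $T = T^{(1)} \circ \cdots \circ T^{(d)}$ applied to $\tilde F$. Letting $R^{(i)} := I - T^{(i)}$ denote the corresponding remainder operators (which also commute), the algebraic identity $I - \prod_i(I - R^{(i)}) = \sum_{\emptyset \neq S \subseteq [d]}(-1)^{|S|+1}\prod_{i \in S} R^{(i)}$, combined with the triangle inequality, yields
\[
|\tilde F(y) - T(y)| \;\leq\; \sum_{\emptyset \neq S \subseteq [d]} \Bigl|\bigl(\textstyle\prod_{i \in S} R^{(i)}\bigr)[\tilde F](y)\Bigr|.
\]
Once each summand is shown to be at most $\prod_{i \in S} C_i^{m_i}\|y_i\|_2^{m_i}/m_i!$, summing over non-empty $S$ reconstructs exactly $\prod_{i=1}^d\bigl(1 + C_i^{m_i}\|y_i\|_2^{m_i}/m_i!\bigr) - 1$, and substituting $y_i = P_i(x)$ gives the stated bound.

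To bound a single summand, I would apply the integral form of Taylor's theorem in each batch $i \in S$ along the ray $\{t y_i : t \in [0,1]\}$, so that the batch-$i$ step reduces to a one-dimensional Taylor remainder weighted by $\|y_i\|_2^{m_i}/(m_i-1)!$ and involving the $m_i$-th directional derivative of $\tilde F$ along $v_i := y_i/\|y_i\|_2$. Iterating (justified by commutativity) produces a multiple integral whose integrand is the mixed directional derivative $\prod_{i \in S} D_{v_i}^{m_i} \tilde F$, evaluated at a point obtained by rescaling only the batch-$i$ coordinates for $i \in S$. Because $\tilde F = F * \rho$ and $\rho = \prod_i \rho_{C_i}(y_i)$ factors across batches, this mixed derivative passes through the convolution onto $\rho$ and produces the factor $\prod_{i \in S} D_{v_i}^{m_i} \rho_{C_i}(y_i) \cdot \prod_{i \notin S} \rho_{C_i}(y_i)$ inside the integrand. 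Using $\|F\|_\infty \leq 1$ (as $F$ is the bounded PTF), property (3) of the mollifier (namely $\int |D_v^\ell \rho_C| \leq C^\ell$ for unit $v$) for each $i \in S$, property (2) ($\int \rho_{C_i} = 1$) for each $i \notin S$, and the identity $\int_0^1 (1-s)^{m_i-1} ds = 1/m_i$, the integrand's supremum is at most $\prod_{i \in S} C_i^{m_i}$, and the entire expression is at most $\prod_{i \in S} C_i^{m_i}\|y_i\|_2^{m_i}/m_i!$.

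The main obstacle is the bookkeeping of conventions. First, one has to reconcile the paper's convention ``Taylor of degree $m_i$ in batch $i$'' (i.e., including $|\alpha_i| \leq m_i$) with the integral-remainder form so that the exponents and factorials come out exactly as $C_i^{m_i}\|y_i\|^{m_i}/m_i!$ rather than $C_i^{m_i+1}\|y_i\|^{m_i+1}/(m_i+1)!$---which amounts to either interpreting the truncation as excluding degree $m_i$, or absorbing the degree-$m_i$ coefficient term into the remainder with a matching bound derived from $|\partial^{\alpha_i}\tilde F(0)| \leq C_i^{|\alpha_i|}$. Second, the ``directional'' Taylor step along a possibly-high-dimensional batch $y_i \in \R^{n_i}$ must be carried out along the one-dimensional segment $\{t y_i\}_{t\in[0,1]}$; this is a routine reduction, but the notation requires care. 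Once these conventions are pinned down, each ingredient---the operator expansion, the iterated integral form, and the $L^1$ bounds on $\rho_C$-derivatives from property (3)---is essentially mechanical.
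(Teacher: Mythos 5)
Your proposal is correct, and it reconstructs essentially the argument behind this lemma: note that the paper does not prove it but imports it verbatim as Proposition~6 of \cite{Kane:foolingforkGaussians}, whose proof is exactly your combination of the per-batch truncation operators, the inclusion--exclusion identity over non-empty $S \subseteq [d]$, the integral form of the Taylor remainder along each batch, and the bound $\|F\|_\infty \le 1$ together with $\int |D_{v}^{m_i}\rho_{C_i}| \le C_i^{m_i}$ and $\int \rho_{C_j} = 1$. The off-by-one convention you flag (degree $< m_i$ versus $\le m_i$ truncation, i.e.\ $C_i^{m_i}\|y_i\|^{m_i}/m_i!$ versus $C_i^{m_i+1}\|y_i\|^{m_i+1}/(m_i+1)!$) is a genuine wrinkle in how the paper writes $T$, but it is harmless for every subsequent use of the lemma, so your treatment is fine.
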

The $C_i$ and $m_i$ are again as in \cite{Kane:foolingforkGaussians}, i.e. we choose $C_i = \Theta_d\left(\varepsilon^{-7^id}\right)$ and $m_i = \Theta_d\left(\varepsilon^{-3 \cdot 7^i d}\right)$.
For the proof, we furthermore need the bound on the expectation of $|P_{i,j}(X)|$ from \Cref{FACT:PTFPROOF:expecPijundermomentmatching}.

\begin{proof}[Proof of {Lemma~\protect\hyperlink{LEM:PTFPROOF:taylorerrormomentmatching:RESTATED:APPENDIX}{\ref*{LEM:PTFPROOF:taylorerrormomentmatching}}}]
    We have by \Cref{LEM:APPENDIX:taylorerrornodstribution} that 
    \[|T(P(x)) - \tilde{F}(P(x))| \leq \prod_{i=1}^d \left(1 + \frac{C_i^{m_i} \|P_i(x)\|_2^{m_i}}{m_i!}\right) - 1.\]
    As in \cite[Proof of Proposition 8]{Kane:foolingforkGaussians}, the RHS of this inequality is the sum over all non-empty subsets $S \subseteq \{1,2,\dots,d\}$ of the following term
    \[\prod_{i \in S} \left(\frac{C_i^{m_i}\|P_i(x)\|_2^{m_i}}{m_i!}\right).\]
    Continuing exactly as in \cite[Proof of Proposition 8]{Kane:foolingforkGaussians}, we want to show that any of these $2^d-1$ terms is at most $O(\varepsilon/2^d)$.
    By the inequality of arithmetic and geometric means, we get that this term is at most
    \[\frac{1}{|S|}\sum_{i \in S} \left(\frac{C_i^{m_i}\|P_i(x)\|_2^{m_i}}{m_i!}\right)^{|S|}.\]
    Thus, it remains to bound $\Expec{\|P_i(X)\|_2^{\ell_i}}[X \sim \mathcal{D}']$, where $\ell_i = m_i |S|$.
    We do this as follows
    \begin{align*}
        \Expec{\|P_i(X)\|_2^{\ell_i}}[X \sim \mathcal{D}'] &\leq \sqrt{\Expec{\|P_i(X)\|_2^{2\ell_i}}[X \sim \mathcal{D}']}\\
        &= \sqrt{\Expec{\left(\sum_{j=1}^{n_i} P_{i,j}(X)^2\right)^{\ell_i}}[X \sim \mathcal{D}']},
    \end{align*}
    where we used the Cauchy-Schwarz inequality in the first step.
    Continuing further by applying Jensen's inequality to the (convex) function $g(a) = a^{\ell_i}$, we get 
    \[\left(\sum_{j=1}^{n_i}P_{i,j}(X)^2\right)^{\ell_i} = n_i^{\ell_i} \left(\frac{1}{n_i}\sum_{j=1}^{n_i}P_{i,j}(X)^2\right)^{\ell_i} \leq n_i^{\ell_i} \frac{1}{n_i}\sum_{j=1}^{n_i}P_{i,j}(X)^{2\ell_i}.\]
    Combining these two, we get
    \[\Expec{\|P_i(X)\|_2^{\ell_i}}[X \sim \mathcal{D}'] \leq \sqrt{n_i^{\ell_i} \frac{1}{n_i}\sum_{j=1}^{n_i}\Expec{P_{i,j}(X)^{2\ell_i}}[X \sim \mathcal{D}']}.\]
    The next step is now different to \cite{Kane:foolingforkGaussians} since we only have an approximately moment-matching distribution.
    By {Fact~\protect\hyperlink{FACT:PTFPROOF:expecPijundermomentmatching:RESTATED:APPENDIX}{\ref*{FACT:PTFPROOF:expecPijundermomentmatching}}}, we get that
    \[\Expec{P_{i,j}(X)^{2\ell_i}}[X \sim \mathcal{D}'] \leq \Expec{P_{i,j}(Y)^{2\ell_i}}[Y \sim \mathcal{N}(0,I_n)] + \eta n^{d\ell_i}\]
    assuming that $k \geq 2d^2m_i$ (this ensures that $2\ell_i = 2m_i|S| \leq 2m_id$ is at most $k/d$ as required in {Fact~\protect\hyperlink{FACT:PTFPROOF:expecPijundermomentmatching:RESTATED:APPENDIX}{\ref*{FACT:PTFPROOF:expecPijundermomentmatching}}}).
    This condition is slightly different to \cite{Kane:foolingforkGaussians}, but it does not change the (asymptotic) definition of $k$ as in \eqref{EQ:PTFPROOF:definitionofk}.
    By \Cref{LEM:PTFPROOF:structure}, we now have that 
    \[\Expec{P_{i,j}(Y)^{2\ell_i}}[Y \sim \mathcal{N}(0,I_n)] \leq O_d\left(\sqrt{2\ell_i}\right)^{2\ell_i} = O_d\left(\sqrt{\ell_i}\right)^{2\ell_i}\]
    for any $j$.
    Combining this with the above, we can conclude that 
    \begin{align*}
        \Expec{\|P_i(X)\|_2^{\ell_i}}[X \sim \mathcal{D}'] &\leq \sqrt{n_i}^{\ell_i} O_d\left(\sqrt{\ell_i}\right)^{\ell_i} + \sqrt{n_i^{\ell_i}\eta n^{d\ell_i}}\\
        &\leq O_d\left(\sqrt{n_i m_i|S|}\right)^{m_i|S|},
    \end{align*}
    using that $\eta \leq \frac{1}{n^{d\ell_i}}$.
    This is true since $\ell_i = m_i |S| \leq m_id \leq k$ and thus $\eta \leq \frac{1}{n^{dk}}$ implies $\eta \leq \frac{1}{n^{d\ell_i}}$.
    The rest of this proof is again the same as in \cite[Proof of Proposition 8]{Kane:foolingforkGaussians}.
    We can compute
    \begin{align*}
        &\Expec{|T(P(X)) - \tilde{F}(P(X))|}[X \sim \mathcal{D}']\\ &\leq 2^d \max_{\emptyset \neq S \subseteq [d]}\left\{\Expec{\frac{1}{|S|}\sum_{i \in S} \left(\frac{C_i^{m_i}\|P_i(X)\|_2^{m_i}}{m_i!}\right)^{|S|}}[X \sim \mathcal{D}']\right\}\\
        &\leq 2^d \max_{\emptyset \neq S \subseteq [d]}\left\{\max_{i \in S} \frac{C_i^{m_i|S|}O_d\left(\sqrt{n_i m_i|S|}\right)^{m_i|S|}}{m_i^{m_i|S|}}\right\}\\
        &\leq 2^d \max_{i \in [d], s \in [d]} \left\{ O_d\left(\frac{C_i\sqrt{n_i}}{\sqrt{m_i}}\right)^{m_is}\right\}.
    \end{align*}
    By choice of $m_i$, exactly as in \cite{Kane:foolingforkGaussians}, we can conclude that 
    \[\Expec{|T(P(X)) - \tilde{f}(P(X))|}[X \sim \mathcal{D}'] \leq 2^d \exp(-\min_{i \in [d]} m_i) \leq O(\varepsilon),\]
    which completes the proof.
\end{proof}

\hypertarget{LEM:PTFPROOF:Taylorboundonpartialderivatives:RESTATED:APPENDIX}{}\restatelemma{LEM:PTFPROOF:Taylorboundonpartialderivatives}

\begin{proof}
    As mentioned earlier, the ideas of the following proof are based on \cite[Proof of Lemmas 5 and 7]{Kane:foolingforkGaussians}.
    We have that
    \begin{align*}
        |\partial^\alpha (F \ast \rho)| &= |(F \ast \partial^\alpha\rho)(0)| &\text{(property of convolution)}\\
        &\leq \|F\|_{L^\infty} \|\partial^\alpha \rho\|_{L^1}\\
        &\leq \|\partial^\alpha \rho\|_{L^1} &\text{($F$ maps to $[-1,1]$)}.
    \end{align*}
    Thus, it remains to bound $\|\partial^\alpha \rho\|_{L^1}$.
    Using the product structure of $\rho$, we have that
    \begin{equation}\label{EQ:APPENDIX:partialderivaterhodecomposition}
        \|\partial^\alpha \rho\|_{L^1} = \prod_{i=1}^n \|\partial^{\alpha_i} \rho_{C_i}\|_{L^1}.
    \end{equation}

    First, we compute a bound for $\|\partial^{\alpha_i} \rho_2\|_{L^1}$.
    We generalize this afterwards to $\rho_{C_i}$.
    Recall that
    \[B(\xi) = \begin{cases} 1 - \|\xi\|_2^2 & \text{if } \|\xi\|_2 \leq 1 \\ 0 & \text{else} \end{cases}\]
    and
    \[\rho_2(x) = \frac{|\hat{B}(x)|^2}{\|B\|_{L^2}^2},\]
    where $\hat{B}$ is the Fourier transform of $B$.
    We first note that
    \[\hat{B}(x) \cdot \overline{\hat{B}(x)} = |\hat{B}(x)|^2.\]
    Thus, we can apply the product rule to get the following formula for the derivative
    \begin{align*}
        \partial^{\alpha_i} \rho_2 &= \frac{1}{\|B\|_{L^2}^2} \sum_{\beta \leq \alpha_i} \binom{\alpha_i}{\beta}(\partial^\beta\hat{B})(\partial^{\alpha_i-\beta}\overline{\hat{B}})\\
        &= \frac{1}{\|B\|_{L^2}^2} \sum_{\beta \leq \alpha_i}\binom{\alpha_i}{\beta}(\partial^\beta\hat{B})(\overline{\partial^{\alpha_i-\beta}\hat{B}}),
    \end{align*}
    where we used that the conjugate of the derivative is the derivative of the conjugate.
    We thus get the following by triangle inequality 
    \[\|(\partial^{\alpha_i} \rho_2) (x)\|_{L^1} \leq \frac{1}{\|B\|_{L^2}^2} \sum_{\beta \leq \alpha_i}\binom{\alpha_i}{\beta}\left|\left|(\partial^\beta\hat{B})(x)\overline{(\partial^{\alpha_i-\beta}\hat{B})(x)}\right|\right|_{L^1}.\]
    We now want to analyze $(\partial^\beta\hat{B})(x)$.
    We have
    \[(\partial^\beta\hat{B})(x) = \mathcal{F}(x^\beta B(x)),\]
    where $\mathcal{F}(\cdot) = \hat{\cdot}$ stands for the Fourier transform and we used a fact about the derivative of the Fourier transform.
    Thus, we get furthermore that
    \begin{align*}
    \|(\partial^{\alpha_i} \rho_2) (x)\|_{L^1} &\leq \frac{1}{\|B\|_{L^2}^2} \sum_{\beta \leq \alpha_i}\binom{\alpha_i}{\beta}\left|\left|\mathcal{F}(x^\beta B(x))\overline{\mathcal{F}(x^{\alpha_i-\beta} B(x))}\right|\right|_{L^1}\\
    &= \frac{1}{\|B\|_{L^2}^2} \sum_{\beta \leq \alpha_i} \binom{\alpha_i}{\beta}\left|\left|x^\beta B(x)\overline{x^{\alpha_i-\beta} B(x)}\right|\right|_{L^1}\\
    &\leq \frac{1}{\|B\|_{L^2}^2} \sum_{\beta \leq \alpha_i} \binom{\alpha_i}{\beta} \left|\left| B(x)\overline{ B(x)}\right|\right|_{L^1}\\
    &= \sum_{\beta \leq \alpha_i} \binom{\alpha_i}{\beta}\\
    &= 2^{|\alpha_i|}.
    \end{align*}
    The second step uses the Parseval identity; the third step uses that $B(x) \neq 0$ implies that ${\|x\|_\infty \leq \|x\|_2 \leq 1}$; the fourth step uses that $\|B(x)\overline{B(x)}\|_{L^1} = \|B(x)\|_{L^2}^2$.
    
    For an arbitrary $C_i$, we can bound $\|\partial^{\alpha_i} \rho_{C_i}\|_{L^1}$ as follows.
    Recall that
    \[\rho_{C_i}(x) = \left(\frac{C_i}{2}\right)^n \rho_2\left(\frac{C_i}{2}x\right).\]
    We can compute, using the chain rule that 
    \[(\partial^\alpha \rho_{C_i})(x) = \left(\frac{C_i}{2}\right)^n (\partial^\alpha\rho_2)\left(\frac{C_i}{2}x\right)\left(\frac{C_i}{2}\right)^{|\alpha_i|}.\]
    To illustrate how the chain rule implies this, we can compute
    \begin{align*}
    \left(\frac{\partial}{\partial x_1}\rho_{C_i}\right)(x) &= \left(\frac{C_i}{2}\right)^n \sum_{j=1}^n \left(\frac{\partial}{\partial x_j}\rho_2\right)\left(\frac{C_i}{2}x\right)\left(\frac{\partial}{\partial x_1}\left[x \mapsto \frac{C_i}{2}x_j\right]\right)(x)\\
    &= \left(\frac{C_i}{2}\right)^n \sum_{j=1}^n \left(\frac{\partial}{\partial x_j}\rho_2\right)\left(\frac{C_i}{2}x\right)\cdot\begin{cases} \frac{C_i}{2} & \text{if } j=1 \\ 0 & \text{if } j \neq 1\end{cases}\\
    &= \left(\frac{C_i}{2}\right)^{n+1} \left(\frac{\partial}{\partial x_1}\rho_2\right)\left(\frac{C_i}{2}x\right).
    \end{align*}
    Doing this iteratively, we get the formula above.
    Now, we want to compute the $L^1$-norm of that function.
    We get
    \begin{align*}
    \|\partial^{\alpha_i} \rho_{C_i}\|_1 &= \int_{\R^n} (\partial^{\alpha_i}\rho_{C_i})(x) \dif x\\
    &= \left(\frac{C_i}{2}\right)^{n+|\alpha_i|}\int_{\R^n} (\partial^{\alpha_i}\rho_2)\left(\frac{C_i}{2}x\right) \dif x\\
    &= \left(\frac{C_i}{2}\right)^{|\alpha_i|} \int_{\R^n} (\partial^{\alpha_i} \rho_2)(y) \dif y &\left(y=\frac{C_i}{2}x\right)\\
    &= \left(\frac{C_i}{2}\right)^{|\alpha_i|}\|\partial^{\alpha_i} \rho_2\|_1.
    \end{align*}
    Using the bound from above on $\|\partial^{\alpha_i} \rho_2\|_1$, we thus get 
    \begin{equation}\label{EQ:APPENDIX:partialderivativerhocformula}
        \|\partial^{\alpha_i} \rho_{C_i}\|_1  \leq C_i^{|\alpha_i|}.
    \end{equation}
    Combining \Cref{EQ:APPENDIX:partialderivaterhodecomposition} and \Cref{EQ:APPENDIX:partialderivativerhocformula} completes the proof.
\end{proof}

Next, we prove how we can generalize from
\[\left|\Expec{f(Y)}[Y \sim \mathcal{N}(0,I_n)] - \Expec{\tilde{f}(X)}[X \sim \mathcal{D}']\right| \leq O(\varepsilon)\]
to the fooling condition we need
\[\left|\Expec{f(Y)}[Y \sim \mathcal{N}(0,I_n)] - \Expec{f(X)}[X \sim \mathcal{D}']\right| \leq O_d(\varepsilon).\]
This proof is based on \cite[Proof of Proposition 2]{Kane:foolingforkGaussians}.
It turns out that this part of \cite{Kane:foolingforkGaussians} does not need that $X$ is $k$-Gaussian but works on the following, weaker assumptions.

\hypertarget{LEM:PTFPROOF:conclusionfooling:RESTATED:APPENDIX}{}\restatelemma{LEM:PTFPROOF:conclusionfooling}

In the proof of this lemma, we use the following theorem about anti-concentration of a polynomial of a Gaussian random variable.
Importantly, it is enough to have this result for the Gaussian random variable $Y$ and we do not need it for the moment-matching distribution.
\begin{theorem}[{Carbery and Wright, see \cite[Theorem 13]{Kane:foolingforkGaussians} or \cite[Theorem 8]{CarberyWright:distributionalinequalitiesforpolynomials}}]\label{THM:APPENDIX:CarberyWright}
    Let $p$ be a degree $d$ polynomial.
    Suppose that $\Expec{p(Y)^2}[Y \sim \mathcal{N}(0,I_n)] = 1$.
    Then, for $\varepsilon > 0$,
    \[\Prob{|p(Y)| < \varepsilon}[Y \sim \mathcal{N}(0,I_n)] \leq O_d(d\varepsilon^{1/d}).\]
\end{theorem}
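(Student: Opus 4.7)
The plan is to prove Carbery--Wright by reducing the multivariate case to a univariate anti-concentration estimate, using Gaussian hypercontractivity as the central tool relating different $L^q$-norms of polynomials.

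\textbf{Step 1 (Univariate case).} Consider first $n=1$ and a polynomial $q$ of degree exactly $d$ with leading coefficient $a$. Factoring $q(y) = a\prod_{i=1}^d(y-r_i)$, we have $|q(y)| \geq |a|\prod_i|y-r_i|$, so $\{y : |q(y)| < \varepsilon\}$ is contained in a union of $d$ intervals each of length at most $2(\varepsilon/|a|)^{1/d}$, centered at the roots. Since the standard Gaussian density is bounded by $1/\sqrt{2\pi}$, this already yields
\[
\Prob{|q(Y)| < \varepsilon}[Y \sim \cN(0,1)] \leq C \cdot d \cdot (\varepsilon/|a|)^{1/d}.
\]
It remains to bound $|a|$ from below in terms of $\Expec{q(Y)^2}^{1/2}$. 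For this I invoke Nelson's Gaussian hypercontractive inequality: for any $r \geq 2$ and any polynomial $q$ of degree $d$,
\[
\Expec{|q(Y)|^r}[Y \sim \cN(0,1)]^{1/r} \leq (r-1)^{d/2}\cdot\Expec{q(Y)^2}^{1/2}.
\]
Expanding $q$ in the Hermite basis and using this inequality on every Hermite-coefficient contribution yields $|a| \geq c_d\cdot\Expec{q(Y)^2}^{1/2}$.

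\textbf{Step 2 (Slicing to reduce to the univariate estimate).} For general $n$, write $Y = (Y_1,\tilde Y)$ with $\tilde Y = (Y_2,\ldots,Y_n)$. For each fixed $\tilde y$, the restriction $y_1 \mapsto p(y_1,\tilde y)$ is a univariate polynomial of degree $\leq d$, so Step 1 applied conditionally and then averaged over $\tilde Y$ gives
\[
\Prob{|p(Y)| < \varepsilon}[Y \sim \cN(0,I_n)] \leq C_d\cdot\varepsilon^{1/d}\cdot\Expec{g(\tilde Y)^{-1/(2d)}}[\tilde Y], \quad g(\tilde y) \coloneqq \Expec{p(Y_1,\tilde y)^2}[Y_1 \sim \cN(0,1)].
\]
The crucial observation is that $g$ is itself a polynomial in $\tilde y$ of degree at most $2d$, normalized so that $\Expec{g(\tilde Y)}[\tilde Y \sim \cN(0,I_{n-1})] = \Expec{p(Y)^2}[Y \sim \cN(0,I_n)] = 1$. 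So the task is reduced to an anti-concentration-type estimate for $g$, namely a bound on a negative moment of a normalized polynomial.

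\textbf{Step 3 (Negative-moment estimate via hypercontractivity).} To control $\Expec{g(\tilde Y)^{-1/(2d)}}$ I would prove a general lemma: for any polynomial $r$ of degree $D$ with $\Expec{r^2}=1$ and any $s < 1/D$, one has $\Expec{|r|^{-s}} \leq K(D,s) < \infty$. The key identity is the Hölder interpolation
\[
1 = \Expec{r^2} \leq \Expec{|r|^{s}}^{\theta}\cdot\Expec{|r|^{q}}^{1-\theta},\qquad \theta = \tfrac{q-2}{q-s},
\]
which, combined with the hypercontractive upper bound $\Expec{|r|^q}^{1/q} \leq (q-1)^{D/2}$, forces $\Expec{|r|^s}\geq c(D,s)$ from below. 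A further interpolation (or a direct integration of the distribution function of $|r|$, upper-tailed by hypercontractivity and lower-tailed by the positive-moment bound just obtained) converts this into the desired bound on $\Expec{|r|^{-s}}$. Markov's inequality $\Prob{|p|<\varepsilon} \leq \varepsilon^s\,\Expec{|p|^{-s}}$, applied with $D = d$ in Step 2 and optimized over $s$, then delivers the theorem.

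\textbf{Main obstacle.} The delicate point is reaching the sharp exponent $1/d$ rather than $1/d-\delta$. One genuinely has $\Expec{|p|^{-1/d}} = \infty$ in general (take $p = \ell^d$ for a linear~$\ell$), so the naive Markov argument of Step 3 only yields $\varepsilon^{1/d-\delta}$ for arbitrarily small $\delta$. The Carbery--Wright proof circumvents this by a bootstrap: one proves a soft anti-concentration estimate at the sub-optimal exponent $s<1/d$, then uses a self-improvement step that exploits the polynomial structure of $p$ together with the smoothness of the Gaussian density to upgrade the exponent to $1/d$, all while keeping the prefactor linear in $d$. Executing this self-improvement cleanly, and tracking the $d$-dependence of the constants through the slicing and interpolation, is the heart of the proof.
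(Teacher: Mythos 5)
First, note that the paper does not prove this statement at all: it is imported as a black-box citation of Carbery--Wright (via Kane's Theorem 13), so there is no internal proof to match your argument against; what can be judged is whether your sketch would constitute a valid proof, and it would not. There are two genuine gaps. In Step 1, the inequality $|a| \geq c_d \cdot \Expec{q(Y)^2}^{1/2}$ for the leading coefficient is simply false: take $q(y) = 1 + \delta y^d$ with $\delta \to 0$, which has $L^2(\gamma)$-norm close to $1$ but arbitrarily small leading coefficient. Degree exactly $d$ only means $a \neq 0$, and no expansion in the Hermite basis can lower-bound the top coefficient by the full norm; the usual univariate arguments avoid this by a Remez/Chebyshev-type comparison of sup-norm and measure of the sublevel set (or an induction on degree), not by the root-factorization bound you use, which only works when the leading coefficient is under control.

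The second and more fundamental gap is in Step 3. Hypercontractivity gives upper bounds on high moments and, via H\"older/Paley--Zygmund, lower bounds on positive low moments, i.e.\ statements of the form $\Prob{|r| \geq c\|r\|_2} \geq c_D$. That yields only a constant bound $\Prob{|r| < \varepsilon} \leq 1 - c_D$, not a bound decaying in $\varepsilon$, and it cannot give finiteness of negative moments $\Expec{|r|^{-s}}$: controlling mass near zero is exactly the anti-concentration statement you are trying to prove, so the proposed lemma is circular as justified. Indeed your own "main obstacle" paragraph concedes that the scheme reaches only $\varepsilon^{1/d-\delta}$ (and, at the borderline exponent, $\Expec{g^{-1/(2d)}}$ is genuinely infinite, e.g.\ for $g = \ell^{2d}$), and the "self-improvement/bootstrap" that would close the gap to the stated $O_d(d\,\varepsilon^{1/d})$ is only gestured at, not carried out. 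As it stands the proposal does not establish the theorem; if you want a proof rather than a citation, the actual Carbery--Wright argument (or the Remez-inequality route in the univariate slice combined with a careful treatment of the conditional $L^2$ mass) is needed.
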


\begin{proof}[Proof of {Lemma~\protect\hyperlink{LEM:PTFPROOF:conclusionfooling:RESTATED:APPENDIX}{\ref*{LEM:PTFPROOF:conclusionfooling}}}]
    Note that \cite[Lemma 12 and proof of Proposition 14]{Kane:foolingforkGaussians} and the second condition in the lemma imply that 
    \[\Expec{f(X)} \geq \Expec{\tilde{f}(X)} - O(\varepsilon) - 2\Prob{-O_d(\varepsilon^d) < p(X) < 0}\]
    and
    \[\Expec{f(X)} \leq \Expec{\tilde{f}(X)} + O(\varepsilon) + 2\Prob{0 < p(X) < O_d(\varepsilon^d)}.\]
    Using these and the first assumption of the lemma we get that 
    \[\Expec{f(X)} \geq \Expec{f(Y)} - O(\varepsilon) - 2\Prob{-O_d(\varepsilon^d) < p(X) < 0}\]
    and
    \[\Expec{f(X)} \leq \Expec{f(Y)} + O(\varepsilon) + 2\Prob{0 < p(X) < O_d(\varepsilon^d)}.\]
    We have furthermore that 
    \[\Expec{\sign(p(X))} + 2\Prob{-O_d(\varepsilon^d) < p(X) < 0} = \Expec{\sign(p(X) + O_d(\varepsilon^d)}\]
    and
    \[\Expec{\sign(p(X))} - 2\Prob{0 < p(X) < O_d(\varepsilon^d)} = \Expec{\sign(p(X) - O_d(\varepsilon^d)}.\]
    The reason for this is that adding or subtracting the two probability terms can be interpreted as changing the sign for values of $X$ in $(-O_d(\varepsilon^d),0)$ respectively $(0, O_d(\varepsilon^d))$, which the same as shifting the polynomial.
    Thus, when combining this with the above we get that 
    \[\Expec{\sign(p(X) + O_d(\varepsilon^d))} \geq \Expec{\sign(p(Y))} - O(\varepsilon)\]
    and 
    \[\Expec{\sign(p(X) - O_d(\varepsilon^d))} \leq \Expec{\sign(p(Y))} + O(\varepsilon).\]
    Now we apply this result not to the polynomial $p$, but to the polynomial $p \mp O_d(\varepsilon^d)$.
    This shifts the additional factor from the $X$-side to the $Y$-side and we get
    \[\Expec{\sign(p(X))} \geq \Expec{\sign(p(Y) - O_d(\varepsilon^d))} - O(\varepsilon)\]
    as well as 
    \[\Expec{\sign(p(X))} \leq \Expec{\sign(p(Y) + O_d(\varepsilon^d))} + O(\varepsilon).\]
    Combining these two inequalities we get that
    \[\Expec{\sign(p(Y) - O_d(\varepsilon^d))} - O(\varepsilon) \leq \Expec{f(X)} \leq \Expec{\sign(p(Y) + O_d(\varepsilon^d))} + O(\varepsilon).\]
    For $Y$, we have that (since the inequality hold point-wise)
    \[\Expec{\sign(p(Y) - O_d(\varepsilon^d))} \leq \Expec{f(Y)} \leq \Expec{\sign(p(Y) + O_d(\varepsilon^d))}.\]
    Now, the two function $\sign(p(Y) - O_d(\varepsilon^d))$ and $\sign(p(Y) + O_d(\varepsilon^d))$ differ by at most $2$ and only when $|p(Y)| \leq O_d(\varepsilon^d)$.
    We now use an anti-concentration result for $Y$ (the standard Gaussian).
    Namely, we can use \Cref{THM:APPENDIX:CarberyWright} to conclude that this happens with probability at most $O_d(\varepsilon)$.
    Note that we have $\Expec{p(Y)^2}[Y \sim \mathcal{N}(0,I_n)] = 1$ since we assumed that the sum of the squares of the coefficients of $p$, which is exactly $\Expec{p(Y)^2}[Y \sim \mathcal{N}(0,I_n)]$ for multilinear $p$, is $1$.
    Thus,
    \[\left|\Expec{\sign(p(Y) + O_d(\varepsilon^d))} - \Expec{\sign(p(Y) - O_d(\varepsilon^d))}\right| \leq 2 O_d(\varepsilon) = O_d(\varepsilon).\]
    Thus, since both $\Expec{f(X)}$ and $\Expec{f(Y)}$ are between the values $\Expec{\sign(p(Y) - O_d(\varepsilon^d))}$ and  $\Expec{\sign(p(Y) + O_d(\varepsilon^d))}$ (up to $O(\varepsilon)$ for $\Expec{f(X)}$), we can conclude that also 
    \[|\Expec{f(X)}-\Expec{f(Y)}| \leq O_d(\varepsilon),\]
    as wanted.
\end{proof}

Next, we want to prove \Cref{PROP:TO:foolingarbitrary:RESTATED:PTFPROOF} that follows closely \cite[Proof of Theorem 1]{Kane:foolingforkGaussians}.
This proposition shows the fooling condition for arbitrary PTFs.
In the proof we need \Cref{PROP:PTFPROOF:foolingmultilinear} about the fooling condition for multilinear PTFs and \Cref{LEM:PTFPROOF:existenceofpdelta} that, given an arbitrary PTF $p$, constructs a multilinear PTF $p_\delta$ that is close to $p$.

\hypertarget{PROP:TO:foolingarbitrary:RESTATED:APPENDIX}{}\restateprop{PROP:TO:foolingarbitrary:RESTATED:PTFPROOF}

\begin{proof}
    As already before, we assume without loss of generality that the polynomial is normalized in the sense that the sum of the squares of the coefficients is $1$ (this does not change the PTF).
    We set 
    \[\delta = \left(\frac{\varepsilon}{d}\right)^d.\]
    
    We first want to show that the condition on $\eta$ in this lemma ensures that we can apply both \Cref{LEM:PTFPROOF:existenceofpdelta} to construct the multilinear polynomial $p_\delta$ and \Cref{PROP:PTFPROOF:foolingmultilinear} to get the fooling condition for $p_\delta$.
    The condition for \Cref{LEM:PTFPROOF:existenceofpdelta} is $\eta \leq \delta^{O_d(1)}n^{-\Omega_d(1)}k^{-k}$.
    Note that by our choice of $k$, we have $\varepsilon^{O_d(1)} \leq k^{-\Omega_d(1)}$.
    Thus, for our choice of $\delta$, the condition on $\eta$ needed for \Cref{LEM:PTFPROOF:existenceofpdelta} is satisfied by our assumption on $\eta$ in this lemma.
    For \Cref{PROP:PTFPROOF:foolingmultilinear}, we need $\hat{\eta} = (2k)^{k/2} \eta \leq k^{-\Omega_d(k)}n^{-\Omega_d(k)}$, which is also satisfied by our condition on $\eta$.
    Note that we need this condition for $\hat{\eta}$ (and not $\eta$) since the new random variable $\hat{X}$ is only moment-matching up to slack $\hat{\eta}$. 
    
    We now want to show that $\left|\Prob{p(X) > 0} - \Prob{p(Y) > 0}\right| \leq O_d(\varepsilon)$.
    Note that 
    \begin{align*}
        \Expec{\sign(p(X))} &= \Prob{p(X) \geq 0} - \Prob{p(X) < 0}\\
        &= \Prob{p(X) \geq 0} - (1- \Prob{p(X) \geq 0})\\
        &= 2\Prob{p(X) \geq 0} - 1
    \end{align*}
    and the same holds for $Y$.
    Thus, $\left|\Prob{p(X) \geq 0} - \Prob{p(Y) \geq 0}\right| \leq O_d(\varepsilon)$ will be enough since then
    \[\left|\Expec{\sign(p(Y))} - \Expec{\sign(p(X))}\right| = 2\left|\Prob{p(Y) \geq 0} - \Prob{p(X) \geq 0}\right| \leq O_d(\varepsilon).\]
    By \Cref{LEM:PTFPROOF:existenceofpdelta}, we have that
    \[\Prob{p(X) \geq 0} \geq \Prob{p_\delta(\hat{X}) \geq \delta} - \delta.\]
    Since $p_\delta$ is multilinear, we can apply \Cref{PROP:PTFPROOF:foolingmultilinear} to $p_\delta-\delta$ to get that
    \[\Prob{p(X) \geq 0} \geq \Prob{p_\delta(\hat{Y}) \geq \delta} - O_d(\varepsilon)\]
    since $\delta \leq O(\varepsilon)$.
    Note that we can apply \Cref{PROP:PTFPROOF:foolingmultilinear} to $\Prob{p(X) \geq 0}$ instead of $\Expec{\sign(p(X))}$ by the relation $\Expec{\sign(p(X))} = 2\Prob{p(X) \geq 0} - 1$ from above.
    By Carbery-Wright (\Cref{THM:APPENDIX:CarberyWright}), we get that $\Prob{|p_\delta(\hat{Y})| \leq \delta} \leq O(\varepsilon)$, thus we further have that
    \[\Prob{p(X) \geq 0} \geq \Prob{p_\delta(\hat{Y}) \geq -\delta} - O_d(\varepsilon).\]
    Applying again \Cref{LEM:PTFPROOF:existenceofpdelta}, we get that 
    \[\Prob{p(X) \geq 0} \geq \Prob{p(Y) \geq 0} - O_d(\varepsilon).\]
    By an analogous calculation, we get that $\Prob{p(X) \geq 0} \leq \Prob{p(Y) \geq 0} + O_d(\varepsilon)$, which completes the proof.
\end{proof}

We now want to prove Lemmas \ref{LEM:PTFPROOF:XmomentmatchingthentildeXmomentmatching} and \ref{LEM:PTFPROOF:YGaussianthentildeYGaussian} that show that the construction of $\hat{X}$ respectively $\hat{Y}$ preserve the assumptions on the distribution. The latter lemma is also proved in \cite[Lemma 15]{Kane:foolingforkGaussians}, but we include it here for completeness.

\hypertarget{LEM:PTFPROOF:XmomentmatchingthentildeXmomentmatching:RESTATED:APPENDIX}{}\restatelemma{LEM:PTFPROOF:XmomentmatchingthentildeXmomentmatching}

\begin{proof}
    For a multi-index $\alpha$, let $c_\alpha$ be the $\alpha$-moment of a Gaussian.
    We want to show that $\left|\Expec{\hat{X}^\alpha} - c_\alpha\right| \leq \hat{\eta}$ for any $\alpha$ with $|\alpha| \leq k$.
    We can compute the following, by writing $Z_{i,j} = Z^{(i)}_j$,
    \begin{align*}
    \left|\Expec{\hat{X}^\alpha} - c_\alpha\right|
    &= \left|\Expec{\prod_{i=1}^n \prod_{j=1}^N \hat{X}_{i,j}^{\alpha_{i,j}}} - c_\alpha \right| \\
    &= \left|\Expec{\prod_{i=1}^n \prod_{j=1}^N \left(\frac{1}{\sqrt{N}}X_i + Z_{i,j}\right)^{\alpha_{i,j}}} - c_\alpha \right| \\
    &= \left|\Expec{\prod_{i=1}^n \prod_{j=1}^N \sum_{r=0}^{\alpha_{i,j}} \binom{\alpha_{i,j}}{r} \left(\frac{1}{\sqrt{N}}X_i\right)^r Z_{i,j}^{\alpha_{i,j}-r}} - c_\alpha\right| \\
    &= \left|\Expec{\sum_{\beta \leq \alpha} \binom{\alpha}{\beta}\left(\frac{X}{\sqrt{N}}\right)^\beta Z^{\alpha-\beta}} - c_\alpha\right| \\
    &= \left|\sum_{\beta \leq \alpha} \binom{\alpha}{\beta}\Expec{\left(\frac{X}{\sqrt{N}}\right)^\beta}\Expec{Z^{\alpha-\beta}} - c_\alpha\right|.
    \end{align*}
    In the second step we used the definition of $\hat{X}$ and in the last step, we used that $Z$ and $X$ are independent.
    Now,
    \[\left|\Expec{\left(\frac{X}{\sqrt{N}}\right)^\beta} - \Expec{\left(\frac{Y}{\sqrt{N}}\right)^\beta}\right| \leq \eta\]
    since $X$ matches the moments of $\mathcal{N}(0,I_n)$ up to degree $k$ and slack $\eta$.
    Thus, we get 
    \begin{align*}
    \left|\Expec{\hat{X}^\alpha}- c_\alpha\right|
    &= \left|\sum_{\beta \leq \alpha} \binom{\alpha}{\beta}\Expec{\left(\frac{X}{\sqrt{N}}\right)^\beta} \Expec{Z^{\alpha-\beta}} - c_\alpha\right| \\
    &\leq \left|\sum_{\beta \leq \alpha} \binom{\alpha}{\beta}\left(\Expec{\left(\frac{X}{\sqrt{N}}\right)^\beta} - \Expec{\left(\frac{Y}{\sqrt{N}}\right)^\beta}\right) \Expec{Z^{\alpha-\beta}}\right|\\
    &\phantom{{} = {}} + \left|\sum_{\beta \leq \alpha} \binom{\alpha}{\beta}\Expec{\left(\frac{Y}{\sqrt{N}}\right)^\beta} \Expec{Z^{\alpha-\beta}} - c_\alpha\right| \\
    &\leq \eta \sum_{\beta \leq \alpha} \binom{\alpha}{\beta} \left|\Expec{Z^{\alpha-\beta}}\right| + \left|\sum_{\beta \leq \alpha} \binom{\alpha}{\beta} \Expec{\left(\frac{Y}{\sqrt{N}}\right)^\beta} \Expec{Z^{\alpha-\beta}} - c_\alpha\right| \\
    &= \eta \sum_{\beta \leq \alpha} \binom{\alpha}{\beta} \left|\Expec{Z^{\alpha-\beta}}\right| + \left|\Expec{Y^\alpha} - c_\alpha\right| \\
    &= \eta \sum_{\beta \leq \alpha} \binom{\alpha}{\beta} \left|\Expec{Z^{\alpha-\beta}}\right|.
    \end{align*}
    The second-to-last step is the same computation from above but backwards (note that the moments of $Z'$ used for the construction of $\hat{Y}$ are the same as those of $Z$ used for the construction of $\hat{X}$) and the last step used that $Y$ is Gaussian and thus $\Expec{Y^\alpha} = c_\alpha$.
    We can furthermore compute
    \begin{align*}
    \left|\Expec{\hat{X}^\alpha} - c_\alpha\right|
    &= \eta \sum_{\beta \leq \alpha} \binom{\alpha}{\beta} \left|\Expec{Z^{\alpha-\beta}}\right| \\
    &= \eta \sum_{\beta \leq \alpha} \binom{\alpha}{\beta} \left|\Expec{Z^{\beta}}\right| \\
    &= \eta \sum_{\beta \leq \alpha} \binom{\alpha}{\beta} \prod_{i=1}^n \left|\Expec{\prod_{j=1}^N Z_{i,j}^{\beta_{i,j}}}\right|,
    \end{align*}
    where the third step uses that the $Z_{i,j}$ are independent for different $i$.
    We now get that, using \Cref{FACT:APPENDIX:momentsofnonindependentGaussians},
    \[\left|\Expec{\prod_{j=1}^N Z_{i,j}^{\beta_{i,j}}}\right| \leq |\beta_{i,\cdot}|^{|\beta_{i,\cdot}|/2} \leq \sqrt{k}^{|\beta_{i, \cdot}|}.\]
    Thus, continuing with the above, we get 
    \begin{align*}
    \left|\Expec{\hat{X}^\alpha} - c_\alpha\right|
    &= \eta\sum_{\beta \leq \alpha} \binom{\alpha}{\beta} \prod_{i=1}^n \left|\Expec{\prod_{i=1}^N Z_{i,j}^{\beta_{i,j}}}\right| \\
    &\leq \eta\sum_{\beta \leq \alpha} \binom{\alpha}{\beta} \prod_{i=1}^n \sqrt{k}^{|\beta_{i,\cdot}|} \\
    &= \eta \sum_{\beta \leq \alpha} \binom{\beta}{\alpha} (\sqrt{k} \mathds{1})^\beta \\
    &= \eta(\sqrt{k}+1)^{|\alpha|} \\
    &\leq (2k)^{k/2} \eta = \hat{\eta},
    \end{align*}
    where $\mathds{1}$ is the all-ones vector.
    This completes the proof.
\end{proof}

\hypertarget{LEM:PTFPROOF:YGaussianthentildeYGaussian:RESTATED:APPENDIX}{}\restatelemma{LEM:PTFPROOF:YGaussianthentildeYGaussian}

\begin{proof}
    Since $\hat{Y}$ is a linear transformation of the Gaussian vector $(Y,Z')$, it is jointly Gaussian and thus to show the lemma, it is enough to show that the expectation of $\hat{Y}$ is $0$ and the covariance matrix is the identity.

    Writing as above $Z'_{i,j} = Z'^{(i)}_j$, we have for any $i \in [n]$, $j \in [N]$ that
    \[\Expec{\hat{Y}_{i,j}} = \Expec{\frac{1}{\sqrt{N}}Y_i + Z'_{i_j}} = \frac{1}{\sqrt{N}} \cdot 0 + 0 = 0.\]
    Furthermore, for any $i \in [n]$, $j \in [N]$ we have that, by independence of $Y$ and $Z'$,
    \[\Var{\hat{Y}_{i,j}} = \frac{1}{N} \Var{Y_i} + \Var{Z'_{i,j}} = \frac{1}{N} \cdot 1 + \left(1-\frac{1}{N}\right) = 1.\]
    For any $i \in [n]$, $j_1 \neq j_2 \in [N]$ we get that 
    \begin{align*}
    \Cov{\hat{Y}_{i,j_1}, \hat{Y}_{i,j_2}} &= \Expec{\left(\frac{1}{\sqrt{N}}Y_i + Z'_{i,j_1}\right)\left(\frac{1}{\sqrt{N}}Y_i + Z'_{i,j_2}\right)} \\
    &= \Expec{\frac{1}{N}Y_i^2} + \Expec{\frac{1}{\sqrt{N}}Y_iZ'_{i,j_1}} + \Expec{\frac{1}{\sqrt{N}}Y_iZ'_{i,j_2}} + \Expec{Z'_{i,j_1}Z'_{i,j_2}} \\
    &= \frac{1}{N} + 0 + 0 - \frac{1}{N}\\
    &= 0.
    \end{align*}
    Finally, for any $i_1 \neq i_2 \in [n]$, $j_1, j_2 \in [N]$ we have that 
    \[\Cov{\hat{Y}_{i_1,j_1}, \hat{Y}_{i_2,j_2}} = 0\]
    by independence of $\hat{Y}_{i_1,j_1} = Y_{i_1} + Z'_{i_1,j_1}$ and $\hat{Y}_{i_2,j_2} = Y_{i_2} + Z'_{i_2,j_2}$ for $i_1 \neq i_2$.
    
    Thus, as argued in the beginning, this shows that $\hat{Y} \sim \mathcal{N}(0,I_{nN})$ and completes the proof.
\end{proof}

Finally, we want to provide the details of the missing parts in the proof of \Cref{LEM:PTFPROOF:existenceofpdelta} about the existence of the polynomial $p_\delta$.
For this, it remains to prove Lemmas \ref{LEM:PTFPROOF:replacementformonomialwithindelta'} and \ref{LEM:PTFPROOF:concentrationsummary}, which we restate below.
First, recall our definition of $\delta'$ in \eqref{EQ:PTFPROOF:definitionofdelta'}.
\begin{equation} \label{EQ:PTFPROOF:definitionofdelta':RESTATED:APPENDIX}
    \delta' \coloneqq \min\left\{\frac{\delta}{2dn}, \Theta_d\left(\frac{\delta^{d+1}}{n^{3d/2}}\right)\right\} = \Theta_d\left(\frac{\delta^{d+1}}{n^{3d/2}}\right). \tag{restatement of \ref*{EQ:PTFPROOF:definitionofdelta'}}
\end{equation}

In the proof of \Cref{LEM:PTFPROOF:existenceofpdelta}, we then used the following lemma that allows to replace a factor $X_i^\ell$ by a multilinear polynomial in the new variable $\hat{X}$. 

\hypertarget{LEM:PTFPROOF:replacementformonomialwithindelta':RESTATED:APPENDIX}{}
\begin{lemma*}[Restatement of \Cref{LEM:PTFPROOF:replacementformonomialwithindelta'}]
    Let $\delta' > 0$.
    Let $i \in [n]$ and $\ell \in [d]$.
    Assume that each of the following conditions holds with probability $1-\frac{\delta'}{d}$
    \begin{align}
        \left|\sum_{j=1}^N \frac{\hat{X}_{i,j}}{\sqrt{N}}\right| &\leq \frac{1}{\delta'} \tag{restatement of \ref*{EQ:PTFPROOF:replacementconcentrationa=1}} \label{EQ:PTFPROOF:replacementconcentrationa=1:RESTATED:APPENDIX}\\
        \left|\sum_{j=1}^N \left(\frac{\hat{X}_{i,j}}{\sqrt{N}}\right)^2 - 1\right| &\leq \delta'^{d+1} \tag{restatement of \ref*{EQ:PTFPROOF:replacementconcentrationa=2}} \label{EQ:PTFPROOF:replacementconcentrationa=2:RESTATED:APPENDIX}\\
        \left|\sum_{j=1}^N \left(\frac{\hat{X}_{i,j}}{\sqrt{N}}\right)^a\right| &\leq \delta'^{d+1} &\forall \, 3 \leq a \leq d. \tag{restatement of \ref*{EQ:PTFPROOF:replacementconcentrationa>=3}} \label{EQ:PTFPROOF:replacementconcentrationa>=3:RESTATED:APPENDIX}
    \end{align}
    Then, there is a multilinear polynomial in $\hat{X}_{i,j}$ that is within $O_d(\delta')$ of $X_i^\ell$ with probability $1-\delta'$.
\end{lemma*}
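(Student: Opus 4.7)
Set $u_j \coloneqq \hat{X}_{i,j}/\sqrt{N}$, so that $X_i = \sum_{j=1}^N u_j$ holds deterministically by construction of $\hat{X}$, and write $p_a(u) \coloneqq \sum_{j=1}^N u_j^a$ for the power sums. A union bound over the (at most $d$) events in the hypothesis shows that with probability at least $1-\delta'$ we simultaneously have $|p_1(u)| \leq 1/\delta'$, $|p_2(u) - 1| \leq (\delta')^{d+1}$, and $|p_a(u)| \leq (\delta')^{d+1}$ for all $3 \leq a \leq d$. Conditioning on this event, my goal is to exhibit a multilinear polynomial $q^{(\ell)}$ in $\hat{X}_{i,1}, \ldots, \hat{X}_{i,N}$ satisfying $|q^{(\ell)}(\hat{X}_{i,\cdot}) - X_i^\ell| \leq O_d(\delta')$.

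\textbf{Construction.} Following Kane's recipe, expand $X_i^\ell = N^{-\ell/2} \sum_{\alpha \in \N^N,\,|\alpha|=\ell} \binom{\ell}{\alpha}\hat{X}_{i,\cdot}^\alpha$ and modify each monomial individually: keep it if $\alpha$ is multilinear; drop it if $\max_j \alpha_j \geq 3$; and if $\max_j \alpha_j = 2$, replace each squared factor $\hat{X}_{i,j}^2$ by the constant $1$, preserving the coefficient $\binom{\ell}{\alpha}N^{-\ell/2}$. Translating to the $u$-variables this amounts to the substitution $u_{j}^2 \mapsto 1/N$ wherever a square occurs, and it is convenient to group the monomials of $X_i^\ell$ by their multiplicity partition $\lambda \vdash \ell$: setting $S_\lambda(u) \coloneqq \sum_{\text{distinct } i_1,\ldots,i_r} u_{i_1}^{\lambda_1} \cdots u_{i_r}^{\lambda_r}$, one has $X_i^\ell = \sum_\lambda c_\lambda S_\lambda(u)$ for combinatorial constants $c_\lambda$ depending only on $\lambda$.

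\textbf{Error analysis and main obstacle.} I bound the contribution of each partition $\lambda \neq (1^\ell)$ to the error separately via the inclusion-exclusion identity $S_\lambda(u) = \prod_k p_{\lambda_k}(u) + (\text{correction terms merging some parts } \lambda_{k_1}, \lambda_{k_2} \mapsto \lambda_{k_1}+\lambda_{k_2})$. Crucially, every merged index is still at most $\ell \leq d$, so every power sum that appears falls within the hypothesized range. For partitions with some $\lambda_k \geq 3$, the factor $|p_{\lambda_k}| \leq (\delta')^{d+1}$ dominates and the remaining factors contribute at most $|p_1|^{\ell-\lambda_k} \cdot O_d(1) \leq (1/\delta')^{\ell-3}$, giving per-partition error $O_d((\delta')^{d+4-\ell})$. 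For partitions $\lambda = (2^s, 1^t)$ with $s \geq 1$ and $t = \ell - 2s$, writing $u_j^2 = 1/N + (u_j^2 - 1/N)$ and expanding, each nonempty subset $K \subseteq [s]$ of slots where we pick the difference factor contributes at most $|p_2 - 1|^{|K|} \cdot O_d(1)^{s-|K|} \cdot |p_1|^t \leq (\delta')^{d+1} \cdot (1/\delta')^t \leq (\delta')^{d+3-\ell}$. Summing over the $O_d(1)$ partitions of $\ell$ and using $\ell \leq d$ yields total error $O_d(\delta')$. The main obstacle is the one flagged by the paper: our bound $|p_1(u)| \leq 1/\delta'$ is much weaker than Kane's Gaussian-concentration bound of order $\sqrt{\log N}$, so every non-multilinear block's error is amplified by up to $(1/\delta')^\ell$ factors which must be exactly absorbed by the stronger $(\delta')^{d+1}$-scale bounds on $|p_2 - 1|$ and on the $|p_a|$ with $a \geq 3$; the delicate combinatorial bookkeeping is verifying that the inclusion-exclusion merges never reintroduce more $|p_1|$'s than this ``budget'' of $(\delta')^{d+1}$'s can pay for.
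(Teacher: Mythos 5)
Your construction matches the paper's informal description in Section 2.3.4 (replace $\hat{X}_{i,j}^2$ by $1$, i.e.\ $u_j^2 \mapsto 1/N$), and your high-level strategy of grouping by partitions and controlling merged power sums is the same in spirit as the paper's inductive peeling argument. However, there is a genuine gap that you acknowledge but do not close, and it is worse than you flag: the inclusion-exclusion corrections that enforce distinctness do not merely ``reintroduce $|p_1|$'s,'' they also introduce $1/N$ factors that the hypotheses do not bound. Concretely, already for $\lambda = (2,1)$ (so $\ell=3$) your polynomial is $(1-\tfrac1N)p_1$, and one computes $X_i^3$-contribution minus replacement $= p_1(p_2-1+\tfrac1N) - p_3$. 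The term $|p_1|/N \leq (1/\delta')/N$ needs $N \gtrsim (\delta')^{-2}$ to be $O_d(\delta')$, and the hypotheses give no such bound on $N$. More generally, every merge involving a $q$-slot $q_j = u_j^2 - 1/N$ contributes a sum like $\sum_j q_j u_j = p_3 - p_1/N$, so the $1/N$'s proliferate precisely in the ``delicate bookkeeping'' you defer.

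The paper's appendix circumvents this entirely by using a \emph{different} multilinear polynomial than the one you (and Section 2.3.4) describe: it replaces $\sum_{\text{distinct }j_1,\dots,j_r}\prod_s u_{j_s}^{a_s}$ (all $a_s\in\{1,2\}$) by $\sum_{\text{distinct }j_1,\dots,j_{\hat r}}\prod_{s\le\hat r} u_{j_s}$, simply dropping the $2$-slots together with their distinctness constraints (no $N^{-s}$ prefactor). With this choice the error, viewed as a polynomial in $p_1,\dots,p_\ell$, vanishes identically on $\{p_2=1,\,p_3=\dots=p_\ell=0\}$, hence factors as $(p_2-1)Q_2 + \sum_{a\ge3}p_a Q_a$ with $\deg_{p_1} Q_a \le \ell-a$; the bound $|Q_a| \le O_d(1)(1/\delta')^{\ell-a}$ then delivers $O_d(\delta')$ directly with no $1/N$ ever appearing. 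Your polynomial differs from the paper's by the factor $(N-t)^{\underline{s}}/N^s = \prod_{m=t}^{t+s-1}(1-m/N)$ per pattern $(2^s,1^t)$, so the error does \emph{not} vanish at $\{p_2 = 1, p_a = 0\}$; this is exactly the unbounded $O(|p_1|^t/N)$ residue. Your approach could in principle be rescued by extracting a lower bound on $N$ from the hypotheses (e.g., for $d\ge 4$, Cauchy--Schwarz gives $N \ge p_2^2/p_4 = \Omega((\delta')^{-(d+1)})$), but you do not do this, and the low-degree cases $d\le 3$ need a separate argument. As written, the proof is incomplete at precisely the step the paper handles by its recursive decomposition.
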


We prove this lemma below.
Before that, we want to prove \Cref{LEM:PTFPROOF:concentrationsummary} that states that we can use the above lemma for our choice of $\delta'$.

\hypertarget{LEM:PTFPROOF:concentrationsummary:RESTATED:APPENDIX}{}
\begin{lemma*}[Restatement of \Cref{LEM:PTFPROOF:concentrationsummary}]
    Let $\delta'$ as in {\normalfont(\hyperref[EQ:PTFPROOF:definitionofdelta':RESTATED:APPENDIX]{\ref*{EQ:PTFPROOF:definitionofdelta'}})}. Assuming $\eta \leq \delta^{O_d(1)}n^{-\Omega_d(1)}k^{-k}$, there is a choice of $N$ (independent of $i$ or $\ell$) such that {\normalfont(\hyperref[EQ:PTFPROOF:replacementconcentrationa=1:RESTATED:APPENDIX]{\ref*{EQ:PTFPROOF:replacementconcentrationa=1}})}, {\normalfont(\hyperref[EQ:PTFPROOF:replacementconcentrationa=2:RESTATED:APPENDIX]{\ref*{EQ:PTFPROOF:replacementconcentrationa=2}})} and {\normalfont(\hyperref[EQ:PTFPROOF:replacementconcentrationa>=3:RESTATED:APPENDIX]{\ref*{EQ:PTFPROOF:replacementconcentrationa>=3}})} hold, each with probability $1-\frac{\delta'}{d}$.
\end{lemma*}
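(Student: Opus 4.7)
The plan is to handle each of the three concentration conditions separately using Markov or Chebyshev, and to choose $N$ polynomially large in $1/\delta'$, $n$, $d$ so that all three hold simultaneously (each with probability $\geq 1-\delta'/d$). Two structural observations drive the argument: first, the covariance of $Z^{(i)}$ forces $\sum_{j=1}^N Z^{(i)}_j = 0$ almost surely, which collapses many of the cross-terms in the binomial expansions of $(\hat X_{i,j}/\sqrt{N})^a$; second, $Z^{(i)}$ is a genuine Gaussian vector with known distribution, so its moments of every order are under control. The slack $\eta$ will enter only through \Cref{FACT:APPENDIX:boundformomentsofD'}, via low-degree moments of $X_i$ such as $\Expec{X_i^2}[X \sim \mathcal{D}'] \leq 2 + \eta$; the hypothesis $\eta \leq 1$ alone would suffice for this lemma, and the stronger hypothesis stated is inherited from \Cref{LEM:PTFPROOF:existenceofpdelta}.

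For \eqref{EQ:PTFPROOF:replacementconcentrationa=1}, the identity $\sum_j Z^{(i)}_j = 0$ reduces the left-hand side to $|X_i|$, and Markov applied to $X_i^2$ gives $\Prob{|X_i| \geq 1/\delta'}[X \sim \mathcal{D}'] \leq 3\delta'^2 \leq \delta'/d$, once $\delta' \leq 1/(3d)$, which holds for our $\delta'$. For \eqref{EQ:PTFPROOF:replacementconcentrationa=2}, expanding and again using $\sum_j Z^{(i)}_j = 0$ rewrites the quantity as $X_i^2/N + (\|Z^{(i)}\|_2^2 - N)/N$. Because $Z^{(i)}$ is the orthogonal projection of a standard Gaussian in $\R^N$ onto the hyperplane $\{\sum_j x_j = 0\}$, the norm $\|Z^{(i)}\|_2^2$ is $\chi^2_{N-1}$-distributed (mean $N-1$, variance $2(N-1)$). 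Markov on $X_i^2/N$ together with Chebyshev on $\|Z^{(i)}\|_2^2$ each contribute a failure probability of the form $\poly_d(1/\delta')/N$, which falls below $\delta'/d$ once $N \geq \poly_d(1/\delta')$. For \eqref{EQ:PTFPROOF:replacementconcentrationa>=3}, the binomial expansion yields
\[
\sum_{j=1}^N \left(\frac{\hat X_{i,j}}{\sqrt{N}}\right)^a = \frac{X_i^a}{N^{a-1}} + \sum_{r=0}^{a-2} \binom{a}{r} \left(\frac{X_i}{N}\right)^r \frac{1}{N^{(a-r)/2}} \sum_{j=1}^N (Z^{(i)}_j)^{a-r},
\]
with the $r = a-1$ term vanishing. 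Conditioning on $|X_i| \leq 1/\delta'$, the leading $r = a$ term is at most $1/(N^{a-1}\delta'^a)$. For $r \leq a-2$, each sum $S_b := \sum_j (Z^{(i)}_j)^b$ with $b = a - r \geq 2$ has mean $O_b(N)$ (or $0$ for odd $b$, by symmetry of the Gaussian marginals) and variance $O_b(N)$, so Chebyshev shows $|S_b|/N^{b/2}$ is $O_b(N^{1-b/2})$ with probability $\geq 1 - \delta'/d^2$. A union bound over $r$ and $a \in \{3, \ldots, d\}$, together with $N$ being a sufficiently large power of $1/\delta'$, pushes the whole right-hand side below $\delta'^{d+1}$ with probability $\geq 1-\delta'/d$.

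The main obstacle is purely quantitative bookkeeping: each condition will impose $N \geq \poly_d(1/\delta')$ for a different polynomial, and I need to pick a single exponent large enough to handle all three while keeping $N$ polynomial in $1/\delta$ and $n$ after substituting $\delta' = \Theta_d(\delta^{d+1}/n^{3d/2})$. Every bound above is polynomial in $1/\delta'$ with degree depending only on $d$, so this is a matter of careful arithmetic rather than a genuine technical difficulty.
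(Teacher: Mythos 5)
Your proof is correct in outline and takes a genuinely different route from the paper. The paper establishes the three conditions by treating $\hat{X}$ purely as an abstractly moment-matching distribution on $\R^{nN}$ with slack $\hat{\eta} = (2k)^{k/2}\eta$: in each of \Cref{LEM:APPENDIX:concentrationfora=1}, \Cref{LEM:APPENDIX:concentrationfora=2}, and \Cref{LEM:APPENDIX:concentrationfora>=3} it applies Markov's inequality to quantities like $\Expec{(\sum_j \hat{X}_{i,j})^2}$ and bounds each of the $\sim N^2$ terms by $\hat{\eta}$ plus the corresponding Gaussian moment, incurring an $\hat{\eta}N^2$ contribution. This is why those sublemmas carry $\eta$-conditions that depend on $N$ and $(2k)^{k/2}$, and why the proof of \Cref{LEM:PTFPROOF:concentrationsummary} is mostly a verification that the stated hypothesis on $\eta$ implies them for a suitable $N$. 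You instead exploit the explicit construction $\hat{X}_{i,j} = X_i/\sqrt{N} + Z^{(i)}_j$ and the deterministic identity $\sum_{j=1}^N Z^{(i)}_j = 0$: this collapses the left-hand side of \eqref{EQ:PTFPROOF:replacementconcentrationa=1} to exactly $|X_i|$ and cleanly splits the other two conditions into an $X_i$-piece (controlled by a couple of low-degree moments of $X$, via \Cref{FACT:APPENDIX:boundformomentsofD'}) and a $Z^{(i)}$-piece (a concrete Gaussian projection onto $\{\sum_j x_j = 0\}$, whose moments of every order are known). This makes the full $\eta$-constraint unnecessary for this lemma, as you correctly note, with the stronger hypothesis stated only because the parent \Cref{LEM:PTFPROOF:existenceofpdelta} needs to control $\hat{\eta}$ when invoking \Cref{PROP:PTFPROOF:foolingmultilinear}. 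The paper's route is more modular (it would survive if $\hat{X}$ were only specified via its moments), while yours trades that modularity for transparency and tighter constants. Two details to watch when filling in the bookkeeping: for $b = a-r = 2$ the quantity $S_2/N$ is $\Theta(1)$ and does not decay on its own, so the decay must come from the prefactor $(X_i/N)^{a-2}$, which is why conditioning on $|X_i| \leq 1/\delta'$ is essential there; and since that conditioning itself costs $\delta'/d$ probability, you should tighten the per-event failure budgets (say to $\delta'/(2d)$) so that the union bound in \Cref{LEM:PTFPROOF:replacementformonomialwithindelta'} still closes.
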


The proof of this lemma is a combination of the following lemmas.

\begin{lemma} \label{LEM:APPENDIX:concentrationfora=1}
    Assuming 
    \[\eta \leq \frac{\frac{1}{\delta'd}-1}{N (2k)^{k/2}},\]
    we have with probability $1-\frac{\delta'}{d}$ that 
    \[\left|\sum_{j=1}^N \frac{\hat{X}_{i,j}}{\sqrt{N}}\right| \leq \frac{1}{\delta'}.\]
\end{lemma}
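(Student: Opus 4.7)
The plan is to bound the second moment of $S := \sum_{j=1}^N \hat{X}_{i,j}/\sqrt{N}$ and then apply Markov's inequality. By \Cref{LEM:PTFPROOF:XmomentmatchingthentildeXmomentmatching}, $\hat{X}$ approximately matches the moments of $\hat{Y} \sim \cN(0, I_{nN})$ up to degree $k$ with slack $\hat{\eta} = (2k)^{k/2}\eta$, and by \Cref{LEM:PTFPROOF:YGaussianthentildeYGaussian}, $\hat Y$ is a standard Gaussian. In particular, the corresponding Gaussian quantity $\sum_{j=1}^N \hat Y_{i,j}/\sqrt N$ is itself distributed as $\cN(0,1)$, so it has second moment exactly $1$ and will serve as the reference for the moment-matching argument.

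To bound $\Expec{S^2}$, I would expand
\[
    \Expec{S^2} = \frac{1}{N}\sum_{j_1, j_2 = 1}^N \Expec{\hat{X}_{i,j_1} \hat{X}_{i,j_2}}
\]
and compare it term by term with the analogous expansion for $\hat Y$. Each of the $N^2$ degree-$2$ moments differs from its Gaussian counterpart by at most $\hat \eta$, so the accumulated difference between $\Expec{S^2}$ and its Gaussian analog is at most $\frac{1}{N}\cdot N^2 \cdot \hat \eta = N \hat \eta$. Since the Gaussian side equals $1$, this yields the bound $\Expec{S^2} \leq 1 + N(2k)^{k/2}\eta$.

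Finally, Markov's inequality gives $\Prob{|S| > 1/\delta'} \leq (\delta')^2 \Expec{S^2} \leq (\delta')^2\bigl(1 + N(2k)^{k/2}\eta\bigr)$, and the hypothesis $\eta \leq \bigl(\tfrac{1}{\delta' d} - 1\bigr)/(N(2k)^{k/2})$ is exactly calibrated so that this bound reduces to $\delta'/d$. There is no real obstacle here; the only thing to be careful about is tracking the factor of $N$ that arises when collectively bounding the $N^2$ error terms weighted by $1/N$, as this is precisely the factor that forces the explicit dependence on $N$ in the assumed bound on $\eta$.
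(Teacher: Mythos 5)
Your proposal is correct and takes essentially the same approach as the paper: Markov's inequality applied to $S^2$, with $\Expec{S^2}$ bounded by comparing the degree-$2$ moments of $\hat X$ with those of $\hat Y \sim \cN(0,I_{nN})$ using the slack $\hat\eta = (2k)^{k/2}\eta$ from \Cref{LEM:PTFPROOF:XmomentmatchingthentildeXmomentmatching}. The paper organizes the expansion by multi-indices $\alpha$ with $|\alpha|=2$ (treating diagonal and off-diagonal moments with the explicit bounds $1+\hat\eta$ and $\hat\eta$ respectively), but this yields the identical bound $\Expec{S^2}\leq 1+N\hat\eta$, so the two arguments are interchangeable.
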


\begin{lemma} \label{LEM:APPENDIX:concentrationfora=2}
    Assuming 
    \[\eta \leq \frac{\frac{\delta'^{2d+3}}{d}-\frac{2}{N}}{3(2k)^{k/2}},\]
    we have with probability $1-\frac{\delta'}{d}$ that 
    \[\left|\sum_{j=1}^N \left(\frac{\hat{X}_{i,j}}{\sqrt{N}}\right)^2 - 1\right| \leq \delta'^{d+1}.\]
\end{lemma}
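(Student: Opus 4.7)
The plan is to apply Chebyshev's inequality (Markov applied to the square) to the random variable $Z := \sum_{j=1}^N (\hat{X}_{i,j}/\sqrt{N})^2 - 1$. Writing $\hat{\eta} = (2k)^{k/2} \eta$, the hypothesis on $\eta$ rearranges to $3\hat{\eta} + 2/N \leq \delta'^{2d+3}/d$, so it will suffice to prove $\Expec{Z^2} \leq 2/N + 3\hat{\eta}$; Markov's inequality will then yield $\Prob{|Z| > \delta'^{d+1}} \leq \Expec{Z^2}/\delta'^{2(d+1)} \leq \delta'/d$, as required.

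To bound $\Expec{Z^2}$, I would expand
\[
Z^2 \;=\; \frac{1}{N^2} \sum_{j_1, j_2 = 1}^N \hat{X}_{i,j_1}^2 \hat{X}_{i,j_2}^2 \;-\; \frac{2}{N} \sum_{j=1}^N \hat{X}_{i,j}^2 \;+\; 1,
\]
and invoke \Cref{LEM:PTFPROOF:XmomentmatchingthentildeXmomentmatching,LEM:PTFPROOF:YGaussianthentildeYGaussian}: every monomial appearing above has degree at most $4 \leq k$, so by approximate moment matching its expectation under $\hat{X}$ agrees with that under $\hat{Y} \sim \mathcal{N}(0, I_{nN})$ up to additive error $\hat{\eta}$. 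Under $\hat{Y}$, one has $\Expec{\hat{Y}_{i,j}^2} = 1$, $\Expec{\hat{Y}_{i,j}^4} = 3$, and $\Expec{\hat{Y}_{i,j_1}^2 \hat{Y}_{i,j_2}^2} = 1$ for $j_1 \neq j_2$. Substituting these values, the Gaussian contribution to $\Expec{Z^2}$ evaluates to $(3N + N(N-1))/N^2 - 2 + 1 = 2/N$, while the cumulative moment-matching slack is at most $N^2 \cdot \hat{\eta}/N^2 + 2N \cdot \hat{\eta}/N = 3\hat{\eta}$. Combining these gives $\Expec{Z^2} \leq 2/N + 3\hat{\eta}$, as desired.

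Structurally, the argument mirrors the proof of \Cref{LEM:APPENDIX:concentrationfora=1}, but with one important distinction: there the relevant quantity collapses to a single variable because $\sum_j \hat{X}_{i,j}/\sqrt{N} = X_i$ holds deterministically by construction of $Z^{(i)}$, whereas for $a=2$ no such collapse occurs, forcing us to use a full Chebyshev rather than pure Markov. I do not anticipate a serious obstacle: the key quantitative point is that a sum of $N$ approximately-Gaussian squares has variance $\Theta(1/N)$ about its mean, and the only subtlety is tracking constants carefully enough that the final $3\hat{\eta} + 2/N$ bound matches exactly the $\eta$-budget allowed by the statement of the lemma.
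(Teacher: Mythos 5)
Your proof is correct and follows essentially the same route as the paper: Markov's inequality applied to the squared centered sum, followed by expansion into second- and fourth-order moments, use of the moment-matching slack $\hat{\eta} = (2k)^{k/2}\eta$ (via Lemmas~\ref{LEM:PTFPROOF:XmomentmatchingthentildeXmomentmatching} and~\ref{LEM:PTFPROOF:YGaussianthentildeYGaussian}), and the Gaussian moment values to obtain $\Expec{Z^2} \leq 2/N + 3\hat{\eta}$. The only cosmetic difference is that the paper groups the summands as $(\hat{X}_{i,j}^2 - 1)$ before squaring whereas you expand $Z^2$ directly, and the resulting bounds coincide.
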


Note that for the above condition on $\eta$ to be meaningful (we need $\eta >0$), we also need to ensure that
\[N > \frac{2d}{\delta'^{2d+3}}.\]

\begin{lemma} \label{LEM:APPENDIX:concentrationfora>=3}
    Assuming 
    \[\eta \leq \frac{1}{(2k)^{k/2}}\]
    and
    \[N \geq \frac{100d^2}{\delta'^{2d+3}}\]
    we have for any $3 \leq a \leq d$ with probability $1-\frac{\delta'}{d}$ that 
    \[\left|\sum_{j=1}^N \left(\frac{\hat{X}_{i,j}}{\sqrt{N}}\right)^a\right| \leq \delta'^{d+1}.\]
\end{lemma}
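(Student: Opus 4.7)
The plan is to use Markov's inequality on $S_a^2$, where
\[
S_a := \sum_{j=1}^N \left(\frac{\hat X_{i,j}}{\sqrt N}\right)^a = \frac{1}{N^{a/2}}\sum_{j=1}^N \hat X_{i,j}^a,
\]
exploiting the fact that for a standard Gaussian the sum of $a$-th powers scaled by $N^{-a/2}$ with $a\geq 3$ concentrates tightly around a small mean. Since $|S_a|\geq \delta'^{d+1}$ implies $S_a^2 \geq \delta'^{2d+2}$, it suffices to prove $\Expec{S_a^2}[\hat X] \leq \delta'^{2d+3}/d$.

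First, I would write $S_a^2 = N^{-a}\sum_{j_1,j_2} \hat X_{i,j_1}^a \hat X_{i,j_2}^a$, noting this is a polynomial of degree $2a \leq 2d$ in the $\hat X_{ij}$'s (which is at most $k$ by the choice of $k$ in the main theorem). It has $N^2$ monomials, each with coefficient $N^{-a}$, so the sum of absolute values of coefficients is $N^{2-a} \leq 1/N$ for $a\geq 3$. By \Cref{LEM:PTFPROOF:XmomentmatchingthentildeXmomentmatching}, $\hat X$ matches the moments of $\hat Y \sim \cN(0,I_{nN})$ up to degree $k$ with slack $\hat\eta = (2k)^{k/2}\eta \leq 1$ (using the lemma's hypothesis on $\eta$). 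Thus
\[
\left|\Expec{S_a^2}[\hat X] - \Expec{S_a^2}[\hat Y]\right| \leq \hat\eta \cdot N^{2-a} \leq \frac{1}{N}.
\]

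Next, I would compute $\Expec{S_a^2}[\hat Y]$ directly. Since the coordinates of $\hat Y$ are independent standard Gaussians,
\[
\Expec{S_a^2}[\hat Y] = \frac{1}{N^a}\Bigl[N\cdot (2a-1)!! + N(N-1)\cdot \mathds{1}_{a\text{ even}} \cdot ((a-1)!!)^2\Bigr].
\]
For odd $a\geq 3$ this equals $(2a-1)!!/N^{a-1} \leq (2d)^d/N^2$. For even $a\geq 4$ both terms are at most $O_d(1)/N^2$ (using $a-2\geq 2$ in the second term). In either case $\Expec{S_a^2}[\hat Y] \leq O_d(1)/N^2$, so combining with Step 1 yields $\Expec{S_a^2}[\hat X] \leq O_d(1/N^2) + 1/N \leq O_d(1)/N$.

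Finally, applying Markov and the hypothesis $N \geq 100d^2/\delta'^{2d+3}$:
\[
\Prob{|S_a| \geq \delta'^{d+1}}[\hat X] \leq \frac{\Expec{S_a^2}[\hat X]}{\delta'^{2d+2}} \leq \frac{O_d(1)}{N\cdot \delta'^{2d+2}} \leq \frac{\delta'}{d},
\]
where the last inequality holds once the $d$-dependent constant in $100d^2$ absorbs the $O_d(1)$ factor. The mildly annoying technical point will be Step 2: making sure the Gaussian computation of $\Expec{S_a^2}[\hat Y]$ truly gives $O_d(1/N^2)$ uniformly in $a \in \{3,\dots,d\}$, in particular handling the $(\Expec{\hat Y^a})^2$ contribution for even $a\geq 4$ (which scales as $1/N^{a-2}$, hence is worst at $a=4$ but still $O(1/N^2)$). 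Everything else is a routine application of Markov plus moment-matching; no Gaussian concentration above constant-order moments is needed, which is exactly what makes the argument work in the weaker moment-matching regime considered here.
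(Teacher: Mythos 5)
Your overall architecture — Markov on $S_a^2$, transferring from the Gaussian $\hat Y$ to $\hat X$ via the $\ell_1$ coefficient sum and the slack $\hat\eta\leq 1$, and then computing $\Expec{S_a^2}[\hat Y]$ — is the same as the paper's, and your exact Gaussian computation in Step~2 is in fact sharper than the paper's crude moment bound $\Expec{\hat X_{i,\cdot}^{a\alpha}}\leq (2a)^a$. However, there is a real gap in the final accounting. After linearizing to $\Expec{S_a^2}[\hat X]\leq O_d(1)/N^2 + 1/N \leq O_d(1)/N$, your Markov step needs
\[
\frac{O_d(1)}{N\,\delta'^{2d+2}} \;\leq\; \frac{\delta'}{d},
\]
which, under the hypothesis $N\geq 100d^2/\delta'^{2d+3}$, reduces to $O_d(1)\leq 100d$. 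But your $O_d(1)$ is of order $(2d-1)!!$ (from $\Expec{\hat Y_{i,j}^{2a}}$), and $(2d-1)!! > 100d$ already for moderate $d$. Saying "the $d$-dependent constant in $100d^2$ absorbs the $O_d(1)$" is not valid here: that constant is fixed in the lemma statement, and the lemma must hold for every $N$ satisfying the stated hypothesis, including the borderline case $N=100d^2/\delta'^{2d+3}$.

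The paper's proof avoids this precisely by \emph{not} collapsing the powers of $N$: it keeps the bound in the form $(2a)^a/(N^{a-2}\delta'^{2d+2})$ and uses the elementary inequality $\bigl((2a)^a\bigr)^{1/(a-2)}\leq 100a$ for $a\geq 3$, which converts the large factorial-type constant into $(100a/N)^{a-2}$ and lets the power $a-2\geq 1$ do the work. You could repair your argument in the same spirit by keeping the $1/N^{a-1}$ and $1/N^{a-2}$ powers from your exact Gaussian computation instead of linearizing to $1/N$, and then bounding $1/N^{a-2}\leq (\delta'^{2d+3}/(100d^2))^{a-2}$; alternatively, you could first argue that $\delta'\leq 1/(2d)$ always holds in the context where the lemma is invoked, which forces $N$ to be at least $(2d)^{\Omega(d)}$, but neither of these steps is present in your proposal and the argument as written does not establish the lemma.
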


Using these three lemmas, we are now able to prove {Lemma~\protect\hyperlink{LEM:PTFPROOF:concentrationsummary:RESTATED:APPENDIX}{\ref*{LEM:PTFPROOF:concentrationsummary}}}.

\begin{proof}[Proof of {Lemma~\protect\hyperlink{LEM:PTFPROOF:concentrationsummary:RESTATED:APPENDIX}{\ref*{LEM:PTFPROOF:concentrationsummary}}}]
    It remains to argue that there is a choice of $N$ such that the condition on $\eta$ in the lemma statement ensures that the conditions on $\eta$ in Lemmas \ref{LEM:APPENDIX:concentrationfora=1}, \ref{LEM:APPENDIX:concentrationfora=2} and \ref{LEM:APPENDIX:concentrationfora>=3} are satisfied.
    We argue below that the following choice of $N$ is enough, which will complete the proof,
    \begin{equation}\label{EQ:APPENDIX:definitionofN}
        N \coloneqq \Theta_d\left(\frac{1}{\delta'^{2d+3}}\right) = \Theta_d\left(\frac{n^{3d(2d+3)/2}}{\delta^{(d+1)(2d+3)}}\right).
    \end{equation}
    
    For \Cref{LEM:APPENDIX:concentrationfora=1}, we need $\eta \leq \frac{\frac{1}{\delta'd}-1}{N (2k)^{k/2}}$.
    Plugging in the value of $N$ and $\delta'$, it is enough for $\eta$ to satisfy (note that $\frac{1}{\delta'd} - 1 \geq 1$ since $\delta'$ is small) $\eta \leq O_d\left(\frac{\delta^{O_d(1)}}{n^{\Omega_d(1)} (2k)^{k/2}}\right)$, which holds since by assumption $\eta \leq \delta^{O_d(1)}n^{-\Omega_d(1)}k^{-k}$.
    
    For \Cref{LEM:APPENDIX:concentrationfora=2}, we need $\eta \leq \frac{\frac{\delta'^{2d+3}}{d}-\frac{2}{N}}{3(2k)^{k/2}}$ and $N > \frac{2d}{\delta'^{2d+3}}$.
    The latter is clearly satisfied by the choice of $N$ as in \eqref{EQ:APPENDIX:definitionofN}.
    For the former, plugging in again the values of $N$ and $\delta'$, it is enough for $\eta$ to satisfy $\eta \leq O_d\left(\frac{\delta^{O_d(1)}}{n^{\Omega_d(1)}(2k)^{k/2}}\right)$, which again holds since $\eta \leq \delta^{O_d(1)}n^{-\Omega_d(1)}k^{-k}$.

    For \Cref{LEM:APPENDIX:concentrationfora>=3}, we need $N \geq \frac{100d^2}{\delta'^{2d+3}}$ and $\eta \leq \frac{1}{(2k)^{k/2}}$. The former is again directly satisfied by the choice of $N$. Furthermore, also the latter is true, again since we assume $\eta \leq \delta^{O_d(1)}n^{-\Omega_d(1)}k^{-k}$.
\end{proof}

Next, we want to prove Lemmas \ref{LEM:APPENDIX:concentrationfora=1}, \ref{LEM:APPENDIX:concentrationfora=2} and \ref{LEM:APPENDIX:concentrationfora>=3}.

\begin{proof}[Proof of \Cref{LEM:APPENDIX:concentrationfora=1}]
    Using Markov's inequality, we get that 
    \[\Prob{\left|\sum_{j=1}^N \frac{\hat{X}_{i,j}}{\sqrt{N}}\right| > \frac{1}{\delta'}} \leq \frac{\Expec{\frac{1}{N} \left(\sum_{j=1}^N \hat{X}_{i,j}\right)^2}}{\left(\frac{1}{\delta'}\right)^2}.\]
    Since, $\hat{X}$ approximates the moments of $\mathcal{N}(0,I_{nN})$ up to degree $k$ and error $\hat{\eta}$, we can compute the above expectation as follows
    \[\Expec{\left(\sum_{j=1}^N \hat{X}_{i,j}\right)^2} = \sum_{|\alpha| = 2} \binom{2}{\alpha} \Expec{\hat{X}_{i,\cdot}^\alpha} \leq \hat{\eta} N(N-1) + (1+\hat{\eta})N = \hat{\eta} N^2 + N.\]
    Here, we used that the expectation $\Expec{\hat{X}_{i,\cdot}^\alpha}$ is at most $\hat{\eta}$ if some $\alpha_j = 1$ (for such $\alpha$ the expectation of $\mathcal{N}(0,I_{nN})$ is $0$) and at most $1+\hat{\eta}$ otherwise (since the expectation of $\mathcal{N}(0,I_{nN})$ is $1$ for such $\alpha$).
    Thus, we get that
    \[\Prob{\left|\sum_{j=1}^N \frac{\hat{X}_{i,j}}{\sqrt{N}}\right| > \frac{1}{\delta'}} \leq (\hat{\eta} N + 1)\delta'^2 \leq \frac{\delta'}{d}\]
    since by assumption we have that $\hat{\eta} = (2k)^{k/2} \eta \leq  \frac{\frac{1}{\delta'd}-1}{N}$.
\end{proof}

\begin{proof}[Proof of \Cref{LEM:APPENDIX:concentrationfora=2}]
    We again use Markov's inequality to get
    \[\Prob{\left|\frac{1}{N}\left(\sum_{j=1}^N \hat{X}_{i,j}^2\right)-1\right| > \delta'^{d+1}} \leq \frac{\Expec{\left(\frac{1}{N}\left(\sum_{j=1}^N \hat{X}_{i,j}^2\right)-1\right)^2}}{\delta'^{2d+2}}.\]
    We are thus interested in $\Expec{\left(\sum_{j=1}^N \left(\hat{X}_{i,j}^2 - 1 \right)\right)^2}$.
    We can expand this as follows 
    \[\Expec{\left(\sum_{j=1}^N \left(\hat{X}_{i,j}^2 - 1 \right)\right)^2} = \sum_{|\alpha| = 2} \binom{2}{\alpha} \Expec{\left(\hat{X}_{i,\cdot}^2-1\right)^\alpha}.\]
    If some $\alpha_j = 1$, then the expectation will be, for some $j_1 \neq j_2 \in [N]$,
    \begin{align*}
        \Expec{\left(\hat{X}_{i,j_1}^2-1\right)\left(\hat{X}_{i,j_2}^2-1\right)} &= \Expec{\hat{X}_{i,j_1}^2\hat{X}_{i,j_2}^2} - \Expec{\hat{X}_{i,j_1}^2} - \Expec{\hat{X}_{i,j_2}^2} + 1\\
        &\leq 1 + \hat{\eta} - (1-\hat{\eta}) - (1-\hat{\eta}) + 1\\
        &= 3 \hat{\eta}.
    \end{align*}
    As for the proof of \Cref{LEM:APPENDIX:concentrationfora=1}, this is true since the corresponding Gaussian moments are all $1$.
    If no $\alpha_j = 1$, we get 
    \begin{align*}
        \Expec{\left(\hat{X}_{i,j}^2-1\right)^2} &= \Expec{\hat{X}_{i,j}^4} - 2\Expec{\hat{X}_{i,j}^2} + 1\\
        &\leq 3 + \hat{\eta} - 2(1-\hat{\eta}) + 1\\
        &= 2 + 2 \hat{\eta},
    \end{align*}
    since the second and fourth moment of a Gaussian are $1$ and $3$ respectively.
    Summarized we get that
    \begin{align*}
        \Expec{\left(\sum_{j=1}^N \left(\hat{X}_{i,j}^2 - 1 \right)\right)^2} &\leq N(2+2\hat{\eta}) + N(N-1)3\hat{\eta}\\
        &= 2N -N\hat{\eta} + 3N^2\hat{\eta}.
    \end{align*}
    Together with the above we get that 
    \begin{align*}
        \Prob{\left|\frac{1}{N}\left(\sum_{j=1}^N \hat{X}_{i,j}^2\right)-1\right| > \delta'^{d+1}} &\leq \frac{2N - N \hat{\eta} + 3N^2 \hat{\eta}}{N^2\delta'^{2d+2}}\\
        &\leq \frac{\frac{2}{N} + 3 \hat{\eta}}{\delta'^{2d+2}}\\
        &\leq \frac{\delta'}{d},
    \end{align*}
    since by assumption we have that $\hat{\eta} = (2k)^{k/2} \eta \leq \frac{\frac{\delta'^{2d+3}}{d}-\frac{2}{N}}{3}$.
\end{proof}

\begin{proof}[Proof of \Cref{LEM:APPENDIX:concentrationfora>=3}]
    For any $3 \leq a \leq d$, similar to before, we compute using Markov's inequality
    \[\Prob{\left|\frac{1}{N^{a/2}} \sum_{j=1}^N\hat{X}_{i,j}^a\right| > \delta'^{d+1}} \leq \frac{\Expec{\left(\frac{1}{N^{a/2}} \sum_{j=1}^N \hat{X}_{i,j}^a\right)^2}}{\delta'^{2d+2}}.\]
    We thus need to bound $\Expec{\left(\sum_{j=1}^N \hat{X}_{i,j}^a\right)^2}$.
    We get
    \[\Expec{\left(\sum_{j=1}^N \hat{X}_{i,j}^a\right)^2} = \sum_{|\alpha| = 2}\binom{2}{\alpha} \Expec{\hat{X}_{i,\cdot}^{a \cdot \alpha}} \leq N^2(2a)^a\]
    since for any $\alpha$ we have $\Expec{\hat{X}_{i,\cdot}^{a \cdot \alpha}} \leq (2a)^a$ by \Cref{FACT:APPENDIX:boundformomentsofD'}, since by assumption we have $\eta \leq \frac{1}{(2k)^{k/2}}$ or in other words $\hat{\eta} \leq 1$.
    Combining this with the above, we get
    \[\Prob{\left|\frac{1}{N^{a/2}} \sum_{j=1}^N\hat{X}_{i,j}^a\right| > \delta'^{d+1}} \leq \frac{N^2(2a)^a}{N^a\delta'^{2d+2}} = \frac{(2a)^a}{N^{a-2}\delta'^{2d+2}}.\]
    We have for any $a \geq 3$ that $((2a)^a)^{\frac{1}{a-2}} \leq 100a$ and thus we get that
    \begin{align*}
        \Prob{\left|\frac{1}{N^{a/2}} \sum_{j=1}^N\hat{X}_{i,j}^a\right| > \delta'^{d+1}} &\leq \frac{(100a)^{a-2}}{N^{a-2}\delta'^{2d+2}}\\
        &= \left(\frac{100a}{N}\right)^{a-2}\frac{1}{\delta'^{2d+2}}\\
        &\leq \left(\frac{\delta'^{2d+3}}{d}\right)^{a-2}\frac{1}{\delta'^{2d+2}}\\
        &\leq \frac{\delta'}{d}
    \end{align*}
    In the third step we used that by assumption we have that $N \geq \frac{100d^2}{\delta'^{2d+3}}$ and $d \geq a$.
    In the last step, we then used that $a-2 \geq 1$, together with $\frac{\delta'^{2d+3}}{d} \leq 1$.
\end{proof}

Finally, it remains to prove {Lemma~\protect\hyperlink{LEM:PTFPROOF:replacementformonomialwithindelta':RESTATED:APPENDIX}{\ref*{LEM:PTFPROOF:replacementformonomialwithindelta'}}}.

\begin{proof}[Proof of {Lemma~\protect\hyperlink{LEM:PTFPROOF:replacementformonomialwithindelta':RESTATED:APPENDIX}{\ref*{LEM:PTFPROOF:replacementformonomialwithindelta'}}}]
    By the assumptions, the conditions (\hyperref[EQ:PTFPROOF:replacementconcentrationa=1:RESTATED:APPENDIX]{\ref*{EQ:PTFPROOF:replacementconcentrationa=1}}), (\hyperref[EQ:PTFPROOF:replacementconcentrationa=2:RESTATED:APPENDIX]{\ref*{EQ:PTFPROOF:replacementconcentrationa=2}}) and (\hyperref[EQ:PTFPROOF:replacementconcentrationa>=3:RESTATED:APPENDIX]{\ref*{EQ:PTFPROOF:replacementconcentrationa>=3}}) hold simultaneously with probability $1-d\frac{\delta'}{d} = 1-\delta'$.
    We want to construct a multilinear polynomial in $\hat{X}_{i,j}$ such that conditioned on (\hyperref[EQ:PTFPROOF:replacementconcentrationa=1:RESTATED:APPENDIX]{\ref*{EQ:PTFPROOF:replacementconcentrationa=1}}), (\hyperref[EQ:PTFPROOF:replacementconcentrationa=2:RESTATED:APPENDIX]{\ref*{EQ:PTFPROOF:replacementconcentrationa=2}}) and (\hyperref[EQ:PTFPROOF:replacementconcentrationa>=3:RESTATED:APPENDIX]{\ref*{EQ:PTFPROOF:replacementconcentrationa>=3}}) it is within $O_d(\delta')$ of $X_i^\ell$.
    This will complete the proof.

    The construction follows \cite[Proof of Lemma 15]{Kane:foolingforkGaussians}.
    First, note that by construction we have 
    \[X_i^\ell = N^{-\ell/2} \left(\sum_{j=1}^N \hat{X}_{i,j}\right)^\ell.\]
    Expanding the sum and grouping the terms according together according how the power of $\ell$ is partitioned to different $\hat{X}_{i,j}$, we get
    \[X_i^\ell = \sum_{r = 1}^\ell\sum_{\substack{1 \leq a_1 \leq \dots \leq a_r\\\sum a_s = \ell}} c(a_1,\dots,a_r) \sum_{\substack{j_1, \dots, j_r \in [N]\\\text{distinct}}} \prod_{s=1}^{r} \left(\frac{\hat{X}_{i,j_s}}{\sqrt{N}}\right)^{a_s},\]
    where the $c(a_1,\dots,a_{r})$ are constants that capture how often the latter terms occur if we multiply the product out.
    More precisely,
    \[c(a_1,\dots,a_r) = \binom{\ell}{a_1,\dots,a_r} \prod_{t=1}^\ell \frac{1}{|\{s: a_s = t\}|}.\]
    The strategy is now as follows.
    We want to approximate the terms 
    \[\sum_{\substack{j_1, \dots, j_r \in [N]\\\text{distinct}}} \prod_{s=1}^r \left(\frac{\hat{X}_{i,j_s}}{\sqrt{N}}\right)^{a_s}\]
    separately by multilinear polynomials.
    If $a_r = 1$, then the terms is already multilinear and there is nothing to do.
    Note that if $\ell = 1$, then we only have this case, so from now on we assume $\ell \geq 2$.
    If $a_r = 2$, then we want to show that 
    \begin{equation}\label{EQ:APPENDIX:proofreplacementmonomialhatl=2}
        \left|\sum_{\substack{j_1, \dots, j_r \in [N]\\\text{distinct}}} \prod_{s=1}^r \left(\frac{\hat{X}_{i,j_s}}{\sqrt{N}}\right)^{a_s} - \sum_{\substack{j_1, \dots, j_{\hat{r}} \in [N]\\\text{distinct}}} \prod_{s=1}^{\hat{r}} \frac{\hat{X}_{i,j_s}}{\sqrt{N}}\right| \leq O_d(\delta'),
    \end{equation}
    where here and also later $\hat{r}$ is the largest $s \in [r]$ such that $a_s = 1$ (we assume here and throughout this proof that in case no $a_s = 1$, i.e. we have an empty sum and an empty product, the second term on the LHS is $1$; the reason for this is that, intuitively, we want to make the term multilinear by removing all powers higher than $1$, which leaves $1$ in case no $a_s = 1$).
    If $a_r \geq 3$, then we want to show that
    \begin{equation}\label{EQ:APPENDIX:proofreplacementmonomialhatl>=3}
        \left|\sum_{\substack{j_1, \dots, j_r \in [N]\\\text{distinct}}} \prod_{s=1}^r \left(\frac{\hat{X}_{i,j_s}}{\sqrt{N}}\right)^{a_s}\right| \leq O_d(\delta').
    \end{equation}
    Once we have shown these, the idea is to remove all terms with $a_r \geq 3$ and replace the terms for $a_r = 2$ by the multilinear term on the LHS of \eqref{EQ:APPENDIX:proofreplacementmonomialhatl=2}.
    We get that $X_i^\ell$ is within $O_d(\delta')$ of 
    \[\sum_{r = 1}^\ell\sum_{\substack{1 \leq a_1 \leq \dots \leq a_r\\\sum a_s = \ell\\a_r \leq 2}} c(a_1,\dots,a_r) \sum_{\substack{j_1, \dots, j_{\hat{r}} \in [N]\\\text{distinct}}} \prod_{s=1}^{\hat{r}} \frac{\hat{X}_{i,j_s}}{\sqrt{N}},\]
    which is multilinear.
    The $O_d$ here directly also covers that we need to multiply the $O_d(\delta')$ from above with the constants $c(a_1,\dots,a_r)$ and then sum over the choices of $a_1, \dots, a_r$ and over $r$.
    This then completes the proof.
    
    Thus, it remains to prove \eqref{EQ:APPENDIX:proofreplacementmonomialhatl=2} and \eqref{EQ:APPENDIX:proofreplacementmonomialhatl>=3}.
    We do this using the assumptions of the lemma and by induction on $r$ (and technically also over $\ell$; the base case for $\ell = 1$ was already covered above).
    We also inductively show that 
    \begin{equation}\label{EQ:APPENDIX:proofreplacementmonomialboundonmultilinearterm}
        \left|\sum_{\substack{j_1, \dots, j_{\hat{r}} \in [N]\\\text{distinct}}} \prod_{s=1}^{\hat{r}} \frac{\hat{X}_{i,j_s}}{\sqrt{N}}\right| \leq O_d\left(\left(\frac{1}{\delta'}\right)^r\right).
    \end{equation}
    This will be needed to prove \eqref{EQ:APPENDIX:proofreplacementmonomialhatl=2} and \eqref{EQ:APPENDIX:proofreplacementmonomialhatl>=3}.
    
    \textbf{Base case $r = 1$.}
    If $r= 1$, then
    \[\sum_{\substack{j_1, \dots, j_r \in [N]\\\text{distinct}}} \prod_{s=1}^{r} \left(\frac{\hat{X}_{i,j_s}}{\sqrt{N}}\right)^{a_s} = \sum_{j_1} \left(\frac{\hat{X}_{i,j_1}}{\sqrt{N}}\right)^{\ell}.\]
    In this case, we have $a_r = \ell$.
    If $\ell = 2$, we need to show \eqref{EQ:APPENDIX:proofreplacementmonomialhatl=2} and this follows directly from (\hyperref[EQ:PTFPROOF:replacementconcentrationa=2:RESTATED:APPENDIX]{\ref*{EQ:PTFPROOF:replacementconcentrationa=2}}) (since $\delta'^{d+1} \leq O_d(\delta')$).
    If $\ell \geq 3$, then we need to show \eqref{EQ:APPENDIX:proofreplacementmonomialhatl>=3}, which follows again directly from (\hyperref[EQ:PTFPROOF:replacementconcentrationa>=3:RESTATED:APPENDIX]{\ref*{EQ:PTFPROOF:replacementconcentrationa>=3}}).
    Also note that \eqref{EQ:APPENDIX:proofreplacementmonomialboundonmultilinearterm} holds since $1 \leq O_d\left(\frac{1}{\delta'}\right)$.

    \textbf{Induction step.}
    We now assume that $r \geq 2$ and that we have proven the result for all values smaller than $r$.
    The goal is to now show the result for $r$.
    We compute
    \begin{align}
        &\phantom{{} = {}} \sum_{\substack{j_1, \dots, j_r \in [N]\\\text{distinct}}} \prod_{s=1}^r \left(\frac{\hat{X}_{i,j_s}}{\sqrt{N}}\right)^{a_s} \nonumber\\
        &= \left(\sum_{\substack{j_1, \dots, j_{r-1} \in [N]\\\text{distinct}}} \prod_{s=1}^{r-1} \left(\frac{\hat{X}_{i,j_s}}{\sqrt{N}}\right)^{a_s} \right) \left(\sum_{j_r = 1}^N \left(\frac{\hat{X}_{i,j_r}}{\sqrt{N}}\right)^{a_r} - \sum_{j_r \in \{j_1, \dots, j_{r-1}\}} \left(\frac{\hat{X}_{i,j_r}}{\sqrt{N}}\right)^{a_r}\right) \nonumber\\
        &= \left(\sum_{\substack{j_1, \dots, j_{r-1} \in [N]\\\text{distinct}}} \prod_{s=1}^{r-1} \left(\frac{\hat{X}_{i,j_s}}{\sqrt{N}}\right)^{a_s}\right) \left(\sum_{j_r = 1}^N \left(\frac{\hat{X}_{i,j_r}}{\sqrt{N}}\right)^{a_r}\right) \nonumber\\
        &\phantom{{} = {}} - \left(\sum_{\substack{j_1, \dots, j_{r-1} \in [N]\\\text{distinct}}} \prod_{s=1}^{r-1} \left(\frac{\hat{X}_{i,j_s}}{\sqrt{N}}\right)^{a_s}\right) \left(\sum_{j_r \in \{j_1, \dots, j_{r-1}\}} \left(\frac{\hat{X}_{i,j_r}}{\sqrt{N}}\right)^{a_r}\right). \label{EQ:APPENDIX:proofreplacementmonomialdecomposition}
    \end{align}
    We first want to analyze the second term.
    Note that this term is equal to
    \[\sum_{t = 1}^{r-1} \sum_{\substack{j_1, \dots, j_{r-1} \in [N]\\\text{distinct}}} \prod_{s=1}^{r-1} \left(\frac{\hat{X}_{i,j_s}}{\sqrt{N}}\right)^{a_s + \mathds{1}_{[s = t]} a_r}.\]
    Now, note that all terms in this sum have been considered in the induction hypothesis.
    Also note that $a_t + a_r \geq 3$ (since $a_r \geq 2$, otherwise there is nothing to prove) and thus every term in the above sum over $t$ is at most $O_d(\delta')$ in absolute value by \eqref{EQ:APPENDIX:proofreplacementmonomialhatl>=3}.
    Thus, we also get that
    \begin{equation}\label{EQ:APPENDIX:proofreplacementmonomialboundsecondpart}
        \left|\left(\sum_{\substack{j_1, \dots, j_{r-1} \in [N]\\\text{distinct}}} \prod_{s=1}^{r-1} \left(\frac{\hat{X}_{i,j_s}}{\sqrt{N}}\right)^{a_s}\right) \left(\sum_{j_r \in \{j_1, \dots, j_{r-1}\}} \left(\frac{\hat{X}_{i,j_r}}{\sqrt{N}}\right)^{a_r}\right)\right| \leq O_d(\delta').
    \end{equation}
    
    In the following it thus remains to analyze the first term in order to show \eqref{EQ:APPENDIX:proofreplacementmonomialhatl=2} respectively \eqref{EQ:APPENDIX:proofreplacementmonomialhatl>=3}
    \[\left(\sum_{\substack{j_1, \dots, j_{r-1} \in [N]\\\text{distinct}}} \prod_{s=1}^{r-1} \left(\frac{\hat{X}_{i,j_s}}{\sqrt{N}}\right)^{a_s}\right) \left(\sum_{j_r = 1}^N \left(\frac{\hat{X}_{i,j_r}}{\sqrt{N}}\right)^{a_r}\right).\]
    We now need to distinguish the cases $a_r \geq 3$ (in which case we need to show \eqref{EQ:APPENDIX:proofreplacementmonomialhatl>=3}) and $a_r = 2$ (in which case we need to show \eqref{EQ:APPENDIX:proofreplacementmonomialhatl=2}).

    \textbf{Case I: $a_r \geq 3$, proof of \eqref{EQ:APPENDIX:proofreplacementmonomialhatl>=3}.}
    In this case, we have, by (\hyperref[EQ:PTFPROOF:replacementconcentrationa>=3:RESTATED:APPENDIX]{\ref*{EQ:PTFPROOF:replacementconcentrationa>=3}}), that
    \[\left|\sum_{j_r = 1}^N \left(\frac{\hat{X}_{i,j_r}}{\sqrt{N}}\right)^{a_r}\right| \leq \delta'^{d+1}.\]
    To analyze the term
    \[\sum_{\substack{j_1, \dots, j_{r-1} \in [N]\\\text{distinct}}} \prod_{s=1}^{r-1} \left(\frac{\hat{X}_{i,j_s}}{\sqrt{N}}\right)^{a_s}\]
    we want to apply the induction hypothesis.
    We need to again distinguish three cases, namely $a_{r-1} \geq 3$ (in which case we can use \eqref{EQ:APPENDIX:proofreplacementmonomialhatl>=3}), $a_{r-1} = 2$ (in which case we can use \eqref{EQ:APPENDIX:proofreplacementmonomialhatl=2}) and $a_{r-1} = 1$ (in which case the term on the LHS of \eqref{EQ:APPENDIX:proofreplacementmonomialhatl=2} are in fact equal).
    We do this in the following.

    If $a_{r-1} \geq 3$, then, by the induction hypothesis for \eqref{EQ:APPENDIX:proofreplacementmonomialhatl>=3},
    \[\left|\sum_{\substack{j_1, \dots, j_{r-1} \in [N]\\\text{distinct}}} \prod_{s=1}^{r-1} \left(\frac{\hat{X}_{i,j_s}}{\sqrt{N}}\right)^{a_s}\right| \leq O_d(\delta')\]
    and thus we get, using the decomposition \eqref{EQ:APPENDIX:proofreplacementmonomialdecomposition} and the bound \eqref{EQ:APPENDIX:proofreplacementmonomialboundsecondpart} on the second term,
    \[\left|\sum_{\substack{j_1, \dots, j_r \in [N]\\\text{distinct}}} \prod_{s=1}^r \left(\frac{\hat{X}_{i,j_s}}{\sqrt{N}}\right)^{a_s}\right| \leq O_d(\delta')\delta'^{d+1} + O_d(\delta') \leq O_d(\delta').\]

    If $a_{r-1} = 2$, then, by the induction hypothesis for \eqref{EQ:APPENDIX:proofreplacementmonomialhatl=2}, we have that
    \[\left|\sum_{\substack{j_1, \dots, j_{r-1} \in [N]\\\text{distinct}}} \prod_{s=1}^{r-1} \left(\frac{\hat{X}_{i,j_s}}{\sqrt{N}}\right)^{a_s} - \sum_{\substack{j_1, \dots, j_{\hat{r}} \in [N]\\\text{distinct}}} \prod_{s=1}^{\hat{r}} \frac{\hat{X}_{i,j_s}}{\sqrt{N}}\right| \leq O_d(\delta').\]
    By the induction hypothesis on \eqref{EQ:APPENDIX:proofreplacementmonomialboundonmultilinearterm}, we get 
    \[\left|\sum_{\substack{j_1, \dots, j_{\hat{r}} \in [N]\\\text{distinct}}} \prod_{s=1}^{\hat{r}} \frac{\hat{X}_{i,j_s}}{\sqrt{N}}\right| \leq O_d\left(\left(\frac{1}{\delta'}\right)^{r-1}\right).\]
    Combining these two, we get
    \[\left|\sum_{\substack{j_1, \dots, j_{r-1} \in [N]\\\text{distinct}}} \prod_{s=1}^{r-1} \left(\frac{\hat{X}_{i,j_s}}{\sqrt{N}}\right)^{a_s}\right| \leq O_d(\delta') + O_d\left(\left(\frac{1}{\delta'}\right)^{r-1}\right).\]
    Thus, we get (note that $r \leq d$), using again the decomposition \eqref{EQ:APPENDIX:proofreplacementmonomialdecomposition} and the bound \eqref{EQ:APPENDIX:proofreplacementmonomialboundsecondpart} on the second term,
    \[\left|\sum_{\substack{j_1, \dots, j_r \in [N]\\\text{distinct}}} \prod_{s=1}^r \left(\frac{\hat{X}_{i,j_s}}{\sqrt{N}}\right)^{a_s}\right| \leq \left(O_d(\delta') + O_d\left(\left(\frac{1}{\delta'}\right)^{r-1}\right)\right)\delta'^{d+1} + O_d(\delta') \leq O_d(\delta').\]

    If $a_{r-1} = 1$, then $\hat{r} = r-1$ and thus
    \[\sum_{\substack{j_1, \dots, j_{r-1} \in [N]\\\text{distinct}}} \prod_{s=1}^{r-1} \left(\frac{\hat{X}_{i,j_s}}{\sqrt{N}}\right)^{a_s} = \sum_{\substack{j_1, \dots, j_{\hat{r}} \in [N]\\\text{distinct}}} \prod_{s=1}^{\hat{r}} \frac{\hat{X}_{i,j_s}}{\sqrt{N}}.\]
    As above, we get that, by the induction hypothesis on \eqref{EQ:APPENDIX:proofreplacementmonomialboundonmultilinearterm},
    \[\left|\sum_{\substack{j_1, \dots, j_{\hat{r}} \in [N]\\\text{distinct}}} \prod_{s=1}^{\hat{r}} \frac{\hat{X}_{i,j_s}}{\sqrt{N}}\right| \leq O_d\left(\left(\frac{1}{\delta'}\right)^{r-1}\right).\]
    Again by using the decomposition \eqref{EQ:APPENDIX:proofreplacementmonomialdecomposition} and the bound \eqref{EQ:APPENDIX:proofreplacementmonomialboundsecondpart} on the second term, we get
    \[\left|\sum_{\substack{j_1, \dots, j_r \in [N]\\\text{distinct}}} \prod_{s=1}^r \left(\frac{\hat{X}_{i,j_s}}{\sqrt{N}}\right)^{a_s}\right| \leq O_d\left(\left(\frac{1}{\delta'}\right)^{r-1}\right)\delta'^{d+1} + O_d(\delta') \leq O_d(\delta').\]

    Thus, for all three cases, we get that \eqref{EQ:APPENDIX:proofreplacementmonomialhatl>=3} still holds for $r$.
    
    \textbf{Case II: $a_r = 2$, proof of \eqref{EQ:APPENDIX:proofreplacementmonomialhatl=2}.}
    In this case, we have, by (\hyperref[EQ:PTFPROOF:replacementconcentrationa=2:RESTATED:APPENDIX]{\ref*{EQ:PTFPROOF:replacementconcentrationa=2}}), that
    \[\left|\sum_{j_r = 1}^N \left(\frac{\hat{X}_{i,j_r}}{\sqrt{N}}\right)^{a_r} - 1\right| \leq \delta'^{d+1}.\]
    For the term
    \[\sum_{\substack{j_1, \dots, j_{r-1} \in [N]\\\text{distinct}}} \prod_{s=1}^{r-1} \left(\frac{\hat{X}_{i,j_s}}{\sqrt{N}}\right)^{a_s},\]
    we again need to distinguish two cases, namely $a_{r-1} = 2$ and $a_{r-1} = 1$.

    If $a_{r-1} = 2$, then we have, as above, by the induction hypothesis for \eqref{EQ:APPENDIX:proofreplacementmonomialhatl=2},
    \[\left|\sum_{\substack{j_1, \dots, j_{r-1} \in [N]\\\text{distinct}}} \prod_{s=1}^{r-1} \left(\frac{\hat{X}_{i,j_s}}{\sqrt{N}}\right)^{a_s} - \sum_{\substack{j_1, \dots, j_{\hat{r}} \in [N]\\\text{distinct}}} \prod_{s=1}^{\hat{r}} \frac{\hat{X}_{i,j_s}}{\sqrt{N}}\right| \leq O_d(\delta').\]
    Also as above, by the induction hypothesis on \eqref{EQ:APPENDIX:proofreplacementmonomialboundonmultilinearterm}, we get 
    \[\left|\sum_{\substack{j_1, \dots, j_{\hat{r}} \in [N]\\\text{distinct}}} \prod_{s=1}^{\hat{r}} \frac{\hat{X}_{i,j_s}}{\sqrt{N}}\right| \leq O_d\left(\left(\frac{1}{\delta'}\right)^{r-1}\right).\]
    Combining these, we have that
    \begin{align*}
        &\phantom{{} = {}}\left|\left(\sum_{\substack{j_1, \dots, j_{r-1} \in [N]\\\text{distinct}}} \prod_{s=1}^{r-1} \left(\frac{\hat{X}_{i,j_s}}{\sqrt{N}}\right)^{a_s}\right) \left(\sum_{j_r = 1}^N \left(\frac{\hat{X}_{i,j_r}}{\sqrt{N}}\right)^{a_r}\right) - \sum_{\substack{j_1, \dots, j_{\hat{r}} \in [N]\\\text{distinct}}} \prod_{s=1}^{\hat{r}} \frac{\hat{X}_{i,j_s}}{\sqrt{N}}\right|\\
        &\leq \left|\sum_{\substack{j_1, \dots, j_{r-1} \in [N]\\\text{distinct}}} \prod_{s=1}^{r-1} \left(\frac{\hat{X}_{i,j_s}}{\sqrt{N}}\right)^{a_s} - \sum_{\substack{j_1, \dots, j_{\hat{r}} \in [N]\\\text{distinct}}} \prod_{s=1}^{\hat{r}} \frac{\hat{X}_{i,j_s}}{\sqrt{N}}\right|\left|\sum_{j_r = 1}^N \left(\frac{\hat{X}_{i,j_r}}{\sqrt{N}}\right)^{a_r}\right|\\
        &\phantom{{} = {}} + \left|\sum_{\substack{j_1, \dots, j_{\hat{r}} \in [N]\\\text{distinct}}} \prod_{s=1}^{\hat{r}} \frac{\hat{X}_{i,j_s}}{\sqrt{N}}\right|\left|\sum_{j_r = 1}^N \left(\frac{\hat{X}_{i,j_r}}{\sqrt{N}}\right)^{a_r} - 1\right|\\
        &\leq O_d(\delta')(1+\delta'^{d+1}) + O_d\left(\left(\frac{1}{\delta'}\right)^{r-1}\right) \delta'^{d+1}.
    \end{align*}
    Thus, we get that, using again the decomposition \eqref{EQ:APPENDIX:proofreplacementmonomialdecomposition} and the bound \eqref{EQ:APPENDIX:proofreplacementmonomialboundsecondpart} on the second term,
    \begin{align*}
        &\phantom{{} = {}} \left|\sum_{\substack{j_1, \dots, j_r \in [N]\\\text{distinct}}} \prod_{s=1}^r \left(\frac{\hat{X}_{i,j_s}}{\sqrt{N}}\right)^{a_s} - \sum_{\substack{j_1, \dots, j_{\hat{r}} \in [N]\\\text{distinct}}} \prod_{s=1}^{\hat{r}} \frac{\hat{X}_{i,j_s}}{\sqrt{N}}\right|\\        
        &\leq O_d(\delta')(1+\delta'^{d+1}) + O_d\left(\left(\frac{1}{\delta'}\right)^{r-1}\right) \delta'^{d+1} + O_d(\delta')\\
        &= O_d(\delta') + O_d(\delta')\delta'^{d+1} + O_d\left(\left(\frac{1}{\delta'}\right)^{r-1}\right) \delta'^{d+1} + O_d(\delta')\\
        &\leq O_d(\delta').
    \end{align*}

    If $a_{r-1} = 1$, then we get
    \[\sum_{\substack{j_1, \dots, j_{r-1} \in [N]\\\text{distinct}}} \prod_{s=1}^{r-1} \left(\frac{\hat{X}_{i,j_s}}{\sqrt{N}}\right)^{a_s} = \sum_{\substack{j_1, \dots, j_{\hat{r}} \in [N]\\\text{distinct}}} \prod_{s=1}^{\hat{r}} \frac{\hat{X}_{i,j_s}}{\sqrt{N}}.\]
    Again, by the induction hypothesis on \eqref{EQ:APPENDIX:proofreplacementmonomialboundonmultilinearterm}, we get 
    \[\left|\sum_{\substack{j_1, \dots, j_{\hat{r}} \in [N]\\\text{distinct}}} \prod_{s=1}^{\hat{r}} \frac{\hat{X}_{i,j_s}}{\sqrt{N}}\right| \leq O_d\left(\left(\frac{1}{\delta'}\right)^{r-1}\right).\]
    As above, we get that
    \begin{align*}
        &\phantom{{} = {}}\left|\left(\sum_{\substack{j_1, \dots, j_{r-1} \in [N]\\\text{distinct}}} \prod_{s=1}^{r-1} \left(\frac{\hat{X}_{i,j_s}}{\sqrt{N}}\right)^{a_s}\right) \left(\sum_{j_r = 1}^N \left(\frac{\hat{X}_{i,j_r}}{\sqrt{N}}\right)^{a_r}\right) - \sum_{\substack{j_1, \dots, j_{\hat{r}} \in [N]\\\text{distinct}}} \prod_{s=1}^{\hat{r}} \frac{\hat{X}_{i,j_s}}{\sqrt{N}}\right|\\
        &\leq \left|\sum_{\substack{j_1, \dots, j_{\hat{r}} \in [N]\\\text{distinct}}} \prod_{s=1}^{\hat{r}} \frac{\hat{X}_{i,j_s}}{\sqrt{N}}\right|\left|\sum_{j_r = 1}^N \left(\frac{\hat{X}_{i,j_r}}{\sqrt{N}}\right)^{a_r} - 1\right|\\
        &\leq O_d\left(\left(\frac{1}{\delta'}\right)^{r-1}\right) \delta'^{d+1}.
    \end{align*}
    Thus, we can conclude that, exactly as above by the decomposition \eqref{EQ:APPENDIX:proofreplacementmonomialdecomposition} and the bound \eqref{EQ:APPENDIX:proofreplacementmonomialboundsecondpart} on the second term,
    \begin{align*}
        &\phantom{{} = {}} \left|\sum_{\substack{j_1, \dots, j_r \in [N]\\\text{distinct}}} \prod_{s=1}^r \left(\frac{\hat{X}_{i,j_s}}{\sqrt{N}}\right)^{a_s} - \sum_{\substack{j_1, \dots, j_{\hat{r}} \in [N]\\\text{distinct}}} \prod_{s=1}^{\hat{r}} \frac{\hat{X}_{i,j_s}}{\sqrt{N}}\right|\\
        &\leq O_d\left(\left(\frac{1}{\delta'}\right)^{r-1}\right)\delta'^{d+1} + O_d(\delta')\\
        &\leq O_d(\delta').
    \end{align*}
    Hence, for both cases, we get that \eqref{EQ:APPENDIX:proofreplacementmonomialhatl=2} still holds for $r$.

    \textbf{Proof of \eqref{EQ:APPENDIX:proofreplacementmonomialboundonmultilinearterm}.}
    We can also analogously show \eqref{EQ:APPENDIX:proofreplacementmonomialboundonmultilinearterm}.
    First note that if $\hat{r} < r-1$, then \eqref{EQ:APPENDIX:proofreplacementmonomialboundonmultilinearterm} follows directly from the induction hypothesis since the term
    \[\sum_{\substack{j_1, \dots, j_{\hat{r}} \in [N]\\\text{distinct}}} \prod_{s=1}^{\hat{r}} \frac{\hat{X}_{i,j_s}}{\sqrt{N}}\]
    already appeared for $r-1$ by using
    \[a'_s = \begin{cases} a_s &\text{if } s < r - 1\\ a_{r-1} + a_r &\text{if } s = r - 1\end{cases}\]
    and we can directly apply the induction hypothesis.
    Thus, we only need to show \eqref{EQ:APPENDIX:proofreplacementmonomialboundonmultilinearterm} for $\hat{r} \geq r - 1$ and in particular $\hat{r} \geq 1$.
    If $\hat{r} = 1$, then by (\hyperref[EQ:PTFPROOF:replacementconcentrationa=1:RESTATED:APPENDIX]{\ref*{EQ:PTFPROOF:replacementconcentrationa=1}}), the term is at most $\frac{1}{\delta'} \leq O_d\left(\left(\frac{1}{\delta'}\right)^r\right)$.
    Otherwise,~$\hat{r} \geq 2$ and we can do a similar expansion as above to get
    \begin{align*}
        &\phantom{{} = {}} \sum_{\substack{j_1, \dots, j_{\hat{r}} \in [N]\\\text{distinct}}} \prod_{s=1}^{\hat{r}} \frac{\hat{X}_{i,j_s}}{\sqrt{N}}\\
        &= \left(\sum_{\substack{j_2, \dots, j_{\hat{r}} \in [N]\\\text{distinct}}} \prod_{s=2}^{\hat{r}} \frac{\hat{X}_{i,j_s}}{\sqrt{N}}\right) \left(\sum_{j_1 = 1}^N \frac{\hat{X}_{i,j_1}}{\sqrt{N}} - \sum_{j_1 \in \{j_2, \dots, j_{\hat{r}}\}} \frac{\hat{X}_{i,j_1}}{\sqrt{N}}\right)\\
        &= \left(\sum_{\substack{j_2, \dots, j_{\hat{r}} \in [N]\\\text{distinct}}} \prod_{s=2}^{\hat{r}} \frac{\hat{X}_{i,j_s}}{\sqrt{N}}\right) \left(\sum_{j_1 = 1}^N \frac{\hat{X}_{i,j_1}}{\sqrt{N}}\right)
        - \left(\sum_{\substack{j_2, \dots, j_{\hat{r}} \in [N]\\\text{distinct}}} \prod_{s=2}^{\hat{r}} \frac{\hat{X}_{i,j_s}}{\sqrt{N}}\right) \left(\sum_{j_1 \in \{j_2, \dots, j_{\hat{r}}\}} \frac{\hat{X}_{i,j_1}}{\sqrt{N}}\right).
    \end{align*}
    The first term is then, by the induction hypothesis and (\hyperref[EQ:PTFPROOF:replacementconcentrationa=1:RESTATED:APPENDIX]{\ref*{EQ:PTFPROOF:replacementconcentrationa=1}}), at most $O_d\left(\left(\frac{1}{\delta'}\right)^{r-1}\right) \frac{1}{\delta'}$ in absolute value.
    The second term can be expanded as above and is thus equal to
    \[\sum_{t = 2}^{\hat{r}} \sum_{\substack{j_2, \dots, j_{\hat{r}} \in [N]\\\text{distinct}}} \prod_{s=2}^{\hat{r}} \left(\frac{\hat{X}_{i,j_s}}{\sqrt{N}}\right)^{1 + \mathds{1}_{[s = t]}}\]
    By \eqref{EQ:APPENDIX:proofreplacementmonomialhatl=2} for $r-1$, we get that this is within $O_d(\delta')$ of 
    \[\sum_{t = 2}^{\hat{r}} \sum_{\substack{j_2, \dots, j_{t-1}, j_{t+1} \dots, j_{\hat{r}} \in [N]\\\text{distinct}}} \prod_{\substack{s=2\\s \neq t}}^{\hat{r}} \frac{\hat{X}_{i,j_s}}{\sqrt{N}}.\]
    (again if $\hat{r}=2$, then the above term should be interpreted as $1$).
    This term now is, by \eqref{EQ:APPENDIX:proofreplacementmonomialboundonmultilinearterm} for $r-2$, $O_d\left(\left(\frac{1}{\delta'}\right)^{r-2}\right)$.
    Note that for $r = 2$ (and thus $\hat{r} = 2$) we cannot apply the induction hypothesis but the term is $1$ and thus the bound still holds.
    
    Combing these result, we get that
    \begin{align*}
        \left|\sum_{\substack{j_1, \dots, j_{\hat{r}} \in [N]\\\text{distinct}}} \prod_{s=1}^{\hat{r}} \frac{\hat{X}_{i,j_s}}{\sqrt{N}}\right| &\leq O_d\left(\left(\frac{1}{\delta'}\right)^{r-1}\right) \frac{1}{\delta'} + O_d(\delta') + O_d\left(\left(\frac{1}{\delta'}\right)^{r-2}\right)\\
        &\leq O_d\left(\left(\frac{1}{\delta'}\right)^r\right).
    \end{align*}
    This is exactly \eqref{EQ:APPENDIX:proofreplacementmonomialboundonmultilinearterm} for $r$.

    By induction, we have now shown \eqref{EQ:APPENDIX:proofreplacementmonomialhatl=2} and \eqref{EQ:APPENDIX:proofreplacementmonomialhatl>=3} for all $r$ and as argued above, this completes the proof of this lemma.
\end{proof}

\end{document}